%% file: icml2026.tex
\documentclass{article}

\usepackage{microtype}
\usepackage{graphicx}
\usepackage{booktabs} 

\usepackage[accepted]{icml2026}

\usepackage{hyperref}
\hypersetup{
    colorlinks = true,
    linkcolor = red,
    anchorcolor = red,
    urlcolor = red,
    citecolor = Blue,
}

\usepackage{amsmath}
\usepackage{amssymb}
\usepackage{mathtools}
\usepackage{amsthm}
\usepackage{thmtools}

\theoremstyle{plain}
\newtheorem{theorem}{Theorem}[section]

\newtheorem{lemma}[theorem]{Lemma}

\theoremstyle{definition}

\newtheorem{assumption}[theorem]{Assumption}
\theoremstyle{remark}

\usepackage[textsize=tiny]{todonotes}

\usepackage{tcolorbox}
\usepackage{cuted}
\usepackage{url}
\usepackage{subcaption}     
\usepackage{multirow}
\usepackage{makecell}
\usepackage{bbm}
\usepackage{enumitem}
\usepackage{lipsum}        

\usepackage[dvipsnames]{xcolor}
\usepackage{xspace}
\makeatletter
\DeclareRobustCommand\onedot{\futurelet\@let@token\@onedot}
\def\@onedot{\ifx\@let@token.\else.\null\fi\xspace}
\def\eg{\emph{e.g}\onedot} 
\def\ie{\emph{i.e}\onedot}

\makeatother
\usepackage{wrapfig}

\input{math_commands.tex}

\icmltitlerunning{Mitigating Staleness in Asynchronous Pipeline Parallelism via \SOLUTION{}}

\usepackage[capitalize,noabbrev]{cleveref}

\Crefname{section}{Section}{Sections}
\Crefname{table}{Table}{Tables}
\Crefname{figure}{Figure}{Figures}
\Crefname{assumption}{Assumption}{Assumptions}
\Crefname{lemma}{Lemma}{Lemmas}
\Crefname{appendix}{Appendix}{Appendices}



\begin{document}

\twocolumn[
  \icmltitle{Mitigating Staleness in Asynchronous Pipeline Parallelism via \SOLUTION{}}



  \icmlsetsymbol{equal}{*}

  \begin{icmlauthorlist}
    \icmlauthor{Hyunji Jung}{equal,postech}
    \icmlauthor{Sungbin Shin}{equal,postech}
    \icmlauthor{Namhoon Lee}{postech}
  \end{icmlauthorlist}

  \icmlaffiliation{postech}{POSTECH}

  \icmlcorrespondingauthor{Namhoon Lee}{namhoon.lee@postech.ac.kr}

  \icmlkeywords{Machine Learning, ICML}

  \vskip 0.3in
]



\printAffiliationsAndNotice{\icmlEqualContribution}



\input{tex/0_abstract}

\input{tex/1_introduction}
\input{tex/3_analysis}
\input{tex/4_solution}
\input{tex/5_experiments}
\input{tex/6_related_work}
\input{tex/7_conclusion}

\input{tex/acknowledgements}

\input{tex/impact_statement}


\bibliography{reference}
\bibliographystyle{icml2026}

\newpage
\onecolumn
\tableofcontents
\newpage
\appendix
\onecolumn
\input{tex/appendix}


\end{document}

%% file: math_commands.tex
\usepackage{amsmath, amsfonts, amsthm, bm}

\def\eqref#1{equation~\ref{#1}}

\def\1{\bm{1}}

\newcommand{\Vector}[1]{\operatorname{vec}\!\left(#1\right)}

\DeclareMathAlphabet{\mathsfit}{\encodingdefault}{\sfdefault}{m}{sl}
\SetMathAlphabet{\mathsfit}{bold}{\encodingdefault}{\sfdefault}{bx}{n}

\def\gG{{\mathcal{G}}}

\def\gS{{\mathcal{S}}}

\def\gU{{\mathcal{U}}}

\newcommand{\E}{\mathbb{E}}

\newcommand{\R}{\mathbb{R}}

\DeclareMathOperator{\Tr}{Tr}

\def\R{\mathbb{R}}

\newcommand{\poly}{\mathtt{poly}\xspace}

\def\solution{basis rotation}
\def\Solution{Basis rotation}
\def\SOLUTION{Basis Rotation}
\def\misalign{basis misalignment}

\def\MISALIGN{Basis Misalignment}
\def\ourmethod{basis rotation}
\def\Ourmethod{Basis rotation}
\def\OURMETHOD{Basis Rotation}
\def\estimation{\texttt{eigenbasis-estimation}}

\def\onesided{unilateral}

\def\twosided{bilateral}

\def\covariance{$2^{\text{nd}}$}
\def\mean{$1^{\text{st}}$}
\newcommand{\smn}{c}
\newcommand{\Smn}{C}

\newcommand{\update}[1]{{\color{blue}#1}}

%% file: tex/0_abstract.tex
\begin{abstract}
Asynchronous pipeline parallelism maximizes hardware utilization by eliminating the pipeline bubbles inherent in synchronous execution, offering a path toward efficient large-scale distributed training.
However, this efficiency gain can be compromised by gradient staleness, where the immediate model updates with delayed gradients introduce noise into the optimization process.
Crucially, we identify a critical, yet often overlooked, pathology: this delay scales linearly with pipeline depth, fundamentally undermining the very scalability that the method originally intends to provide.
We trace this pathology to a specific property of the optimization landscape: the misalignment between the Hessian eigenbasis and the standard coordinate basis, which triggers oscillations in the update trajectories of coordinate-wise adaptive optimizers. 
We identify that these oscillations cause delayed updates to diverge from their true counterparts, invalidating their use for current iterations.
This insight is formalized through theoretical analysis, including a convergence bound showing that basis misalignment amplifies the delay penalty, and substantiated with empirical evaluation.
To address this, we propose \ourmethod{}, a framework that rotates the optimizer's coordinate system to align with the Hessian eigenbasis, keeping delayed updates useful.
We theoretically demonstrate that \ourmethod{} minimizes basis misalignment, thereby counteracting the conditions that amplify delay penalties.
Empirically, in training up to a 3B-parameter LLM, basis rotation reduces the required iterations by 81.7\% compared to the best-performing asynchronous baseline.\footnote{Our code is available at \url{https://github.com/LOG-postech/basis-rotation}.
}
\end{abstract}

%% file: tex/1_introduction.tex
\section{Introduction}

Training large-scale LLMs requires partitioning the model across multiple devices, as the memory footprint of such models far exceeds the capacity of individual accelerators.
Pipeline parallelism addresses this by dividing the model into sequential stages, each allocated to a separate device.
As a result, it has become a cornerstone of LLM training---alongside data, tensor, and context parallelism---as models continue to grow \citep{dubey2024llama,adler2024nemotron,liu2024deepseek,yang2025qwen2,team2025kimi}.

However, the efficiency of this approach is fundamentally constrained by its synchronous design, which mandates that each stage wait for the completion of backward passes of every other stage before updating its weights \citep{huang2019gpipe, fan2021dapple, li2021chimera}.
This dependency results in suboptimal hardware utilization by introducing significant idle periods, commonly referred to as pipeline bubbles.

Asynchronous pipeline parallelism aims to mitigate these idle periods by allowing each stage to proceed with subsequent computations without waiting for the completion of backward passes of other stages \citep{narayanan2019pipedream, narayanan2021memory}.
While this approach significantly increases hardware utilization, it introduces gradient staleness as a consequence of the temporal gap between gradient calculation and application; \ie, gradients arrive at the update step after the model has undergone multiple intervening weight updates (see \cref{fig:bg}).
This delayed arrival has emerged as a primary challenge in asynchronous training, as it often degrades convergence stability and final model performance \citep{yang2021pipemare,ajanthan2025nesterov}. 

\begin{figure}[!t]
    \centering
    \begin{subfigure}{\linewidth}
        \includegraphics[width=0.9\linewidth,trim={2cm 7cm 2cm 5cm},clip]{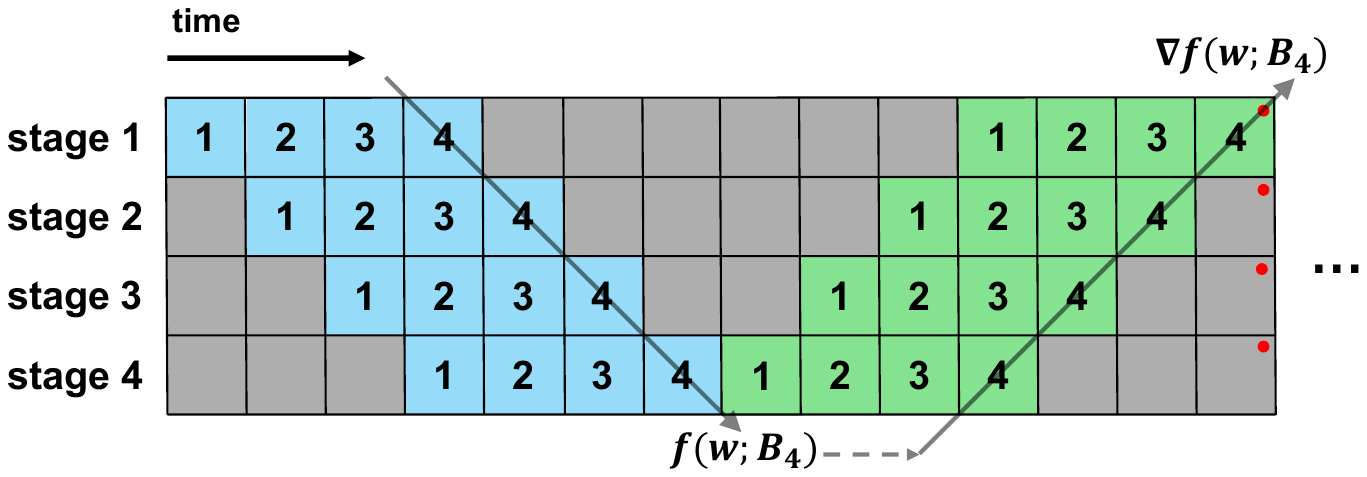}
        \caption{Synchronous pipeline parallelism}
        \label{fig:bg-sync}
    \end{subfigure}
    \begin{subfigure}{\linewidth}
        \includegraphics[width=0.9\linewidth,trim={2cm 8cm 2cm 5cm},clip]{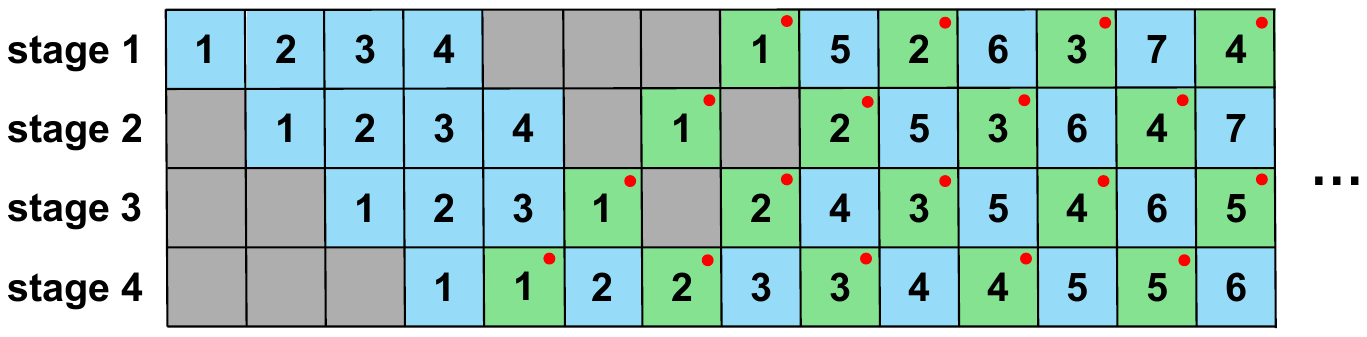}
        \caption{Asynchronous pipeline parallelism}
        \label{fig:bg-async}
    \end{subfigure}    
    \begin{subfigure}{\linewidth}
        \includegraphics[width=0.9\linewidth,trim={2cm 8cm 2cm 5cm},clip]{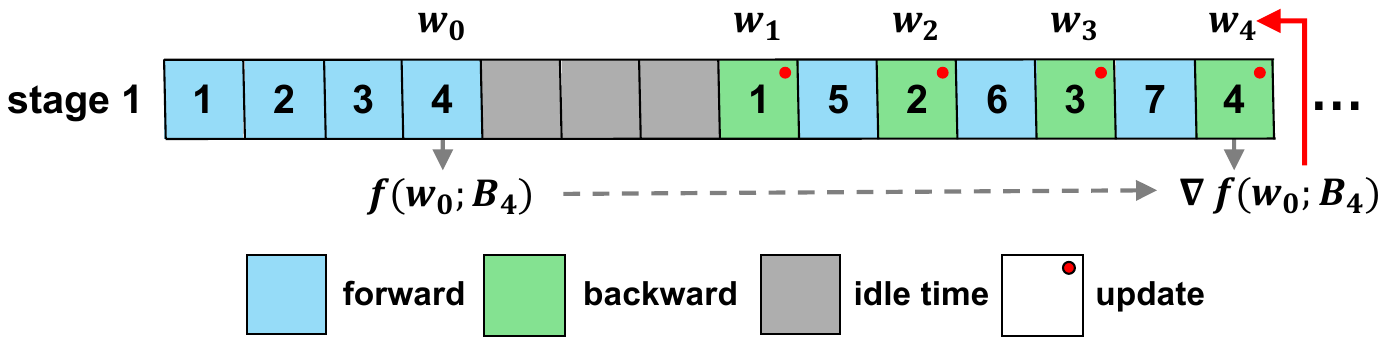}
        \caption{Delay in asynchronous pipeline parallelism}
        \label{fig:bg-async-delay}
    \end{subfigure}    
    \caption{
    (a--b) Schematic diagrams showing how micro-batches (blocks 1--7) are processed over time:
    a micro-batch travels from stage 1 to stage 4 in a forward pass (blue), and then goes back to stage 1 through a backward pass (green).
    Once the gradient becomes available after the backward pass, the model is updated (red dots indicate the time points).
    Asynchronous pipelining removes idle periods by processing subsequent micro-batches immediately after completing a backward pass without waiting for the completion of the pipeline cycle.
    (c)
    An illustration of model update with delayed gradient at stage 1: $w_3$ is updated to $w_4$ with $\nabla f(w_0; B_4)$.
    }
    \label{fig:bg}
    \vspace{-1em}
\end{figure}

Notably, we find that gradient staleness presents a critical bottleneck, particularly for large models.
This is because gradient delay increases alongside model size and pipeline depth; indeed, the number of stages can easily reach tens or hundreds in large-scale configurations\footnote{We provide an analysis of model-stage scaling in \cref{app:stage-analysis}.}.
In fact, we observe that increasing the number of stages for a fixed model results in a drastic $5.81$-fold slowdown in convergence speed (see \cref{fig:teaser-stages}).
This suggests that staleness is not merely a nuisance, but a fundamental barrier to the scalability of asynchronous pipelining, despite the fact that asynchronous pipeline parallelism aims to facilitate large-scale training.

Our analysis reveals that this degradation is deeply rooted in the interaction between delayed updates and the characteristics of Adam \citep{kingma2014adam}, the de facto optimizer for LLM pre-training \citep{touvron2023llama2,dubey2024llama,liu2024deepseek}.
Specifically, we identify \misalign{}—a condition where the Hessian eigenbasis is not aligned with the standard coordinate basis—as the central reason staleness damages convergence.
Since Adam's coordinate-wise adaptivity becomes ineffective under \misalign{}, the update direction changes rapidly between when a gradient is computed and when it is applied, so the delayed update no longer matches the non-delayed counterpart.
We substantiate this intuition through empirical observations and convergence analysis, showing that misalignment directly amplifies the impact of staleness on convergence.

\begin{figure}[!t]
    \centering
    \begin{subfigure}{0.48\linewidth}
        \includegraphics[width=\linewidth]{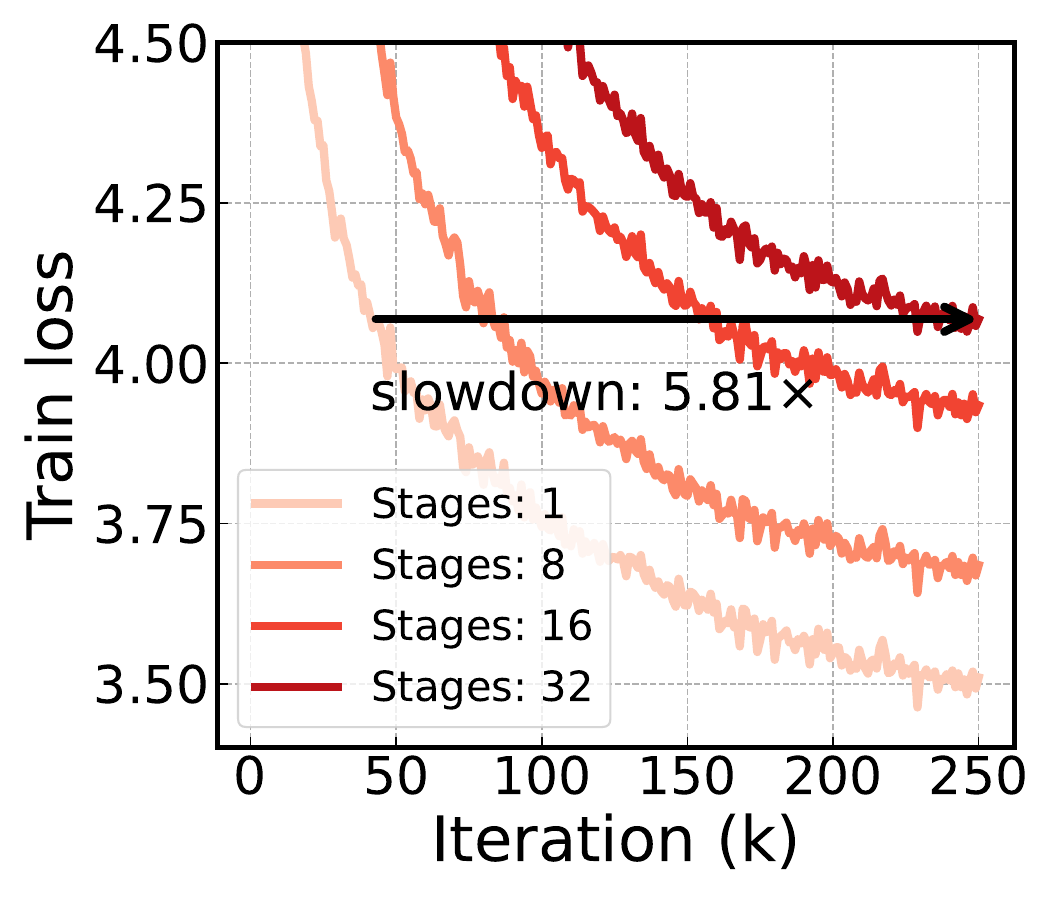}
        \caption{Impact of delay}
        \label{fig:teaser-stages}
    \end{subfigure}
    \begin{subfigure}{0.48\linewidth}
        \includegraphics[width=\linewidth]{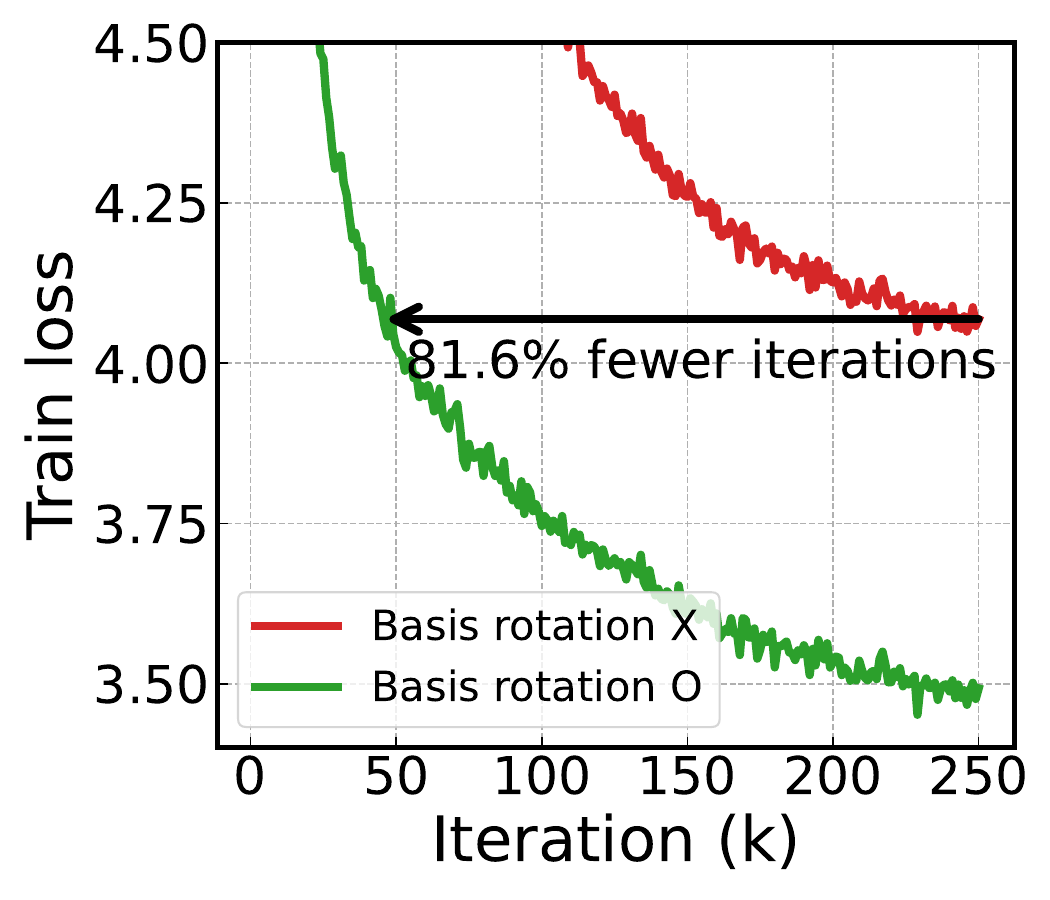}
        \caption{Effectiveness of solution}
        \label{fig:teaser-rotation}
    \end{subfigure}    
    \caption{
    Summary of this work.
    (a) Impact of pipeline depth (\ie, number of stages) on convergence of asynchronous pipeline parallel LLM pre-training.
    In all cases, the model itself is kept the same while the number of stages is divided to be different.
    Increased delay leads to significant degradation on convergence speed.
    (b) \Solution{} substantially accelerates convergence in the presence of a large delay (here, for the case of 32 stages).
    }
    \label{fig:teaser}
    \vspace{-1em}
\end{figure}

To address this, we propose mitigating the impact of staleness through \solution{}, a framework that transforms the optimization space to align the Hessian eigenbasis with the standard basis.
By the transformation, it effectively leverages the curvature-aware adaptivity and straightens the local trend of the update trajectory.
This linearized trajectory keeps the delayed update directions better aligned with the actual optimization path.
Consequently, basis rotation allows the use of delayed gradient information while mitigating staleness in the update direction.
We further introduce several practical rotation strategies using different Hessian approximations and provide a theoretical analysis of their respective approximation qualities.

Empirical evaluations on LLM pre-training demonstrate that our proposed solution significantly neutralizes the degradation caused by gradient delay;
for example, \solution{} helps to achieve the same training loss in up to $81.6\%$ fewer iterations than the standard asynchronous pipeline parallel training (\cref{fig:teaser-rotation}).
Furthermore, introducing a stage-aware rotation strategy that allocates computational budget proportionally to per-stage delay yields an additional 29.2\% speedup.
These results suggest that \solution{} is a vital component for enabling high-fidelity, large-scale asynchronous pipeline parallelism.

Our key contributions are summarized as follows:
\begin{itemize}[noitemsep,nolistsep,topsep=-\parskip, leftmargin=2ex]
    \item \underline{Identifying a critical issue in scaled pipelines}:
    We show that asynchronous pipeline parallel training suffers from significant convergence and model performance degradation as the number of stages increases, which has received limited formal study in the literature.
    \item \underline{Demystifying the mechanism of failure}:
    We point out \misalign{} as the primary reason Adam-type optimizers are sensitive to delay.
    We provide both empirical evidence and theoretical convergence analysis to show how this misalignment exacerbates the negative impact of stale gradients (\cref{sec:analysis}).
    \item \underline{The \solution{} framework}:
    To mitigate these effects, we propose \solution, a method designed to realign the optimization trajectory and counteract the delay-induced instability under \misalign{} (\cref{sec:solution}).
    \item \underline{Empirical validation}:
    We provide extensive experimental results, including 3B-scale pre-training, demonstrating that our solution effectively alleviates the performance penalties inherent in asynchronous pipelines (\cref{sec:experiments}).
\end{itemize}

%% file: tex/3_analysis.tex
\section{Understanding the Impact of Delay}
\label{sec:analysis}

\begin{figure}[t]
  \centering
  \begin{subfigure}{0.49\columnwidth}
    \centering
    \includegraphics[width=\linewidth]{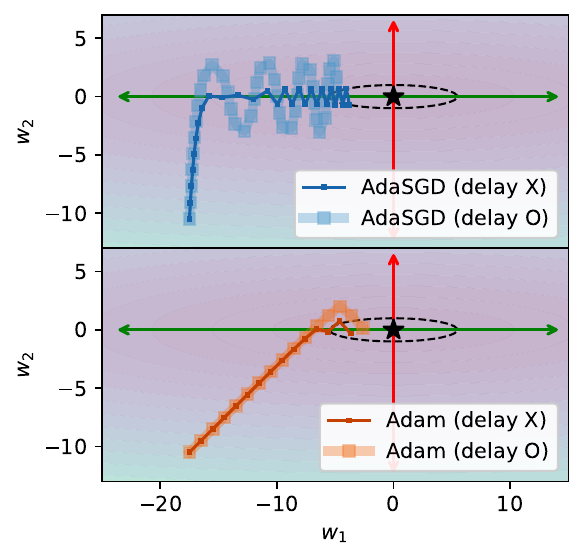}
  \end{subfigure}\hfill
  \begin{subfigure}{0.49\columnwidth}
    \centering
    \includegraphics[width=\linewidth]{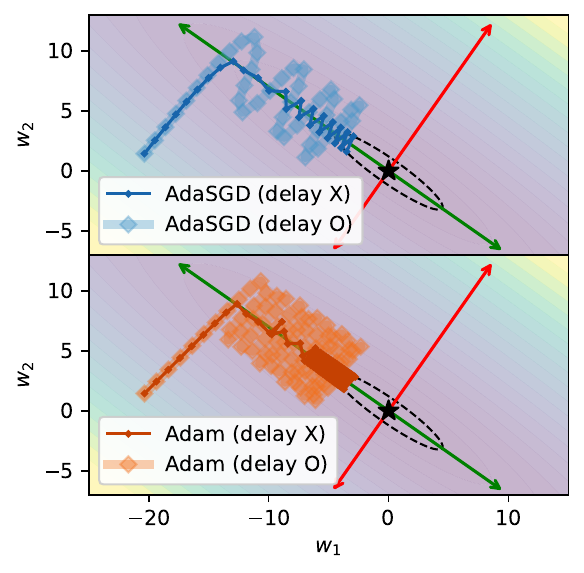}
  \end{subfigure}

  \vspace{0.4em}

  \begin{subfigure}{0.49\columnwidth}
    \centering
    \includegraphics[width=\linewidth]{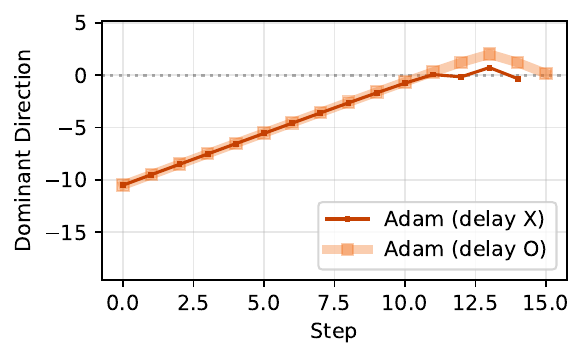}
    \caption{Basis-aligned}
    \label{fig:quadratic-dominant_aligned}
  \end{subfigure}\hfill
  \begin{subfigure}{0.49\columnwidth}
    \centering
    \includegraphics[width=\linewidth]{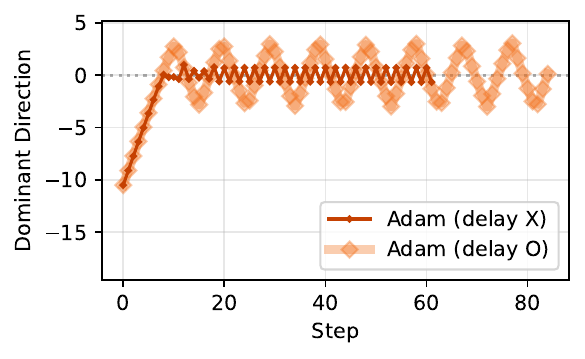}
    \caption{Basis-misaligned}
    \label{fig:quadratic-dominant_misaligned}
  \end{subfigure}

\caption{
Impact of basis alignment on the effect of delay.
(Top) Trajectories of \textcolor{blue}{AdaSGD} and \textcolor{orange}{Adam} with and without delay.
(Bottom) Update of Adam along the dominant direction (red arrows in the top panel).
(a) When the Hessian eigenbasis is aligned with the standard coordinate basis, Adam yields a stable trajectory and robust to delay; under misalignment (b), it oscillates along the dominant direction, and training suffers from delay.
See \cref{app:sub:toy-exp-details} for the explanation of the use of AdaSGD and experimental details.
}
\label{fig:quadratic}
\vspace{-1.5em}
\end{figure}
While asynchronous pipeline parallelism offers significant throughput advantages, we find that the resulting gradient delay can lead to pathological training behaviors where increasing model scale can critically damage model performance.
To resolve this issue, we begin by understanding the fundamental mechanisms of how delayed gradients interact with the optimization landscape.
We attribute the degradation from delay primarily to \misalign{}.
Under such conditions, coordinate-wise adaptive optimizers such as Adam become vulnerable to delays.

\subsection{Root of Degradation: \MISALIGN{}} 
\label{subsec:mechanism}
The heterogeneous curvature of Transformer loss landscapes renders a single global step size ineffective \citep{zhang2024transformers,zhang2024adam}, establishing coordinate-wise adaptive optimizers like Adam as the de facto standard \citep{touvron2023llama2,dubey2024llama}.
By adapting step sizes for each parameter individually, these optimizers effectively mitigate the oscillations that typically arise when the step size is excessive relative to local curvature \citep{kingma2014adam,zhang2020gradientclipping,pan2023toward}.

To illustrate this effect, consider the quadratic objective $\min_w \frac{1}{2} w^\top H w$ (see \cref{fig:quadratic}), first without delay.
When the Hessian $H$ is diagonal, as depicted in \cref{fig:quadratic-dominant_aligned}, AdaSGD~\citep{wang2020adasgdbridginggapsgd}---which applies a single learning rate uniformly across all coordinates---exhibits oscillatory updates along the dominant eigenvector direction (red arrows) while progressing slowly along non-dominant directions (green arrows).
In contrast, Adam effectively suppresses these oscillations, yielding a nearly direct trajectory toward the optimum.
However, this coordinate-wise adaptivity critically depends on the alignment between the Hessian eigenbasis and the coordinate system underlying the adaptive method.
As shown in \cref{fig:quadratic-dominant_misaligned}, under \misalign{}, Adam's effective adaptivity diminishes and its trajectory resembles the behavior of AdaSGD with severe oscillations occur along the dominant eigenvector direction. 

Crucially, we identify this oscillation as the primary mechanism that amplifies sensitivity to delay.
When the trajectory oscillates rapidly, delayed gradients are likely to be stale---pointing in outdated or even adversarial directions relative to the current iterate---thereby substantially degrading convergence, as in \cref{fig:quadratic-dominant_misaligned}.
On the other hand, a smooth trajectory with reduced oscillations ensures that the delayed update remains closely aligned with its non-delayed counterpart, rendering Adam robust to gradient staleness, as in \cref{fig:quadratic-dominant_aligned}.
When the update directions $\{u_t\}$ are locally consistent (i.e., $u_t \approx u$), the delayed trajectory $\{\tilde{x}_t\}$ closely approximates the original trajectory $\{x_t\}$, provided the gradient noise is small.
We briefly illustrate this via induction (see \cref{app:linear_traj_analysis} for details).
Assume $\tilde{x}_i \approx x_i$ for $i \leq k-1$. Then, the moments $\tilde{m}_k$ and $\tilde{v}_k$ computed with delayed gradients $\tilde{g}_k$ at $\tilde{x}_{k-1-\tau}$ are approximated as follows:
\begin{align}
\tilde{m}_k &= \beta_1 \tilde{m}_{k-1}+(1-\beta_1)\tilde{g}_{k} \nonumber\\ 
&\approx \beta_1 m_{k-\tau-1}+(1-\beta_1)g_{k-\tau}=m_{k-\tau}\\
\tilde{v}_k &= \beta_2 \tilde{v}_{k-1}+(1-\beta_2)\tilde{g}_{k}^2 \nonumber\\
&\approx \beta_2 v_{k-\tau-1}+(1-\beta_2)g_{k-\tau}^2=v_{k-\tau},
\end{align}
Consequently, the resulting update direction satisfies $\tilde{u}_k \approx u_{k-\tau} \approx u_k$, ensuring that the subsequent iterate $\tilde{x}_{k}$ remains aligned with its non-delayed counterpart $x_{k}$.


\subsection{Empirical Observation}
\label{subsec:analysis-empirical}

Building on the high-level intuition from quadratic optimization, we design a spiral loss landscape to better simulate the non-stationary geometry of deep neural networks where the Hessian eigenbasis evolves along the training trajectory.
Figure~\ref{fig:spiral_full} shows the optimization trajectory of Adam trained without delay.
At each randomly selected point along this trajectory, we measure the slowdown ratio $T_{\text{delay}} / T_{\text{no-delay}}$, defined as the iteration ratio required to advance a fixed angular interval with and without delay.
Each of these measurements corresponds to a specific point plotted in \cref{fig:delay_conv_speed}.
See \cref{app:sub:toy-exp-details} for further details.


\begin{figure}[t]
    \centering 
    \begin{subfigure}{0.8\linewidth}
        \includegraphics[width=\linewidth]{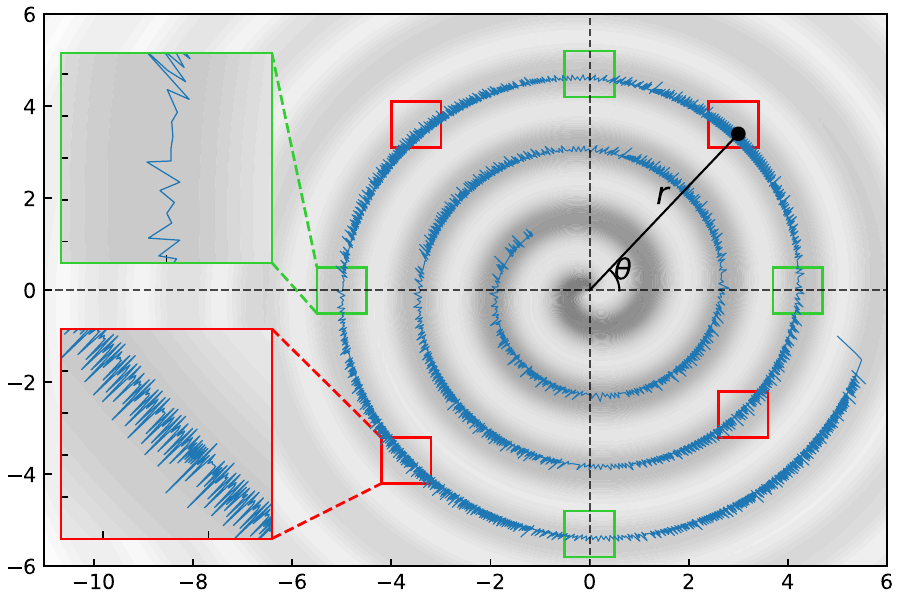}
        \caption{Optimization trajectory}
        \label{fig:spiral_full}
    \end{subfigure}
    \vspace{0.2cm} 
    \begin{subfigure}{\linewidth}
        \includegraphics[width=\linewidth]{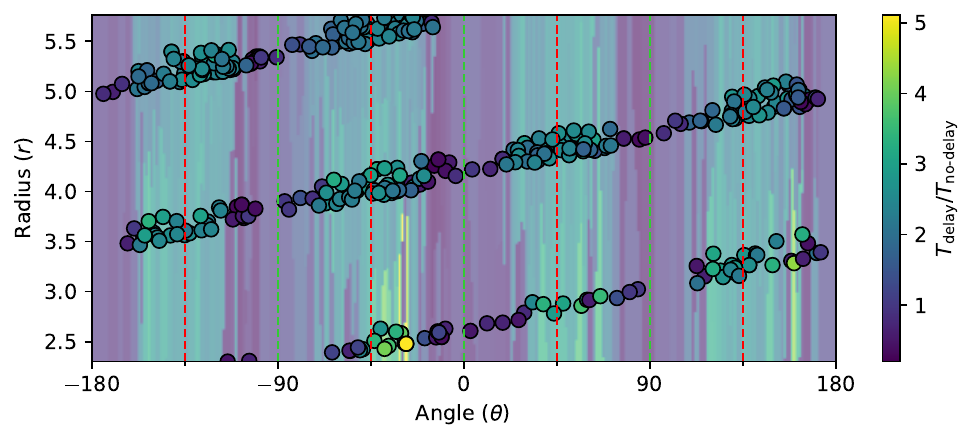}
        \caption{Impact of delay on convergence speed}
        \label{fig:delay_conv_speed}
    \end{subfigure}
    
    \caption{
    (a) Optimization trajectory of Adam on a spiral loss.
    The optimizer maintains a stable trajectory in basis-aligned regions (green boxes) but exhibits severe oscillations in misaligned regions (red boxes).
    (b) Slowdown ratio $T_{\text{delay}}/T_{\text{no-delay}}$ in different regions. 
    The ratio is minimized near basis-aligned regions (green dotted lines), whereas it is maximized in misaligned regions (red dotted lines). 
    This demonstrates that \misalign{} significantly amplifies the deleterious effects of delayed gradients.}
    \label{fig:toy-example}
\end{figure}

As shown in \cref{fig:spiral_full}, Adam exhibits a stable trajectory with minimal oscillation when the Hessian eigenbasis aligns with the standard coordinate basis (see green boxes).
In contrast, significant oscillation along the dominant direction emerges for the misaligned regions (see red boxes).
\cref{fig:delay_conv_speed} directly correlates this instability with sensitivity to delay.
We observe that the slowdown ratio is minimized near basis-aligned regions, indicating that delay has a negligible impact on convergence (see green dotted lines). 
In contrast, this ratio increases in misaligned regions where delay substantially impedes the convergence (see red dotted lines).
These results support that \misalign{} renders coordinate-wise adaptive scheme of Adam-type optimizers susceptible to delay-induced convergence instability.

\subsection{Theoretical Verification}
\label{sec:analysis-convergence}

Beyond the empirical observations, we provide a formal convergence analysis to demonstrate that Adam suffers significantly from delayed gradients under \misalign{}.
Specifically, we prove that the negative effect of delay $\tau$ on the convergence rate is exacerbated for the basis-misaligned case compared to the aligned case.
We first present assumptions necessary for our analysis.
\begin{assumption}
\label[assumption]{assumption:bounded-stochasticity}
    (Coordinate-wise bounded noise)
    For each coordinate $i \in \{1, \cdots, d\}$, there exists $\sigma_i$ such that 
    $\E_\xi [\nabla_i f(w; \xi) - \E_\xi [\nabla_i f(w; \xi)]]^2 \leq \sigma_i^2$ for all $w \in \R^d$ where $\xi$ denotes the stochasticity of data.
\end{assumption}

\begin{assumption}
\label[assumption]{assumption:smoothness}
    ($\smn$-coordinate-wise $\ell_\infty$ smoothness)
    $f$ is $c$-smooth coordinate-wisely w.r.t. $\ell_\infty$ norm with $\smn = (\Smn_1, \cdots, \Smn_d) \in \R^d$, \ie,
    for each coordinate $i \in \{1, \cdots, d\}$,
    $ \lvert \nabla_i f(w) - \nabla_i f(w') \rvert \leq \Smn_i \| w - w' \|_\infty $ for any $w,w' \in \R^d$.
    This implies that $f$ is $\Smn$-smooth w.r.t. $\ell_\infty$ norm where $\Smn = \sum_{i=1}^d \Smn_i$, \ie, $\| \nabla f(w) - \nabla f(w') \|_1 \leq \Smn \|w - w' \|_\infty $ for all $w,w' \in \R^d$.
\end{assumption}

These coordinate-wise assumptions enable capturing Adam's coordinate-wise adaptivity in the convergence rate.
Specifically, $\Smn$ locally corresponds to the $(1,1)$-norm of the Hessian matrix, defined as $\| \nabla^2 f(w) \|_{1,1} := \sum_{i,j} | (\nabla^2 f(w))_{i,j} |$.
For a fixed eigenvalue spectrum, this norm is minimized when the Hessian is diagonal and increases under basis misalignment.
Thus, it acts as a promising proxy for basis misalignment \citep{xie2024adam,zhang2024understanding}.
We are now ready to present the convergence theorem for asynchronous Adam under delay.
\begin{theorem}
\label{thm:convergence}
Let $f$ be a non-convex function where \cref{assumption:bounded-stochasticity,assumption:smoothness} hold.
Define $\Delta_0 \triangleq (f(w_0) - \min_w f(w)) $ as the initial suboptimality gap.
Assume that initial second moment $v_0$ and step size $\eta$ satisfy $v_0 + \epsilon > (\sum_{i=1}^d \sigma_i^2 + \| \nabla f(w_0) \|_\infty^2 + \sum_{i=1}^d \Smn_i^2 \eta^2) / \poly(T)$.
Then, with appropriate choices of $\eta$ and $\beta_2$, the convergence rate of asynchronous Adam with $\beta_1=0$ under delay $\tau$ is as follows:
\begin{align*}
 \min_{\frac{T}{2} < t \leq T}&  \E \| \nabla f(w_t) \|_1 \\ = \mathcal{O} & \left (\sqrt{\frac{(1 +  d \tau) \Delta_0 \Smn}{T}}  +\sqrt{\sum_{i=1}^d \sigma_i} \left(\frac{(1 + d \tau) \Delta_0 \Smn}{T}\right)^{1/4} \right. \\ & \left. + \sum_{i=1}^d \sigma_i \left(\frac{\log T}{T}\right)^{1/4}\right).
\end{align*}
\end{theorem}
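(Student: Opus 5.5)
The plan is to adapt the coordinate-wise analysis of Adam with $\beta_1=0$ (RMSProp-type) under $\ell_\infty$-smoothness, in the style of \citet{xie2024adam}, and to treat delay as a perturbation that costs an extra factor $(1+d\tau)$ in the effective smoothness. The update is $w_{t+1} = w_t - \eta\, \tilde g_t / \sqrt{v_t+\epsilon}$, where $\tilde g_t = \nabla f(w_{t-\tau};\xi_t)$ and $v_t = \beta_2 v_{t-1} + (1-\beta_2)\tilde g_t^2$ (coordinate-wise).

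\textbf{Step 1: descent inequality.} By \cref{assumption:smoothness},
\[
f(w_{t+1}) \le f(w_t) - \eta \sum_i \frac{\nabla_i f(w_t)\,\tilde g_{t,i}}{\sqrt{v_{t,i}+\epsilon}} + \frac{\eta^2}{2}\sum_i \Smn_i \frac{\tilde g_{t,i}^2}{v_{t,i}+\epsilon}.
\]
Since $\tilde g_{t,i}^2/(v_{t,i}+\epsilon)$ is bounded by $1/(1-\beta_2)$ and telescopes through $\log$ of $v$, the last term sums to $O(\eta^2 \Smn\, T \log T)$-type quantities. This is the source of the $\log T$ term once $1-\beta_2 \approx \log T / T$.

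\textbf{Step 2: split the delay.} Write $\nabla_i f(w_t) = \nabla_i f(w_{t-\tau}) + \big(\nabla_i f(w_t) - \nabla_i f(w_{t-\tau})\big)$. Following the standard trick, replace $v_{t,i}$ by the decorrelated proxy $\tilde v_{t,i} = \beta_2 v_{t-1,i} + (1-\beta_2)\big(\nabla_i f(w_{t-\tau})^2 + \sigma_i^2\big)$. Then the main term gives $-\eta\,\E\, \nabla_i f(w_{t-\tau})^2/\sqrt{\tilde v_{t,i}+\epsilon}$ up to an error controlled by $\sigma_i$ and the same telescoping sums.

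For the delay term, use \cref{assumption:smoothness} and the fact that each Adam step has $\ell_\infty$ size at most $\eta/\sqrt{1-\beta_2}$ (more precisely, $|\Delta w_{s,i}| = \eta |\tilde g_{s,i}|/\sqrt{v_{s,i}+\epsilon}$). This gives
\[
|\nabla_i f(w_t) - \nabla_i f(w_{t-\tau})| \le \Smn_i \sum_{s=t-\tau}^{t-1} \|w_{s+1}-w_s\|_\infty,
\]
and we bound $\|\cdot\|_\infty \le \big(\sum_j \Delta w_{s,j}^2\big)^{1/2}$. Multiplying by $\eta\, |\tilde g_{t,i}|/\sqrt{v_{t,i}+\epsilon}$ and applying Young's inequality yields a term
\[
\eta^2 \Smn_i\, \tau \sum_{s} \sum_j \frac{\tilde g_{s,j}^2}{v_{s,j}+\epsilon}.
\]
The sum over $j$ is where $d\tau$ enters, and summing over $i$ gives a factor $\Smn$. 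The result is an extra term of order $\eta^2 \Smn\, d\tau$ per step, sharing the telescoping structure of Step 1, so the total second-order penalty becomes $\eta^2(1+d\tau)\Smn \cdot (\text{log-sum})$.

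\textbf{Step 3: sum and tune.} Sum over $t$ and use $\Delta_0$. Then convert $\sum_t \E \sum_i \nabla_i f(w_{t-\tau})^2/\sqrt{\tilde v_{t,i}+\epsilon}$ into a bound on $\E\|\nabla f\|_1$ via Cauchy--Schwarz:
\[
\big(\E\|\nabla f\|_1\big)^2 \le \E\Big[\textstyle\sum_i \nabla_i f^2/\sqrt{\tilde v_i}\Big]\cdot \E\Big[\textstyle\sum_i \sqrt{\tilde v_i}\Big],
\]
with $\E\sum_i \sqrt{\tilde v_i} \lesssim \sum_i \sigma_i + \E\|\nabla f\|_1 + \text{(init)}$. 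The initialization condition on $v_0+\epsilon$ keeps the initial contribution at $O(1/\poly(T))$. Restricting to the window $T/2 < t \le T$ lets $\beta_2$ forget the initialization. Choosing $1-\beta_2 = \Theta(\log T/T)$ and $\eta = \Theta\big(\sqrt{\Delta_0/((1+d\tau)\Smn\, T)}\big)$ balances $\Delta_0/(\eta T)$ against $\eta(1+d\tau)\Smn$. Solving the resulting quadratic inequality in $\E\|\nabla f\|_1$ produces the three terms: $\sqrt{(1+d\tau)\Delta_0\Smn/T}$, the cross term $\sqrt{\sum_i\sigma_i}\,\big((1+d\tau)\Delta_0\Smn/T\big)^{1/4}$, and the noise term $\sum_i\sigma_i(\log T/T)^{1/4}$.

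\textbf{Main obstacle.} The hardest part is Step 2. The delayed gradient is correlated with $v_t$, and the drift $\nabla f(w_t) - \nabla f(w_{t-\tau})$ must be controlled using only the normalized-step telescoping sums, without introducing $1/(1-\beta_2)$ blow-ups. My first attempt would be to bound each past step's $\ell_\infty$ norm by its $\ell_2$ norm, accepting the crude factor $d$. That is exactly what the stated bound allows, since it involves $d\tau$ rather than $\tau$.
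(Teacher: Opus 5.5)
Your Steps 1--2 follow the paper's own argument. Splitting $\nabla_i f(w_t)$ into the gradient at the stale point plus a drift term is exactly the paper's (A)/(B)/(C) decomposition. Pairing the delayed stochastic gradient with the full gradient at the same stale iterate makes (A) a re-indexed copy of the synchronous term, so \cref{lemma:denominator} applies with your proxy $\tilde v$. The drift is then handled by smoothness, $\|\cdot\|_\infty\le\|\cdot\|_2$, Jensen over the $\tau$-window, and counting each normalized step at most $\tau$ times, with $d$ entering through $\sum_j\sum_t g_{t,j}^2/(v_{t,j}+\epsilon)\lesssim dT$.

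The genuine gap is in Step 3: the bound $\E\sum_i\sqrt{\tilde v_{t,i}}\lesssim\sum_i\sigma_i+\E\|\nabla f\|_1+\text{(init)}$ is false in general. $\sqrt{\tilde v_{t,i}}$ is a root-mean-square of past gradients over an EMA window of length about $1/(1-\beta_2)$, and Jensen goes the wrong way. A burst of size $M$ lasting $L\ll 1/(1-\beta_2)$ steps contributes roughly $2M\sqrt{L/(1-\beta_2)}$ to $\sum_t\sqrt{\tilde v_{t,i}}$ but only $LM$ to $\sum_t|\nabla_i f|$. That loses a factor up to $(1-\beta_2)^{-1/2}\approx\sqrt{T/\log T}$, so the quadratic in $\E\|\nabla f\|_1$ does not give the stated rate. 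The missing ingredient is \cref{lemma:first-order-approx} (Lemma 3.14 of \citet{xie2024adam}). It bounds $\sum_{t>T/2}\E\sqrt{\tilde v_{t,i}+\epsilon}$ by $\frac{T}{2}(\sigma_i+\sqrt{\epsilon})+2\sum_t\E[\bar g_{t,i}^2/\sqrt{\tilde v_{t,i}+\epsilon}]$ plus $\frac{2\beta_2^{T/4}}{1-\beta_2}\sqrt{v_{0,i}}$, i.e.\ by the same weighted sum that Steps 1--2 already control. Calling that bound $E$, Cauchy--Schwarz gives $\min_{T/2<t\le T}\E\|\nabla f(w_t)\|_1\lesssim E+\sqrt{E}\,(\sum_i\sigma_i+d\sqrt{\epsilon}+\cdots)^{1/2}$ directly, with no quadratic to solve.

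There are also three smaller corrections.
\begin{enumerate}
\item In Step 2, Young's inequality with unit weight produces per step $\eta^2\Smn_i\tau\sum_{s\in\text{window}}\sum_j u_{s,j}^2$, with $u_s=\tilde g_s/\sqrt{v_s+\epsilon}$. Summing over $t$ recounts each $s$ another $\tau$ times, which gives $1+d\tau^2$ rather than $1+d\tau$. You need the weight $\lambda\asymp 1/(\tau\eta)$; the paper's Cauchy--Schwarz jointly over coordinates and time, $\eta\sum_tA_tB_t\le\eta(\sum_tA_t^2)^{1/2}(\sum_tB_t^2)^{1/2}$, does this automatically.
\item The $\log T$ in the rate does not come from the second-order term. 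With $1-\beta_2=\Theta(\log T/T)$ and $\ln\frac{v_{T,i}+\epsilon}{v_{0,i}+\epsilon}=O(\log T)$, \cref{lemma:accumulated-squared-update} gives $\sum_t g_{t,i}^2/(v_{t,i}+\epsilon)=O(T)$, so (C) is $O(\eta^2\Smn T)$. A $T\log T$ there would put a $\sqrt{\log T}$ into the leading term. The $\log T$ actually comes from the $\sqrt{1-\beta_2}\,T\sigma_i$ term of \cref{lemma:denominator}.
\item The lower bound on $v_0+\epsilon$ is there to keep that logarithm $O(\log T)$ via \cref{lemma:c1}, not to make an initialization term small. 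The $\sqrt{v_{0,i}}$ contribution is removed by the $\beta_2^{T/4}$ factor from restricting to $t>T/2$.
\end{enumerate}
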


The proof is presented in \cref{app:convergence-proof}.

This result reveals that the impact of delay $\tau$ is tightly coupled with basis misalignment $\Smn$ as $\tau$ enters the bound multiplicatively with $\Smn$. 
Specifically, for a fixed delay $\tau$, the relative contribution of the delay-dependent terms to the total bound increases with $\Smn$, thereby exacerbating the penalty for basis misalignment.
Thus, while the effects of delay are suppressed for basis-aligned cases (\ie, small $\Smn$), they are aggressively amplified for misaligned cases (\ie, large $\Smn$) where the delay-dependent terms begin to dominate the bound.
This formalizes our empirical observation that the impact of delay increases when the Hessian eigenbasis is misaligned with the standard basis.

\paragraph{Remark} 
{The result indicates that delay $\tau$ slows down the deterministic convergence rate from $\mathcal{O}(\sqrt{1/T}$) to $\mathcal{O}(\sqrt{\tau / T})$, which aligns with the analysis from the asynchronous optimization literature \citep{stich2019error,koloskova2022sharper}.
The additional dimension factor $d$ and the $\tau$ dependence appearing in the stochastic term stem from the coordinate-wise adaptive denominators of Adam.}
Also, when $\tau=0$, the result recovers the convergence rate of Adam under $\ell_\infty$ smoothness without delay \citep{xie2024adam}.
The extension to $\beta_1>0$ is given in \cref{app:convergence-momentum}.

\paragraph{Extension to Stage-Dependent Delay} 
Beyond the uniform delay setting, we extend the analysis to the stage-dependent delay setting where each stage incurs a distinct delay, to provide a more precise characterization of the convergence behavior in practical asynchronous pipeline parallelism. 
Let $\mathcal{S}_1, \dots, \mathcal{S}_K$ denote the partition of parameter 
coordinates across $K$ pipeline stages, where coordinate $i \in \mathcal{S}_k$ 
incurs a delay $\tau_i = K - k$.
The bound in \cref{thm:convergence} then carries over with $\tau$ replaced by 
a tighter, stage-aware effective delay:
\begin{equation}
    \scalebox{0.97}{$\displaystyle
    \tau' \triangleq \sqrt{\frac{\sum_{i=1}^d \Smn_i^2 \tau_i^2}
    {\sum_{i=1}^d \Smn_i^2}} = \sqrt{\frac{\sum_{k=1}^K (K-k)^2 
    \sum_{i \in \mathcal{S}_k} \Smn_i^2}{\sum_{i=1}^d \Smn_i^2}}$}.
    \label{eq:effective-delay}
\end{equation}
This formulation reveals that earlier pipeline stages, where the $\tau_i$ is most severe, have a larger impact on $\tau'$ and thus dominate the convergence degradation.
The formal statement and proof are provided as \cref{thm:stage-dependent-convergence} in \cref{app:convergence-stage-delay}.

%% file: tex/4_solution.tex
\section{Mitigating the Impact of Delay}
\label{sec:solution}

Our previous analysis suggests that the negative effects of delay can be mitigated in a basis-aligned space.
We propose to achieve this through \solution{} which transforms the optimization space to align the Hessian eigenbasis with the standard basis.

\subsection{\SOLUTION{}}
\label{subsec:solution}
\begin{algorithm}[t]
\caption{Adam with \SOLUTION{}}
\begin{algorithmic}[1]
\FOR{$t = 1,2,\dots,T$}
    \STATE Sample batch $B_t$
    \STATE $G_t \gets \nabla f_W(W_{t-1};B_t) \in \mathbb{R}^{m\times n}$
    \STATE $M_t \gets \beta_1 M_{t-1} + (1-\beta_1) G_t$
    \IF{$t \bmod \text{freq} = 0$}
    \STATE $U,V$\scalebox{0.9}{$\gets$ \texttt{Eigenbasis-Estimation}$(G_t, M_t, U, V)$}
    \ENDIF
    \STATE $\widetilde{G}_t \gets U^\top G_t V$
    \STATE $\widetilde{M}_t \gets U^\top M_t V$
    \STATE $\widetilde{V}_t \gets \beta_2 \widetilde{V}_{t-1} + (1-\beta_2)\widetilde{G}_t\odot\widetilde{G}_t$
    \STATE $W_{t} \gets W_{t-1} - \eta_t
    U\!\left(\frac{1}{\sqrt{\widetilde{V}_t + \epsilon}}
    \odot \widetilde{M}_t\right)V^\top$
\ENDFOR
\end{algorithmic}
\label{algo:adam_basis_rotation}
\end{algorithm}

Standard Adam update
\begin{equation}\label{section4:eq:adam}
    w_{t} = w_{t-1} - \eta_{t-1} \frac{\text{EMA}(\nabla f(w_{t-1}))}{\sqrt{\text{EMA}(\nabla f(w_{t-1})^2) + \epsilon} }
\end{equation}
relies on coordinate-wise scaling.
As demonstrated in \cref{sec:analysis}, this scaling becomes ineffective when the Hessian eigenbasis is misaligned with the standard basis, causing delayed gradients to introduce significant noise into the optimization process.
The core idea of \solution{} is to  mitigate the impact of delay by taking an optimization step in the basis-aligned coordinate system $\tilde{w}=\gU^\top w$:
\begin{equation}\label{section4:eq:rotated}
    \tilde{w}_{t} = \tilde{w}_{t-1} - \eta_{t-1} \frac{\text{EMA}(\gU^\top \nabla f(\gU \tilde{w}_{t-1}))}{\sqrt{\text{EMA}((\gU^\top \nabla f(\gU \tilde{w}_{t-1}))^2) + \epsilon}},
\end{equation}
where $\gU$ is a rotation matrix whose columns are estimated eigenvectors of the Hessian. Projecting the update back onto the original space gives
\begin{equation}\label{section4:eq:rotated2}
    w_{t} = w_{t-1} - \eta_{t-1} \gU \frac{\text{EMA}(\gU^\top \nabla f(w_{t-1}))}{\sqrt{\text{EMA}((\gU^\top \nabla f(w_{t-1}))^2) + \epsilon }}.
\end{equation}
Crucially, when $\gU$ perfectly aligns with the Hessian eigenbasis, the Hessian eigenbasis in the rotated space matches exactly with the standard coordinate basis.
By ensuring that adaptive scaling is applied in the basis-aligned space, \solution{} restores the efficacy of curvature-aware adaptivity and mitigates the adverse effects of gradient delay. Detailed derivation of \cref{section4:eq:rotated2} is given in \cref{app:adam_rotation_equi}.

The full procedure of \solution{} for a weight matrix $W \in \mathbb{R}^{m \times n}$ is detailed in \cref{algo:adam_basis_rotation}.
To mitigate the prohibitive computational and memory costs for computing $\gU$ in \cref{section4:eq:rotated2}, \solution{} adopts two structural assumptions on the Hessian: (i) block-diagonality and (ii) Kronecker factorization.
The former assumes a block-diagonal Hessian, enabling matrix-wise rotation rather than rotating the entire parameter space at once.
This assumption is empirically supported in Transformer \citep{zhang2024transformers, zhang2024adam, abreu2025potential}.
The latter allows the rotation matrix $\mathcal{U}_W \in \R^{mn \times mn}$ to be factorized into $U \in \R^{m \times m}$ and $V \in \R^{n \times n}$, making the rotation computationally tractable for large models \citep{martens2015optimizing, gupta2018shampoo, vyas2024soap}.
Together with infrequent basis updates, \solution{} remains efficient at scale.

\subsection{Eigenbasis Estimation}
\label{subsec:solution:Hessian_approximation}
We now introduce efficient \estimation{} strategies which theoretically induce basis alignment under the assumptions of \cref{subsec:solution}.
These strategies are categorized along two design axes: approximation source ($\mathcal{S}$) and rotation geometry ($\mathcal{G}$).
This taxonomy provides a unified design space for balancing estimation fidelity against memory efficiency, depending on the demands and constraints of the training environment.
A detailed memory overhead analysis of each strategy is provided in \cref{app:memory_overhead}.

\begin{algorithm}[t]
\caption{\texttt{Eigenbasis-Estimation}}
\begin{algorithmic}[1]\label{algo:basis_rotation_type}
\REQUIRE Approximation source $\mathcal{S}$, Rotation geometry $\mathcal{G}$, $G_t, M_t, U, V$
\IF{$\mathcal{S} = 2^{\text{nd}}$}
\STATE $L \gets \beta_2 L + (1-\beta_2) G_tG_t^\top$ 
\STATE $U \gets \text{Power}(L, U)$
    \IF{$\mathcal{G} = \text{Bilateral}$}
    \STATE $R \gets \beta_2 R + (1-\beta_2) G_t^\top G_t$
    \STATE $V \gets \text{Power}(R, V)$
    \ENDIF
    \IF{$\mathcal{G} = \text{Unilateral}$}
    \STATE $V \gets I$
    \ENDIF
\ENDIF

\IF{$\mathcal{S} = 1^{\text{st}}$}
\STATE $U \gets \text{Power} (M_tM_t^\top, U)$
    \IF{$\mathcal{G} = \text{Bilateral}$}
    \STATE $V \gets \text{Power} (M_t^\top M_t, V)$
    \ENDIF
    \IF{$\mathcal{G} = \text{Unilateral}$}
    \STATE $V \gets I$
    \ENDIF
\ENDIF
\end{algorithmic}
\end{algorithm}

\begin{figure*}[!t]
    \centering
    \begin{subfigure}{0.19\linewidth}
        \includegraphics[width=\linewidth]{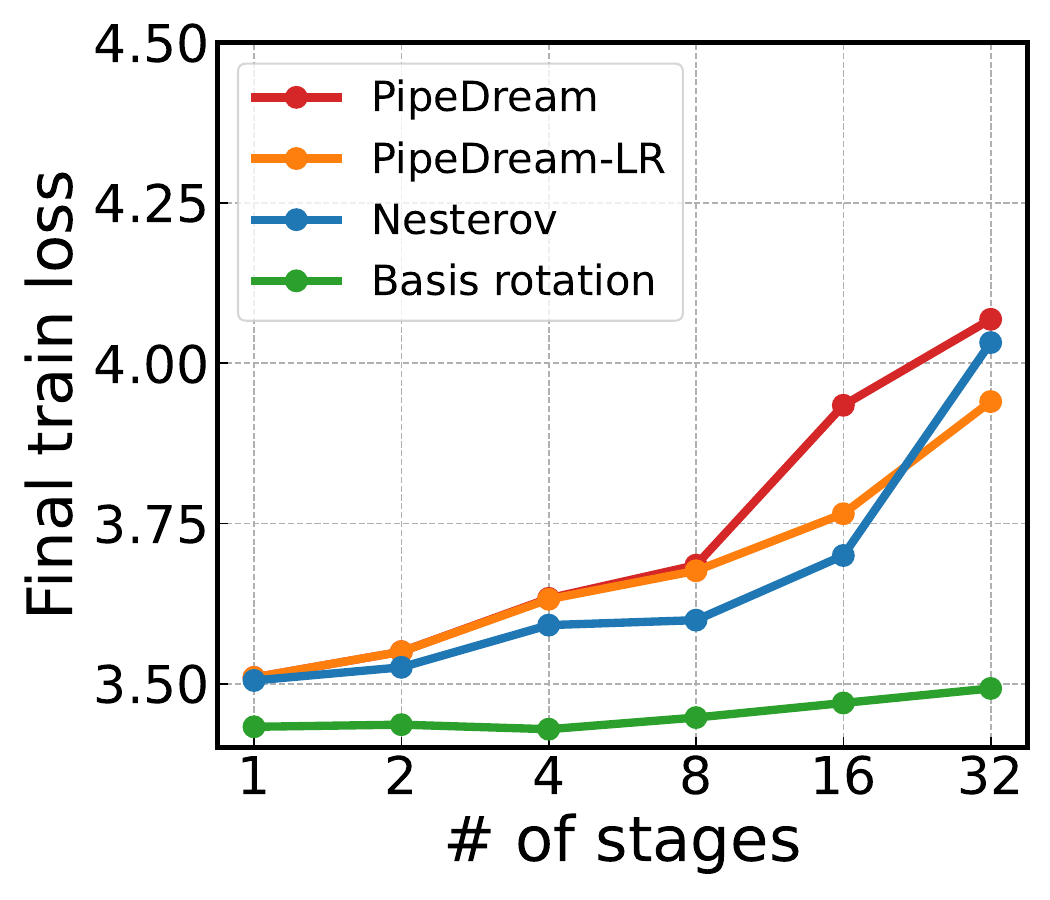}
        \caption{$P$ vs. final loss}
        \label{fig:result-main-numstages}
    \end{subfigure}
    \begin{subfigure}{0.19\linewidth}
        \includegraphics[width=\linewidth]{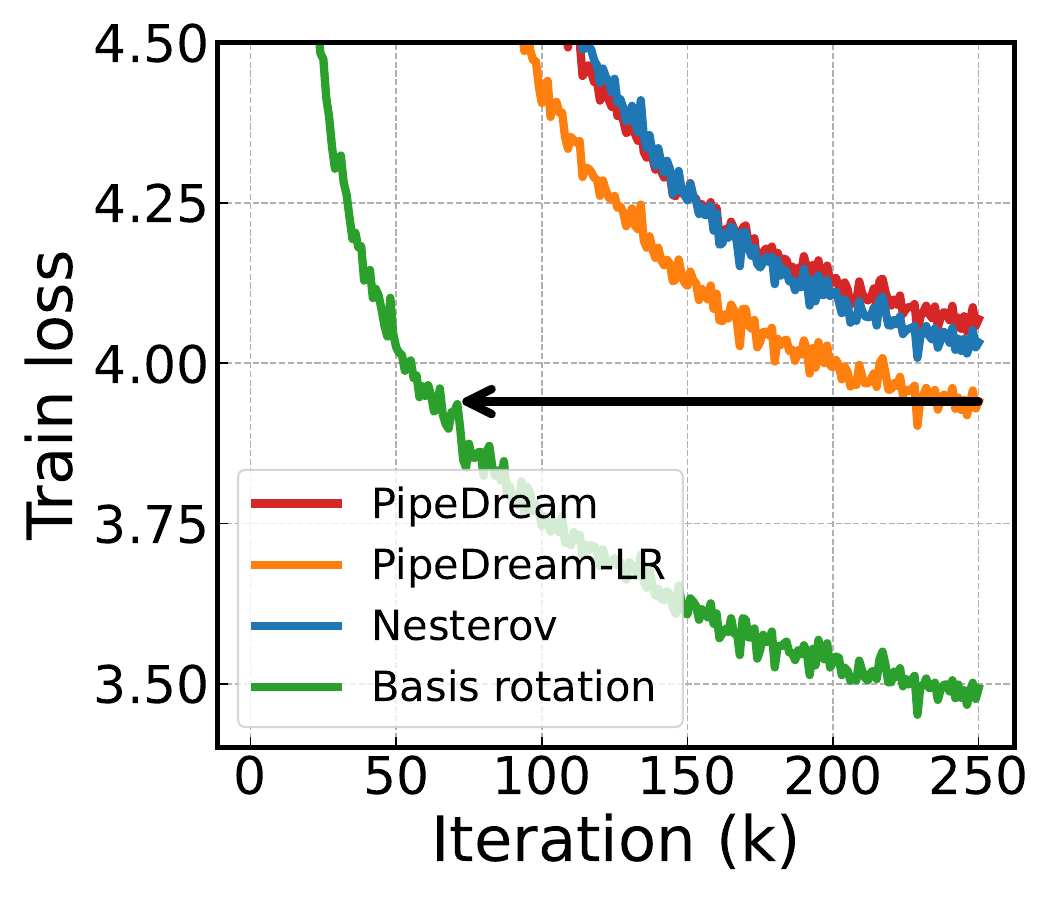}
        \caption{Training curves}
        \label{fig:result-main-stage32}
    \end{subfigure}
    \hfill
    \begin{subfigure}{0.19\linewidth}
        \includegraphics[width=\linewidth]{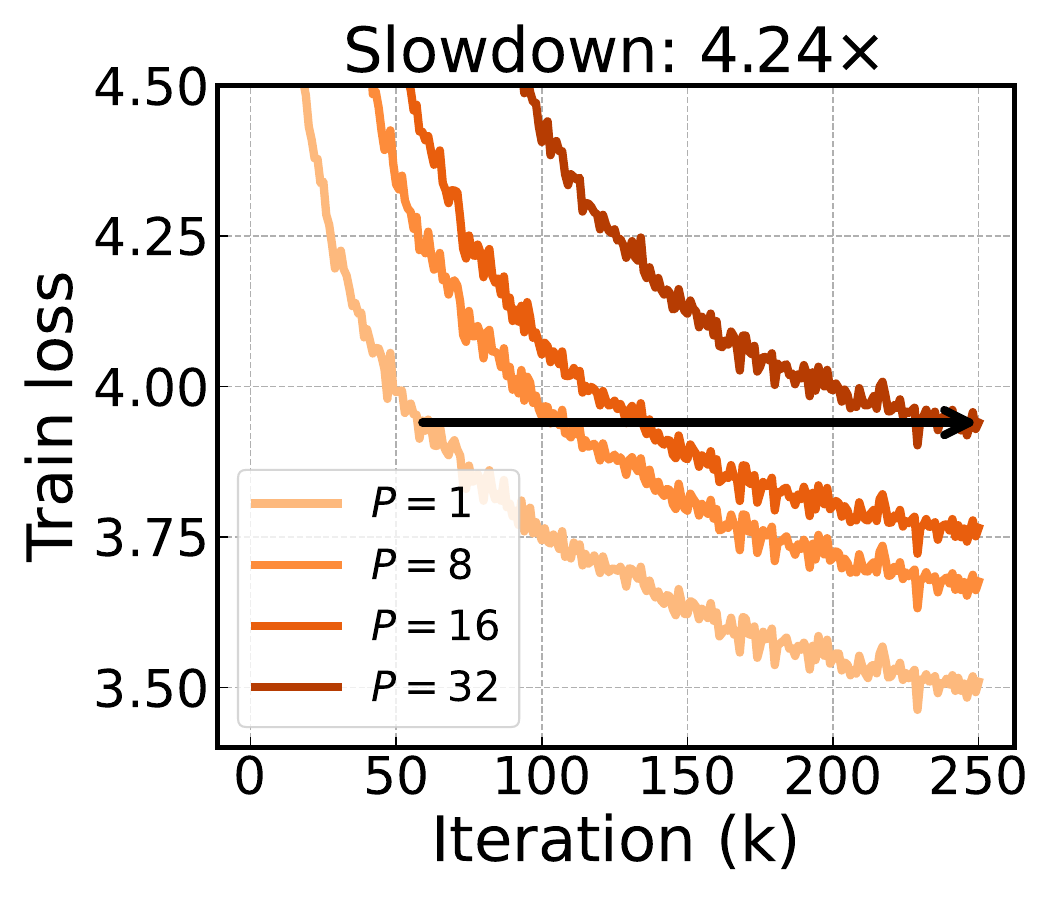}
        \caption{PipeDream-LR}
        \label{fig:result-main-slowdown-lradamw}
    \end{subfigure}
    \begin{subfigure}{0.19\linewidth}
        \includegraphics[width=\linewidth]{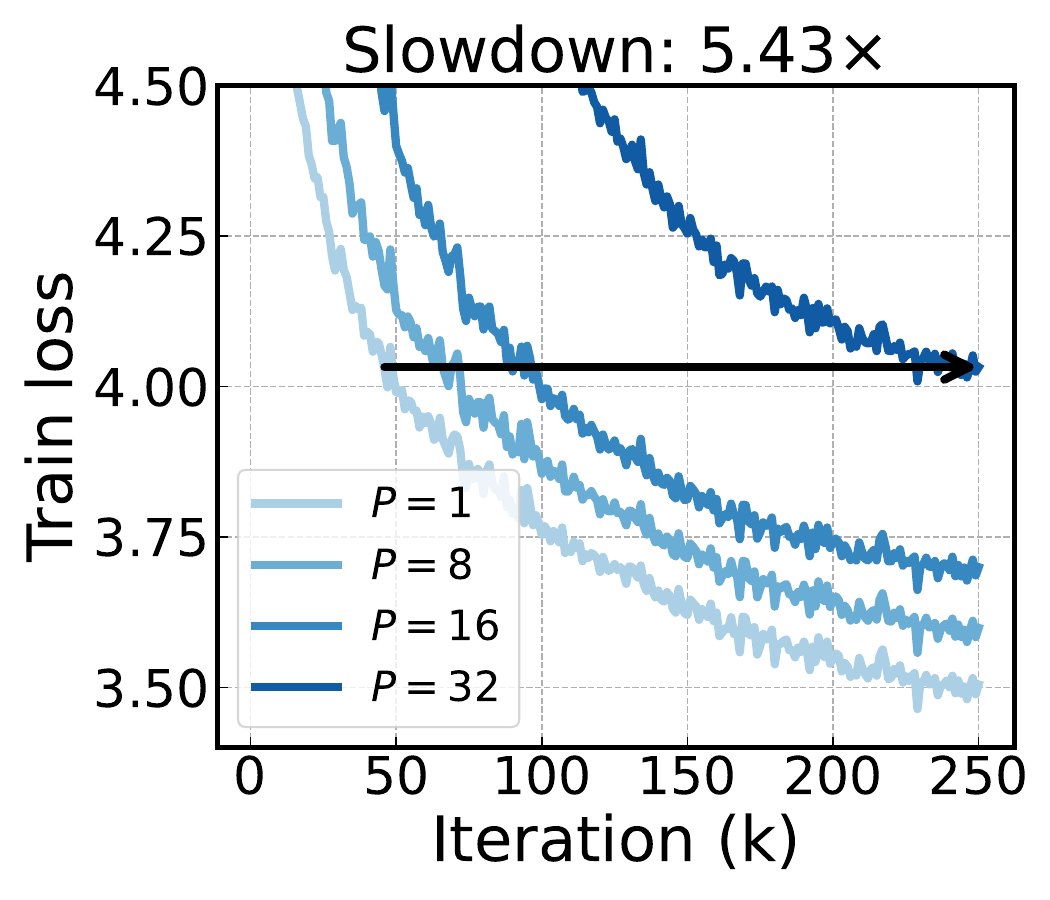}
        \caption{Nesterov}
        \label{fig:result-main-slowdown-nadamw}
    \end{subfigure}
    \begin{subfigure}{0.19\linewidth}
        \includegraphics[width=\linewidth]{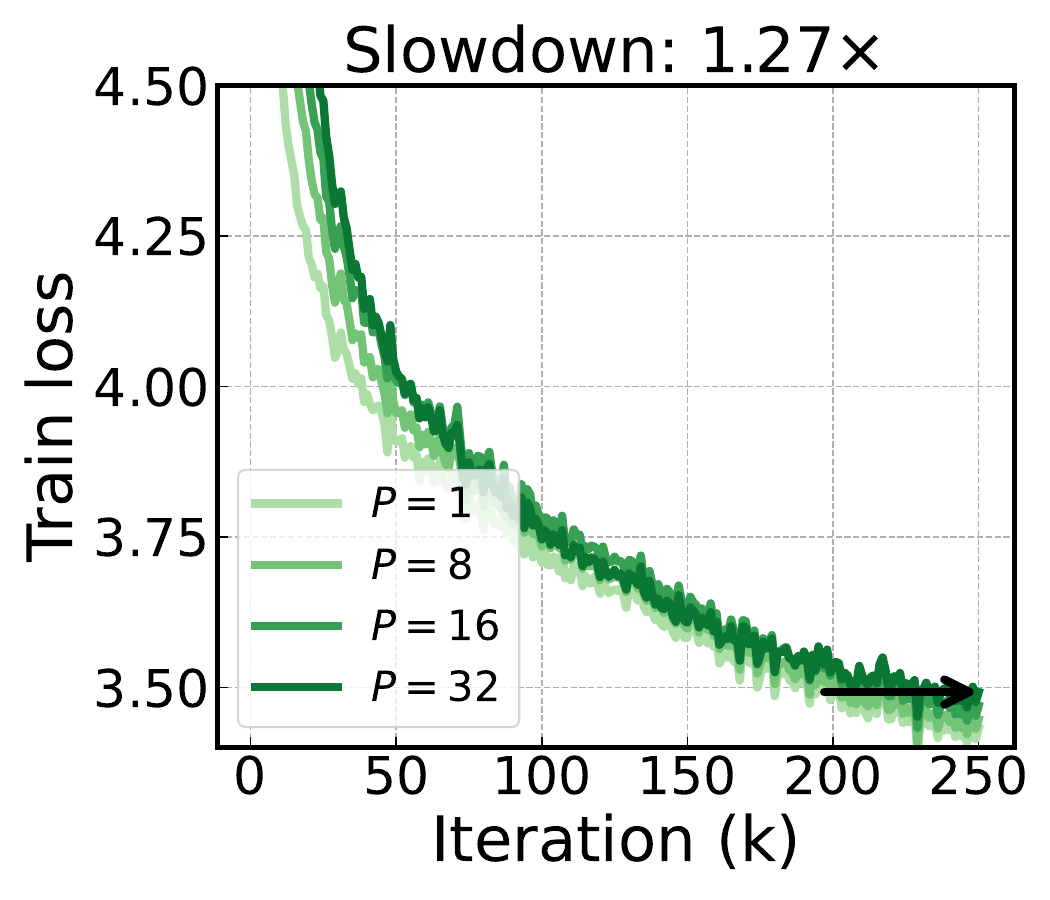}
        \caption{\Ourmethod{}}
        \label{fig:result-main-slowdown-rotatedadam}
    \end{subfigure}
    \caption{
    Performance of different methods when increasing the number of stages $P$ for the same model (with $95$M parameters).
    (a)
    \Ourmethod{} maintains stable performance while baselines suffer significantly from delay.
    (b)
    \Ourmethod{} shows much faster convergence under large delay compared to baselines (for $P=32$).
    (c-e) 
    \Ourmethod{} reduces slowdown, \ie, the iteration ratio required to reach target loss for $P=32$ relative to $P=1$, by a large margin compared to baselines.
    More results including other baselines and different settings are presented in \cref{fig:result-appendix-optmain,fig:result-appendix-slowdown,fig:result-appendix-val,fig:result-appendix-dc,tab:result-preconditioned-slowdown,fig:result-appendix-moe} of \cref{app:additional-exp-results}.
    }
    \label{fig:result-main}
\end{figure*}

The first axis $\mathcal{S}$ determines the statistical source used to approximate the Hessian.
The high-fidelity version (\covariance{}) utilizes the second moments $L = \E [GG^\top]$ and $R = \E[G^\top G]$ to approximate the empirical Fisher, where $G$ is the gradient matrix.
This quantity has frequently been used as a computationally efficient proxy for the Hessian \citep{roux2007topmoumoute, singh2020woodfisher,frantar2021m}.
In contrast, $\gS=$ \mean{} variant employs first-order moments ($\E[GG^\top] \approx \E[G]\E[G]^\top$), eliminating dedicated storage for $L$ and $R$ by leveraging the existing momentum buffer.

The second axis $\mathcal{G}$ determines the rotation geometry used to diagonalize the approximating matrix.
The \twosided{} version applies a two-sided rotation to capture the full Kronecker-factored structure, while the \onesided{} variant applies a one-sided rotation to the smaller dimension of the gradient.
This reduces overhead by storing eigenvectors of either $L$ or $R$, which is particularly beneficial for rectangular matrices.

These strategies can theoretically reduce the $(1,1)$-norm of the Hessian, which is a proxy for the degree of misalignment as introduced in \cref{sec:analysis-convergence}.
Define the rotated Hessians $H_{U}$ and $H_{U,V}$ corresponding to the transformations $\widetilde{W} = U^\top W$ (\onesided{}) and $\widetilde{W} = U^\top W V$ (\twosided{}) respectively.
Then, the following theorem holds.

\begin{restatable}{theorem}{covarianceHessianapproximation}\label{thm:covariance_Hessian_approximation}
Let $U$ and $V$ be the matrices whose columns are eigenvectors of $\mathbb{E}[G G^\top]$ and $\mathbb{E}[G^\top G]$ respectively.
If Hessian admits a Kronecker-factorized empirical Fisher, the following inequalities hold:
\begin{align*}
    ||H_{U,V}||_{(1,1)} \le ||H_{U}||_{(1,1)} \le ||H||_{(1,1)}.
\end{align*}
Moreover, $\|H_{U,V}\|_{(1,1)}$ achieves the global minimum over all rotations.
\end{restatable}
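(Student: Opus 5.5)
Under the Kronecker-factorized empirical Fisher hypothesis, the Hessian of a weight matrix $W\in\R^{m\times n}$ (vectorized) is $H = R\otimes L$. Here $L=\E[GG^\top]$ and $R=\E[G^\top G]$ up to a scalar normalization, which does not affect the comparisons. Both $L$ and $R$ are symmetric positive semidefinite. I would first fix the eigendecompositions $L = U\Lambda_L U^\top$ and $R = V\Lambda_R V^\top$ and write the rotated Hessians explicitly. The two-sided map $\widetilde W = U^\top W V$ corresponds to $\Vector{\widetilde W} = (V\otimes U)^\top \Vector{W}$, so
\[
H_{U,V} = (V\otimes U)^\top (R\otimes L)(V\otimes U) = (V^\top R V)\otimes (U^\top L U) = \Lambda_R\otimes \Lambda_L .
\]
The one-sided map $\widetilde W = U^\top W$ corresponds to $(I\otimes U)^\top$, which gives $H_U = R\otimes \Lambda_L$. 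I will use the fact that for Kronecker products $\|A\otimes B\|_{(1,1)} = \|A\|_{(1,1)}\|B\|_{(1,1)}$, since the entries are exactly the products $a_{ij}b_{kl}$. With this, every claimed inequality reduces to statements about the factors $L$ and $R$.

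The key one-factor lemma is this: for any symmetric PSD matrix $A$ and any orthogonal $Q$,
\[
\|Q^\top A Q\|_{(1,1)} \ge \sum_i |(Q^\top A Q)_{ii}| = \Tr(A) = \|\Lambda_A\|_{(1,1)}.
\]
The middle equality holds because the diagonal entries of a PSD matrix are nonnegative, so their sum equals the trace, which is rotation invariant. Applying the lemma with $Q=I$ gives $\|L\|_{(1,1)}\ge \|\Lambda_L\|_{(1,1)}$, and likewise $\|R\|_{(1,1)}\ge\|\Lambda_R\|_{(1,1)}$. Multiplying by the other factor's norm then yields the chain
\[
\|H_{U,V}\|_{(1,1)} = \|\Lambda_R\|_{(1,1)}\|\Lambda_L\|_{(1,1)} \le \|R\|_{(1,1)}\|\Lambda_L\|_{(1,1)} = \|H_U\|_{(1,1)} \le \|R\|_{(1,1)}\|L\|_{(1,1)} = \|H\|_{(1,1)}.
\]

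For global minimality over all rotations, I will apply the same trace argument directly to the full $mn\times mn$ matrix rather than only to Kronecker-structured rotations. The matrix $H = R\otimes L$ is PSD, and for any orthogonal $\mathcal{Q}\in\R^{mn\times mn}$,
\[
\|\mathcal{Q}^\top H\mathcal{Q}\|_{(1,1)} \ge \Tr(\mathcal{Q}^\top H\mathcal{Q}) = \Tr(H) = \Tr(R)\Tr(L).
\]
This lower bound is attained by $H_{U,V}$, since it is diagonal with nonnegative entries and its norm is $\Tr(\Lambda_R)\Tr(\Lambda_L)$. Hence $H_{U,V}$ achieves the global minimum.

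The main obstacle I anticipate is the modeling step rather than the inequalities. I must state precisely what "admits a Kronecker-factorized empirical Fisher" means, namely $H \propto \E[GG^\top]\otimes\E[G^\top G]$ with the $\Vector{\cdot}$ ordering matching the $U^\top W V$ convention. I must also justify PSD-ness of each factor so that the diagonal entries are nonnegative; this is exactly where the trace lower bound needs nonnegativity. Once those conventions are pinned down, the rest follows from the Kronecker norm identity and the trace lemma.
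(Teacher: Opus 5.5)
Your proof is correct and follows the paper's argument: you rotate each Kronecker factor into its eigenbasis to get $H_U = R\otimes\Lambda_L$ and $H_{U,V}=\Lambda_R\otimes\Lambda_L$, then combine $\|A\otimes B\|_{(1,1)}=\|A\|_{(1,1)}\|B\|_{(1,1)}$ with the fact that diagonalizing a symmetric matrix minimizes its $(1,1)$-norm. The paper only asserts that fact, whereas your trace lower bound (valid because all the factors are PSD) proves it, and you also make the global-minimality claim explicit over all $mn\times mn$ orthogonal matrices. The one step you treat as definitional is that the Kronecker factors are multiples of $\E[G^\top G]$ and $\E[GG^\top]$; the paper instead assumes $\E[gg^\top]=A\otimes B$ for arbitrary $A,B$ and derives $\E[GG^\top]=\Tr(A)\,B$ and $\E[G^\top G]=\Tr(B)\,A$ by a one-line partial-trace computation, which you should include if you adopt that weaker reading of the hypothesis.
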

The theorem shows that the \twosided{} rotation minimizes basis misalignment while both versions provide better alignment than the original space.
The proof and further result for $\gS=$\mean{} are provided in \cref{app:approximation-comparison-proof}.

The full procedure covering both axes is presented in \cref{algo:basis_rotation_type}.
We use a single step of power iteration followed by QR decomposition to efficiently compute eigenvectors \cite{wang20244}.
While our \estimation{} framework shares similarities with recent optimizers—such as SOAP (\citet{vyas2024soap}; $\mathcal{S}=$ \covariance{}, $\mathcal{G}=$ \twosided{}) and the full-rank version of GaLore (\citet{zhao2024galore}; $\mathcal{S}=$ \mean{}, $\mathcal{G}=$ \onesided{})—we distinguish our approach by unifying these strategies into a single framework rather than adopting individual optimizers.
This controlled setup allows us to isolate the effects of Hessian geometry from other implementation variables.
See \cref{app:connection_recent_optimizers} for a detailed comparison.

%% file: tex/5_experiments.tex
\section{Experiments}
\label{sec:experiments}

\subsection{Experimental Setup}

We evaluate the performance of \solution{} on the language modeling task with a standard decoder-only Transformer \citep{vaswani2017attention} ranging from $95$M to $1$B parameters.
The models are trained on $1$B tokens randomly selected from OpenWebText \citep{Gokaslan2019OpenWeb}.
We evaluate \ourmethod{} against three baselines which are primary strategies for asynchronous pipeline parallelism in the literature:
(1) PipeDream \citep{narayanan2019pipedream} which serves as the vanilla baseline by not explicitly addressing delay,
(2) PipeDream-LR \citep{yang2021pipemare} which schedules the stage-wise learning rate depending on the level of delay, and
(3) Nesterov \citep{ajanthan2025nesterov} which incorporates Nesterov momentum to address delayed gradients.
We use $\gS=$ \covariance{}, $\gG=$ \twosided{} strategy for \estimation{} and set the basis update frequency to $10$ unless specified otherwise.
Finally, we employ weight stashing \citep{narayanan2019pipedream} across all methods to ensure consistency between the weights used in the forward and backward passes.
Experimental details can be found in \cref{app:sub:LLM-exp-details}.

\subsection{Main Results}

\label{sec:exp-main}

We first evaluate the robustness of \ourmethod{} against delay by increasing the number of pipeline stages $P$ for the same model.
The results summarized in \cref{fig:result-main-numstages} demonstrate that \ourmethod{} consistently outperforms baselines with the performance gap widening significantly as $P$ increases.
While prior work has demonstrated the benefits of optimization in the Hessian eigenbasis under the standard zero-delay regime \citep{vyas2024soap,eschenhagen2025purifying}, our results indicate that these advantages are substantially amplified in the presence of large delays.
Notably, \cref{fig:result-main-stage32} demonstrates that at $P=32$, \ourmethod{} achieves the same training loss with $71.6\%$ fewer iterations than the best-performing baseline.
To quantify robustness to delay, the right three panels of \cref{fig:result-main} report the slowdown of each method, defined as the ratio of the number of iterations required to reach a fixed loss threshold at $P = 32$ relative to $P = 1$.
As shown in \cref{fig:result-main-slowdown-lradamw,fig:result-main-slowdown-nadamw}, even the best-performing baseline shows a large slowdown of $4.24 \times$.
In contrast, \ourmethod{} in \cref{fig:result-main-slowdown-rotatedadam} exhibits substantially improved robustness to delay with only $1.27 \times$ slowdown.

Next, we investigate the scalability of \ourmethod{} by jointly increasing the number of Transformer blocks and the number of pipeline stages $P$, assigning one block to each stage.
This setup reflects the practical necessity of using deeper pipelines to accommodate the memory footprint of larger models.
\cref{fig:result-main-blockscaling-lradamw,fig:result-main-blockscaling-nadamw} reveal a critical failure in baseline methods;
increasing model size leads to higher training loss, directly contradicting standard scaling laws \citep{kaplan2020scaling,hoffmann2022training}.
In contrast, \ourmethod{} in \cref{fig:result-main-blockscaling-rotatedadam-twosided-cov} successfully restores scalability, achieving performance improvements as model size grows.

Finally, we evaluate \ourmethod{} on a 1B parameter model to verify its efficacy at a larger scale.
Specifically, we set $P=24$ and increase the number of embedding dimensions to scale the model.
\cref{fig:result-main-1b} demonstrates the performance gap between \ourmethod{} and the baselines widens for larger models.
For 1B model, \ourmethod{} achieves the same training loss with $76.8\%$ fewer iterations than the best-performing baseline, surpassing the $62.4\%$ reduction in smaller models.
This advantage further increases at 3B scale, where \ourmethod{} reduces iterations by $81.7\%$ (see \cref{fig:result-appendix-3b} in \cref{app:additional-exp-results}).
The results show that the advantages of our approach scale positively with model size.

Overall, our results underscore that the performance degradation in asynchronous training is not an unavoidable cost of staleness but rather a byproduct of optimizer sensitivity to delay.
\Ourmethod{} effectively eliminates this bottleneck, unlocking a new regime for bubble-free execution at scale.

\begin{figure}[t]
    \centering
    \begin{subfigure}{0.32\linewidth}
        \includegraphics[width=\linewidth]{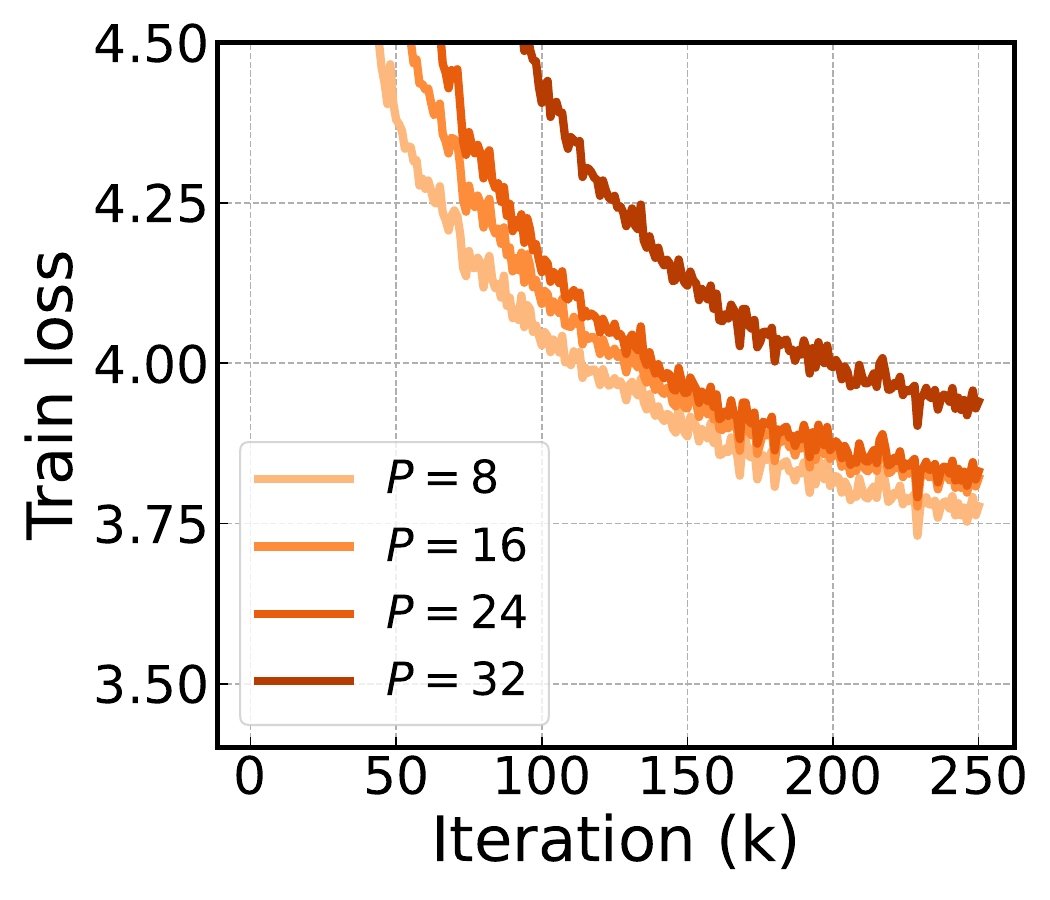}
        \caption{PipeDream-LR}
        \label{fig:result-main-blockscaling-lradamw}
    \end{subfigure}
    \begin{subfigure}{0.32\linewidth}
        \includegraphics[width=\linewidth]{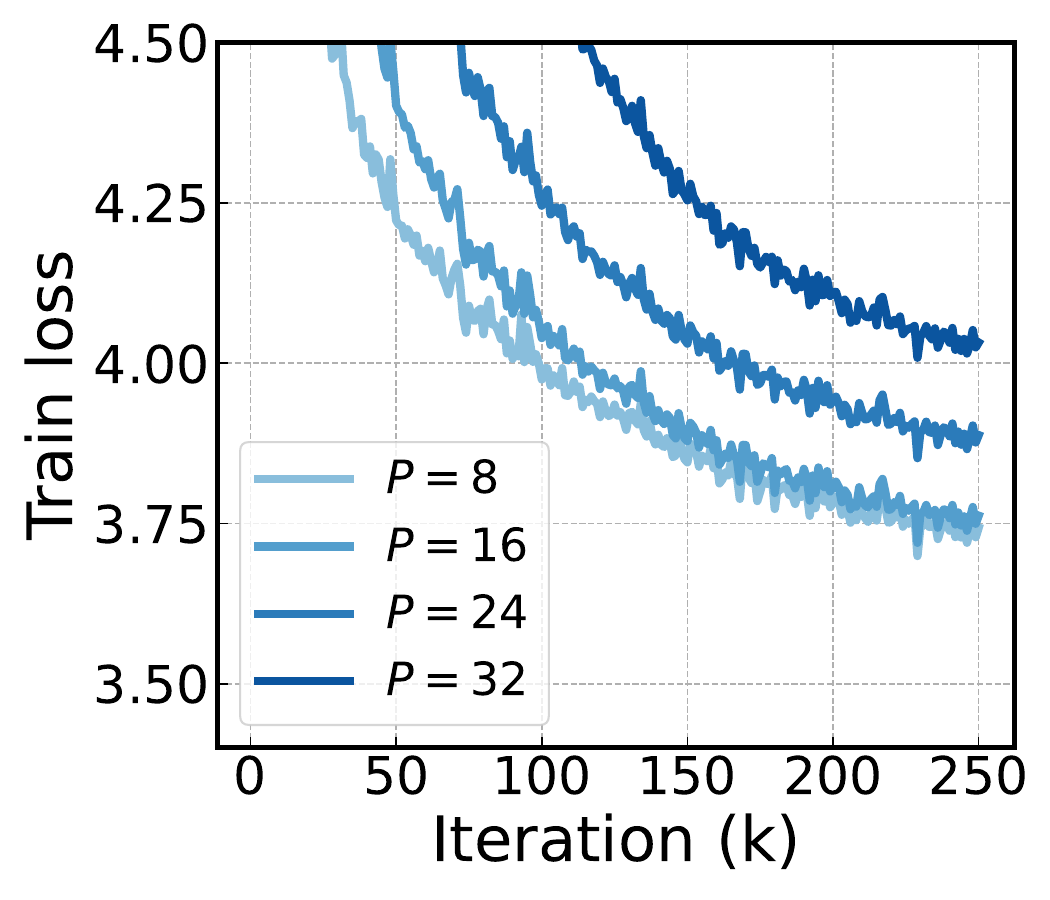}
        \caption{Nesterov}
        \label{fig:result-main-blockscaling-nadamw}
    \end{subfigure}
    \begin{subfigure}{0.32\linewidth}
        \includegraphics[width=\linewidth]{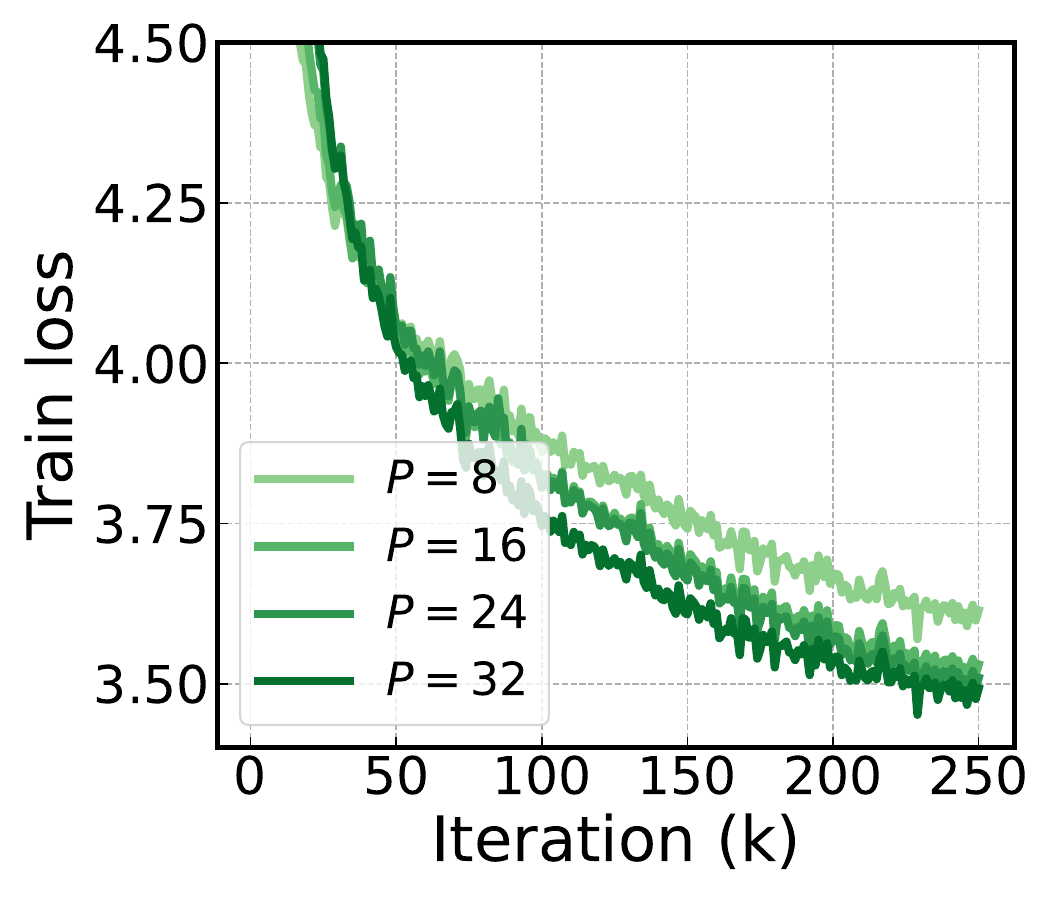}
        \caption{\Ourmethod{}}
        \label{fig:result-main-blockscaling-rotatedadam-twosided-cov}
    \end{subfigure}    
    \caption{
    Performance of different methods when increasing $P$ by scaling the number of blocks.
    While scaling the model leads to increased loss for baselines (a,b), it decreases the loss for \ourmethod{} (c).
    More results are presented in \cref{fig:result-appendix-blockscaling} of \cref{app:additional-exp-results}.
    }
    \label{fig:result-main-blockscaling}
\end{figure}

\begin{figure}[t]
    \centering
    \begin{subfigure}{0.4\linewidth}
        \includegraphics[width=\linewidth]{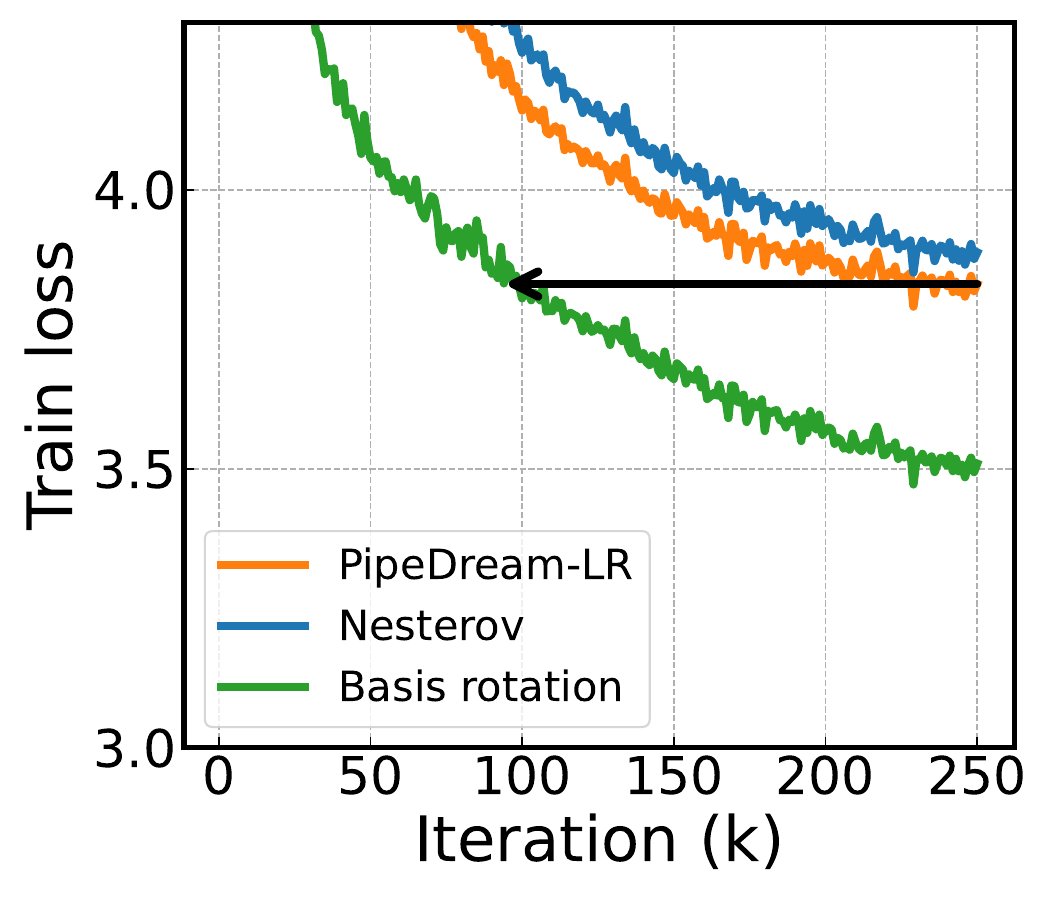}
        \caption{$0.1$B model}
        \label{fig:result-main-24layers-small}
    \end{subfigure}
    \hspace{2em}
    \begin{subfigure}{0.4\linewidth}
        \includegraphics[width=\linewidth]{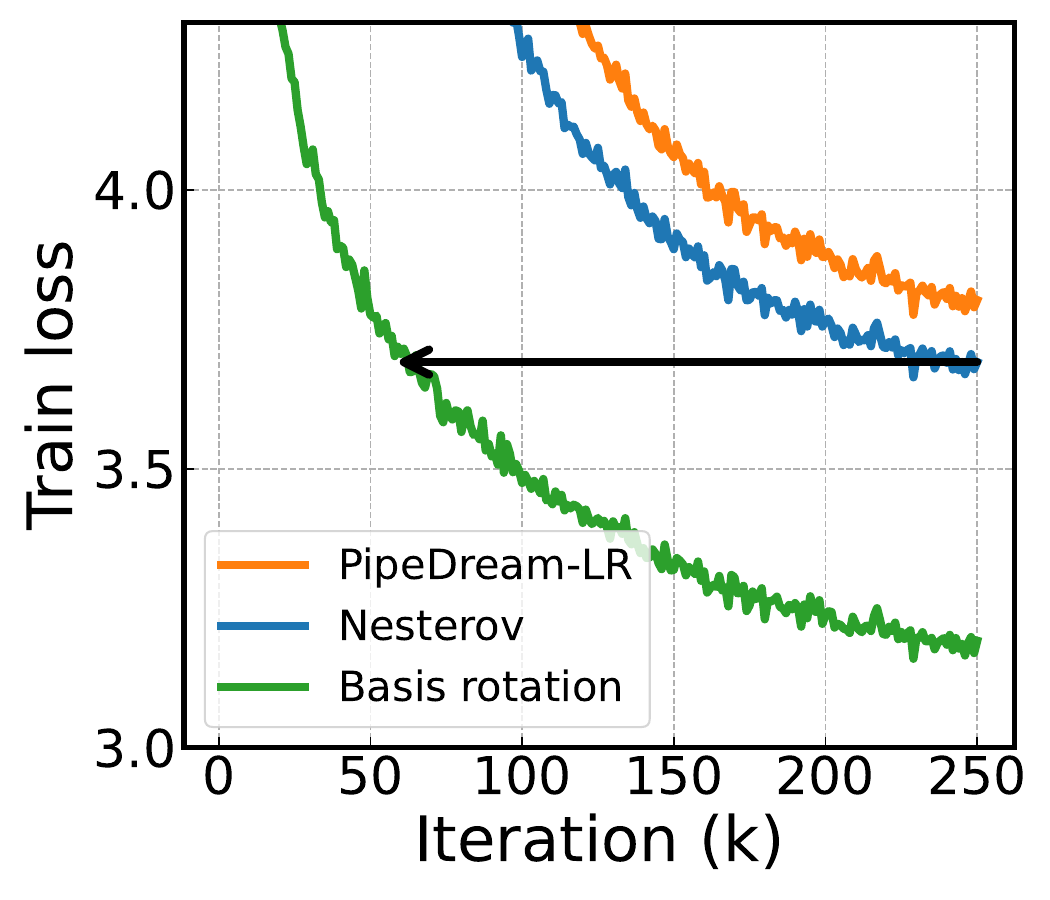}
        \caption{$1$B model}
        \label{fig:result-main-24layers-large}
    \end{subfigure}
    \caption{
    Performance of different methods at $P=24$ for (a) $0.1$B and (b) $1$B models.
    The gap between \ourmethod{} and baselines increases further for larger models.
    }
    \label{fig:result-main-1b}
    \vspace{-1em}
\end{figure}

\subsection{More Results}

\label{sec:exp-more}

\paragraph{Basis Approximation Sensitivity}

To evaluate how the precision of the eigenbasis approximation affects robustness to delay, we compare different \estimation{} strategies in \cref{fig:result-rotation}.
As seen in \cref{tab:result-rotation-slowdown}, \ourmethod{} exhibits higher robustness with a closer approximation to the Hessian eigenbasis.
Specifically, \covariance{} results in a smaller slowdown than \mean{} while \twosided{} outperforms \onesided{}.
The results confirm that \misalign{} is the primary factor of the performance degradation under delay and a more accurate \solution{} effectively mitigates the issue.
We also note that even the least accurate strategy (\mean{} / \onesided{}) outperforms the best-performing baseline by a large margin (see \cref{fig:reuslt-rotation-stage32}).
The results suggest that our approach is highly effective even in resource-constrained settings where more sophisticated estimation strategies may be computationally expensive.

\paragraph{Computational Efficiency}

To address potential concerns regarding the computational cost of the basis update, we compare its wall-clock efficiency against baselines.
\cref{fig:result-main-efficiency-runtime} demonstrates that \ourmethod{} remains superior in terms of GPU hours, reaching the same training loss with $54.3\%$ less amount of time than the most competitive baselines.
Furthermore, this overhead can be reduced by adjusting the basis update frequency with a marginal performance degradation.
As shown in \cref{fig:result-main-efficiency-frequency}, \ourmethod{} remains significantly more efficient than the baselines even at an update frequency of $100$ iterations.

\paragraph{Stage-aware \OURMETHOD{}}
Beyond uniform update frequencies, we investigate a stage-aware strategy (detailed in \cref{app:additional-exp-results}) that allocates the computational budget for subspace updates in proportion to the gradient delay at each stage. 
\cref{fig:result-main-layerwise_freqeuncy} shows this achieves a 29.2\% speedup in convergence compared to the uniform-frequency baseline, while preserving the same total computational budget. 
This empirical gain aligns with the theoretical insight from \cref{eq:effective-delay}: the effective delay $\tau'$ is dominated by the misalignment magnitude ($\Smn_i^2$) at the earliest pipeline stages, where the per-stage delay $(K-k)$ is most severe.
Suppressing it thus reduces $\tau'$ and accelerates convergence.
An ablation with inversely-ordered frequency allocation in \cref{fig:reverse-stage-wise-frequency} of \cref{app:additional-exp-results} confirms this: the reversed strategy degrades performance relative to the uniform baseline.

\begin{figure}[!t]
    \centering
    
    \refstepcounter{figure}
    \setcounter{table}{\value{figure}}
    \addtocounter{table}{-1}
    \begin{minipage}[b]{0.55\linewidth}
        \centering
        \captionsetup{type=table}
\begin{subtable}[b]{\linewidth}
    \centering
    \resizebox{\linewidth}{!}{
\begin{tabular}{llcc}
    \toprule
    \multirow{2}{*}{Method} & \multicolumn{2}{c}{\texttt{Estimation}} & \multirow{2}{*}{Slowdown} \\ 
    & $\mathcal{S}$ & $\mathcal{G}$ & \\ 
    \midrule
    PipeDream-LR & --- & --- & $4.24 \times$ \\ 
    \midrule
    \multirow{4}{*}{Basis rotation} & \multirow{2}{*}{$1^{\text{st}}$} & Uni & $2.55 \times$ \\ 
                                     &  & Bi  & $1.77 \times$ \\ 
                                     &\multirow{2}{*}{$2^{\text{nd}}$} & Uni & $1.66 \times$ \\ 
                                     & & Bi  & $1.27 \times$ \\ 
    \bottomrule
\end{tabular}
    }
    \vspace{12.5pt} 
    \caption{Convergence slowdown}
    \vspace{-6pt}
    \label{tab:result-rotation-slowdown}
\end{subtable}
    \end{minipage}
    \setcounter{subfigure}{1}
    \begin{subfigure}[b]{0.4\linewidth}
        \centering
        \includegraphics[width=\linewidth]{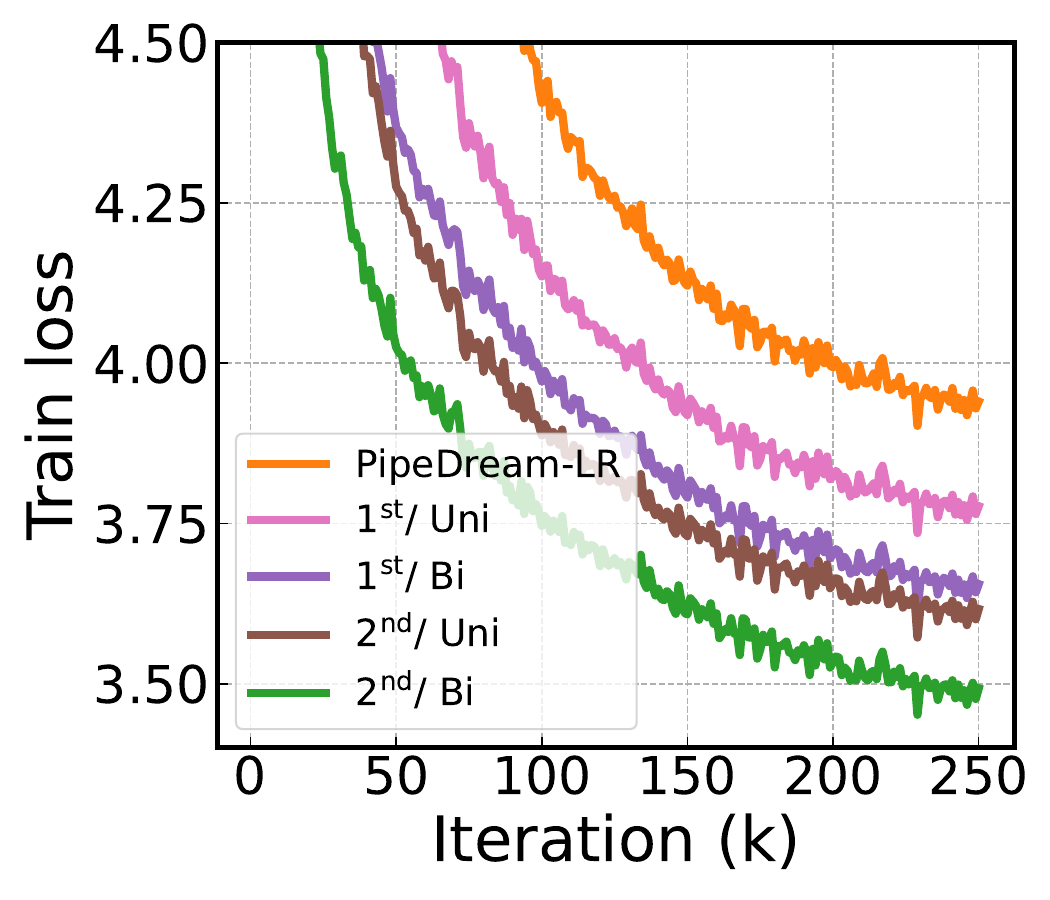}
        \caption{Training loss}
        \label{fig:reuslt-rotation-stage32}
    \end{subfigure}
    \caption{
    Comparison of different \estimation{} strategies.
    (a) High-fidelity estimation leads to a smaller slowdown under delay.
    (b) High-fidelity estimation achieves faster convergence for $P=32$.
    Even the least accurate estimation (\mean{}/\onesided{}) outperforms the best-performing baseline.
    More results are presented in \cref{fig:result-appendix-optrotated} of \cref{app:additional-exp-results}.
    }
    \label{fig:result-rotation}
\end{figure}

\begin{figure}[!t]
    \centering
    \begin{subfigure}{0.32\linewidth}
        \includegraphics[width=\linewidth]{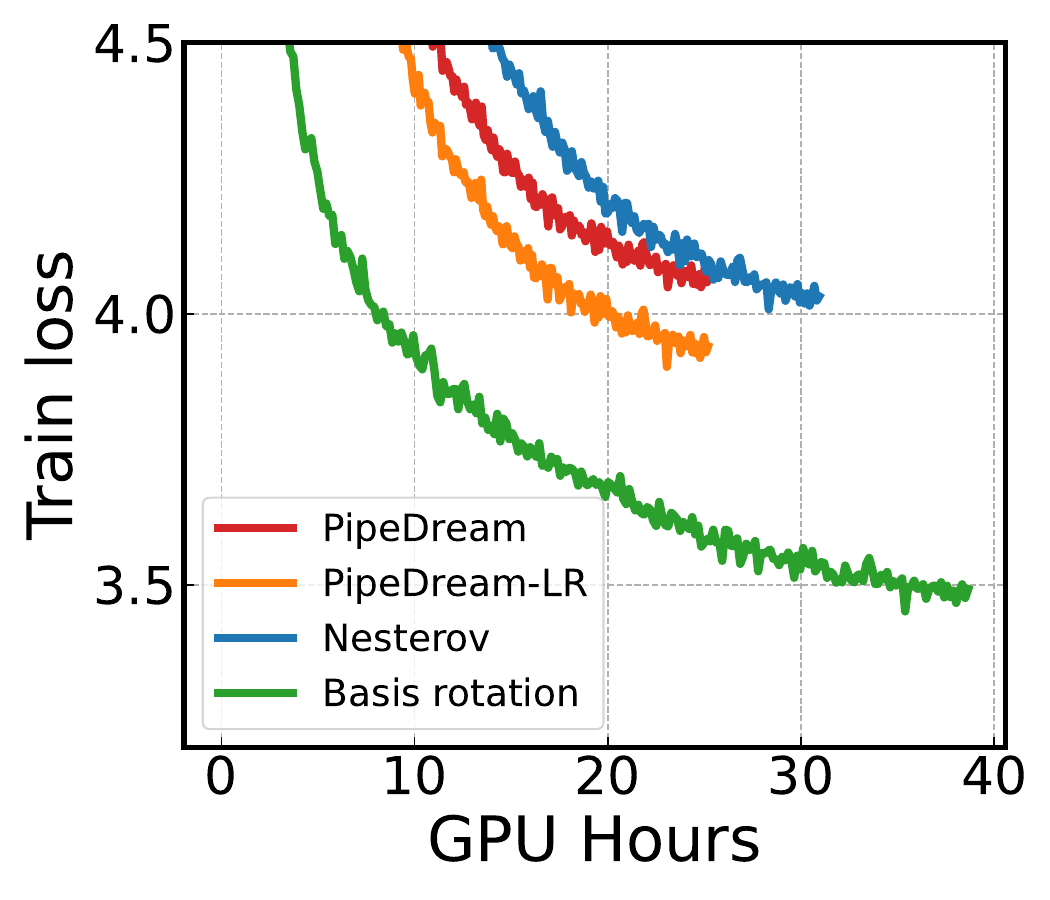}
        \caption{GPU Hours}
        \label{fig:result-main-efficiency-runtime}
    \end{subfigure}
    \begin{subfigure}{0.32\linewidth}
        \includegraphics[width=\linewidth]{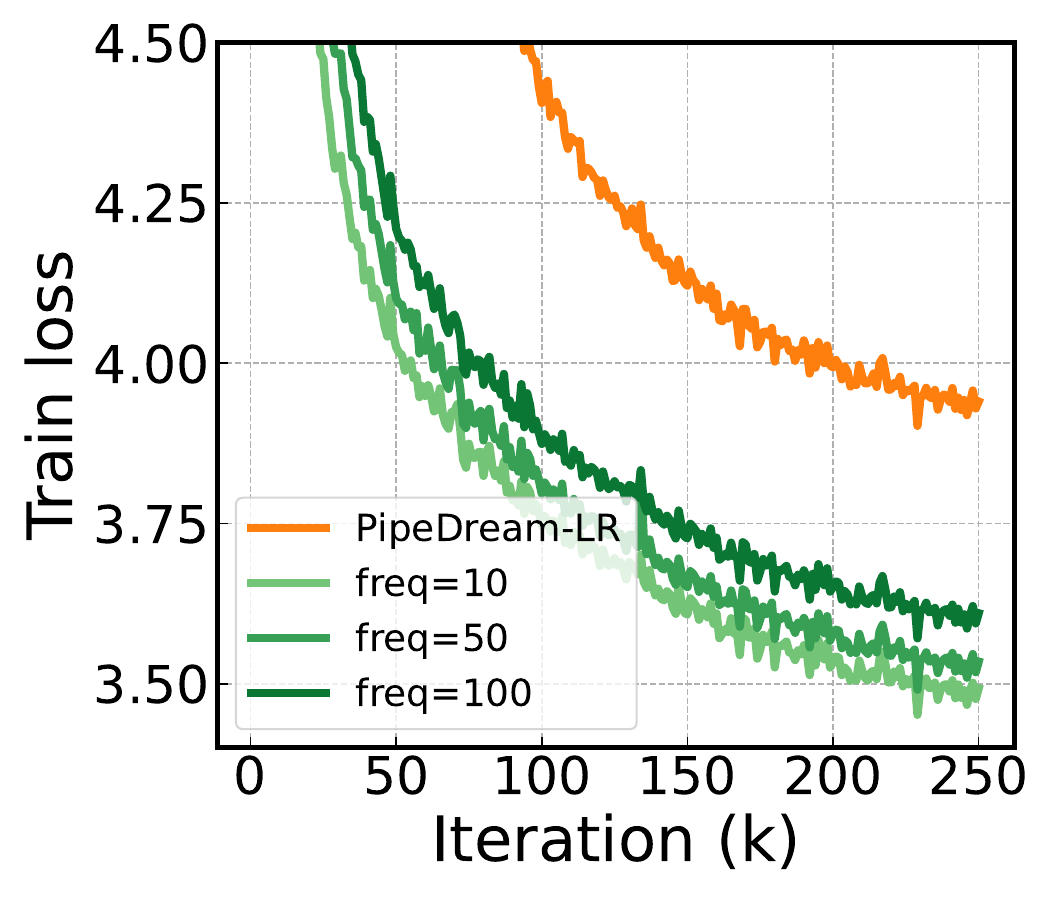}
        \caption{Update frequency}
        \label{fig:result-main-efficiency-frequency}
    \end{subfigure}    
    \begin{subfigure}{0.32\linewidth}
        \includegraphics[width=\linewidth]{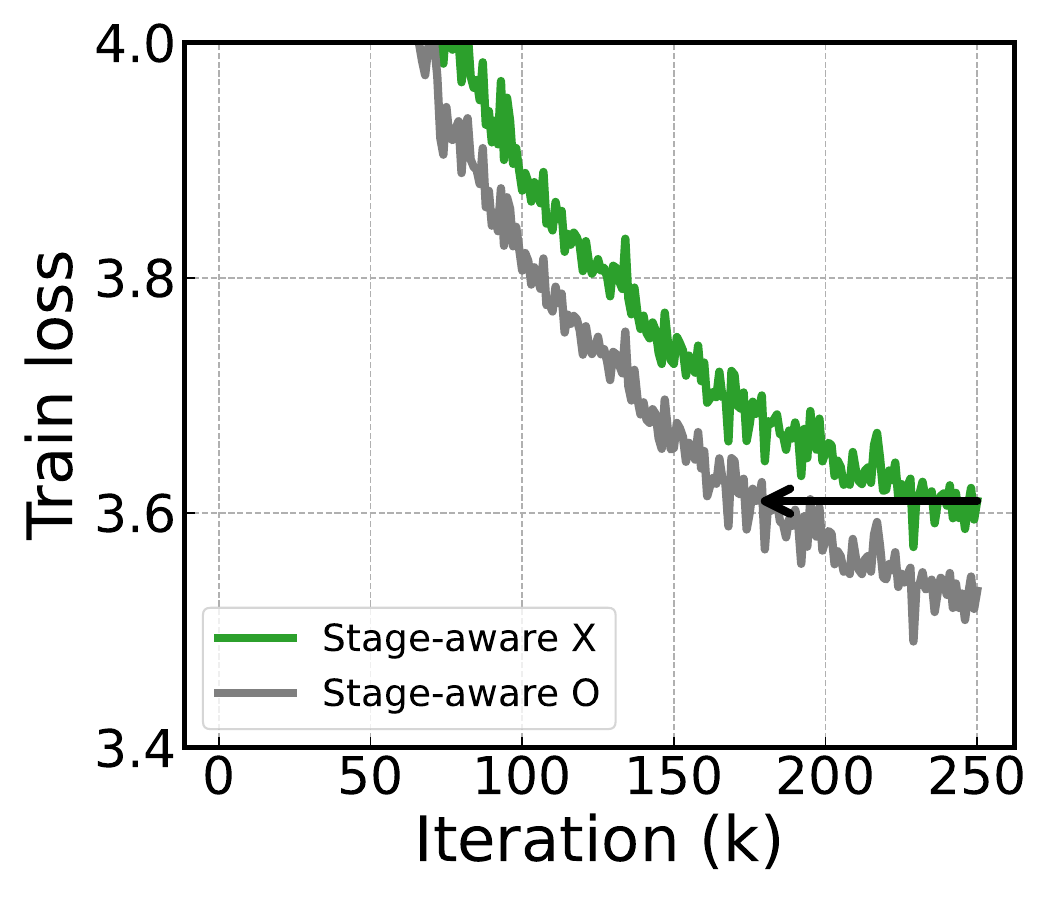}
        \caption{Stage-aware BR}
        \label{fig:result-main-layerwise_freqeuncy}
    \end{subfigure}   
    \caption{
    Efficiency of \ourmethod{}.
    (a) \Ourmethod{} outperforms other methods by a large margin in terms of GPU hours.
    (b) \Ourmethod{} shows little performance degradation with infrequent basis updates.
    (c) Stage-aware \ourmethod{} achieves superior convergence compared to the uniform update frequency under the same total computational budget.
    }
    \label{fig:result-main-efficiency}
\end{figure}

\begin{figure}[t]
    \centering
    \begin{subfigure}{0.32\linewidth}
        \includegraphics[width=\linewidth]{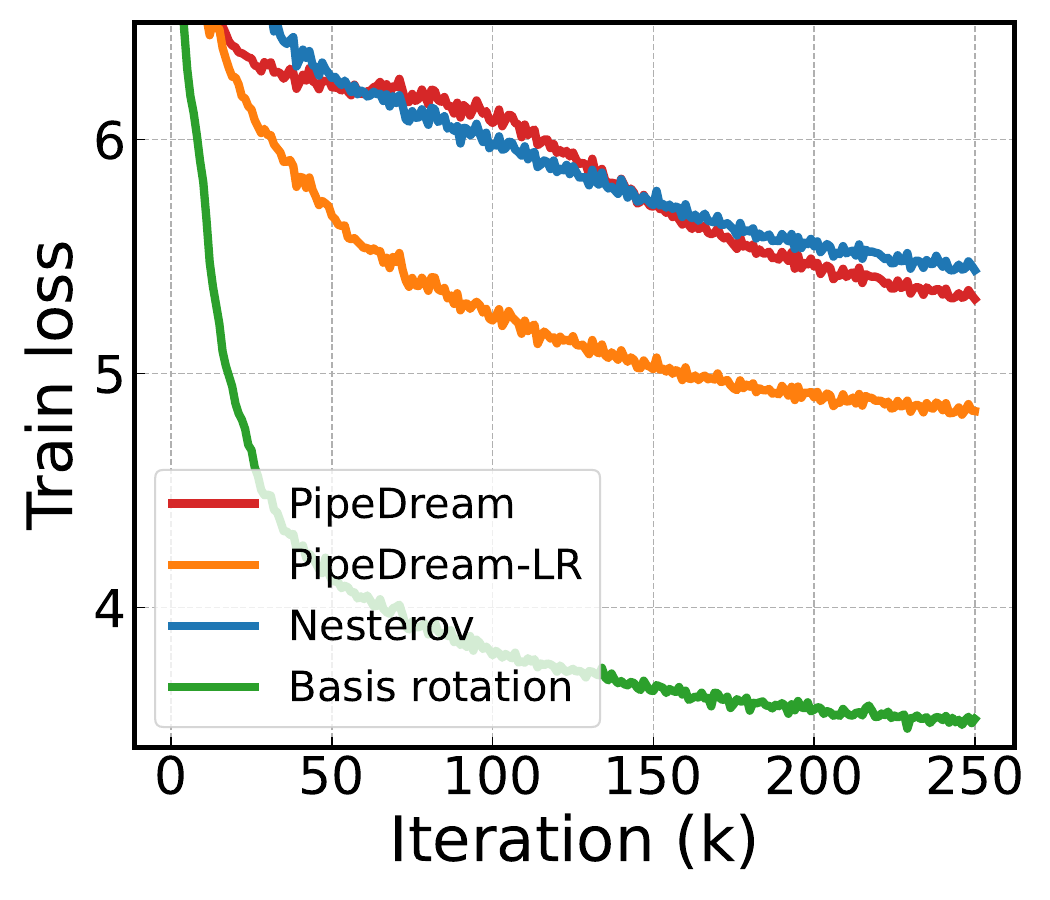}
        \caption{W/o stashing}
        \label{fig:result-main-wows}
    \end{subfigure}    
    \begin{subfigure}{0.32\linewidth}
        \includegraphics[width=\linewidth]{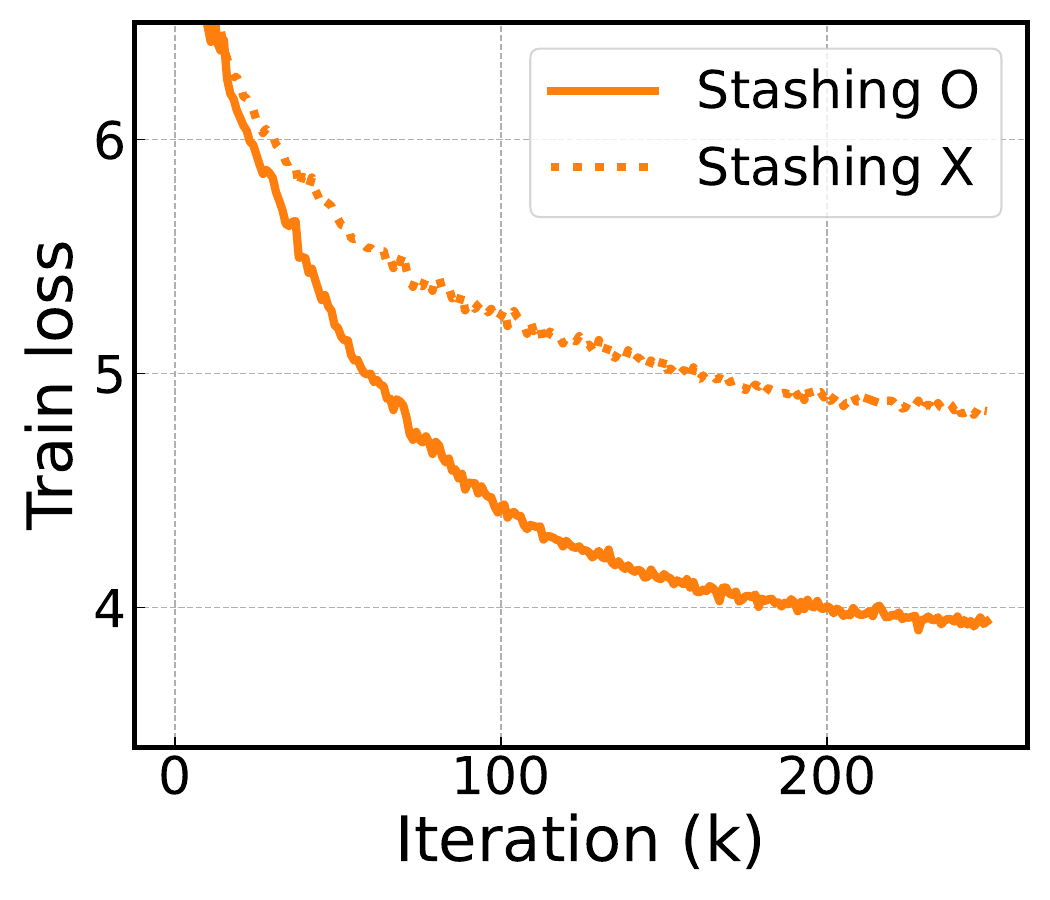}
        \caption{PipeDream-LR}
        \label{fig:result-main-weightstashing-lradamw}
    \end{subfigure}    
    \begin{subfigure}{0.32\linewidth}
        \includegraphics[width=\linewidth]{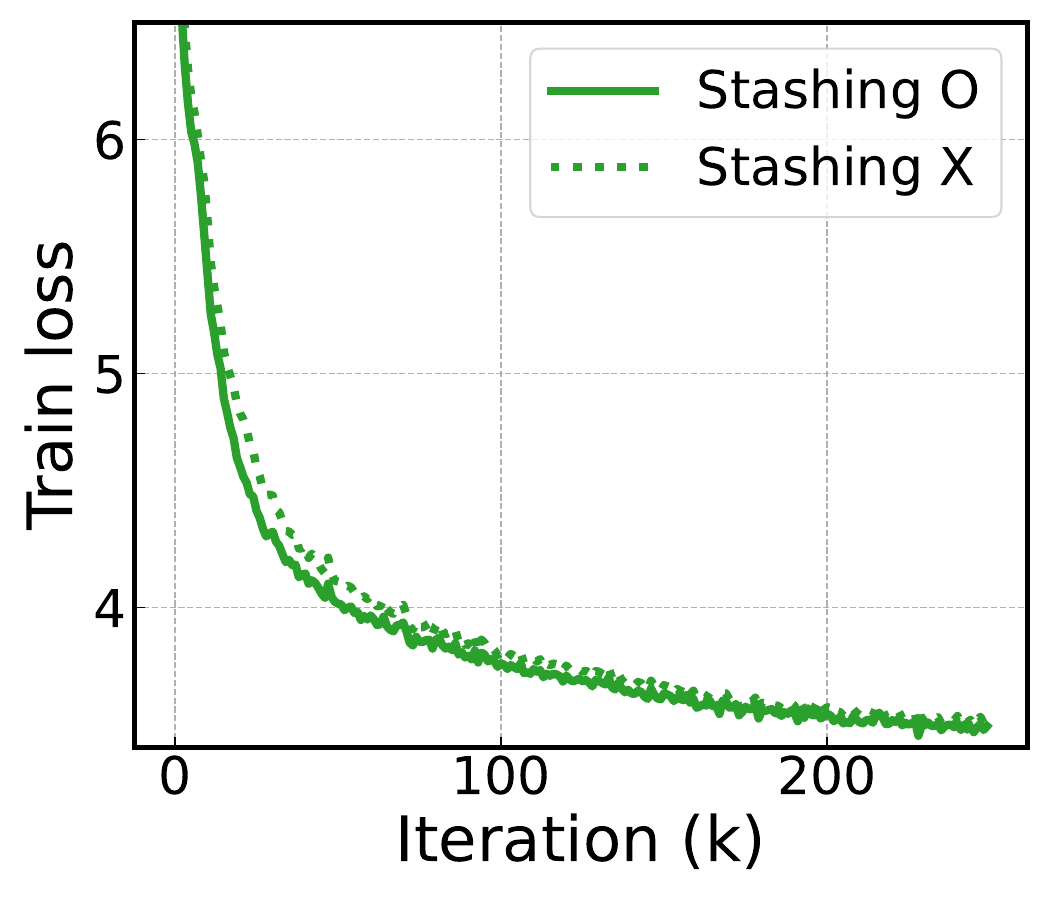}
        \caption{\Ourmethod{}}
        \label{fig:result-main-weightstashing-basisrotation}
    \end{subfigure}    
    \caption{
    Performance of different methods without weight stashing for $P=32$.
    (a) \Ourmethod{} outperforms baselines with a large margin.
    (b,c) While baselines exhibit severe degradation without weight stashing, \ourmethod{} remains robust.
    }
    \label{fig:result-weightstashing}
\end{figure}

\paragraph{Robustness without Weight Stashing}
The weight stashing employed throughout our main experiments ensures correct backpropagation by storing the weights used for the forward pass.
However, it incurs memory overhead that scales linearly with the number of pipeline stages, making it prohibitive in memory-constrained environments.
To evaluate whether the benefits of \ourmethod{} persist in such resource-limited settings, we omit weight stashing, resulting in incorrect gradient computation \citep{gaunt2017ampnet,huo2018decoupled}.
\cref{fig:result-main-wows} shows that \ourmethod{} maintains a significant performance advantage over baselines: it remains robust without weight stashing (\cref{fig:result-main-weightstashing-basisrotation}), whereas the best-performing baseline exhibit severe degradation (\cref{fig:result-main-weightstashing-lradamw}).
Additional experiments with PipeMare-style weight prediction \citep{yang2021pipemare}---which partially addresses the weight discrepancy problem---likewise show \ourmethod{} outperforming baselines (\cref{fig:result-appendix-pipemare} in \cref{app:additional-exp-results}), confirming that our approach is significantly more resilient to incorrect backpropagation.

\begin{figure}[t]
    \centering
    \begin{subfigure}{0.46\linewidth}
        \includegraphics[width=\linewidth]{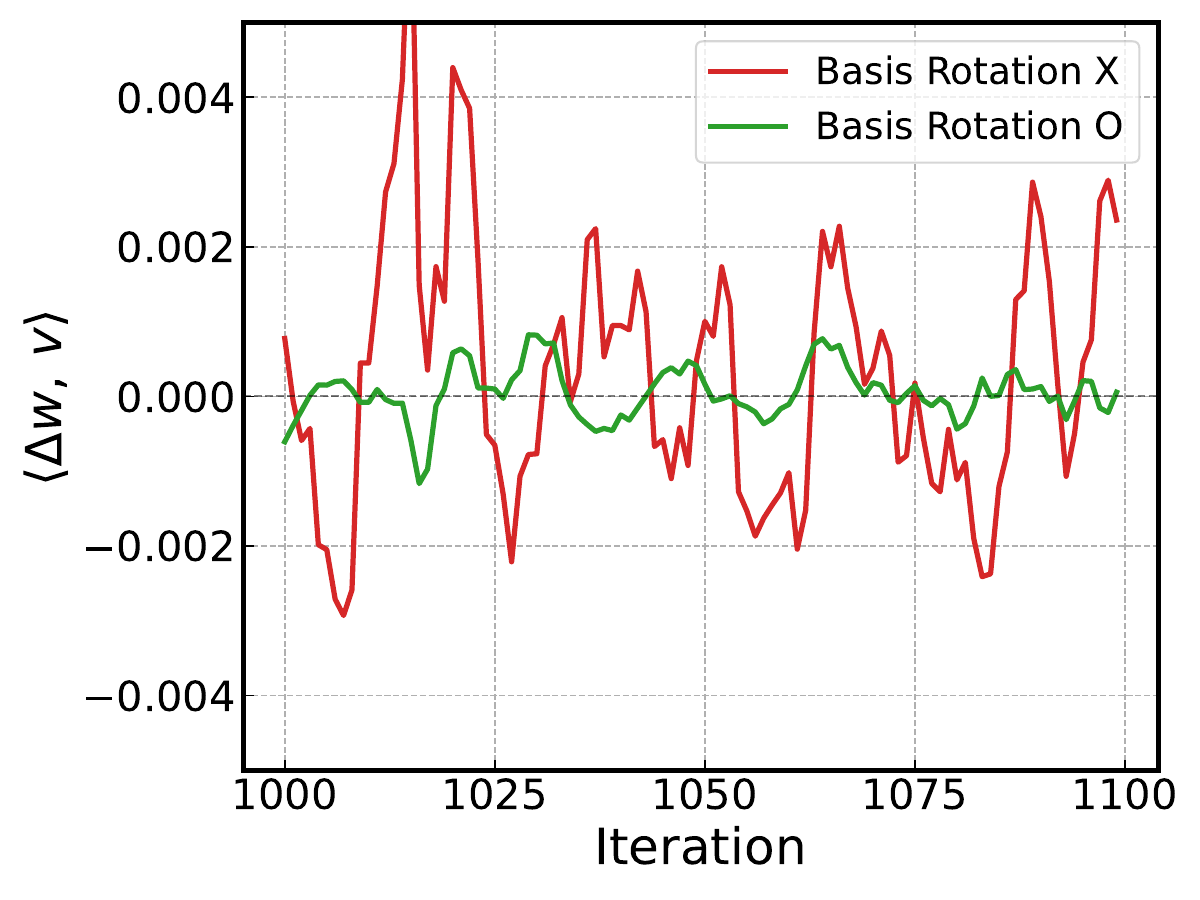}
        \caption{Dominant eigenvector}
        \label{fig:result-main-oscillation-dominant}
    \end{subfigure}    
    \begin{subfigure}{0.46\linewidth}
        \includegraphics[width=\linewidth]{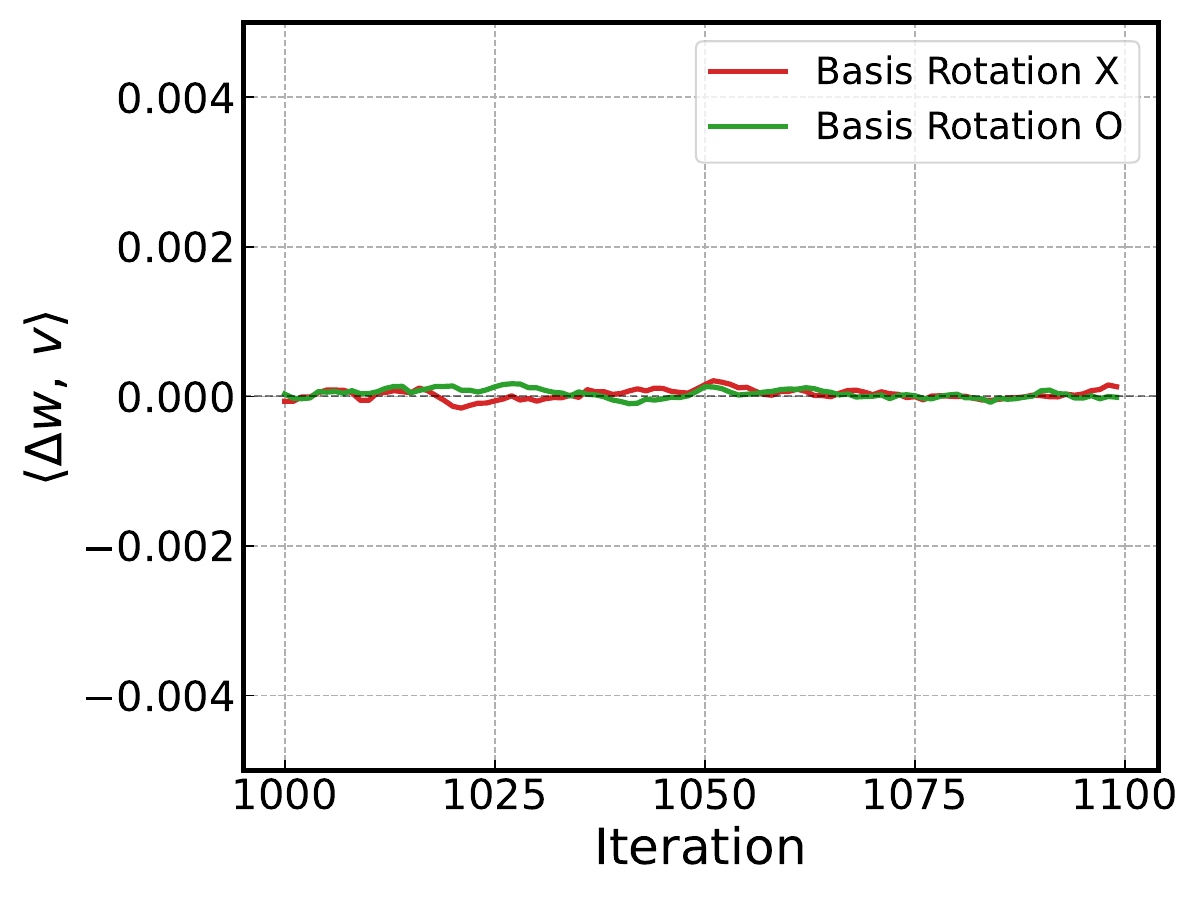}
        \caption{Nondominant eigenvector}
        \label{fig:result-main-oscillation-nondominant}
    \end{subfigure}   
    \caption{
    Oscillation of parameter updates along the (a) dominant and (b) non-dominant eigenvectors of the Hessian. 
    Standard training without basis rotation exhibits severe oscillations along the dominant direction. 
    Applying basis rotation effectively dampens these oscillations. Updates along the non-dominant direction remain stable in both settings.
    }
    \label{fig:result-main-oscillation}
\end{figure}

\paragraph{Empirical Validation of Basis Alignment}
To confirm whether our core arguments made in \cref{sec:analysis} hold in practical LLM pre-training, we empirically validate that basis alignment reduces update oscillations along dominant eigenvectors and decreases the Hessian $(1,1)$-norm.
We estimate the dominant and non-dominant eigenvectors midway through training and track parameter updates along them. 
As shown in \cref{fig:result-main-oscillation}, standard training exhibits severe oscillations along the dominant eigenvector, whereas basis rotation substantially dampens them; non-dominant directions remain stable in both cases.
We further measure the Hessian $(1,1)$-norm -- our theoretical proxy for basis misalignment -- via trace estimation with random Cauchy vectors \citep{xie2024adam}.
Basis rotation reduces the normalized Hessian $(1,1)$-norm (per parameters) from $0.5436$ to $0.1228$.
This confirms that our framework realigns the optimization space, neutralizing the conditions that make adaptive optimizers vulnerable to gradient delay (see \cref{app:sub:empirical-exp-details} for detailed experimental settings).

\paragraph{Additional Results}
We further compare \ourmethod{} against preconditioned optimizers (SOAP \citep{vyas2024soap}, Muon \citep{jordan2024muon}, Scion \citep{pethick2025training}) and verify generalization to Mixture-of-Experts (MoE) architectures; see \cref{app:additional-exp-results} for details.

%% file: tex/6_related_work.tex
\section{Related Works}

\paragraph{Pipeline Parallelism}

GPipe \citep{huang2019gpipe} introduces synchronous pipeline parallelism but its efficiency is limited by pipeline bubbles.
While several works propose advanced pipeline scheduling algorithms to improve device utilization, they still suffer from synchronization bottlenecks \citep{fan2021dapple,li2021chimera}.
Asynchronous pipeline parallelism significantly maximizes throughput by entirely removing the synchronization step but exhibits performance degradation due to delayed gradients \citep{narayanan2019pipedream,narayanan2021memory}.
Prior works address the issue with learning rate scheduling \citep{yang2021pipemare,zhuang2021fully}, 
Nesterov momentum \citep{ajanthan2025nesterov}, and future weight prediction \citep{chen2018efficient,guan2019xpipe}.
However, these methods fail to effectively address delayed gradients under large-scale settings.

\paragraph{Adam for LLM Training}

Adam \citep{kingma2014adam} is widely used for training LLMs due to its superior performance over SGD.
Recent works explain this success from various aspects including directional smoothness \citep{zhang2020gradientclipping,pan2023toward}, heterogeneity of the Hessian spectrum in Transformers \citep{zhang2024transformers}, and heavy-tailed noise of language data \citep{kunstner2024heavy}. 
Meanwhile, Adam loses its advantage over SGD when the Hessian is not near-diagonal, or equivalently, when the Hessian eigenbasis is not aligned to the standard basis \citep{wang2020adasgdbridginggapsgd, zhang2024adam}.
This is because Adam is not equivariant under rotation and its coordinate-wise adaptivity fails to work effectively for the basis-misaligned loss landscapes \citep{xie2024adam,zhang2024understanding}.
We demonstrate that this problem exacerbates the impact of delay.

\paragraph{Hessian Approximation}

Hessian is often approximated via Gauss-Newton matrix \citep{schraudolph2002fast,botev2017practical}, Fisher \citep{amari1998natural,martens2020new}, or empirical Fisher \citep{roux2007topmoumoute, singh2020woodfisher,frantar2021m}, and their Kronecker-factored variants \citep{martens2015optimizing,gupta2018shampoo,vyas2024soap}.
While Gauss-Newton and Fisher matrices serve as a robust proxy for the Hessian \citep{martens2020new,grosse2021adaptive}, they require multiple backpropagation which leads to complex scheduling in asynchronous pipeline parallelism.
Recent works show that Kronecker-factored empirical Fisher can serve as a practical surrogate for the Gauss-Newton matrix \citep{morwani2024new}.
Based on these works, we utilize the eigenvectors of the Kronecker-factored empirical Fisher to perform basis rotation, ensuring a principled yet computationally tractable transformation.

%% file: tex/7_conclusion.tex
\section{Conclusion}

In this work, we show that asynchronous pipeline parallelism is fundamentally challenged by gradient staleness: its convergence properties and model performance degrade critically as the delay grows with pipeline depth or model size.
We identify that the root of this degradation lies in basis misalignment, which prevents a coordinate-wise adaptive optimizer from effectively navigating the loss landscape and leads to amplified oscillations under delay.
To address this, we propose \solution{}, a framework that rotates the optimization space to realign the Hessian eigenbasis with the standard coordinate basis.
Our extensive experiments show that \solution{} effectively neutralizes the impact of delay, significantly accelerating convergence in practical settings, and restoring scalable asynchronous training.

%% file: tex/acknowledgements.tex
\section*{Acknowledgements}

We thank Dongyeop Lee, Jinseok Chung, and Jihun Kim for helpful discussions during the early development of this project.
This work was supported by the Institute of Information \& Communications Technology Planning \& Evaluation (IITP) grant funded by the Korean government (MSIT) (RS-2019-II191906, Artificial Intelligence Graduate School Program (POSTECH), RS-2026-25507427, Development of Efficient Architectures and Training Techniques for High-Performance Lightweight AI Models), the National Research Foundation of Korea (NRF) grant funded by the Korea government (MSIT) (RS-2023-00210466), and Korea Basic Science Institute (National research Facilities and Equipment Center) grant funded by the Korea government(MSIT) (RS-2026-25500419).
Hyunji Jung was supported by the Korea Student Aid Foundation (KOSAF).
Sungbin Shin was supported by Kwanjeong Educational Foundation Scholarship.

%% file: tex/impact_statement.tex
\section*{Impact Statement}

This paper presents work aimed at advancing the field of machine learning by proposing basis rotation to mitigate delay in asynchronous pipeline parallelism.
While the continued development of machine learning models has broad societal implications, there are no specific ethical concerns or negative consequences of this research that we feel must be highlighted here.

%% file: tex/appendix.tex
\crefalias{section}{appendix} 
\crefalias{subsection}{appendix} 

\input{tex/901_stage_analysis}

\input{tex/910_linear_traj_analysis}

\input{tex/911_adam_rotation_equivalence}

\input{tex/902_experiment_details}

\input{tex/903_convergence_analysis}

\input{tex/904_proof_approximation_comparison}

\input{tex/906_connection_recent_optimizers}

\input{tex/912_memory_overhead}

\input{tex/907_more_experiments}

%% file: tex/901_stage_analysis.tex
\section{Analysis for the Number of Stages for Pipeline Parallelism}

\label{app:stage-analysis}

This analysis analyzes the number of stages for pipeline parallelism when training LLMs of varying sizes. 
We calculate the minimum number of pipeline stages required to fit different model architectures onto standard GPU devices with different memory capacities.
By modeling the memory footprint of a single Transformer block including parameters, optimizer states, and activation, we derive a formula to determine how many blocks can fit on a single device, and consequently, how many devices (stages) are needed for the full model.

\paragraph{Setting}
We assume a standard training recipe commonly used for dense LLMs. 
Specifically, we assume mixed precision training with AdamW optimizer for a standard Transformer architecture.
We assume that gradient checkpointing is applied at the beginning of each Transformer block while weight stashes are offloaded to CPU to save memory.
We only employ pipeline parallelism and do not use context, tensor, or fully sharded data parallelism.
This is because these strategies require frequent collective communications and can be impractical for low-bandwidth settings \citep{rajbhandari2020zero,shoeybi2019megatron}.

\paragraph{Notation}

We denote the embedding dimension, the number of attention heads, and the sequence length as $h$, $a$, and $s$ respectively.
We also denote batch size as $b$.
$W$ represents the number of parameters in a single Transformer block.
$L$ represents the total number of Transformer blocks.

\paragraph{Memory required for a single block}
To determine the number of stages, we first calculate the memory required for a single Transformer block.
The memory consumption for one Transformer block consists of parameters, gradients, optimizer states, and activations. 
Assuming mixed-precision training, we need the following memory: (1) $2W$ bytes for half-precision parameters, (2) $2W$ bytes for half-precision gradients, (3) $4W + 4W + 4W = 12W$ bytes for full-precision optimizer states (each corresponding to full-precision master weights, first momentum, and second momentum), (4) $\approx 34 sbh + 5bas^2$ bytes for activations. 
The memory for activation is the result from \citet{korthikanti2023reducing}.
Summing these components, the total memory required for a single block, $M_{\text{block}}$, is
\begin{equation}
    M_{\text{block}} = 16W + 34sbh + 5bas^2 \quad \text{(bytes)}
\end{equation}
Consequently, the total memory for $N$ blocks is $N \times M_{\text{block}}$.

\paragraph{Calculating Required Stages} 
Let $m$ be the memory capacity of a single device (in bytes).
We calculate $N_{\text{max}}$, the maximum number of Transformer blocks that can fit on a single device, using the inequality:
\begin{equation}
    N M_{\text{block}} \leq m,
\end{equation}
which leads to $N_{\text{max}} = \left\lfloor \frac{m}{16W + 34sbh + 5bas^2} \right\rfloor$.
Based on $N_{\text{max}}$, the minimum number of pipeline stages $P$ required to host the full model (which has $L$ total blocks) is determined as follows: (1) when $N_{\text{max}} \geq 1$, at least one block fits on a device and $P = \left\lceil \frac{L}{N_{\text{max}}} \right\rceil$\footnote{This is when we ignore the memory needed for loading embedding layers and language model head. The number of stages may further increase for large embedding layers.} , and (2) when $N_{\text{max}} = 0$, a single device cannot hold even one Transformer block and $P \geq 2 L$ (since we require at least two stages to allocate a single block).

We calculate the required number of stages ($P$) for various LLaMA models \citep{touvron2023llama,dubey2024llama} across different GPU hardware.
Specifically, we assume a batch size of $1$ for simplicity, \ie, $b=1$.
We also assume a sequence length of $4096$, \ie, $s=4096$, which are typically used for initial-phase training of recent LLMs \citep{touvron2023llama2,team2024qwen2,gunter2024apple,abdin2024phi}.
We note that recent LLMs typically use much larger values of $b$ and $s$  so the number of requires stages may increase further.

\begin{table}[H]
\centering
\setcounter{table}{0}
\caption{
Number of required stages ($P$) for different LLaMA models with sequence length $s=4096$ and batch size $b=1$.
Values marked with * indicate that a single Transformer block cannot fit on one device ($N_{max}=0$).}
\label{tab:stages_s4096}
\resizebox{\textwidth}{!}{
\begin{tabular}{lccccccccc}
\toprule
Model & $h$ & $a$ & $W$ & $L$ & RTX3070 & RTX3080 & RTX3090 & A6000 & A100 \\
 & & & & & ($8$GB) & ($16$GB) & ($24$GB) & ($48$GB) & ($80$GB) \\
\midrule
Llama $3.2$ 1B & $2048$ & $32$ & $\approx 67M$ & $16$ & $16$ & $6$ & $4$ & $2$ & $1$ \\
Llama $3.2$ 3B & $3072$ & $24$ & $\approx 113M$ & $28$ & $28$ & $10$ & $6$ & $3$ & $2$ \\
LLaMA $1$-7B & $4096$ & $32$ & $\approx 202M$ & $32$ & $32$ & $16$ & $11$ & $5$ & $3$ \\
LLaMA $1$-13B & $5120$ & $40$ & $\approx 317M$ & $40$ & $\ge 80^*$ & $40$ & $20$ & $8$ & $5$ \\
LLaMA $1$-33B & $6656$ & $52$ & $\approx 535M$ & $60$ & $\ge 120^*$ & $60$ & $60$ & $20$ & $12$ \\
LLaMA $1$-65B & $8192$ & $64$ & $\approx 810M$ & $80$ & $\ge 160^*$ & $\ge 160^*$ & $80$ & $40$ & $20$ \\
Llama $3.1$ 405B & $16384$ & $128$ & $\approx 3.19B$ & $126$ & $\ge 512^*$ & $\ge 512^*$ & $\ge 512^*$ & $\ge 512^*$ & $126$ \\
\bottomrule

\end{tabular}
}
\end{table}

The results are presented in \cref{tab:stages_s4096}.
For devices with small memory (\eg, RTX3070), we need at least $80$ stages even for $13$B models.
For larger models, the number of stages easily increases to hundreds.
Even with a device with larger memory (\eg, A6000), we need tens of stages for moderate-size models (\eg, $33$B, $65$B). 
This analysis confirms that the number of stages in pipeline parallelism can easily reach tens or hundreds for the case of LLMs.

%% file: tex/910_linear_traj_analysis.tex
\section{\texorpdfstring{Stability of the Linearized Trajectory under Delay (\cref{subsec:mechanism})}{Stability of the Linearized Trajectory under Delay (Section 2.1)}}\label{app:linear_traj_analysis}

Before delving into the main argument, we first define the standard and delayed trajectories. The actual trajectory without delay is updated by
\begin{equation}
    x_{t} = x_{t-1} + u_t = x_{t-1} - \eta \frac{m_t}{\sqrt{v_t+\epsilon}},
\end{equation}
where $m_t = \beta_1 m_{t-1} + (1-\beta_1) g_t$ and $v_t = \beta_2 v_{t-1} + (1-\beta_2) g_t^2$, and $g_t = \nabla f(x_{t-1};\xi_t)$ denotes the stochastic gradient evaluated at the current iterate $x_{t-1}$ with batch $\xi_t$.
The trajectory generated by the delayed gradients obeys the update rule
\begin{equation}
    \tilde{x}_{t} = \tilde{x}_{t-1} + \tilde{u}_t = \tilde{x}_{t-1} - \eta \frac{\tilde{m}_t}{\sqrt{\tilde{v}_t+\epsilon}},
\end{equation}
with the moments following the identical recursion
\begin{equation}
    \tilde{m}_t = \beta_1 \tilde{m}_{t-1} + (1-\beta_1) \tilde{g}_t,
    \qquad
    \tilde{v}_t = \beta_2 \tilde{v}_{t-1} + (1-\beta_2) \tilde{g}^2_t .
\end{equation}
The only difference from the non-delayed trajectory lies in which gradient is consumed.
Under a delay of $\tau$, the gradient applied at update step $t$ was produced $\tau$ updates earlier, i.e., evaluated at the iterate $\tilde{x}_{t-1-\tau}$:
\begin{equation}
    \tilde{g}_t =
    \begin{cases}
        \nabla f(\tilde{x}_{0};\, \xi_t), & t \le \tau, \\[3pt]
        \nabla f(\tilde{x}_{t-1-\tau};\, \xi_t), & t > \tau.
    \end{cases}
    \label{eq:delayed_grad}
\end{equation}

Now, we can derive the following statement: \emph{When the gradient signal dominates the noise ($\nabla f(x;\xi)\approx\nabla f(x)$), the linearized non-delayed trajectory ($u_t\approx u$ for all $1\le t\le s$) makes the delayed trajectory closely approximate the original ($\tilde{x}_t\approx x_t$ for all $0\le t\le s$).}

We justify the statement by strong induction on $t$.

\begin{proof}
\textbf{Base case ($t=0$):} $\tilde{x}_0 = x_0$ by initialization.

\textbf{Inductive step for $t\le\tau$:} For every $j\le t$, $\tilde{x}_0 = x_0$, hence by signal-dominance,
\begin{equation}
\tilde{g}_j = \nabla f(x_0;\xi_j) \approx \nabla f(x_0),
\quad
\tilde{m}_j \approx (1-\beta_1)\nabla f(x_0),
\quad
\tilde{v}_j \approx (1-\beta_2)\nabla f(x_0)^{2}.
\end{equation}
Using the inductive hypothesis and the local consistency assumption ($u_t \approx u_1$):
\begin{align*}
    \tilde{x}_{t} &= \tilde{x}_{t-1} - \eta \frac{\tilde{m}_t}{\sqrt{\tilde{v}_t+\epsilon}} \\
    &\approx \tilde{x}_{t-1} - \eta \frac{(1-\beta_1) g_1}{\sqrt{(1-\beta_2) g_1^2 + \epsilon}} \\
    &= \tilde{x}_{t-1} + u_1 \\
    &\approx x_{t-1} + u_t = x_{t}.
\end{align*}

\textbf{Inductive step for $t>\tau$:} By the inductive hypothesis, $\tilde{x}_{t-1-\tau}\approx x_{t-1-\tau}$. Applying the signal-dominance assumption twice together with continuity:
\begin{equation}
\tilde{g}_t = \nabla f(\tilde{x}_{t-1-\tau};\xi_t)
\approx \nabla f(\tilde{x}_{t-1-\tau})
\approx \nabla f(x_{t-1-\tau})
\approx \nabla f(x_{t-1-\tau};\xi_{t-\tau}) = g_{t-\tau}.
\end{equation}
The cross-batch step $\xi_t$ vs.\ $\xi_{t-\tau}$ is bridged through the deterministic intermediate $\nabla f(x_{t-1-\tau})$, so no condition on the noise law is invoked. Consequently, the moving averages also approximate their un-delayed counterparts at step $t-\tau$, yielding $\tilde{m}_t \approx m_{t-\tau}$ and $\tilde{v}_t \approx v_{t-\tau}$. Therefore,
\begin{align*}
    \tilde{x}_{t} &= \tilde{x}_{t-1} - \eta \frac{\beta_1 \tilde{m}_{t-1} + (1-\beta_1) \tilde{g}_t}{\sqrt{\beta_2 \tilde{v}_{t-1} + (1-\beta_2) \tilde{g}^2_t + \epsilon}} \\
    &\approx x_{t-1} - \eta \frac{\beta_1 m_{t-\tau-1} + (1-\beta_1) g_{t-\tau}}{\sqrt{\beta_2 v_{t-\tau-1} + (1-\beta_2) g^2_{t-\tau} + \epsilon}} \\
    &= x_{t-1} + u_{t-\tau}.
\end{align*}
Applying the local consistency assumption again ($u_{t-\tau} \approx u_t$), we conclude:
\begin{equation*}
    \tilde{x}_{t} \approx x_{t-1} + u_t = x_{t}.
\end{equation*}
Thus, by strong induction, the delayed trajectory $\tilde{x}_t$ follows the original trajectory $x_t$ closely along the optimization path.
\end{proof}

%% file: tex/911_adam_rotation_equivalence.tex
\section{\texorpdfstring{Derivation of Adam's Equivalence under Fixed Rotation (\cref{subsec:solution})}{Derivation of Adam's Equivalence under Fixed Rotation (Section 3.1)}}\label{app:adam_rotation_equi}
In this section, we provide a detailed derivation to show that applying the standard Adam update in the basis-aligned coordinate system $\tilde{w} = \gU^\top w$ mathematically recovers the basis rotation update rule in the original space (\cref{section4:eq:rotated2}).
Let the objective function in the rotated parameter space be defined as $\tilde{f}(\tilde{w}) \triangleq f(\gU \tilde{w})$. By the multivariable chain rule, the gradient with respect to the rotated parameters $\tilde{w}$ is given by:
\begin{align*}
    \nabla \tilde{f}(\tilde{w}_{t-1}) = \gU^\top \nabla f(\gU \tilde{w}_{t-1}) = \gU^\top \nabla f(w_{t-1}).
\end{align*}
Denoting $\tilde{g}_s = \nabla \tilde{f}(\tilde{w}_{s-1}) = \gU^\top \nabla f(w_{s-1})$ for each step $s$, the standard Adam update rule applied to the parameters $\tilde{w}$ in this rotated space yields:
\begin{align*}
    \tilde{w}_{t} = \tilde{w}_{t-1} - \eta_t \frac{\tilde{m}_t}{\sqrt{\tilde{v}_t + \epsilon}},
\end{align*}
where the first and second moment estimates ($\tilde{m}_t$ and $\tilde{v}_t$) are computed using the gradients in the rotated space:
\begin{align*}
    \tilde{m}_t &= (1-\beta_1) \sum_{s=1}^{t} \beta_1^{t-s} \tilde{g}_s = (1-\beta_1) \sum_{s=1}^{t} \beta_1^{t-s} \gU^\top \nabla f(w_{s-1}), \\
    \tilde{v}_t &= (1-\beta_2) \sum_{s=1}^{t} \beta_2^{t-s} \tilde{g}_s^2 = (1-\beta_2) \sum_{s=1}^{t} \beta_2^{t-s} (\gU^\top \nabla f(w_{s-1}))^2.
\end{align*}
To express this update trajectory in the original parameter space, we multiply both sides of the update equation by the orthogonal rotation matrix $\gU$. Using the identity $\gU \tilde{w} = \gU \gU^\top w = w$, we obtain:
\begin{align*}
    \gU \tilde{w}_{t} &= \gU \tilde{w}_{t-1} - \eta_t \gU \frac{\tilde{m}_t}{\sqrt{\tilde{v}_t + \epsilon}}, \\
    w_{t} &= w_{t-1} - \eta_t \gU \frac{(1-\beta_1) \sum_{s=1}^{t} \beta_1^{t-s} \gU^\top \nabla f(w_{s-1})}{\sqrt{(1-\beta_2) \sum_{s=1}^{t} \beta_2^{t-s} (\gU^\top \nabla f(w_{s-1}))^2 + \epsilon}}.
\end{align*}
This strictly recovers \cref{section4:eq:rotated2}, confirming that Adam in the rotated space is equivalent to applying the rotation matrix $\gU$ to the gradients, computing the coordinate-wise adaptive steps, and projecting the update back to the original space.

%% file: tex/902_experiment_details.tex
\section{Experimental Details}
\label{app:exp-details}

\subsection{\texorpdfstring{\cref{sec:analysis}}{Section 2}}
\label{app:sub:toy-exp-details}
\paragraph{\cref{subsec:mechanism}: \cref{fig:quadratic}}
To compare Adam's coordinate-wise adaptivity with SGD, we employed AdaSGD \citep{wang2020adasgdbridginggapsgd}, which scales a uniform learning rate across all parameters by taking the exponential moving average of the average second momentum.
This setup enabled us to use an identical learning rate for both AdaSGD and Adam, facilitating a direct comparison of how gradient delay interacts with coordinate-wise adaptivity.
We set the learning rate to $1.0$ and $\beta_1 = 0$ in order to isolate the adversarial update directions induced by delayed gradients without the confounding effect of momentum.
For $\beta_2$, we used $0.1$ which is smaller than the standard Adam setting, to more clearly visualize oscillatory behavior.
Nevertheless, our empirical analysis in \cref{subsec:analysis-empirical} confirms that this alignment-dependent instability persists even with standard hyperparameters in practical training scenarios.
Finally, we defined the convergence criterion as reaching a loss of $15.0$ for each optimizer, tested with and without a delay of $\tau=2$.

\paragraph{\cref{subsec:analysis-empirical}: \cref{fig:toy-example}}
The spiral loss landscape was defined as $f(r, \theta) = r^2 + (20 \sin(4 r - \theta) + 1)^2$, where $r$ and $\theta$ represent the radius and angle in polar coordinates.
We set the learning rate to $0.1$ and $\beta_1 = 0$ to focus on the effect of delayed gradients without momentum. 
We used $\beta_2 = 0.9$, matching standard Adam settings.
Even with such a large $\beta_2$, Adam exhibited low oscillation when the Hessian eigenbasis is aligned with the standard coordinate basis, whereas severe oscillations emerge in the misaligned region.
For \cref{fig:delay_conv_speed}, we introduced a delay of $\tau = 1$ at a randomly chosen iteration during training without delay.
We then measured the number of iterations required to traverse an angular displacement of $3^\circ$ with and without delay, and report their ratio as the slowdown metric.

\subsection{\texorpdfstring{\cref{sec:experiments}}{Section 4}}
\label{app:sub:LLM-exp-details}

Our model is based on nanoGPT \citep{Karpathy2022}.
The model has an embedding dimension of $384$, sequence length of $512$, $6$ attention heads, and $32$ Transformer blocks.
The model has learnable positional embedding and untied language model head.
When the number of stages is $P$, we allocate $32/P$ Transformer blocks to each stage. 
The first and the last stage also hold embedding and language model head respectively.
This model has approximately $96$ million parameters.

The model is trained in bf16 mixed precision.
We use the batch size of $8$, which is the maximum without incurring out-of-memory error in our setting, and do not use gradient accumulation since it changes the level of delay \citep{ajanthan2025nesterov}.
For each method and stage, we search the learning rate among $\{0.0001, 0.0003, 0.001\}$.
$\beta_1$ is set to $0.99$ for Nesterov following the original paper \citep{ajanthan2025nesterov} and set to $0.9$ for others.
The learning rate is scheduled using a linear warmup for the first $1.2\%$ of iterations, followed by a cosine decay for the remainder of the training process.
We use $\beta_2$ of $0.999$, gradient clipping value of $1.0$, and weight decay of $0.01$.
Unless specified otherwise, we set the basis update frequency to $10$ for \ourmethod{}.
We only perform rotation to the MLP and attention layers excluding embedding layers, language model head, bias parameters, and layer normalization parameters.
We use at most eight RTX 3090 GPUs for the experiments.
When the number of pipeline stages $P$ exceeds the number of available physical GPUs, we configure multiple virtual stages to share a single GPU; for example, a $32$-stage experiment on eight RTX 3090 GPUs assigns four virtual stages per GPU.
This configuration preserves stage depth and gradient delay exactly, since the pipeline scheduling and communication order between forward, backward, and update operations are unchanged.
While absolute hardware utilization and per-iteration wall-clock time may be affected by context-switching overhead, this overhead applies uniformly across all baselines and \ourmethod{}, so the comparative results---including the loss vs. GPU hour comparison in \cref{fig:result-main-efficiency-runtime}---remain valid.

Our $1$B model has an embedding dimension of $1728$, sequence length of $512$, $27$ attention heads, and $24$ Transformer blocks.
The model has approximately $1.03$ billion parameters.
We use the learning rate of $0.0001$.
We use at most six A100 GPUs for these experiments.

\subsection{Empirical Validation in \cref{fig:result-main-oscillation}}

\label{app:sub:empirical-exp-details}

To observe the optimization trajectory and measure oscillation along specific eigenvector directions in \cref{fig:result-main-oscillation}, we estimate the dominant eigenvector of the Hessian midway through the training process using power iteration.
For the non-dominant direction, we sample a random vector and explicitly orthogonalize it against the estimated dominant eigenvector. 
We then track the projection of the parameter updates along these two isolated directions for a span of $100$ iterations. 
For the Hessian $(1,1)$-norm estimation, we follow the methodology of \citet{xie2024adam}, utilizing Hessian-vector products with random Cauchy vectors to approximate the norm without explicitly constructing the full Hessian matrix. 
Specifically, this estimation is rigorously computed over $8,000$ batches using $200$ sampled random Cauchy vectors.
The resulting values are then normalized by dividing them by the total number of parameters in the model to provide a standardized metric for comparison.

%% file: tex/903_convergence_analysis.tex
\section{\texorpdfstring{Proof of \cref{thm:convergence}}{Proof of Theorem 2.3}}
\label{app:convergence-proof}

In this section, we present the proof of \cref{thm:convergence}.
In the following, we use $H_i$ instead of $\Smn_i$ defined in \cref{assumption:smoothness}.

We first present several lemmas necessary for the proof.
The lemmas are taken from \citet{xie2024adam} for the case of Adam. 
\begin{lemma}\label[lemma]{lemma:secondorderterm}
(Lemma D.3. of \citet{xie2024adam})
    For any twice differentiable loss which is $H$-coordinate-wisely-smooth w.r.t. $\ell_\infty$ norm, we have for any $x$ and $\Delta \in \R^d$,
    \begin{align*}
        | \Delta^\top \nabla^2 f(x) \Delta | \leq \sum_{i=1}^d H_i \Delta_i^2.
    \end{align*}
\end{lemma}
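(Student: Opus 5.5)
We apply Taylor's theorem to $f$ along the segment from $x$ to $x+\Delta$ and bound the resulting quadratic form one entry at a time, with the smoothness constants doing the work. The plan is to first turn \cref{assumption:smoothness} into a row-wise bound on the Hessian. For fixed $x$ and coordinate $i$, the map $w\mapsto \nabla_i f(w)$ is $H_i$-Lipschitz with respect to $\|\cdot\|_\infty$. Its derivative is the $i$-th row of $\nabla^2 f(x)$. Taking a directional derivative along any $v$ with $\|v\|_\infty\le 1$ therefore gives $|\sum_j (\nabla^2 f(x))_{ij} v_j| \le H_i$. Because $v$ ranges over the whole $\ell_\infty$ unit ball, and the dual norm of $\ell_\infty$ is $\ell_1$, this yields the row-wise bound
\begin{align*}
\sum_{j=1}^d |(\nabla^2 f(x))_{ij}| \le H_i \quad \text{for every } i.
\end{align*}
This uses only twice differentiability and the limit definition of the derivative.

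With that in hand, we bound the quadratic form directly. Writing $A=\nabla^2 f(x)$, we have
\begin{align*}
|\Delta^\top A \Delta| \le \sum_{i,j} |A_{ij}|\,|\Delta_i|\,|\Delta_j| \le \sum_{i,j} |A_{ij}| \,\frac{\Delta_i^2+\Delta_j^2}{2},
\end{align*}
using $|ab|\le (a^2+b^2)/2$.

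Splitting the last sum gives two halves. The first half is $\frac12\sum_i \Delta_i^2 \sum_j |A_{ij}| \le \frac12 \sum_i H_i \Delta_i^2$ by the row bound. The second half is $\frac12\sum_j \Delta_j^2 \sum_i |A_{ij}|$. Since the Hessian of a twice (continuously) differentiable function is symmetric, the column sum $\sum_i |A_{ij}|$ equals the row sum for index $j$, which is at most $H_j$. Adding the two halves gives $\sum_i H_i\Delta_i^2$, which is the claim.

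The main obstacle is the symmetry step. It needs Schwarz's theorem, so I would state continuity of second derivatives as part of the regularity assumption, as is standard in \citet{xie2024adam}. If continuity is unavailable, I would replace $A$ by its symmetrization $(A+A^\top)/2$. This does not change $\Delta^\top A\Delta$, and the same argument goes through with the averaged row and column sums, each still at most the average of the corresponding $H$'s. The remaining steps are routine.
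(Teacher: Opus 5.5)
Your main argument is correct and is the standard proof of this lemma. The paper does not reprove it; it imports Lemma D.3 of \citet{xie2024adam}, which is proved the same way. Duality between $\ell_\infty$ and $\ell_1$ turns coordinate-wise smoothness into the row bound $\sum_j |(\nabla^2 f(x))_{ij}| \le H_i$. Then $|\Delta_i\Delta_j| \le (\Delta_i^2+\Delta_j^2)/2$ together with symmetry of the Hessian bounds each half of the double sum by $\frac12\sum_i H_i\Delta_i^2$. Taylor's theorem, mentioned in your opening sentence, plays no role, since the lemma is a pointwise statement about the matrix $\nabla^2 f(x)$.

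Your fallback for a non-symmetric Hessian, however, is wrong and should be dropped. Symmetrizing does not help. The $i$-th row sum of $(A+A^\top)/2$ is at most half the $i$-th row sum of $A$ plus half the $i$-th \emph{column} sum of $A$. That column sum is not controlled by $H_i$ unless $A$ is symmetric, which is exactly what you are trying to avoid. In fact no argument using only row bounds can succeed. The matrix $A = e_1 e_2^\top$ has row sums $(1,0)$, so $H = (1,0)$ satisfies them. Yet $\Delta = (1,2)^\top$ gives $\Delta^\top A \Delta = 2 > 1 = \sum_i H_i \Delta_i^2$.

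The fallback is also unnecessary, and so is the extra continuity hypothesis you propose. Your row-bound step already relies on $\nabla f$ being differentiable at $x$. That regularity forces $\nabla^2 f(x)$ to be symmetric, by the symmetry theorem for second Fr\'echet derivatives: twice differentiability at the point suffices. Continuity of second partials is needed only for the version of Schwarz's theorem phrased purely in terms of partial derivatives. So the lemma holds exactly as stated, with no assumption beyond twice differentiability.
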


\begin{lemma} \label[lemma]{lemma:accumulated-squared-update}
    (Lemma 3.12. of \citet{xie2024adam})
    For any $0 < \beta_2 < 1$, for any scalar sequences $\{v_t\}_{t=0}^T$ and $\{g_t\}_{t=1}^T$ satisfying $v_0 \geq 0, v_1 > 0,$ and $ v_t - \beta_2 v_{t-1} \geq (1 - \beta_2) g_t^2$ for $t \geq 1$, the following holds:
    \begin{align*}
        \sum_{t=1}^T \frac{g_t^2}{v_t} \leq T + \frac{\beta_2}{1-\beta_2} \ln \frac{v_T}{v_0}.
    \end{align*}
\end{lemma}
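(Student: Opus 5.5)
The plan is a per-step bound on each summand followed by telescoping.

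First I check that every summand is well defined. From $v_t \ge \beta_2 v_{t-1} + (1-\beta_2) g_t^2 \ge \beta_2 v_{t-1}$ and $v_1>0$, induction gives $v_t \ge \beta_2^{t-1} v_1 > 0$ for all $t\ge 1$. So each ratio $g_t^2/v_t$ is finite. If $v_0 = 0$, the right-hand side is $+\infty$ and there is nothing to prove. I therefore assume $v_0>0$, so that $v_t>0$ for all $0\le t\le T$.

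Next I bound a single term. The hypothesis rearranges to $g_t^2 \le (v_t - \beta_2 v_{t-1})/(1-\beta_2)$. Dividing by $v_t>0$ and writing $x_t = v_{t-1}/v_t$ gives
\begin{align*}
\frac{g_t^2}{v_t} \le \frac{1-\beta_2 x_t}{1-\beta_2} = 1 + \frac{\beta_2}{1-\beta_2}\,(1-x_t),
\end{align*}
using the split $1-\beta_2 x_t = (1-\beta_2) + \beta_2(1-x_t)$. I then apply the elementary inequality $1 - 1/y \le \ln y$ for $y>0$ with $y = v_t/v_{t-1}$. This yields $1 - x_t \le \ln(v_t/v_{t-1})$, and since $\beta_2/(1-\beta_2)>0$,
\begin{align*}
\frac{g_t^2}{v_t} \le 1 + \frac{\beta_2}{1-\beta_2}\ln\frac{v_t}{v_{t-1}}.
\end{align*}

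Finally, summing over $t=1,\dots,T$ telescopes the logarithms to $\ln(v_T/v_0)$, which gives exactly the claimed bound $T + \frac{\beta_2}{1-\beta_2}\ln\frac{v_T}{v_0}$.

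The only delicate point is the handling of positivity, namely the case $v_0=0$ and ensuring $v_t>0$ so the logarithms are defined. That is settled by the induction in the first step. The rest is the single inequality $1-1/y\le \ln y$.
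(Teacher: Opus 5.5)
Your proof is correct and is essentially the standard argument behind this lemma, which the paper does not reprove but imports from Lemma 3.12 of Xie et al.: bound each summand by $1+\frac{\beta_2}{1-\beta_2}\bigl(1-\frac{v_{t-1}}{v_t}\bigr)$, apply $1-x\le \ln(1/x)$, and telescope the logarithms. Your explicit positivity check ($v_t\ge \beta_2^{t-1}v_1>0$) and your observation that the bound is vacuous when $v_0=0$ make the logarithms well defined.
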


\begin{lemma}\label[lemma]{lemma:denominator}
(Lemma 3.13. of \citet{xie2024adam})
    Under \cref{assumption:bounded-stochasticity}, for any $i \in [d]$, the following holds:
    \begin{align*}
        \E \sum_{t=1}^T \frac{g_{t,i} \bar{g}_{t,i}}{\sqrt{v_{t,i} + \epsilon}} \geq \frac{1}{2} \E \sum_{t=1}^T \frac{\bar{g}_{t,i}^2}{\sqrt{\tilde{v}_{t,i} + \epsilon}} - \sqrt{1 - \beta_2} T \sigma_i - \frac{\sigma_i \beta_2}{\sqrt{1-\beta_2}} \E  \left[\ln 
    \frac{v_{T,i} + \epsilon}{v_{0,i} + \epsilon}\right],
    \end{align*}
    where $\tilde{v}$ is defined as $\tilde{v}_{t,i} = (1-\beta_2) (\bar{g}_{t,i}^2 + \sigma_i^2) + \beta_2 v_{t-1,i}$ and $g_t$ and $\bar{g}_t$ are stochastic the full-batch gradient respectively.
\end{lemma}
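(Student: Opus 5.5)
The plan is to compare the actual adaptive denominator $\sqrt{v_{t,i}+\epsilon}$ with the "predictable" surrogate $\sqrt{\tilde v_{t,i}+\epsilon}$. The surrogate $\tilde v_{t,i} = (1-\beta_2)(\bar g_{t,i}^2+\sigma_i^2)+\beta_2 v_{t-1,i}$ is measurable with respect to the history $\mathcal{F}_{t-1}$ before the fresh sample at step $t$ is drawn. Fix a coordinate $i$ and drop the index. For each $t$, I would split
\[
\frac{g_t\bar g_t}{\sqrt{v_t+\epsilon}} = \frac{g_t\bar g_t}{\sqrt{\tilde v_t+\epsilon}} + g_t\bar g_t\Bigl(\frac{1}{\sqrt{v_t+\epsilon}}-\frac{1}{\sqrt{\tilde v_t+\epsilon}}\Bigr).
\]
Since $\tilde v_t$ and $\bar g_t$ are $\mathcal{F}_{t-1}$-measurable and $\E[g_t\mid\mathcal{F}_{t-1}]=\bar g_t$, the tower property gives $\E\bigl[g_t\bar g_t/\sqrt{\tilde v_t+\epsilon}\bigr]=\E\bigl[\bar g_t^2/\sqrt{\tilde v_t+\epsilon}\bigr]$. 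This is twice the target main term, so it suffices to show that the error term costs at most half of it plus the stated remainders.

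For the error term, write $a=v_t+\epsilon$ and $b=\tilde v_t+\epsilon$. Then
\[
\Bigl|\tfrac{1}{\sqrt a}-\tfrac{1}{\sqrt b}\Bigr| = \frac{|b-a|}{\sqrt a\sqrt b(\sqrt a+\sqrt b)},
\qquad
b-a=(1-\beta_2)(\bar g_t^2+\sigma^2-g_t^2).
\]
I would use the elementary bound $|\bar g_t^2+\sigma^2-g_t^2|\le |g_t-\bar g_t|\,|g_t+\bar g_t|+\sigma^2$. Each piece is then controlled using that both $\sqrt a$ and $\sqrt b$ dominate $\sqrt{1-\beta_2}\,|g_t|$ respectively $\sqrt{1-\beta_2}\,\sqrt{\bar g_t^2+\sigma^2}$. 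The resulting quantity has the shape
\[
\sqrt{1-\beta_2}\,\frac{|\bar g_t|}{b^{1/4}}\cdot \frac{|g_t|\,(\text{noise or }\sigma)}{\sqrt a\; b^{1/4}}.
\]
Applying Young's inequality $xy\le \tfrac{1}{4}x^2+y^2$ with the split chosen to land exactly on $\tfrac12\bar g_t^2/\sqrt{b}$ bounds the error term by
\[
\tfrac12\frac{\bar g_t^2}{\sqrt{\tilde v_t+\epsilon}} + C\,\sigma\sqrt{1-\beta_2}\,\frac{g_t^2}{v_t+\epsilon}.
\]
To obtain the coefficient $\sigma$ rather than something involving the realized noise, I would take conditional expectations first and use $\E[(g_t-\bar g_t)^2\mid\mathcal{F}_{t-1}]\le\sigma^2$ (Assumption~\ref{assumption:bounded-stochasticity}) together with the factor $1/\sqrt b\le 1/(\sqrt{1-\beta_2}\,\sigma)$.

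Finally, I would sum over $t$ and apply Lemma~\ref{lemma:accumulated-squared-update} to the shifted sequence $v_t+\epsilon$. This sequence still satisfies $(v_t+\epsilon)-\beta_2(v_{t-1}+\epsilon)\ge(1-\beta_2)g_t^2$, since $\epsilon(1-\beta_2)\ge0$. The lemma gives
\[
\sum_t \frac{g_t^2}{v_t+\epsilon}\le T+\frac{\beta_2}{1-\beta_2}\ln\frac{v_T+\epsilon}{v_0+\epsilon}.
\]
Multiplying by $\sigma\sqrt{1-\beta_2}$ produces exactly $\sqrt{1-\beta_2}\,T\sigma + \frac{\sigma\beta_2}{\sqrt{1-\beta_2}}\ln(\cdot)$, which are the two remainders in the statement. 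Rearranging yields the claimed inequality.

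The main obstacle is the middle step: getting the constants in the Young split to come out as exactly $\tfrac12$ on the main term and exactly $\sigma\sqrt{1-\beta_2}$ on $g_t^2/(v_t+\epsilon)$. This must be done while handling the cross term $|g_t-\bar g_t|\,|g_t+\bar g_t|$, where the noise sits in a numerator together with a non-predictable denominator $v_t$. If the direct split fails, I would first try bounding $|g_t+\bar g_t|\le|g_t|+|\bar g_t|$ and treating the two resulting pieces separately, absorbing one into the $\bar g_t^2/\sqrt b$ term and the other into the $g_t^2/a$ term.
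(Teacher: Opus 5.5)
Your overall architecture is the right one. The paper does not reprove this lemma; it imports it from \citet{xie2024adam}, whose argument is exactly your decomposition: predictable surrogate $\tilde v_t$, tower property, difference of inverse square roots, Young, conditional variance, then \cref{lemma:accumulated-squared-update}. Your check of that last lemma's hypothesis for $v_t+\epsilon$ is correct. The gap is in the middle step you flag, and it matters in two ways.

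\emph{First, the Young pairing.} Let $N_t$ denote your pointwise noise factor (your ``noise or $\sigma$''). If $N_t$ stays in the factor carrying $1/\sqrt{v_t+\epsilon}$, Young produces a term
\[
\frac{(1-\beta_2)\,g_t^2 N_t^2}{(v_t+\epsilon)\sqrt{\tilde v_t+\epsilon}} .
\]
Because $v_t$ depends on $g_t$, you cannot replace $N_t^2$ by its conditional mean, so ``taking conditional expectations first'' is not available there. The crude bound $g_t^2\le (v_t+\epsilon)/(1-\beta_2)$ leaves a per-step remainder of order $\sigma/\sqrt{1-\beta_2}$, which destroys the lemma. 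Instead, move the noise, rescaled by $1/\sigma$, onto the predictable factor, and leave only $\sigma$ next to $g_t/\sqrt{v_t+\epsilon}$:
\[
x=\frac{|\bar g_t|\,N_t}{\sigma\,(\tilde v_t+\epsilon)^{1/4}},\qquad
y=\frac{\sigma\sqrt{1-\beta_2}\,|g_t|}{\sqrt{v_t+\epsilon}\,(\tilde v_t+\epsilon)^{1/4}} .
\]
Then $\tfrac14 x^2$ can be conditioned on $\mathcal F_{t-1}$, and $y^2\le\sigma\sqrt{1-\beta_2}\,g_t^2/(v_t+\epsilon)$ holds pointwise.

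\emph{Second, the constants.} Controlling $|g_t-\bar g_t|\,|g_t+\bar g_t|$ and $\sigma^2$ separately gives $N_t=|g_t-\bar g_t|+\sigma$. Its conditional second moment can equal $4\sigma^2$ (e.g.\ $g_t-\bar g_t=\pm\sigma$ with equal probability). Pinning the coefficient $\tfrac12$ on $\bar g_t^2/\sqrt{\tilde v_t+\epsilon}$ then forces $C=2$, and two separate Young inequalities with different weights also give at best $2$. Your last step would therefore yield $2\sqrt{1-\beta_2}\,T\sigma_i$ and $2\sigma_i\beta_2/\sqrt{1-\beta_2}$, not the stated remainders.

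The missing ingredient is to keep the noise and $\sigma$ inside one Euclidean norm. Using $\sqrt{v_t+\epsilon}+\sqrt{\tilde v_t+\epsilon}\ge\sqrt{1-\beta_2}\,\bigl(|g_t|+\sqrt{\bar g_t^2+\sigma^2}\bigr)$,
\[
\frac{(1-\beta_2)\,|\bar g_t^2+\sigma^2-g_t^2|}{\sqrt{v_t+\epsilon}+\sqrt{\tilde v_t+\epsilon}}
\le\sqrt{1-\beta_2}\,\Bigl|\,|g_t|-\sqrt{\bar g_t^2+\sigma^2}\,\Bigr|
\le\sqrt{1-\beta_2}\,\sqrt{(g_t-\bar g_t)^2+\sigma^2}.
\]
The last step is the reverse triangle inequality for $(g_t,0)$ and $(\bar g_t,\sigma)$ in $\R^2$; equivalently, apply Cauchy--Schwarz to your two pieces. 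With $N_t=\sqrt{(g_t-\bar g_t)^2+\sigma^2}$ one has $\E[N_t^2\mid\mathcal F_{t-1}]\le 2\sigma^2$. Then $xy\le\tfrac14x^2+y^2$ gives exactly
\[
\tfrac12\,\frac{\bar g_t^2}{\sqrt{\tilde v_t+\epsilon}}+\sigma\sqrt{1-\beta_2}\,\frac{g_t^2}{v_t+\epsilon},
\]
and your final paragraph goes through verbatim. The factor $2$ would be harmless for the $\mathcal O(\cdot)$ rate in \cref{thm:convergence}, but without this fix you have not proved the lemma as stated.
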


\begin{lemma}  \label[lemma]{lemma:first-order-approx}
(Lemma 3.14. of \citet{xie2024adam})
Under \cref{assumption:bounded-stochasticity}, for any $i \in [d]$, the following holds:
\begin{align*}
    \sum_{t=\frac{T}{2} + 1}^T \E \left[\sqrt{\tilde{v}_{t,i} + \epsilon}\right] \leq \frac{2\beta_2^{\frac{T}{4}}}{1-\beta_2} \sqrt{v_{0,i}}
 + \frac{T}{2} \sigma_i + \frac{T}{2} \sqrt{\epsilon} + 2 \sum_{t=1}^T \E \left[\frac{\bar{g}_{t,i}^2}{\sqrt{\tilde{v}_{t,i} + \epsilon}} \right]
 \end{align*}
\end{lemma}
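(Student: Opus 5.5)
The plan is to prove a \emph{self-bounding} inequality for the denominators and then absorb the self-referential term. Write $\bar g_{t,i}=\nabla_i f(w_{t-1})$, and let $\E_{t-1}$ denote expectation conditional on everything before the draw of $g_t$.

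\emph{Step 1: an exact identity for $\tilde v$.} Unrolling the second-moment recursion gives
\begin{align*}
v_{t-1,i}=\beta_2^{t-1}v_{0,i}+(1-\beta_2)\sum_{s=1}^{t-1}\beta_2^{t-1-s}g_{s,i}^2 .
\end{align*}
By \cref{assumption:bounded-stochasticity}, $\E_{s-1}[g_{s,i}^2]\le \bar g_{s,i}^2+\sigma_i^2$. The conditional mean of $(1-\beta_2)g_{s,i}^2$ therefore equals $\tilde v_{s,i}-\beta_2 v_{s-1,i}$ up to an inequality. This is precisely why $\tilde v$ is defined as it is: it is an upper bound on the predictable part of $v_{s,i}$.

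\emph{Step 2: bound each $\sqrt{\tilde v_{t,i}+\epsilon}$.} Use subadditivity of the square root and Jensen's inequality ($\E\sqrt{X}\le\sqrt{\E X}$) on the random part. This gives
\begin{align*}
\E\sqrt{\tilde v_{t,i}+\epsilon}\le \beta_2^{t/2}\sqrt{v_{0,i}}+\sigma_i+\sqrt{\epsilon}+\E\sqrt{(1-\beta_2)\textstyle\sum_{s\le t}\beta_2^{t-s}\bar g_{s,i}^2}.
\end{align*}
For the $\sigma_i$ term, I use $(1-\beta_2)\sum_s\beta_2^{t-s}\sigma_i^2\le\sigma_i^2$.

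Summing over $t\in(T/2,T]$, the first three terms are handled directly. The $\sigma_i$ and $\sqrt\epsilon$ terms give $\frac T2\sigma_i+\frac T2\sqrt\epsilon$. For the $v_0$ term,
\begin{align*}
\sum_{t>T/2}\beta_2^{t/2}\le \frac{\beta_2^{T/4}}{1-\sqrt{\beta_2}}\le\frac{2\beta_2^{T/4}}{1-\beta_2},
\end{align*}
which produces the first term of the claimed bound.

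\emph{Step 3: the main obstacle, the gradient term.} Here I would write $\bar g_{s,i}^2=\frac{\bar g_{s,i}^2}{\sqrt{\tilde v_{s,i}+\epsilon}}\cdot\sqrt{\tilde v_{s,i}+\epsilon}$ and apply AM--GM in the form $\sqrt{ab}\le \frac{a}{2c}+\frac{cb}{2}$, with $a$ the weighted sum of the ratios and $b$ the weighted sum of the $\sqrt{\tilde v}$'s.

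Exchanging the order of summation over $t$ and $s$, the geometric weights satisfy $(1-\beta_2)\sum_{t\ge s}\beta_2^{t-s}\le 1$. The gradient contribution is then at most
\begin{align*}
\tfrac12\sum_{s\le T}\E\sqrt{\tilde v_{s,i}+\epsilon}+\tfrac12\sum_{s\le T}\E\frac{\bar g_{s,i}^2}{\sqrt{\tilde v_{s,i}+\epsilon}},
\end{align*}
with the constant $c$ tuned as needed.

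The difficulty is that the first sum runs over all $s\le T$, not only the second half. I would try two ways to close this gap:
\begin{itemize}
\item use monotonicity-type comparisons between early and late $\sqrt{\tilde v}$ to absorb the early half;
\item alternatively, avoid AM--GM and bound directly $\sqrt{\tilde v_{t,i}+\epsilon}\le \sqrt{\beta_2^{t}v_{0,i}}+\sigma_i+\sqrt\epsilon+\sum_s(\ldots)\frac{\bar g_{s,i}^2}{\sqrt{\tilde v_{s,i}+\epsilon}}$, using $\tilde v_{s,i}\ge(1-\beta_2)\bar g_{s,i}^2$ so that $\sqrt{(1-\beta_2)x}\le x/\sqrt{\tilde v_s}$-type inequalities apply termwise.
\end{itemize}

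Either route should yield the coefficient $2$ in front of $\sum_{t=1}^T\E\bigl[\bar g_{t,i}^2/\sqrt{\tilde v_{t,i}+\epsilon}\bigr]$. Getting the constants exactly as in the statement is the delicate part. Since the statement is Lemma 3.14 of \citet{xie2024adam}, I would follow their bookkeeping where my constants do not match.
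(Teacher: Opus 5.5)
Your proposal has a genuine gap, and it sits in Step 2, before the step you flag.

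\textbf{Why Step 2 cannot be completed.} Splitting $\sqrt{\tilde v_{t,i}+\epsilon}$ by subadditivity charges the gradient part at $\sqrt{(1-\beta_2)\sum_{s\le t}\beta_2^{t-s}\bar g_{s,i}^2}$, which is linear in $|\bar g|$. The target charges it at $\bar g_{s,i}^2/\sqrt{\tilde v_{s,i}+\epsilon}$, which is quadratic in $|\bar g|$ over a denominator that still contains $\sigma_i^2$, $\beta_2^{s}v_{0,i}$ and $\epsilon$. You have already spent the $v_{0,i}$, $\sigma_i$ and $\sqrt{\epsilon}$ budget with exactly the stated constants, so nothing is left to pay for this loss. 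Your Step 2 output can therefore exceed the lemma's right-hand side.

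For instance, take $\beta_2=1/2$, $v_{0,i}=\sigma_i^2$, $\bar g_{s,i}\equiv\delta=\sigma_i/10$, and $g_{s,i}=\delta\pm\sigma_i$ with fair signs. Every $\tilde v_{s,i}\ge(\sigma_i-\delta)^2$, so the right-hand side is at most $\frac{T}{2}(\sigma_i+\sqrt{\epsilon})+\frac{2T\delta^2}{\sigma_i-\delta}+4\cdot 2^{-T/4}\sigma_i\approx\frac{T}{2}(\sigma_i+\sqrt{\epsilon})+0.22\,T\delta$. Your Step 2 sum is about $\frac{T}{2}(\sigma_i+\sqrt{\epsilon}+\delta)$, which is larger. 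Hence no version of Step 3 can close the argument.

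Two further problems with Step 3, independent of the above:
\begin{itemize}
\item Your second bullet uses $\tilde v_{s,i}\ge(1-\beta_2)\bar g_{s,i}^2$ in the wrong direction. It upper-bounds $\bar g_{s,i}^2/\sqrt{\tilde v_{s,i}}$, whereas you need such ratios to dominate a square root.
\item The AM--GM route leaves first-half terms $\E\sqrt{\tilde v_{s,i}+\epsilon}$, $s\le T/2$, that cannot be absorbed.
\end{itemize}
A smaller point: keeping $\E$ outside the root in Step 2 is not justified. The quantity $v_{t-1,i}$ contains $g_{s,i}^2$, and the noise assumption only controls $\E_{s-1}[g_{s,i}^2]$.

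\textbf{The missing idea.} Peel with $\sqrt{a+b}\le\sqrt{a}+b/\sqrt{a+b}$ instead of $\sqrt{a}+\sqrt{b}$, so each gradient term keeps the full root as its denominator. Interleave this with conditional Jensen through $\E_{s-1}[v_{s,i}]\le\tilde v_{s,i}$. First,
\[
\sqrt{\tilde v_{t,i}+\epsilon}\le\sqrt{\beta_2 v_{t-1,i}+(1-\beta_2)\sigma_i^2+\epsilon}+(1-\beta_2)\frac{\bar g_{t,i}^2}{\sqrt{\tilde v_{t,i}+\epsilon}}.
\]
Conditional Jensen then replaces $v_{t-1,i}$ by $\tilde v_{t-1,i}$ inside the first root, and you repeat. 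At depth $k$ the root is $\sqrt{\beta_2^k\tilde v_{t-k,i}+\epsilon_k}$ with $\epsilon_k=\epsilon+(1-\beta_2^k)\sigma_i^2\ge\beta_2^k\epsilon$. So it is at least $\beta_2^{k/2}\sqrt{\tilde v_{t-k,i}+\epsilon}$. Unrolling down to $v_{0,i}$ gives
\[
\E\sqrt{\tilde v_{t,i}+\epsilon}\le\beta_2^{t/2}\sqrt{v_{0,i}}+\sigma_i+\sqrt{\epsilon}+(1-\beta_2)\sum_{s=1}^{t}\beta_2^{(t-s)/2}\,\E\frac{\bar g_{s,i}^2}{\sqrt{\tilde v_{s,i}+\epsilon}}.
\]
Now sum over $T/2<t\le T$. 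Your geometric estimate handles the $v_{0,i}$ term. Exchanging the sums puts the coefficient $(1-\beta_2)\sum_{k\ge 0}\beta_2^{k/2}=1+\sqrt{\beta_2}\le 2$ in front of $\sum_{s=1}^{T}\E[\bar g_{s,i}^2/\sqrt{\tilde v_{s,i}+\epsilon}]$. This accounts for both the constant $2$ and the range $1,\dots,T$: there is no self-referential term to absorb.
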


\begin{lemma} \label[lemma]{lemma:c1}
    (Lemma D.1. of \citet{xie2024adam})
    With \cref{assumption:bounded-stochasticity,assumption:smoothness}, the following holds for any $T$:
    \begin{align*}
        \ln \frac{\E \max_{i \in [d]} v_{T,i} + \epsilon}{v_0 + \epsilon} \leq 2 \ln \left(1 + \frac{\sum_{i=1}^d \sigma_i^2 + \| \nabla f(x_0) \|_\infty^2 + \sum_{i=1}^d H_i^2 \eta^2 T (T + \frac{1}{1 - \beta_2})}{v_0 + \epsilon} + \ln 32 \right)
    \end{align*}
\end{lemma}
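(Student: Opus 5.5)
The plan is to bound $\E \max_i v_{T,i}$ directly by unrolling the second-moment recursion. Then I control every stochastic gradient entering it by a deterministic drift term plus a noise term. Finally I take the logarithm, using Jensen's inequality in the form $\E\ln(\cdot)\le\ln\E(\cdot)$ wherever the expectation sits outside the log. Since the lemma is applied in the asynchronous setting, the argument should not use that the gradient at step $t$ is evaluated at $w_{t-1}$. It should only use that it is evaluated at some earlier iterate $w_{s}$ with $s\le t-1$ (namely $s=t-1-\tau$, or $s=0$ for $t\le\tau$). This makes the same bound valid with delay.

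\emph{Step 1 (unrolling).} With $\beta_1=0$, for each coordinate,
\begin{align*}
v_{T,i}=\beta_2^T v_{0,i}+(1-\beta_2)\sum_{t=1}^T\beta_2^{T-t}g_{t,i}^2 \le v_0+\max_{t\le T} g_{t,i}^2 .
\end{align*}
So it suffices to bound $\E\max_{i}\max_{t\le T} g_{t,i}^2$. I write $g_{t,i}=\bar g_{t,i}+\zeta_{t,i}$ and use $(a+b)^2\le 2a^2+2b^2$. For the drift part, I use \cref{assumption:smoothness}: $|\bar g_{t,i}|\le|\nabla_i f(w_0)|+H_i\|w_s-w_0\|_\infty$. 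Each Adam step with $\beta_1=0$ has coordinates bounded by $\eta|g_{t,i}|/\sqrt{(1-\beta_2)g_{t,i}^2}\le \eta/\sqrt{1-\beta_2}$. Hence $\|w_s-w_0\|_\infty\le \eta\min\{s,\,\cdot\}$-type bounds follow.

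To obtain the exact form $\eta^2T(T+\frac{1}{1-\beta_2})$ rather than $\eta^2T^2/(1-\beta_2)$, I would follow \citet{xie2024adam}. Their refinement uses the unrolled $v$ itself, so that the step size per coordinate is at most $\eta$ after the first few steps. It then sums the geometric tail $\sum_t\beta_2^{T-t}$ to produce the $\frac{1}{1-\beta_2}$ term.

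\emph{Step 2 (noise).} The maximum over coordinates and times of $\zeta_{t,i}^2$ is handled by $\E\max_i\zeta_{t,i}^2\le\sum_i\sigma_i^2$ (\cref{assumption:bounded-stochasticity}). Summing the EMA weights $(1-\beta_2)\beta_2^{T-t}$ gives total weight at most $1$, so no $T$ factor appears.

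\emph{Step 3 (assembly).} Collecting the pieces gives
\begin{align*}
\E\max_i v_{T,i}+\epsilon\le (v_0+\epsilon)+c\Bigl(\sum_i\sigma_i^2+\|\nabla f(w_0)\|_\infty^2+\sum_i H_i^2\eta^2T\bigl(T+\tfrac{1}{1-\beta_2}\bigr)\Bigr)
\end{align*}
for an absolute constant $c$. Dividing by $v_0+\epsilon$ and taking logs then yields the claim. The absolute constants ($c\le 32$, together with the outer factor $2$) are absorbed via $\ln(1+cx)\le\ln(1+x)+\ln c$ and $\ln(\cdot)\le 2\ln(\cdot)$ once the argument exceeds $1$.

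\emph{Main obstacle.} I expect the main obstacle to be Step 1's drift bound. Here the max over $i$ couples all coordinates through $\|w_s-w_0\|_\infty$, and the per-step displacement must be bounded without circularity: $v$ appears in the step size, while the step size is what we are bounding $v$ with. My first attempt is the crude bound $|\Delta w_{t,i}|\le\eta/\sqrt{1-\beta_2}$. This already gives a polynomial-in-$T$ argument inside the log and suffices for the downstream use in \cref{thm:convergence}. I would then tighten it to the stated form by mimicking the argument of Lemma D.1 in \citet{xie2024adam}. Delay does not change that argument beyond replacing $w_{t-1}$ by $w_{t-1-\tau}$, since only the distance to $w_0$ is used.
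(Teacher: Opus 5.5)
Your argument, as actually executed, only reaches the crude displacement bound $\|x_t-x_0\|_\infty\le t\eta/\sqrt{1-\beta_2}$. That gives a drift term of order $\sum_i H_i^2\eta^2T^2/(1-\beta_2)$ inside the logarithm, and this does not imply the stated inequality. The ratio $\frac{T^2/(1-\beta_2)}{T(T+1/(1-\beta_2))}\approx\min\{T,\tfrac{1}{1-\beta_2}\}$ is unbounded. For example, take $T=\tfrac{1}{1-\beta_2}$ large and $\sum_iH_i^2\eta^2T(T+\tfrac{1}{1-\beta_2})=v_0+\epsilon$, with the noise and initial-gradient terms also at most $v_0+\epsilon$. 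Then the stated right-hand side is $\mathcal{O}(1)$, whereas your bound is of order $\ln T$, so no choice of absolute constants closes the gap. You are right that the crude bound would already give $F=\mathcal{O}(\log T)$ in \cref{thm:convergence}, but that is weaker than the lemma.

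The tightening you defer to \citet{xie2024adam} is also mis-described. The per-coordinate step $\eta|g_{t,i}|/\sqrt{v_{t,i}+\epsilon}$ is not at most $\eta$ after a few steps. It can approach $\eta/\sqrt{1-\beta_2}$ at arbitrarily late $t$, whenever $g_{t,i}^2$ dominates $\beta_2 v_{t-1,i}$. Nor does the $\frac{1}{1-\beta_2}$ come from summing a geometric tail, since the EMA weights sum to at most $1$.

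The paper imports this lemma from \citet{xie2024adam} without reproof, and the mechanism behind it is a self-bounding argument. Only an averaged control of the normalized steps is available, namely \cref{lemma:accumulated-squared-update} applied to $v_{t,i}+\epsilon$, and that control is logarithmic in $v_T$. Each normalized update $g_s/\sqrt{v_s+\epsilon}$, $s\le T$, is applied at most once; this is where your remark about delay is correct. Cauchy--Schwarz then gives, for every iterate $x_r$ with $r\le T$,
\[
\|x_r-x_0\|_\infty^2\le\eta^2T\max_i\sum_{s\le T}\frac{g_{s,i}^2}{v_{s,i}+\epsilon}\le\eta^2T\Bigl(T+\frac{\beta_2}{1-\beta_2}\ln\frac{\max_i v_{T,i}+\epsilon}{v_0+\epsilon}\Bigr).
\]
Insert this into the unrolled EMA, keeping the weights $(1-\beta_2)\beta_2^{T-t}$ so that the noise contributes $\mathcal{O}(\sum_i\sigma_i^2)$. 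Your Step~1 reduction to $\mathbb{E}\max_i\max_{t\le T}g_{t,i}^2$ would cost a factor $T$ here under \cref{assumption:bounded-stochasticity}. Then take $\max_i$ and expectation, and apply Jensen, $\mathbb{E}\ln Z\le\ln\mathbb{E}Z$, to the logarithmic term. For $y=(\mathbb{E}\max_i v_{T,i}+\epsilon)/(v_0+\epsilon)$ this yields $y\le 1+a+b\ln y$, with $a\lesssim(\sum_i\sigma_i^2+\|\nabla f(x_0)\|_\infty^2+\max_iH_i^2\eta^2T^2)/(v_0+\epsilon)$ and $b\lesssim\max_iH_i^2\eta^2T/((1-\beta_2)(v_0+\epsilon))$. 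Solving it, e.g.\ via $\ln y\le\ln(2b)+\tfrac{y}{2b}-1$, gives $y\lesssim(1+a+b)^2$.

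This is where $T(T+\frac{1}{1-\beta_2})$ comes from. It is also why the outer factor $2$ and the additive $\ln 32$ appear: they are the price of solving the implicit inequality, not a mere absorption of constants as in your Step~3.
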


Now, we are ready to prove \cref{thm:convergence}.
The key step for analyzing the convergence rate is to address the (B) term in \cref{eq:lowerbound}.
\begin{proof}

First, by the Taylor approximation, we have
\begin{align*}
f({x_t}) = f({x_{t-1}}) + \nabla f({x_{t-1}}) ({x_t - x_{t-1}}) + \frac{1}{2}({x_t - x_{t-1}})^\top \nabla^2 f(x) ({x_t - x_{t-1}})
\end{align*}
for some $x$. By \cref{lemma:secondorderterm},
\begin{align*}
f({x_t}) - f({x_{t-1}}) &= \nabla f({x_{t-1}}) ({x_t - x_{t-1}}) + \frac{1}{2} \sum_{i=1}^d H_i ({x_{t,i} - x_{t-1,i}})^2 \\
&= - \eta \sum_{i=1}^d \frac{g_{t-\tau,i} \bar{g}_{t,i}}{\sqrt{v_{t-\tau,i} + \epsilon}} + \frac{1}{2} \eta^2 \sum_{i=1}^d H_i \frac{g_{t-\tau,i}^2}{v_{t-\tau,i} + \epsilon} \\
&= \underbrace{- \eta \sum_{i=1}^d \frac{\bar{g}_{t-\tau,i} g_{t-\tau,i} }{\sqrt{v_{t-\tau,i} + \epsilon}}}_{(A)} 
\underbrace{+ \eta \sum_{i=1}^d \frac{(\bar{g}_{t-\tau,i} - \bar{g}_{t,i}) g_{t-\tau,i} }{\sqrt{v_{t-\tau,i} + \epsilon}}}_{(B)} 
\underbrace{+ \frac{1}{2} \eta^2 \sum_{i=1}^d H_i \frac{g_{t-\tau,i}^2}{v_{t-\tau,i} + \epsilon}}_{(C)}. \tag{1} \label{eq:lowerbound}
\end{align*}

Note that we have added and subtracted $- \eta \sum_{i=1}^d \frac{\bar{g}_{t-\tau,i} g_{t-\tau,i} }{\sqrt{v_{t-\tau,i} + \epsilon}}$ so that we can use \cref{lemma:denominator}.
Since (A) and (C) can be upper bounded with \cref{lemma:denominator} and \cref{lemma:accumulated-squared-update} respectively, the remaining step is to upper bound (B).

By Cauchy-Schwarz inequality, we have
\begin{align*}
    \eta \sum_{i=1}^d \frac{(\bar{g}_{t-\tau,i} - \bar{g}_{t,i} ) g_{t-\tau,i} }{\sqrt{v_{t-\tau,i} + \epsilon}}  &\leq \eta \underbrace{\sqrt{\sum_{i=1}^d (\bar{g}_{t-\tau,i} - \bar{g}_{t,i} )^2}}_{A_t} \cdot \underbrace{\sqrt{\sum_{i=1}^d \frac{g_{t-\tau,i}^2}{v_{t-\tau,i} + \epsilon}}}_{B_t}.
\end{align*}

Summing over $t$ and applying Cauchy-Schwarz inequality leads to
\begin{align*}
    \eta \sum_{t=1}^{T+\tau} A_t B_t \leq \eta \sqrt{\sum_{t=1}^{T+\tau} A_t^2} \sqrt{\sum_{t=1}^{T+\tau} B_t^2} .
\end{align*}

We can upper bound $\sqrt{\sum_{t=1}^{T+\tau} A_t^2}$ as follows.
\begin{align*}
\sqrt{\sum_{t=1}^{T+\tau} A_t^2} &= \sqrt{\sum_{t=1}^{T+\tau} \sum_{i=1}^d (\bar{g}_{t,i} - \bar{g}_{t-\tau,i})^2} \\
& \leq \sqrt{\sum_{t=1}^{T+\tau} \sum_{i=1}^d H_i^2 \| {x_{t-1} - x_{t-\tau-1}} \|_\infty^2} \\
& \leq \sqrt{\sum_{t=1}^{T+\tau} \sum_{i=1}^d H_i^2 \| {x_{t-1} - x_{t-\tau-1}} \|_2^2} \\
& = \sqrt{\sum_{t=1}^{T+\tau} \sum_{i=1}^d H_i^2 \left \| \sum_{k=1}^\tau \eta \frac{g_{t-k-\tau}}{\sqrt{v_{t-k-\tau} + \epsilon}} \right \|_2^2} \\
& \leq \sqrt{\sum_{t=1}^{T+\tau} \sum_{i=1}^d H_i^2 \tau \eta^2 \sum_{k=1}^\tau \left\| \frac{g_{t-k-\tau}}{\sqrt{v_{t-k-\tau} + \epsilon}} \right\|_2^2 } \\
& = \sqrt{\tau \eta^2 \sum_{i=1}^d H_i^2 \sum_{t=1}^{T+\tau} \sum_{k=1}^\tau \left \| \frac{g_{t-k-\tau}}{\sqrt{v_{t-k-\tau} + \epsilon}} \right \|_2^2} \\
& \leq \sqrt{\tau \eta^2 \sum_{i=1}^d H_i^2 \tau \sum_{t=1}^{T+\tau} \left \| \frac{g_{t-\tau}}{\sqrt{v_{t-\tau} + \epsilon}} \right \|_2^2} \\
& = \tau \eta \sqrt{\sum_{i=1}^d H_i^2} \cdot \sqrt{\sum_{t=1}^{T+\tau} \left \| \frac{g_{t-\tau}}{\sqrt{v_{t-\tau} + \epsilon}} \right \|_2^2} \\
&= \tau \eta \sqrt{\sum_{i=1}^d H_i^2} \cdot \sqrt{ \sum_{t=1}^T \sum_{i=1}^d \frac{g_{t,i}^2}{v_{t,i} + \epsilon}}
\end{align*}

We have used \cref{assumption:smoothness} for the first inequality.
The second inequality holds because $\ell_\infty$ norm of a vector is always smaller than or equal to its $\ell_2$ norm.
We have used Jensen's inequality for the third inequality.
The final inequality holds because each term $\frac{g_{t-\tau}}{\sqrt{v_{t-\tau} + \epsilon}}$ is summed at most $\tau$ times.

Thus, we have
\begin{align*}
    \eta \sum_{t=1}^T A_t B_t & \leq \eta \sqrt{\sum_{t=1}^T A_t^2} \sqrt{\sum_{t=1}^T B_t^2} \\
    & \leq \tau \eta^2 \sqrt{\sum_{i=1}^d H_i^2} \cdot \sum_{t=1}^T \sum_{i=1}^d \frac{g_{t-\tau,i}^2}{v_{t-\tau,i} + \epsilon}. \tag{2} \label{eq:crossterm-final}
\end{align*}

Next, applying \cref{lemma:denominator} to (A) of \cref{eq:lowerbound} gives 
\begin{align*}
    \sum_{t=1}^{T+\tau} \E (A) &=  -\eta \sum_{t=1}^{T+\tau} \sum_{i=1}^d \E \frac{\bar{g}_{t-\tau,i} g_{t-\tau,i} }{\sqrt{v_{t-\tau,i} + \epsilon}} \\
    & = - \eta \sum_{i=1}^d \sum_{t=1}^{T} \E \frac{\bar{g}_{t,i} g_{t,i} }{\sqrt{v_{t,i} + \epsilon}} \\
    & \leq -\eta \sum_{i=1}^d  \frac{1}{2} \E \sum_{t=1}^{T} \frac{\bar{g}_{t,i}^2}{\sqrt{\tilde{v}_{t,i}^2 + \epsilon}} + \eta \sum_{i=1}^d \sqrt{1-\beta_2} T \sigma_i + \eta \sum_{i=1}^d \frac{\sigma_i \beta_2}{\sqrt{1-\beta_2}} \E \left [\ln \frac{v_{T,i} + \epsilon}{v_{0,i} + \epsilon} \right].
\end{align*}



Then, by summing over $t$ from  $t=0$ to $T_\tau$, dividing by $T$, applying \cref{lemma:accumulated-squared-update} to (C) and \cref{eq:crossterm-final}, and rearranging the terms, we get the following.

\begin{align*}
    l &= \frac{1}{T} \E \left[\sum_{i=1}^d \sum_{t=1}^{T} \frac{\bar{g}_{t,i}^2}{\sqrt{\tilde{v}_{t,i} + \epsilon}}\right]  \\ & \leq
    \frac{2}{\eta T} \E [f(x_0) - f(x_{T+\tau})] + \frac{\eta}{T} \E \left [\sum_{i=1}^d H_i \left(T + \frac{\beta_2}{1-\beta_2}\ln \frac{v_{T,i} + \epsilon}{v_{0,i} + \epsilon}\right) \right] \\
    & + \frac{2}{T} \sum_{i=1}^d \sigma_i \sqrt{1-\beta_2} \left(T   + \frac{\beta_2}{1-\beta_2} \E \ln \frac{v_{T,i} + \epsilon}{v_{0,i} + \epsilon}\right) \\
    & + \frac{2}{T} \tau \eta \sqrt{\sum_{i=1}^d H_i^2} \E \sum_{i=1}^d \left(T + \frac{\beta_2}{1-\beta_2} \ln \frac{v_{T,i} + \epsilon}{v_{0,i}+ \epsilon}\right) \\
    & \leq \frac{2}{\eta T} \E [f(x_0) - \min_xf(x)] + \eta \sum_{i=1}^d H_i + 2 \sqrt{1-\beta_2} \sum_{i=1}^d \sigma_i + 2 \tau \eta d \sqrt{\sum_{i=1}^d H_i^2} \\ 
    &+ \frac{\beta_2}{T (1-\beta_2)} \left(\eta \sum_{i=1}^d H_i + 2 \sqrt{1-\beta_2} \sum_{i=1}^d \sigma_i + 2\tau \eta d \sqrt{\sum_{i=1}^d H_i^2}\right) \max_{i} \E \ln \frac{v_{T,i} + \epsilon}{v_{0,i} + \epsilon} \\
    & \leq \frac{2}{\eta T} \E [f(x_0) - \min_xf(x)] + \eta \sum_{i=1}^d H_i + 2 \sqrt{1-\beta_2} \sum_{i=1}^d \sigma_i + 2 \tau \eta d \sqrt{\sum_{i=1}^d H_i^2} \\
    &+ \frac{\beta_2}{T (1-\beta_2)} \left(\eta \sum_{i=1}^d H_i + 2 \sqrt{1-\beta_2} \sum_{i=1}^d \sigma_i + 2\tau \eta d \sqrt{\sum_{i=1}^d H_i^2}\right) \ln \frac{\E \max_i v_{T,i} + \epsilon}{v_0 + \epsilon} \\ 
    & \leq \frac{2}{\eta T} \E [f(x_0) - \min_xf(x)] + \eta (1 + 2 \tau d) \sum_{i=1}^d H_i + 2 \sqrt{1-\beta_2} \sum_{i=1}^d \sigma_i \\
    &+ \leq \frac{\beta_2}{T (1-\beta_2)} \left(\eta (1 + 2\tau d) \sum_{i=1}^d H_i + 2 \sqrt{1-\beta_2} \sum_{i=1}^d \sigma_i\right) \ln \frac{\E \max_i v_{T,i} + \epsilon}{v_0 + \epsilon}. 
\end{align*}

We have used $\sqrt{\sum_{i=1}^d H_i^2} \leq \sum_{i=1}^d H_i$ for the final inequality.

From \cref{lemma:c1}, we define 
\begin{align*}
    E = \frac{2}{\eta T }\E [f(x_0) - \min_xf(x)] + \eta (1 + 2\tau d) \sum_{i=1}^d H_i + 2 \sqrt{1-\beta_2} \sum_{i=1}^d \sigma_i \\ + \frac{\beta_2}{T (1-\beta_2)} \left(\eta (1 + 2\tau d) \sum_{i=1}^d H_i + 2 \sqrt{1-\beta_2} \sum_{i=1}^d \sigma_i \right) F 
\end{align*}
where 
\begin{align*}
    F = 2 \ln \left(1 + \frac{\sum_{i=1}^d \sigma_i^2 + \| \nabla f(x_0) \|_\infty^2 + \sum_{i=1}^d H_i^2 \eta^2 T \left (T + \frac{1}{1 - \beta_2}\right) }{v_0 + \epsilon}\right) + \ln 32.
\end{align*}

Then,
\begin{align*}
    \frac{1}{T}\E \left [\sum_{i=1}^d \sum_{t=1}^T \frac{\bar{g}_{t,i}^2}{\sqrt{\tilde{v}_{t,i} + \epsilon}}\right] \leq E.
\end{align*}

By \cref{lemma:first-order-approx} and Cauchy-Schwarz inequality,

\begin{align*}
    \frac{2}{T} \E \sum_{t=\frac{T}{2} + 1}^T \sum_{i=1}^d | \bar{g}_{t,i} | & \leq \left ( \frac{2}{T} \sum_{t=\frac{T}{2} +1}^T \sum_{i=1}^d \frac{ \bar{g}_{t,i} ^2}{\sqrt{\tilde{v}_{t,i} + \epsilon}} \right)^{1/2} \left( \frac{2}{T} \E \sum_{t=\frac{T}{2} + 1}^T \sum_{i=1}^d \sqrt{\tilde{v}_{t,i} + \epsilon} \right)^{1/2}  \\
    &\leq \sqrt{2E} \left(4E + \frac{4\beta_2^{T/4}}{T (1-\beta_2)} d \sqrt{v_0} + \sum_{i=1}^d \sigma_i + d  \sqrt{\epsilon}\right)^{1/2} \\
    &\leq 2\sqrt{2}E + \sqrt{2}\sqrt{E}\sqrt{\frac{4\beta_2^{T/4}}{T (1-\beta_2)} d \sqrt{v_0} + \sum_{i=1}^d \sigma_i + d \sqrt{\epsilon}}.
\end{align*}

Finally, the following holds:
\begin{align*}
\min_{\frac{T}{2} < t \leq T} \E [\| \nabla f(x_t) \|_1] &\leq \frac{1}{T} \sum_{t=\frac{T}{2} + 1}^T \sum_{i=1}^d | \bar{g}_{t,i} | \\
&\leq \mathcal{O}\left(E + \sqrt{E} \sqrt{\frac{\beta_2^{T/4}}{T (1-\beta_2)} d \sqrt{v_0} + \sum_{i=1}^d \sigma_i + d \sqrt{\epsilon}}\right)
\end{align*}

with 
\begin{align*}
    E = \frac{2}{\eta T} \E [f(x_0) - f(x_{T+\tau})] + (1 + \frac{\beta_2 F}{T (1 - \beta_2)}) \left(\eta \sum_{i=1}^d H_i + 2 \sqrt{1-\beta_2} \sum_{i=1}^d \sigma_i + 2 \tau \eta \sqrt{\sum_{i=1}^d H_i^2}\right)
\end{align*}
and 
\begin{align*}
    F = 2 \ln \left(1 + \frac{\sum_{i=1}^d \sigma_i^2 + \| \nabla f(x_0) \|_\infty^2 + \sum_{i=1}^d H_i^2 \eta^2 T\left (T + \frac{1}{1 - \beta_2}\right) }{v_0 + \epsilon}\right) + \ln 32.
\end{align*}

When we assume 
$v_0 + \epsilon > \left(\sum_{i=1}^d \sigma_i^2 + \| \nabla f(x_0) \|_\infty^2 + \sum_{i=1}^d H_i^2 \eta^2\right) / \text{poly} (T)$ and $\frac{1}{1-\beta_2} = \text{poly}(T)$, then $F = \mathcal{O}(\log T)$.
Terms involving $\sum_{i=1}^d \sigma_i$ have a lower bound $\Theta \left(\sum_{i=1}^d \sigma_i \left(\frac{\log T}{T}\right)^{1/2}\right)$ with $1 - \beta_2 = \Theta \left(\frac{\log T}{T}\right)$.
Terms involving $\eta$ has a lower bound $\Theta\left(\sqrt{\frac{(f(x_0) - \min_x f(x))  (1 + \tau)\sum_{i=1}^d H_i}{T}}\right)$ reached by $\eta = \Theta\left(\sqrt{\frac{f(x_0) - \min_x f(x)}{T (1 + \tau d)\sum_{i=1}^d H_i}}\right)$.

For $R \triangleq (f(x_0 ) - \min_x f(x)) \sum_{i=1}^d H_i$, $\beta_1=0$, $1-\beta_2 = \Theta\left(\frac{\log T}{T}\right)$, $\eta = \Theta\left (\sqrt{\frac{f(x_0) - \min_x f(x)}{T (1 + \tau d )\sum_{i=1}^d H_i}}\right)$, $v_0 + \epsilon > \left(\sum_{i=1}^d \sigma_i^2 + \| \nabla f(x_0) \|_\infty^2 + \sum_{i=1}^d H_i^2 \eta^2\right) / \poly (T)$, and $\frac{1}{1-\beta_2} = \poly(T)$, 
we have
\begin{align*}
    \min_{\frac{T}{2} < t \leq T} \E \| \nabla f(x_t) \|_1 = \mathcal{O} \left(\sqrt{\frac{(1 + d \tau) R}{T}} + \sqrt{\sum_{i=1}^d \sigma_i} \left(\frac{(1 + d \tau) R}{T}\right)^{1/4} + \sum_{i=1}^d \sigma_i \left (\frac{\log T}{T}\right)^{1/4} + \delta_T\right) 
\end{align*}

with $\delta_T = \sqrt{\frac{dv_0}{T (1 - \beta_2)}} \exp \left(- \frac{T (1 - \beta_2)}{8}\right) \left[\left(\frac{(1 + d \tau)R}{T}\right)^{1/4} + \sqrt{\sum_{i=1}^d \sigma_i }\left(\frac{\log T}{T}\right)^{1/4}\right]$ .
\end{proof}

\subsection{Extension to Stage-Dependent Delay}
\label{app:convergence-stage-delay}

In this section, we extend the convergence analysis of \cref{thm:convergence} to accommodate stage-dependent delays, which more accurately reflects the structural mechanics of asynchronous pipeline parallelism. 

Let $S_1, \dots, S_K$ denote the partition of parameter coordinates across $K$ pipeline stages. For a coordinate $i \in S_k$, the gradient delay is determined by its stage distance, i.e., $\tau_i = K - k$. The update rule for coordinate $i$ is thus given by:
\begin{align*}
    \update{x_{t,i}} = \update{x_{t-1,i}} - \eta \frac{g_{t_i,i}}{\sqrt{v_{t_i,i} + \epsilon}}
\end{align*}
where $t_i = t - \tau_i$.

\begin{theorem}
\label{thm:stage-dependent-convergence}
Under the same assumptions as \cref{thm:convergence}, the convergence rate of asynchronous Adam with stage-dependent delays $\{\tau_i\}$ is as follows:
\begin{align*}
    \min_{\frac{T}{2} < t \leq T} & \E \| \nabla f(w_t) \|_1 = \mathcal{O} & \left (\sqrt{\frac{(1 + d \tau') \Delta_0 \Smn}{T}}  +\sqrt{\sum_{i=1}^d \sigma_i} \left(\frac{(1 + d \tau') \Delta_0 \Smn}{T}\right)^{1/4} + \sum_{i=1}^d \sigma_i \left(\frac{\log T}{T}\right)^{1/4}\right),
\end{align*}
where the effective delay $\tau'$ is defined as:
\begin{equation}
    \tau' \triangleq \sqrt{\frac{\sum_{i=1}^d \Smn_i^2 \tau_i^2}{\sum_{i=1}^d \Smn_i^2}} = \sqrt{\frac{\sum_{k=1}^K (K-k)^2 \sum_{i \in \mathcal{S}_k} \Smn_i^2}{\sum_{i=1}^d \Smn_i^2}}.
\end{equation}
\end{theorem}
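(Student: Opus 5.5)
We follow the proof of \cref{thm:convergence} step for step. The only term that sees the delay is the cross term (B), so that is where the work goes. Using \cref{lemma:secondorderterm}, the one-step descent inequality splits exactly as in \cref{eq:lowerbound}. The difference is that coordinate $i$ now uses $g_{t_i,i}/\sqrt{v_{t_i,i}+\epsilon}$ with $t_i=t-\tau_i$. We therefore add and subtract $\bar g_{t_i,i}$ coordinatewise.
\begin{itemize}
\item (A) becomes $-\eta\sum_i \bar g_{t_i,i} g_{t_i,i}/\sqrt{v_{t_i,i}+\epsilon}$. After reindexing $t\mapsto t-\tau_i$ separately for each coordinate, and summing $t$ up to $T+\max_i\tau_i$, each coordinate's sum covers steps $1,\dots,T$. \cref{lemma:denominator} is stated per coordinate, so it applies unchanged.
\item (C) is likewise handled per coordinate by \cref{lemma:accumulated-squared-update}.
\item The objective telescopes as before. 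The extra tail iterates only enlarge $f(x_0)-f(x_{\text{end}})\le\Delta_0$.
\end{itemize}

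For (B) we use a weighted Cauchy--Schwarz that keeps the per-coordinate delays. We write
\[
(B)=\eta\sum_i (\bar g_{t_i,i}-\bar g_{t,i})\,\frac{g_{t_i,i}}{\sqrt{v_{t_i,i}+\epsilon}} .
\]
Coordinate-wise smoothness (\cref{assumption:smoothness}) gives $|\bar g_{t_i,i}-\bar g_{t,i}|\le H_i\|x_{t-1}-x_{t_i-1}\|_\infty$. The displacement $x_{t-1}-x_{t_i-1}$ consists of $\tau_i$ consecutive updates. Each update has, in coordinate $j$, magnitude $\eta|g_{s_j,j}|/\sqrt{v_{s_j,j}+\epsilon}$, so we bound the $\ell_\infty$ norm by the $\ell_2$ norm of the stacked normalized updates. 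Jensen over the $\tau_i$ summands then gives
\[
\|x_{t-1}-x_{t_i-1}\|_\infty^2\le \tau_i\eta^2\sum_{k=1}^{\tau_i}\|u_{t-k}\|_2^2,
\]
where $u_s$ is the normalized update vector. Plugging this in and applying Cauchy--Schwarz over $i$ and then over $t$ gives
\[
\sum_t A_t^2\le \eta^2\sum_i H_i^2\tau_i\sum_t\sum_{k\le\tau_i}\|u_{t-k}\|_2^2\le \eta^2\Big(\sum_i H_i^2\tau_i^2\Big)\sum_s\|u_s\|_2^2 ,
\]
since each $u_s$ is counted at most $\tau_i$ times for coordinate $i$. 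This is exactly where $\sqrt{\sum_i H_i^2\tau_i^2}=\tau'\sqrt{\sum_i H_i^2}$ replaces $\tau\sqrt{\sum_i H_i^2}$ from the uniform proof.

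The rest of the bound on (B) is as before. We bound $\sum_t B_t^2\le\sum_s\|u_s\|_2^2$ and then control $\sum_s\|u_s\|^2$ by \cref{lemma:accumulated-squared-update} summed over $d$ coordinates. Together these give the bound on (B) in \cref{eq:crossterm-final} with $\tau$ replaced by $\tau'$.

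The remainder is verbatim from the proof of \cref{thm:convergence}. We bound the log terms with \cref{lemma:c1}, which is unaffected by the delay pattern. We pass from the weighted gradient sum to $\|\nabla f\|_1$ via \cref{lemma:first-order-approx} and Cauchy--Schwarz. We use $\sqrt{\sum_i H_i^2}\le C$ and choose $\eta=\Theta\big(\sqrt{\Delta_0/(T(1+\tau' d)C)}\big)$ and $1-\beta_2=\Theta(\log T/T)$. This yields the stated rate, and since $\tau'\le\max_i\tau_i$, the result specializes correctly.

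The main obstacle is the bookkeeping in the (B) step. The normalized updates of different coordinates use different lagged moments $v_{t_j,j}$, so the displacement is not literally a sum of full vectors $u_s$ at a common time. We need to check that each coordinate-$j$ summand $g_{s,j}^2/(v_{s,j}+\epsilon)$ is still counted at most $\tau_i$ times per outer coordinate $i$. The plan is to do the reindexing separately for each $j$, and to accept an edge shift of at most $\max_i\tau_i$ steps at the boundaries, which only affects the range of summation.
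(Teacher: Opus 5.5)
Your proposal is correct and follows the paper's proof essentially line for line: (A) and (C) are handled per coordinate by \cref{lemma:denominator} and \cref{lemma:accumulated-squared-update} after re-indexing over an extended horizon. In (B), the sliding-window Cauchy--Schwarz/Jensen count produces $\sqrt{\sum_i H_i^2\tau_i^2}=\tau'\sqrt{\sum_i H_i^2}$ in place of $\tau\sqrt{\sum_i H_i^2}$, and the rest carries over verbatim. The bookkeeping worry you raise goes away if you define $u_s := (x_{s-1}-x_s)/\eta$ as the actual step, whose $j$-th coordinate is $g_{s-\tau_j,j}/\sqrt{v_{s-\tau_j,j}+\epsilon}$: then $x_{t-1}-x_{t_i-1}$ is exactly $-\eta$ times a sum of $\tau_i$ consecutive $u_s$, $\sum_t B_t^2=\sum_s\|u_s\|_2^2$ holds exactly, and the per-coordinate shift $s\mapsto s-\tau_j$ only moves summation ranges, which is what the paper's counting does.
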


\begin{proof}

We follow the structure of the constant delay proof in \cref{app:convergence-proof}.
\Cref{eq:lowerbound} is modified by replacing the constant $\tau$ with the coordinate-specific $\tau_i$. The bounds for terms (A) and (C) follow the exact same steps using \cref{lemma:denominator} and \cref{lemma:accumulated-squared-update}.

For term (B), we redefine $A_t$ and $B_t$ by replacing $t-\tau$ with $t_i$:
\begin{align*}
    A_t = \sqrt{\sum_{i=1}^d (\bar{g}_{t_i,i} - \bar{g}_{t,i})^2}, \quad B_t = \sqrt{\sum_{i=1}^d \frac{g_{t_i,i}^2}{v_{t_i,i} + \epsilon}}.
\end{align*}

Applying the Cauchy-Schwarz inequality over the extended time horizon up to $T+K$ yields:
\begin{align*}
    \eta \sum_{t=1}^{T+K} A_t B_t \leq \eta \sqrt{\sum_{t=1}^{T+K} A_t^2} \sqrt{\sum_{t=1}^{T+K} B_t^2}.
\end{align*}

Next, we upper bound $\sum_{t=1}^{T+K} A_t^2$. Using \cref{assumption:smoothness}, the property that the $\ell_\infty$ norm is bounded by the $\ell_2$ norm, and Jensen's inequality, we have:
\begin{align*}
\sum_{t=1}^{T+K} A_t^2 &= \sum_{t=1}^{T+K} \sum_{i=1}^d (\bar{g}_{t,i} - \bar{g}_{t_i,i})^2 \\
&\leq \sum_{t=1}^{T+K} \sum_{i=1}^d H_i^2 \| \update{x_{t-1} - x_{t_i-1}} \|_2^2 \\
&= \sum_{t=1}^{T+K} \sum_{i=1}^d H_i^2 \left\| \sum_{k=1}^{\tau_i} \eta \frac{g_{t-k_j,j}}{\sqrt{v_{t-k_j,j} + \epsilon}} \right\|_2^2 \\
&\leq \sum_{t=1}^{T+K} \sum_{i=1}^d H_i^2 \tau_i \eta^2 \sum_{k=1}^{\tau_i} \left\| \frac{g_{t-k_j,j}}{\sqrt{v_{t-k_j,j} + \epsilon}} \right\|_2^2 \\
&\leq \eta^2 \sum_{i=1}^d H_i^2 \tau_i^2 \sum_{t=1}^{T+K} \sum_{j=1}^d \frac{g_{t_j,j}^2}{v_{t_j,j} + \epsilon}
\end{align*}
The final inequality holds because each term $\frac{g_{t_j,j}}{\sqrt{v_{t_j,j} + \epsilon}}$ within the sliding window of size $\tau_i$ is summed at most $\tau_i$ times. 

Therefore, taking the square root, we obtain:
\begin{align*}
    \sqrt{\sum_{t=1}^{T+K} A_t^2} \leq \eta \sqrt{\sum_{i=1}^d H_i^2 \tau_i^2} \cdot \sqrt{\sum_{t=1}^{T+K} \sum_{j=1}^d \frac{g_{t_j,j}^2}{v_{t_j,j} + \epsilon}}
\end{align*}
and consequently:
\begin{align*}
    \eta \sum_{t=1}^{T+K} A_t B_t \leq \eta^2 \sqrt{\sum_{i=1}^d H_i^2 \tau_i^2} \cdot \sum_{t=1}^{T+K} \sum_{j=1}^d \frac{g_{t_j,j}^2}{v_{t_j,j} + \epsilon}.
\end{align*}

Comparing this to the constant delay case in \cref{eq:crossterm-final}, the term $\tau \sqrt{\sum_{i=1}^d H_i^2}$ is seamlessly replaced by $\sqrt{\sum_{i=1}^d H_i^2 \tau_i^2}$. We can define the effective delay $\tau'$ as:
\begin{align*}
    \tau' \triangleq \sqrt{\frac{\sum_{i=1}^d H_i^2 \tau_i^2}{\sum_{i=1}^d H_i^2}} = \sqrt{\frac{\sum_{k=1}^K (K-k)^2 \sum_{i \in S_k} H_i^2}{\sum_{i=1}^d H_i^2}}.
\end{align*}
This formulation allows us to rewrite $\sqrt{\sum_{i=1}^d H_i^2 \tau_i^2} = \tau' \sqrt{\sum_{i=1}^d H_i^2}$.

Following the remaining algebraic rearrangements as in \cref{app:convergence-proof}, and preserving the dimensionality dependence $d$ (which inherently arises from bounding the maximum over $d$ dimensions of the logarithmic variance terms), we arrive at the corresponding convergence bound:
\begin{align*}
    \min_{\frac{T}{2} < t \leq T} \E \| \nabla f(x_t) \|_1 = \mathcal{O} \left(\sqrt{\frac{(1 + d \tau') R}{T}} + \sqrt{\sum_{i=1}^d \sigma_i} \left(\frac{(1 + d \tau') R}{T}\right)^{1/4} + \sum_{i=1}^d \sigma_i \left (\frac{\log T}{T}\right)^{1/4} + \delta_T\right) 
\end{align*}
where $R \triangleq (f(x_0) - \min_x f(x)) \sum_{i=1}^d H_i$.

This result demonstrates that modeling stage-dependent delays actively tightens the constant factor ($\tau' \leq K$, the maximum delay) while preserving the fundamental multiplicative interaction between the delay and the basis misalignment (represented by the $H_i$ terms). More importantly, the explicit formulation of $\tau'$ provides a rigorous theoretical validation for stage-aware basis rotation: by actively dampening the misalignment magnitude ($H_i^2$) in earlier pipeline stages where the delay $(K-k)$ is most severe, the effective delay $\tau'$ is directly minimized, yielding an accelerated convergence bound.

\end{proof}

\input{tex/913_convergence_analysis_moment}

%% file: tex/913_convergence_analysis_moment.tex
\subsection{Extension to \texorpdfstring{$\beta_1 > 0$}{beta1 > 0}}
\label{app:convergence-momentum}

In this section, we extend \cref{thm:convergence} to the heavy-ball momentum
setting ($\beta_1 > 0$). The asynchronous Adam update with momentum is
\begin{align}
m_{t,i} &= \beta_1\, m_{t-1,i} + (1-\beta_1)\, g_{t,i}, \label{eq:mom_update}\\
v_{t,i} &= \beta_2\, v_{t-1,i} + (1-\beta_2)\, g_{t,i}^2,\\
x_{t,i} &= x_{t-1,i} - \eta\, \frac{m_{t-\tau,i}}{\sqrt{v_{t-\tau,i} + \epsilon}}.
\end{align}
Equivalently, $m_s = (1-\beta_1)\sum_{k=0}^{s-1}\beta_1^{k}g_{s-k}$.
We extend the auxiliary variance shorthand to incorporate a momentum index:
for $k \in \{0,1,\dots,s-1\}$,
\begin{align}\label{eq:tildev_def_mom}
\tilde{v}_{s,k+1,i}\;\triangleq\; \beta_2^{k+1}\, v_{s-k-1,i}
\;+\; (1-\beta_2)\,\E_{s-k-1}\!\Bigg[\sum_{j=s-k}^{s} \beta_2^{s-j}\,g_{j,i}^2\Bigg],
\end{align}
where $\E_{s-k-1}[\cdot]$ denotes conditional expectation w.r.t.\
$f_1, \dots, f_{s-k-1}$. By construction, $\tilde{v}_{s,k+1,i}$ is
$\sigma(f_1,\dots,f_{s-k-1})$-measurable and therefore independent of
$g_{s-k,i}, \dots, g_{s,i}$. It reduces to $\tilde{v}_{s,i}$ of
\cref{lemma:denominator} when $k=0$.

For the heavy-ball extension we additionally adopt the standard
coordinate-wise almost-sure gradient bound used in
\citet{dfossez2022a}.

\begin{assumption}
\label[assumption]{assumption:bounded-gradient}
    (Coordinate-wise bounded gradient)
    There exists $G > 0$ such that
    $\| \nabla f(w; \xi) \|_\infty \leq G$ almost surely for all $w \in \R^d$,
    where $\xi$ denotes the stochasticity of data.
\end{assumption}

\begin{theorem}[Convergence with momentum]\label{thm:convergence_mom}
Under \cref{assumption:bounded-stochasticity,assumption:smoothness,assumption:bounded-gradient},
for asynchronous Adam with $0 \le \beta_1 < \beta_2 < 1$ where $\beta_1$
is a fixed constant independent of $T$, uniform delay $\tau$, and
hyperparameter choices $1 - \beta_2 = \Theta(\log T / T)$ and
$\eta = \Theta\!\big(\sqrt{\Delta_0/(T\,d\,(1+\tau)\sum_i H_i)}\big)$
with $\Delta_0 \triangleq f(x_0) - \min_x f(x)$,
the following holds for all $T \ge T_0= 4/(1-\beta_1)$,
\[
\min_{T/2 < t \le T}\, \E\|\nabla f(x_t)\|_1
\;=\; \mathcal{O}\!\left(
\sqrt{\frac{d\,(1+\tau)\,R}{T}}
\;+\; \sqrt{\textstyle\sum_i \sigma_i}\,\Big(\tfrac{d\,(1+\tau)\,R}{T}\Big)^{\!1/4}
\;+\; \sqrt{G\,d\textstyle\sum_i \sigma_i}\,\Big(\tfrac{\log T}{T}\Big)^{\!1/4}
\right),
\]
where $R \triangleq \Delta_0 \sum_{i=1}^d H_i$.
\end{theorem}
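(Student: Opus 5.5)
We follow the template of the proof of \cref{thm:convergence}, but now handle the momentum buffer. The first step is to write the one-step descent via \cref{lemma:secondorderterm}:
\[
f(x_t)-f(x_{t-1})\le -\eta\sum_i \frac{\bar g_{t,i}\, m_{t-\tau,i}}{\sqrt{v_{t-\tau,i}+\epsilon}}+\frac{\eta^2}{2}\sum_i H_i\frac{m_{t-\tau,i}^2}{v_{t-\tau,i}+\epsilon}.
\]
We expand $m_{t-\tau}=(1-\beta_1)\sum_{k}\beta_1^k g_{t-\tau-k}$ and split the first-order term into three pieces:
\begin{itemize}
\item (A$'$): terms $g_{s-k,i}\bar g_{s-k,i}/\sqrt{\tilde v_{s,k+1,i}+\epsilon}$, with the measurable surrogate $\tilde v_{s,k+1,i}$ of \cref{eq:tildev_def_mom} in the denominator;
\item (B$'$): the staleness error $\bar g_{t}-\bar g_{t-\tau-k}$;
\item (D): the error from replacing $\sqrt{v_{s,i}+\epsilon}$ by $\sqrt{\tilde v_{s,k+1,i}+\epsilon}$.
\end{itemize}
Because $\tilde v_{s,k+1,i}$ is independent of $g_{s-k},\dots,g_s$, the expectation of (A$'$) yields the descent quantity $\sum \bar g^2/\sqrt{\tilde v+\epsilon}$, exactly as in \cref{lemma:denominator}. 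The geometric weights sum to at most $1$, and requiring $T\ge T_0=4/(1-\beta_1)$ lets us discard the boundary terms where $k$ exceeds the history.

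For (B$'$), we repeat the Cauchy--Schwarz/Jensen chain from the proof of \cref{thm:convergence}. The lag is now $\tau+k$, so $\|x_{t-1}-x_{t-\tau-k-1}\|_2^2\le(\tau+k)\eta^2\sum\|m/\sqrt{v+\epsilon}\|^2$. Summing over $k$ with weights $\beta_1^k$ gives $\sum_k\beta_1^k(\tau+k)=\mathcal O((\tau+1)/(1-\beta_1)^2)$. Since $\beta_1$ is constant, this term reduces to $\mathcal O(1+\tau)$ times $\eta^2\sqrt{\sum_i H_i^2}\sum_{t,i} m^2/(v+\epsilon)$. The quadratic quantity $\sum_t m_{t,i}^2/(v_{t,i}+\epsilon)$ is controlled by Jensen on the EMA, $m^2\le(1-\beta_1)\sum_k\beta_1^k g_{t-k}^2$, together with $v_t\ge\beta_2^k v_{t-k}$. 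This reduces it to \cref{lemma:accumulated-squared-update}, at the cost of a factor $\le 1/(1-\beta_1/\beta_2)$, which is constant since $\beta_1<\beta_2$ with margin for $\beta_2\to1$. The same argument bounds (C).

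The main obstacle is term (D), the denominator mismatch. We will bound
\[
\Big|\tfrac{1}{\sqrt{v_{s}+\epsilon}}-\tfrac{1}{\sqrt{\tilde v_{s,k+1}+\epsilon}}\Big|\le\frac{|v_s-\tilde v_{s,k+1}|}{\sqrt{v_s+\epsilon}\sqrt{\tilde v+\epsilon}(\sqrt{v_s+\epsilon}+\sqrt{\tilde v+\epsilon})},
\]
noting that $v_s-\tilde v_{s,k+1}=(1-\beta_2)\sum_{j}\beta_2^{s-j}(g_j^2-\E g_j^2)$. We then use \cref{assumption:bounded-gradient} ($|g|\le G$) and Young's inequality. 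Half of the resulting expression is absorbed into the descent term $\tfrac14\sum\bar g^2/\sqrt{\tilde v+\epsilon}$. The remainder is of order $G(1-\beta_2)(k+1)\sum g^2/(v+\epsilon)\cdot\sigma_i$, which we sum via \cref{lemma:accumulated-squared-update} and the weights $\beta_1^k(k+1)$. This is the source of the new $\sqrt{G d\sum_i\sigma_i}(\log T/T)^{1/4}$ term; it is also the most delicate step, because the conditional-independence bookkeeping must be exact.

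Finally, we combine the three bounds, apply \cref{lemma:c1} to make the log factor $\mathcal O(\log T)$, and pass to $\E\|\nabla f\|_1$ over $t\in(T/2,T]$ via \cref{lemma:first-order-approx} and Cauchy--Schwarz as before. Plugging in $1-\beta_2=\Theta(\log T/T)$ and the stated $\eta$ then balances $\Delta_0/(\eta T)$ against $\eta d(1+\tau)\sum_i H_i$. This gives the claimed rate; the factor $d$ now multiplies $(1+\tau)$ because the crude bound $\sqrt{\sum H_i^2}\cdot d\le d\sum H_i$ is applied to the undelayed term as well.
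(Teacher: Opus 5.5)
Your route is essentially the paper's. Both use the measurable surrogate $\tilde v_{s,k+1,i}$ for the descent term, Cauchy--Schwarz/Jensen for staleness, and the bound $\sum_s m_{s,i}^2/(v_{s,i}+\epsilon)\le\frac{1-\beta_1}{1-\beta_1/\beta_2}\sum_s g_{s,i}^2/(v_{s,i}+\epsilon)$, which is exactly \cref{lemma:mom-variance}. Both also invoke \cref{assumption:bounded-gradient} for the denominator mismatch. The only structural difference is that you merge the delay lag and the momentum lag into one lag $\tau+k$. The paper instead keeps the delay as a separate term $(B)$ with $m_{t-\tau}$ in the numerator, and treats the momentum lag $\bar g_s-\bar g_{s-k}$ inside \cref{lemma:hybrid-descent} by a per-coordinate Young inequality. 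Both give the same $d(1+\tau)\sum_i H_i$ scaling. In your version you must also move $g_{t-\tau-k}$ under its own denominator via $v_{t-\tau}\ge\beta_2^k v_{t-\tau-k}$, which costs a harmless $\beta_2^{-k/2}$.

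The step you call the crux, however, is misstated. The remainder from (D) cannot be of order $G(1-\beta_2)(k+1)\sigma_i\, g^2/(v+\epsilon)$: that expression scales like a squared gradient, while $\bar g_{s-k}g_{s-k}(1/a_s-1/b_s)$ scales like a gradient. Carrying out your own plan fixes this: apply Young, use $r_s^2/b_s\le r_s$ with $r_s\le G\sqrt{(k+1)(1-\beta_2)}$, and use $a_s^2\ge\beta_2^k(v_{s-k}+\epsilon)$. This gives a remainder $\mathcal{O}\big(G\sqrt{(k+1)(1-\beta_2)}\,\beta_2^{-k}\big)\,g_{s-k,i}^2/(v_{s-k,i}+\epsilon)$, with no $\sigma_i$. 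Summing against $\beta_1^k$ needs $\sum_k(\beta_1/\beta_2)^k\sqrt{k+1}\le 2(1-\beta_1/\beta_2)^{-3/2}$. The result contributes $\mathcal{O}(Gd\sqrt{1-\beta_2})=\mathcal{O}(Gd\sqrt{\log T/T})$ to the averaged descent bound.

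The factor $\sum_i\sigma_i$ enters only in the final Cauchy--Schwarz, through $\frac{2}{T}\E\sum_{s,i,k}\beta_1^k\sqrt{\tilde v_{s,k+1,i}+\epsilon}=\mathcal{O}(\sum_i\sigma_i+\cdots)$. The product of the two factors is what yields $\sqrt{Gd\sum_i\sigma_i}\,(\log T/T)^{1/4}$; with your stated remainder, the bookkeeping does not produce that term.

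Relatedly, the last step is not ``as before''. Because the surrogate denominators depend on $k$, you need a $\beta_1^k$-weighted version of \cref{lemma:first-order-approx}, which is the paper's \cref{lemma:mom-first-order-approx}. The condition $T\ge T_0$ is used there, not to discard boundary terms in (A$'$), where the expansion of $m_s$ is exact. Its role is to ensure the total weight $\sum_{j>T/2}(1-\beta_1^{T-j+1})/(1-\beta_1)\ge T/(4(1-\beta_1))$ when passing from the weighted sum to $\min_{T/2<t\le T}$.
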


\subsubsection{Auxiliary Lemmas}

The proof requires three additional lemmas. The first is a momentum-aware counterpart of \cref{lemma:denominator}.

\begin{lemma}[Hybrid descent lemma with momentum]\label{lemma:hybrid-descent}
Under \cref{assumption:bounded-stochasticity,assumption:smoothness,assumption:bounded-gradient},
for any $i \in [d]$ and $0 \le \beta_1 < \beta_2 \le 1$,
\begin{align*}
\E \sum_{s=1}^{T} \frac{\bar{g}_{s,i}\, m_{s,i}}{\sqrt{v_{s,i}+\epsilon}}
\;\ge\;& \frac{1-\beta_1}{4}\,\E \sum_{s=1}^{T} \sum_{k=0}^{s-1}\beta_1^{k}\,
\frac{\bar{g}_{s-k,i}^{2}}{\sqrt{\tilde{v}_{s,k+1,i}+\epsilon}} \\
& -\; \frac{\beta_1\, \eta\, H_i}{(1-\beta_1)^{2}}\,
\E \sum_{w=1}^{T}\Big\|\frac{m_w}{\sqrt{v_w+\epsilon}}\Big\|_2^{2} \\
& -\; \frac{\sqrt{1-\beta_1}\,\big[4G\sqrt{1-\beta_2} \;+\; H_i\,\eta\big]}{(1-\beta_1/\beta_2)^{3/2}}\,
\E\!\sum_{u=1}^{T}\frac{g_{u,i}^{2}}{v_{u,i}+\epsilon}.
\end{align*}
\end{lemma}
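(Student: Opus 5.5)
We prove \cref{lemma:hybrid-descent} by unrolling the momentum and treating each lagged gradient with a momentum-aware version of the argument behind \cref{lemma:denominator}. Write $m_{s,i}=(1-\beta_1)\sum_{k=0}^{s-1}\beta_1^k g_{s-k,i}$, so that
\[
\E \sum_{s=1}^T \frac{\bar g_{s,i} m_{s,i}}{\sqrt{v_{s,i}+\epsilon}}
=(1-\beta_1)\sum_{s}\sum_{k}\beta_1^k\,\E\frac{\bar g_{s,i}\, g_{s-k,i}}{\sqrt{v_{s,i}+\epsilon}}.
\]
For each pair $(s,k)$ we make two replacements, each producing an error term. First, we replace $\bar g_{s,i}$ by $\bar g_{s-k,i}$. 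Second, we replace the denominator $\sqrt{v_{s,i}+\epsilon}$ by the decoupled $\sqrt{\tilde v_{s,k+1,i}+\epsilon}$, which is $\sigma(f_1,\dots,f_{s-k-1})$-measurable. After the second replacement, conditional on $\mathcal F_{s-k-1}$ the main term has expectation $\bar g_{s-k,i}^2/\sqrt{\tilde v_{s,k+1,i}+\epsilon}$. We plan to recover half of this, giving the constant $\frac{1-\beta_1}{4}$ after absorbing one more factor $\tfrac12$ from an AM--GM step.

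\textbf{Step 1 (gradient drift).} By \cref{assumption:smoothness}, $|\bar g_{s,i}-\bar g_{s-k,i}|\le H_i\|x_{s-1}-x_{s-k-1}\|_\infty\le H_i\eta\sum_{j=1}^{k}\|m_{w_j}/\sqrt{v_{w_j}+\epsilon}\|_2$ over the $k$ intervening updates; the delay only shifts indices. We multiply by $|g_{s-k,i}|/\sqrt{v_{s,i}+\epsilon}$ and apply Young's inequality to split the product. One piece is $\eta H_i\|m_w/\sqrt{v_w+\epsilon}\|_2^2$. The other is $\eta H_i\, g_{s-k,i}^2/(v_{s,i}+\epsilon)$, and since $v_{s,i}\ge\beta_2^k v_{s-k,i}$ it is controlled by $\beta_2^{-k}g_{s-k,i}^2/(v_{s-k,i}+\epsilon)$. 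We then sum with weights $(1-\beta_1)\beta_1^k k$. The geometric sums $\sum_k k\beta_1^k\lesssim\beta_1/(1-\beta_1)^2$ give the $\frac{\beta_1\eta H_i}{(1-\beta_1)^2}$ term. The sums $\sum_k(\beta_1/\beta_2)^k$ (times $\sqrt{k}$-type factors) give the $H_i\eta/(1-\beta_1/\beta_2)^{3/2}$ part of the last term.

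\textbf{Step 2 (denominator decoupling).} This is the main obstacle. Following \citet{xie2024adam} and \citet{dfossez2022a}, we bound
\[
\Big|\frac{1}{\sqrt{v_{s,i}+\epsilon}}-\frac{1}{\sqrt{\tilde v_{s,k+1,i}+\epsilon}}\Big|
\le\frac{|v_{s,i}-\tilde v_{s,k+1,i}|}{\sqrt{v_{s,i}+\epsilon}\sqrt{\tilde v_{s,k+1,i}+\epsilon}\,(\sqrt{v_{s,i}+\epsilon}+\sqrt{\tilde v_{s,k+1,i}+\epsilon})}.
\]
The difference $v_{s,i}-\tilde v_{s,k+1,i}=(1-\beta_2)\sum_{j=s-k}^s\beta_2^{s-j}(g_{j,i}^2-\E_{s-k-1}g_{j,i}^2)$ is controlled using $|g_{j,i}|\le G$ from \cref{assumption:bounded-gradient}. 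Young's inequality then splits the error into $\frac14\bar g_{s-k,i}^2/\sqrt{\tilde v_{s,k+1,i}+\epsilon}$, which is absorbed into the main term, and a remainder of order $G\sqrt{1-\beta_2}$ times a weighted sum of $g_{j,i}^2/(v_{j,i}+\epsilon)$ over the window $j\in[s-k,s]$.

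\textbf{Step 3 (collecting terms).} We reindex this double sum by $u=j$ and bound each window weight through $\beta_1^k$ against $\beta_2^{s-j}$, using $v_{s,i}\ge\beta_2^{s-u}v_{u,i}$. The geometric series in $\beta_1/\sqrt{\beta_2}$, together with Cauchy--Schwarz over the window length, yields the factor $\sqrt{1-\beta_1}/(1-\beta_1/\beta_2)^{3/2}$ multiplying $4G\sqrt{1-\beta_2}\,\E\sum_u g_{u,i}^2/(v_{u,i}+\epsilon)$. Combining this with Step 1 gives the stated inequality. The delicate point is getting exponents that only require $\beta_1<\beta_2$. If the $\tfrac{3}{2}$ power does not close directly, we first try the split $\beta_1^k=(\beta_1/\sqrt{\beta_2})^k\beta_2^{k/2}$, applying Cauchy--Schwarz to one factor and using $\sum_k k(\beta_1/\beta_2)^k$ on the other.
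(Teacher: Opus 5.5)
Your outline has the same architecture as the paper's proof. You unroll $m_{s,i}$, split $\bar g_{s,i}g_{s-k,i}=\bar g_{s-k,i}g_{s-k,i}+(\bar g_{s,i}-\bar g_{s-k,i})g_{s-k,i}$, decouple the denominator through $\tilde v_{s,k+1,i}$, absorb part of the main term by Young, and finish with D\'efossez-type series. However, Step~2, which you yourself flag as the main obstacle, is not actually carried out, and the mechanism you describe does not close.

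Write $a_s=\sqrt{v_{s,i}+\epsilon}$ and $b_s=\sqrt{\tilde v_{s,k+1,i}+\epsilon}$. If you first control $|v_{s,i}-\tilde v_{s,k+1,i}|$ through $|g_{j,i}|\le G$ and then apply Young, the remainder still carries a factor $1/b_s$, for instance $G^2(1-\beta_2)(k+1)\,g_{s-k,i}^2/(a_s^2 b_s)$. Since $b_s$ can be as small as $\sqrt{\epsilon}$, nothing of the stated form comes out.

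The missing idea is the cancellation the paper uses. Set $\delta_s^2=(1-\beta_2)\sum_{j=s-k}^{s}\beta_2^{s-j}g_{j,i}^2$ and $r_s^2=\E_{s-k-1}[\delta_s^2]$. Then $a_s^2-b_s^2=\delta_s^2-r_s^2$, hence $|1/a_s-1/b_s|\le r_s^2/(a_s b_s^2)+\delta_s^2/(a_s^2 b_s)$. On the first piece you use $r_s\le b_s$. On the second you choose the Young weight proportional to $b_s/r_s^2$, so the absorbed part carries $\delta_s^2/r_s^2$, whose conditional mean is $1$; the leftover is handled by $\delta_s\le a_s$. Both remainders then reduce, up to constants, to $(r_s^2/b_s)\,\E_{s-k-1}[g_{s-k,i}^2/a_s^2]\le r_s\,\E_{s-k-1}[g_{s-k,i}^2/a_s^2]$. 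Only at this point does $G$ enter, via $r_s\le G\sqrt{(k+1)(1-\beta_2)}$.

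This also shows the remainder is a single-index term, $g_{s-k,i}^2/a_s^2\le\beta_2^{-k}g_{s-k,i}^2/(v_{s-k,i}+\epsilon)$, not a window sum. The factor $\sqrt{k+1}$ comes from $r_s$, not from Cauchy--Schwarz over the window. Reindexing $u=s-k$ then gives $\sum_k(\beta_1/\beta_2)^k\sqrt{k+1}\le 2(1-\beta_1/\beta_2)^{-3/2}$ directly. Suppose you instead kept a window sum with per-term weight $\sqrt{k+1}\,\beta_2^{-(s-j)}$, the $\beta_2^{-(s-j)}$ coming from $v_{s,i}\ge\beta_2^{s-j}v_{j,i}$. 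Then the $k+1$ window positions would cost an extra factor of order $(1-\beta_1)^{-1/2}$, and the $\sqrt{1-\beta_1}$ prefactor in the statement would be lost.

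Step~1 has a similar exponent problem. With a fixed Young parameter and weight $k$ on both pieces, the $g^2$ side produces $\sum_k k(\beta_1/\beta_2)^k\asymp(1-\beta_1/\beta_2)^{-2}$. The slack in $\beta_1\eta H_i/(1-\beta_1)^2$ cannot compensate for this as $\beta_2\downarrow\beta_1$. The paper instead applies Young to $|\bar g_{s,i}-\bar g_{s-k,i}|\cdot|g_{s-k,i}|/a_s$ with the $k$-dependent parameter $\lambda_k=\sqrt{1-\beta_1}/(H_i\eta\sqrt{k+1})$, and then uses $(\sum_{l\le k}x_l)^2\le k\sum_{l\le k}x_l^2$. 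This puts $\beta_1^k k^{3/2}$ on the $\|m_w/\sqrt{v_w+\epsilon}\|_2^2$ side, summing to at most $\beta_1\eta H_i/(1-\beta_1)$. It puts $(\beta_1/\beta_2)^k\sqrt{k+1}$ on the other side, summing to $\sqrt{1-\beta_1}\,H_i\eta/(1-\beta_1/\beta_2)^{3/2}$, which is exactly the stated form.
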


\begin{proof}
Fix coordinate $i$ and drop it from the notation; write
$a_s := \sqrt{v_s+\epsilon}$ and $b_s^{(k)} := \sqrt{\tilde v_{s,k+1}+\epsilon}$.
Using $m_s = (1-\beta_1)\sum_{k=0}^{s-1}\beta_1^{k} g_{s-k}$ and the
elementary identity $\bar g_s g_{s-k} = \bar g_{s-k}g_{s-k} +
(\bar g_s-\bar g_{s-k})g_{s-k}$,
\begin{align}
\frac{\bar g_s m_s}{a_s}
\;=\;& (1-\beta_1)\sum_{k=0}^{s-1}\beta_1^{k}
\underbrace{\frac{\bar g_{s-k}\,g_{s-k}}{a_s}}_{(\mathcal A_k)}
\;+\; (1-\beta_1)\sum_{k=0}^{s-1}\beta_1^{k}
\underbrace{\frac{(\bar g_s-\bar g_{s-k})\,g_{s-k}}{a_s}}_{(\mathcal B_k)}.
\label{eq:mom_decomp}
\end{align}
We lower-bound the conditional expectation of $\mathcal A_k$ in
Step~A and upper-bound $|\mathcal B_k|$ in Step~B.

\medskip
\noindent\textbf{Step A: Bounding $\mathcal A_k$.}
Define the partial sums
\begin{align}\label{eq:delta_r_def}
\delta_s^{2} \;:=\; (1-\beta_2)\sum_{j=s-k}^{s}\beta_2^{s-j}g_{j}^{2},
\qquad
r_s^{2} \;:=\; (1-\beta_2)\sum_{j=s-k}^{s}\beta_2^{s-j}\E_{s-k-1}[g_{j}^{2}],
\end{align}
so that $a_s^{2} = \beta_2^{k+1}v_{s-k-1} + \delta_s^{2} + \epsilon$ and
$\big(b_s^{(k)}\big)^{2} = \beta_2^{k+1}v_{s-k-1} + r_s^{2} + \epsilon$, hence
$a_s^{2} - \big(b_s^{(k)}\big)^{2} = \delta_s^{2} - r_s^{2}$, $a_s \ge \delta_s$,
and $b_s^{(k)} \ge r_s$. Note that $r_s, \beta_2^{k+1}v_{s-k-1}, \tilde v_{s,k+1}$,
and $b_s^{(k)}$ are all $\sigma(f_1,\dots,f_{s-k-1})$-measurable.
By the tower property,
\begin{align}
\E\!\left[\frac{\bar g_{s-k}\,g_{s-k}}{b_s^{(k)}}\right]
\;=\; \E\!\left[\frac{\bar g_{s-k}^{\,2}}{b_s^{(k)}}\right],
\quad\text{so}\quad
\E\!\left[\frac{\bar g_{s-k}\,g_{s-k}}{a_s}\right]
\;=\; \E\!\left[\frac{\bar g_{s-k}^{\,2}}{b_s^{(k)}}\right]
+ \E[C_{s,k}],
\label{eq:A_identity}
\end{align}
where $C_{s,k} \triangleq \bar g_{s-k}\,g_{s-k}\!\left(1/a_s - 1/b_s^{(k)}\right)$. 
Using $|x-y| = |x^{2}-y^{2}|/(x+y)$ and
$a_s + b_s^{(k)} \ge \max(a_s, b_s^{(k)})$,
\begin{align}
\left|\frac{1}{a_s} - \frac{1}{b_s^{(k)}}\right|
\;=\; \frac{|a_s^{2}-(b_s^{(k)})^{2}|}{a_s\,b_s^{(k)}\,(a_s+b_s^{(k)})}
\;\le\; \frac{r_s^{2}}{a_s\,(b_s^{(k)})^{2}} + \frac{\delta_s^{2}}{a_s^{2}\,b_s^{(k)}},
\label{eq:multistep_decomp}
\end{align}
where the final inequality uses $|\delta_s^{2}-r_s^{2}|\le \delta_s^{2}+r_s^{2}$.
Hence $|C_{s,k}| \le \kappa_{s,k} + \rho_{s,k}$ with
\begin{align*}
\kappa_{s,k} \;=\; \frac{|\bar g_{s-k}\,g_{s-k}|\,r_s^{2}}{a_s\,(b_s^{(k)})^{2}},
\qquad
\rho_{s,k} \;=\; \frac{|\bar g_{s-k}\,g_{s-k}|\,\delta_s^{2}}{a_s^{2}\,b_s^{(k)}}.
\end{align*}

To bound $\kappa_{s,k}$, apply 
$|xy|\le \tfrac{\lambda}{2}x^{2}+\tfrac{1}{2\lambda}y^{2}$ with
$\lambda = \tfrac{1}{2}\sqrt{1-\beta_1}\,b_s^{(k)}$,
$x = |\bar g_{s-k}|/b_s^{(k)}$, and
$y = |g_{s-k}|\,r_s^{2}/(a_s\,b_s^{(k)})$:
\begin{align}\label{eq:kappa_bound_raw}
\kappa_{s,k}
\;\le\; \frac{\sqrt{1-\beta_1}\,\bar g_{s-k}^{\,2}}{4\,b_s^{(k)}}
\;+\; \frac{1}{\sqrt{1-\beta_1}}\cdot\frac{g_{s-k}^{2}\,r_s^{4}}
{a_s^{2}\,(b_s^{(k)})^{3}}.
\end{align}
Take $\E_{s-k-1}[\cdot]$. Since $r_s, b_s^{(k)}, \bar g_{s-k}$ are
$\sigma(f_1,\dots,f_{s-k-1})$-measurable, and using
$(b_s^{(k)})^{2}\ge r_s^{2}$ to bound $r_s^{4}/(b_s^{(k)})^{3}\le r_s^{2}/b_s^{(k)}$:
\begin{align}\label{eq:kappa_bound_cond}
\E_{s-k-1}[\kappa_{s,k}]
\;\le\; \frac{\sqrt{1-\beta_1}\,\bar g_{s-k}^{\,2}}{4\,b_s^{(k)}}
\;+\; \frac{r_s^{2}}{\sqrt{1-\beta_1}\,b_s^{(k)}}\,
\E_{s-k-1}\!\!\left[\frac{g_{s-k}^{2}}{a_s^{2}}\right].
\end{align}

To bound $\rho_{s,k}$, apply the same inequality with
$\lambda = \tfrac{1}{2}\sqrt{1-\beta_1}\,b_s^{(k)}/r_s^{2}$,
$x = |\bar g_{s-k}|\,\delta_s/b_s^{(k)}$, and
$y = |\delta_s g_{s-k}|/a_s^{2}$:
\begin{align*}
\rho_{s,k}
\;\le\; \frac{\sqrt{1-\beta_1}\,\bar g_{s-k}^{\,2}\,\delta_s^{2}}{4\,r_s^{2}\,b_s^{(k)}}
\;+\; \frac{r_s^{2}\,\delta_s^{2}\,g_{s-k}^{2}}{\sqrt{1-\beta_1}\,b_s^{(k)}\,a_s^{4}}.
\end{align*}
Taking $\E_{s-k-1}[\cdot]$, using $\E_{s-k-1}[\delta_s^{2}]=r_s^{2}$ on the
first summand and $\delta_s^{2}\le a_s^{2}$ on the second:
\begin{align}\label{eq:rho_bound_cond}
\E_{s-k-1}[\rho_{s,k}]
\;\le\; \frac{\sqrt{1-\beta_1}\,\bar g_{s-k}^{\,2}}{4\,b_s^{(k)}}
\;+\; \frac{r_s^{2}}{\sqrt{1-\beta_1}\,b_s^{(k)}}\,
\E_{s-k-1}\!\!\left[\frac{g_{s-k}^{2}}{a_s^{2}}\right].
\end{align}

Adding \cref{eq:kappa_bound_cond,eq:rho_bound_cond} and using
$r_s^{2}/b_s^{(k)}\le r_s$ together with
$r_s \le G\sqrt{(k+1)(1-\beta_2)}$ (from \cref{assumption:bounded-gradient}
and the geometric sum
$\sum_{j=0}^{k}\beta_2^{j}\le k+1$):
\begin{align}\label{eq:Cterm_bound}
\E_{s-k-1}[|C_{s,k}|]
&\le \frac{\sqrt{1-\beta_1}\,\bar g_{s-k}^{\,2}}{2\,b_s^{(k)}}
\;+\; \frac{2\,G\,\sqrt{(k+1)(1-\beta_2)}}{\sqrt{1-\beta_1}}\,
\E_{s-k-1}\!\!\left[\frac{g_{s-k}^{2}}{a_s^{2}}\right]\\
&\le \frac{\sqrt{1-\beta_1}}{2}\,\E\!\!\left[\frac{\bar g_{s-k}^{\,2}}{b_s^{(k)}}\right]
\;+\; \frac{2\,G\,\sqrt{(k+1)(1-\beta_2)}}{\sqrt{1-\beta_1}\,\beta_2^{k}}\,
\E[\frac{g_{s-k}^{2}}{v_{s-k}+\epsilon}],
\end{align}
where we use $a_s^{2} \ge \beta_2^{k}(v_{s-k}+\epsilon)$ for the second inequality. Finally, we have

\begin{align}
(1-\beta_1)\sum_{s,k}\beta_1^{k}\E\left[\frac{\bar g_{s-k}\,g_{s-k}}{a_s}\right] 
&\ge (1-\beta_1)\sum_{s,k}\beta_1^{k} \left(\E\!\left[\frac{\bar g_{s-k}^{\,2}}{b_s^{(k)}}\right] - \E[|C_{s,k}|]\right)\\
&\ge (1-\beta_1)\sum_{s,k}\beta_1^{k} \left(\Big(1-\tfrac{\sqrt{1-\beta_1}}{2}\Big)\,\E\!\left[\frac{\bar g_{s-k}^{\,2}}{b_s^{(k)}}\right]
\;-\; \frac{2\,G\,\sqrt{(k+1)(1-\beta_2)}}{\sqrt{1-\beta_1}\,\beta_2^{k}}\,\E[\frac{g_{s-k}^{2}}{v_{s-k}+\epsilon}]\right)\\
&\;\ge\; \frac{1-\beta_1}{2}\!\sum_{s,k}\!\beta_1^{k}\,
\E\!\!\left[\frac{\bar g_{s-k}^{\,2}}{b_s^{(k)}}\right]
\;-\; \frac{4\,G\,\sqrt{(1-\beta_1)(1-\beta_2)}}{(1-\beta_1/\beta_2)^{3/2}}\,
\E\sum_u \frac{g_{u}^{2}}{v_{u}+\epsilon},
\label{eq:boundingA}
\end{align}
where we use $1-\sqrt{1-\beta_1}/2 \ge 1/2$ and \citet[Lemma~A.3]{dfossez2022a},
$\sum_{k\ge 0}(\beta_1/\beta_2)^{k}\sqrt{k+1}\le 2/(1-\beta_1/\beta_2)^{3/2}$ in the last inequality.

\medskip
\noindent\textbf{Step B: Bounding $\mathcal B_k$.}
Apply $|xy|\le \tfrac{\lambda}{2}x^{2}+\tfrac{1}{2\lambda}y^{2}$ with
$\lambda = \sqrt{1-\beta_1}/(H_i\eta\sqrt{k+1})$, $x = |\bar g_s - \bar g_{s-k}|$,
$y = |g_{s-k}|/a_s$:
\begin{equation}\label{eq:Bterm_amgm}
\sum_{s=1}^{T} \sum_{k=0}^{s-1}\beta_1^{k} |\mathcal B_k|
\;\le\; \sum_{s=1}^{T} \sum_{k=0}^{s-1}\beta_1^{k} \frac{\sqrt{1-\beta_1}}{2 H_i\eta\sqrt{k+1}}\,(\bar g_{s,i} - \bar g_{s-k,i})^{2}
+ \sum_{s=1}^{T}\sum_{k=0}^{s-1}\beta_1^{k}\frac{H_i\eta\sqrt{k+1}}{2\sqrt{1-\beta_1}}\cdot\frac{g_{s-k,i}^{2}}{a_s^{2}}.
\end{equation}
The second term can be bound by
\begin{align}
\sum_{s=1}^{T}\sum_{k=0}^{s-1}\beta_1^{k}\frac{H_i\eta\sqrt{k+1}}{2\sqrt{1-\beta_1}}\cdot\frac{g_{s-k,i}^{2}}{a_s^{2}} 
&\le \sum_{s=1}^{T}\sum_{k=0}^{s-1}\beta_1^{k} \frac{H_i\,\eta\,\sqrt{k+1}}{2\sqrt{1-\beta_1}\,\beta_2^{k}}\cdot \frac{g_{s-k,i}^{2}}{v_{s-k,i}+\epsilon}\\
&\le \frac{H_i\,\eta}{2 \sqrt{1-\beta_1}}
\sum_{u=1}^{T}\frac{g_{u,i}^{2}}{v_{u,i}+\epsilon}\,
\sum_{k\ge 0}\!\Big(\tfrac{\beta_1}{\beta_2}\Big)^{\!k}\!\sqrt{k+1}\\
&\le \frac{H_i\,\eta}{(1-\beta_1/\beta_2)^{3/2} \sqrt{1-\beta_1}}\,
\sum_{u=1}^{T}\frac{g_{u,i}^{2}}{v_{u,i}+\epsilon},
\end{align}
where we use $a_s^{2}\ge \beta_2^{k}(v_{s-k,i}+\epsilon)$ in the first inequality and \citet[Lemma~A.3]{dfossez2022a} in the last. Taking expectation,
\begin{equation}\label{eq:B2_form}
\E\,(1-\beta_1) \sum_{s=1}^{T}\sum_{k=0}^{s-1}\beta_1^{k}\frac{H_i\eta\sqrt{k+1}}{2\sqrt{1-\beta_1}}\cdot\frac{g_{s-k,i}^{2}}{a_s^{2}} \le
\frac{H_i\,\eta\,\sqrt{1-\beta_1}}{(1-\beta_1/\beta_2)^{3/2}}\,
\E\!\sum_{u=1}^{T}\frac{g_{u,i}^{2}}{v_{u,i}+\epsilon}.
\end{equation}

\medskip

By \cref{assumption:smoothness} and Cauchy Schwarz,
\begin{equation}\label{eq:grad_diff_displacement}
(\bar g_{s,i} - \bar g_{s-k,i})^{2} \le H_i^2 \|x_{s-1}-x_{s-k-1}\|_\infty^2 \le H_i^2 \eta^2 \Big(\sum_{l=1}^{k}\|\tfrac{m_{s-l}}{\sqrt{v_{s-l}+\epsilon}}\|_\infty\Big)^{2}\le H_i^2 \eta^2 k \sum_{l=1}^{k}\|\tfrac{m_{s-l}}{\sqrt{v_{s-l}+\epsilon}}\|_\infty^2.
\end{equation}
Substituting \cref{eq:grad_diff_displacement} into the first term of \cref{eq:Bterm_amgm} gives
\begin{align}
(1-\beta_1)\sum_{s=1}^{T} \sum_{k=0}^{s-1}\beta_1^{k} \frac{\sqrt{1-\beta_1}}{2 H_i\eta\sqrt{k+1}}\,(\bar g_{s,i} - \bar g_{s-k,i})^{2} 
&\le \frac{H_i\,(1-\beta_1)^{3/2}\,\eta}{2}
\sum_{s,k}\!\beta_1^{k}\,\frac{k}{\sqrt{k+1}}\,\sum_{l=1}^{k}\|\frac{m_{s-l}}{\sqrt{v_{s-l}+\epsilon}}\|_\infty^{2} \\
&\le \frac{H_i\,(1-\beta_1)^{3/2}\,\eta}{2} \sum_{u=1}^{T}\|\frac{m_{u}}{\sqrt{v_{u}+\epsilon}}\|_\infty^{2} \sum_{k\ge 1}\!\beta_1^{k}\,\frac{k^{2}}{\sqrt{k+1}}\\
&\le 
\frac{H_i\,(1-\beta_1)^{3/2}\,\eta}{2}\cdot \frac{2\,\beta_1}{(1-\beta_1)^{5/2}}\,
\sum_{u=1}^{T}\|\frac{m_{u}}{\sqrt{v_{u}+\epsilon}}\|_\infty^{2}\\
&= \frac{\beta_1\,H_i\,\eta}{1-\beta_1} \sum_{u=1}^{T}\|\frac{m_{u}}{\sqrt{v_{u}+\epsilon}}\|_\infty^{2},
\end{align}
where we apply $k^{2}/\sqrt{k+1}\le k^{3/2}$ for $k\ge 1$ and
\citet[Lemma~A.4]{dfossez2022a} in the last inequality.

Finally, from the fact that $\|\cdot\|_\infty^{2}\le\|\cdot\|_2^{2}$, we have
\begin{equation}\label{eq:B1_l2form}
(1-\beta_1)\sum_{s=1}^{T} \sum_{k=0}^{s-1}\beta_1^{k} \frac{\sqrt{1-\beta_1}}{2 H_i\eta\sqrt{k+1}}\,(\bar g_{s,i} - \bar g_{s-k,i})^{2} \le  \frac{\beta_1\,H_i\,\eta}{1-\beta_1} \sum_{u=1}^{T}\|\tfrac{m_{u}}{\sqrt{v_{u}+\epsilon}}\|_2^{2}.
\end{equation}

Combining \cref{eq:boundingA,eq:B2_form,eq:B1_l2form} completes the proof.
\end{proof}

The second lemma is a momentum-aware counterpart of \cref{lemma:accumulated-squared-update}.

\begin{lemma}[Momentum variance bound]\label{lemma:mom-variance}
For any $i \in [d]$ and $0 \le \beta_1 < \beta_2 \le 1$,
\[
\sum_{s=1}^{T} \frac{m_{s,i}^{2}}{v_{s,i}+\epsilon}
\;\le\; \frac{1-\beta_1}{1-\beta_1/\beta_2}\,
\sum_{s=1}^{T} \frac{g_{s,i}^{2}}{v_{s,i}+\epsilon}
\;\le\; \frac{1-\beta_1}{1-\beta_1/\beta_2}\!
\left(T + \frac{\beta_2}{1-\beta_2}\,\ln\frac{v_{T,i}+\epsilon}{v_{0,i}+\epsilon}\right).
\]
\end{lemma}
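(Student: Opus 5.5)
The plan is to prove the two inequalities separately. The first one comes from a weighted Cauchy--Schwarz bound on the momentum followed by a comparison of second-moment denominators across time. The second one is a direct application of \cref{lemma:accumulated-squared-update} with $v$ shifted by $\epsilon$. Throughout we fix $i$ and drop it from the notation.

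\emph{Step 1 (Cauchy--Schwarz on the momentum).} From \cref{eq:mom_update} we have $m_s = (1-\beta_1)\sum_{k=0}^{s-1}\beta_1^k g_{s-k}$. Writing $\beta_1^k g_{s-k} = \beta_1^{k/2}\cdot \beta_1^{k/2} g_{s-k}$ and applying Cauchy--Schwarz gives
\begin{align*}
m_s^2 \le (1-\beta_1)^2 \Big(\sum_{k=0}^{s-1}\beta_1^k\Big)\Big(\sum_{k=0}^{s-1}\beta_1^k g_{s-k}^2\Big) \le (1-\beta_1)\sum_{k=0}^{s-1}\beta_1^k g_{s-k}^2 .
\end{align*}

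\emph{Step 2 (comparing denominators).} Since $v_s = \beta_2 v_{s-1} + (1-\beta_2) g_s^2 \ge \beta_2 v_{s-1}$, iterating gives $v_s \ge \beta_2^k v_{s-k}$. Because $\beta_2^k\le 1$, this also gives $v_s+\epsilon \ge \beta_2^k (v_{s-k}+\epsilon)$. Consequently
\begin{align*}
\frac{m_s^2}{v_s+\epsilon} \le (1-\beta_1)\sum_{k=0}^{s-1}\Big(\frac{\beta_1}{\beta_2}\Big)^k \frac{g_{s-k}^2}{v_{s-k}+\epsilon}.
\end{align*}
Summing over $s=1,\dots,T$ and substituting $u=s-k$, each term $g_u^2/(v_u+\epsilon)$ appears with total weight at most $\sum_{k\ge0}(\beta_1/\beta_2)^k = 1/(1-\beta_1/\beta_2)$. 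This is finite since $\beta_1<\beta_2$, and it yields the first inequality.

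\emph{Step 3 (second inequality).} We apply \cref{lemma:accumulated-squared-update} to the shifted sequence $v'_t := v_t+\epsilon$. Its hypothesis holds because
\[
v'_t - \beta_2 v'_{t-1} = (1-\beta_2)g_t^2 + (1-\beta_2)\epsilon \ge (1-\beta_2) g_t^2 .
\]
Moreover $v'_0\ge 0$, and $v'_1>0$ whenever $\epsilon>0$ (or $v_1>0$). The lemma then gives $\sum_t g_t^2/(v_t+\epsilon)\le T + \frac{\beta_2}{1-\beta_2}\ln\frac{v_T+\epsilon}{v_0+\epsilon}$. Multiplying by the nonnegative constant $(1-\beta_1)/(1-\beta_1/\beta_2)$ finishes the proof.

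There is no real obstacle in this argument. The only point needing care is Step 2: the comparison $v_s+\epsilon\ge \beta_2^k(v_{s-k}+\epsilon)$ must carry the $\epsilon$ correctly, and this is exactly where the ratio $\beta_1/\beta_2$, rather than $\beta_1$, enters. The degenerate case $\beta_2=1$ makes the right-hand side of the second inequality infinite, so the second inequality is trivial there. The first inequality holds for $\beta_2=1$ by the same argument.
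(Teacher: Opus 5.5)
Your proof is correct and follows the paper's argument essentially line by line. The shared steps are: weighted Cauchy--Schwarz gives $m_s^2\le(1-\beta_1)\sum_k\beta_1^k g_{s-k}^2$, the bound $v_s+\epsilon\ge\beta_2^{k}(v_{s-k}+\epsilon)$ turns this into a $(\beta_1/\beta_2)^k$-weighted sum, and re-indexing gives the factor $1/(1-\beta_1/\beta_2)$. Your Step 3, checking that \cref{lemma:accumulated-squared-update} applies to $v_t+\epsilon$, spells out the second inequality, which the paper leaves implicit.

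One side remark is wrong. At $\beta_2=1$ you have $v_t\equiv v_0$, so the right-hand side is the indeterminate $\frac{1}{0}\cdot\ln 1$, not $+\infty$. Read as $T$, it would require $\sum_t g_t^2\le T(v_0+\epsilon)$, which can fail. So the second inequality is not trivially true there; this is a defect of the stated range $\beta_2\le 1$, not of your main argument.
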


\begin{proof}
By Cauchy--Schwarz applied to $m_{s,i} = (1-\beta_1)\sum_{u=1}^{s}\beta_1^{s-u}g_{u,i}$,
\[
m_{s,i}^{2}
\;\le\; (1-\beta_1)^{2}\!\left(\sum_{u=1}^{s}\beta_1^{s-u}\right)\!
\left(\sum_{u=1}^{s}\beta_1^{s-u}g_{u,i}^{2}\right)
\;\le\; (1-\beta_1)\sum_{u=1}^{s}\beta_1^{s-u}g_{u,i}^{2}.
\]
For $u \le s$, the recursion for $v$ gives $v_{s,i} \ge \beta_2^{s-u}v_{u,i}$,
hence $v_{s,i}+\epsilon \ge \beta_2^{s-u}(v_{u,i}+\epsilon)$. Therefore
\[
\sum_{s=1}^{T} \frac{m_{s,i}^{2}}{v_{s,i}+\epsilon}
\;\le\; (1-\beta_1)\sum_{u=1}^{T}\frac{g_{u,i}^{2}}{v_{u,i}+\epsilon}
\sum_{s\ge u}\!\left(\tfrac{\beta_1}{\beta_2}\right)^{s-u}
\;=\; \frac{1-\beta_1}{1-\beta_1/\beta_2}\sum_{u=1}^{T}\frac{g_{u,i}^{2}}{v_{u,i}+\epsilon}.
\]
\end{proof}
The third lemma extends \cref{lemma:first-order-approx} to handle the
$\beta_1^{k}$-weighted sum over the momentum index.
\begin{lemma}[Momentum-weighted analog of \cref{lemma:first-order-approx}]
\label{lemma:mom-first-order-approx}
Under \cref{assumption:smoothness}, for any $i \in [d]$ and
$0 \le \beta_1 < \beta_2 < 1$,
\[
\sum_{s=T/2+1}^{T} \sum_{k=0}^{s-1} \beta_1^{k}\,
\E\!\Big[\sqrt{\tilde{v}_{s,k+1,i}+\epsilon}\Big]
\;\le\; \frac{1}{1-\beta_1}\!\left[
\frac{2\beta_2^{T/4}}{1-\beta_2}\sqrt{v_{0,i}}
+ \tfrac{T}{2}\sigma_i + \tfrac{T}{2}\sqrt{\epsilon}
+ 2 \sum_{s=1}^{T} \E\!\left[\frac{\bar{g}_{s,i}^{2}}{\sqrt{\tilde{v}_{s,1,i}+\epsilon}}\right]\right].
\]
\end{lemma}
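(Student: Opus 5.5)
The plan is to reduce the momentum-indexed quantity $\tilde v_{s,k+1,i}$ to the single-step quantity $\tilde v_{s,1,i}=\tilde v_{s,i}$ of \cref{lemma:denominator}. I will do this with a conditional Jensen argument, then sum the geometric weights $\beta_1^k$ and invoke \cref{lemma:first-order-approx}.

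\emph{Step 1: a tower identity.} Fix $i$, $s$ and $0\le k\le s-1$. Unrolling the second-moment recursion gives
\begin{align*}
\beta_2 v_{s-1,i} + (1-\beta_2) g_{s,i}^2 = \beta_2^{k+1} v_{s-k-1,i} + (1-\beta_2)\sum_{j=s-k}^{s}\beta_2^{s-j} g_{j,i}^2 .
\end{align*}
The term $\beta_2^{k+1}v_{s-k-1,i}$ is $\sigma(f_1,\dots,f_{s-k-1})$-measurable. So taking $\E_{s-k-1}$ of both sides and comparing with the definition \cref{eq:tildev_def_mom} yields
\begin{align*}
\tilde v_{s,k+1,i} = \E_{s-k-1}\big[\beta_2 v_{s-1,i} + (1-\beta_2)\E_{s-1}[g_{s,i}^2]\big].
\end{align*}
By \cref{assumption:bounded-stochasticity}, $\E_{s-1}[g_{s,i}^2]\le \bar g_{s,i}^2+\sigma_i^2$. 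Hence $\tilde v_{s,k+1,i}\le \E_{s-k-1}[\tilde v_{s,1,i}]$, where $\tilde v_{s,1,i}=\beta_2 v_{s-1,i}+(1-\beta_2)(\bar g_{s,i}^2+\sigma_i^2)$ is exactly the $\tilde v_{s,i}$ appearing in \cref{lemma:denominator,lemma:first-order-approx}.

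\emph{Step 2: Jensen.} Since $x\mapsto\sqrt{x+\epsilon}$ is concave and nondecreasing, we get
\begin{align*}
\sqrt{\tilde v_{s,k+1,i}+\epsilon}\le \sqrt{\E_{s-k-1}[\tilde v_{s,1,i}]+\epsilon}\le \E_{s-k-1}\big[\sqrt{\tilde v_{s,1,i}+\epsilon}\big].
\end{align*}
Taking full expectations gives $\E\sqrt{\tilde v_{s,k+1,i}+\epsilon}\le \E\sqrt{\tilde v_{s,1,i}+\epsilon}$. This bound is uniform in $k$.

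\emph{Step 3: summation.} Summing over $k$ with weights $\beta_1^k$ and using $\sum_{k\ge0}\beta_1^k=1/(1-\beta_1)$ gives
\begin{align*}
\sum_{s=T/2+1}^{T}\sum_{k=0}^{s-1}\beta_1^k\,\E\sqrt{\tilde v_{s,k+1,i}+\epsilon}\le \frac{1}{1-\beta_1}\sum_{s=T/2+1}^{T}\E\sqrt{\tilde v_{s,1,i}+\epsilon}.
\end{align*}
Applying \cref{lemma:first-order-approx} to the right-hand side then yields exactly the bracketed expression in the statement.

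\emph{Main obstacle.} The delicate point is Step 1: making sure the definition of $\tilde v_{s,1,i}$ used here coincides with, or dominates, the one in \cref{lemma:first-order-approx}. The latter is stated with $\bar g^2+\sigma^2$ rather than a conditional second moment. I handle this through the monotone replacement $\E_{s-1}[g_{s,i}^2]\le\bar g_{s,i}^2+\sigma_i^2$. This step also shows that \cref{assumption:bounded-stochasticity} is needed in addition to \cref{assumption:smoothness}, since \cref{lemma:first-order-approx} itself relies on it.
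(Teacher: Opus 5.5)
Your Step 2 applies conditional Jensen in the wrong direction. For the concave map $x\mapsto\sqrt{x+\epsilon}$, Jensen gives $\E_{s-k-1}\big[\sqrt{X+\epsilon}\big]\le\sqrt{\E_{s-k-1}[X]+\epsilon}$, not the reverse, so the second inequality in your chain does not hold. The error cannot be patched locally. Your Step 1 identity is correct: taking $\tilde v_{s,1,i}$ from the momentum definition at $k=0$, it gives exactly $\tilde v_{s,k+1,i}=\E_{s-k-1}[\tilde v_{s,1,i}]$. Jensen in the correct direction then yields $\E\sqrt{\tilde v_{s,k+1,i}+\epsilon}\ge\E\sqrt{\tilde v_{s,1,i}+\epsilon}$, so the quantity you need to control grows with $k$. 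The slack from replacing $\E_{s-1}[g_{s,i}^2]$ by $\bar g_{s,i}^2+\sigma_i^2$ does not rescue this. Take a coordinate with $\bar g_{t,i}\equiv 0$ and i.i.d.\ noise $g_{t,i}\in\{0,\pm\sqrt{2}\sigma_i\}$ with probabilities $\tfrac12,\tfrac14,\tfrac14$. Then both versions of $\tilde v_{s,1,i}$ coincide. Given $f_1,\dots,f_{s-2}$, $\tilde v_{s,1,i}$ is still random through the term $\beta_2(1-\beta_2)g_{s-1,i}^2$. Strict concavity then gives $\E\sqrt{\tilde v_{s,2,i}+\epsilon}>\E\sqrt{\tilde v_{s,1,i}+\epsilon}$. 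So the uniform-in-$k$ reduction to the $k=0$ term, on which Step 3 rests, is false.

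The paper never compares the square-root terms across $k$. It reruns the argument of \cref{lemma:first-order-approx} at each fixed $k$. This bounds $\sum_{s>T/2}\E\sqrt{\tilde v_{s,k+1,i}+\epsilon}$ by the same constant terms plus $2\sum_s\E\big[\bar g_{s,i}^2/\sqrt{\tilde v_{s,k+1,i}+\epsilon}\big]$. The paper then sums with weights $\beta_1^k$, and only afterwards compares $k$ with $0$ inside the reciprocal $1/\sqrt{\tilde v_{s,k+1,i}+\epsilon}$. There a larger $\tilde v$ is the favorable direction. This is the idea your proposal is missing: move the $k$-comparison from the square-root term, where growth in $k$ hurts, to the reciprocal term, where it helps. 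One caution if you adopt that route: the paper asserts that $\tilde v_{s,k+1,i}$ is non-decreasing in $k$ pointwise. Your Step 1 identity shows that consecutive values differ by an increment with zero conditional mean. So that comparison cannot hold pointwise in general and needs its own justification.
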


\begin{proof}
Fix $k \in \{0,\dots,s-1\}$. By definition of $\tilde{v}_{s,k+1,i}$ in
\cref{eq:tildev_def_mom}, the proof of \cref{lemma:first-order-approx}
(\citet[Lemma~3.16]{xie2024adam}) applies verbatim to the inner sum
$\sum_{s=T/2+1}^{T}\E[\sqrt{\tilde{v}_{s,k+1,i}+\epsilon}]$ at each fixed $k$,
yielding
\begin{equation}\label{eq:mom_E4_inner}
\sum_{s=T/2+1}^{T}\!\E\!\Big[\sqrt{\tilde{v}_{s,k+1,i}+\epsilon}\Big]
\;\le\; \frac{2\beta_2^{T/4}}{1-\beta_2}\sqrt{v_{0,i}}
+ \tfrac{T}{2}\sigma_i + \tfrac{T}{2}\sqrt{\epsilon}
+ 2 \sum_{s=1}^{T}\E\!\left[\frac{\bar{g}_{s,i}^{2}}{\sqrt{\tilde{v}_{s,k+1,i}+\epsilon}}\right].
\end{equation}
Multiplying \eqref{eq:mom_E4_inner} by $\beta_1^{k}$ and summing over $k$,
\begin{align*}
\sum_{s=T/2+1}^{T}\!\sum_{k=0}^{s-1}\!\beta_1^{k}\,
\E\!\Big[\sqrt{\tilde{v}_{s,k+1,i}+\epsilon}\Big]
\;\le\;& \frac{1}{1-\beta_1}\!\left[\frac{2\beta_2^{T/4}}{1-\beta_2}\sqrt{v_{0,i}}
+ \tfrac{T}{2}\sigma_i + \tfrac{T}{2}\sqrt{\epsilon}\right]\\
&+ 2 \sum_{s=1}^{T}\sum_{k=0}^{s-1}\beta_1^{k}\,\E\!\left[\frac{\bar{g}_{s,i}^{2}}{\sqrt{\tilde{v}_{s,k+1,i}+\epsilon}}\right].
\end{align*}
Since $\tilde{v}_{s,k+1,i}$ is non-decreasing in $k$ for fixed $s$
(its definition replaces a longer prefix of stochastic gradients by their
conditional means), $\sqrt{\tilde{v}_{s,k+1,i}+\epsilon} \ge \sqrt{\tilde{v}_{s,1,i}+\epsilon}$,
which gives $1/\sqrt{\tilde{v}_{s,k+1,i}+\epsilon} \le 1/\sqrt{\tilde{v}_{s,1,i}+\epsilon}$.
Hence the last double sum is at most
$\sum_s\E[\bar{g}_{s,i}^2/\sqrt{\tilde{v}_{s,1,i}+\epsilon}]\cdot\sum_{k\ge 0}\beta_1^k = \frac{1}{1-\beta_1}\sum_s\E[\bar{g}_{s,i}^2/\sqrt{\tilde{v}_{s,1,i}+\epsilon}]$,
giving the lemma.
\end{proof}

\subsubsection{Proof of \texorpdfstring{\cref{thm:convergence_mom}}{Theorem 4}}

\begin{proof}
\par\smallskip\noindent\textbf{(1)}\quad
Following the procedure of \cref{eq:lowerbound},
\begin{equation}\label{eq:taylor_rearranged}
\eta\,\E\!\sum_{t,i} \underbrace{\frac{\bar g_{t-\tau,i}\,m_{t-\tau,i}}{\sqrt{v_{t-\tau,i}+\epsilon}}}_{(A)_{t,i}}
\;\le\; \Delta_0 + \eta \Big|\E\!\sum_{t,i} \underbrace{\frac{(\bar g_{t-\tau,i}-\bar g_{t,i})\,m_{t-\tau,i}}{\sqrt{v_{t-\tau,i}+\epsilon}}}_{(B)_{t,i}}\Big|
+ \tfrac{\eta^{2}}{2}\,\E\!\sum_{t,i} H_i \underbrace{\frac{m_{t-\tau,i}^{2}}{v_{t-\tau,i}+\epsilon}}_{(C)_{t,i}}.
\end{equation}
Applying \cref{lemma:hybrid-descent} per coordinate and summing over $i$:
\begin{equation}\label{eq:A_lower}
\eta\,\E\!\sum_{t,i} (A)_{t,i}
\;\ge\; \tfrac{(1-\beta_1)\eta}{4}\,\mathcal S - \eta R_H - \eta R_g,
\end{equation}
where $\mathcal S \triangleq \E\sum_{s=1}^{T}\sum_{i=1}^d\sum_{k=0}^{s-1}\beta_1^{k}\,\bar g_{s-k,i}^{2}/\sqrt{\tilde v_{s,k+1,i}+\epsilon}$ and
\[
R_H \triangleq \frac{\beta_1\eta\sum_i H_i}{(1-\beta_1)^{2}}\,\E\!\sum_{w=1}^{T}\!\Big\|\tfrac{m_w}{\sqrt{v_w+\epsilon}}\Big\|_2^{2},
\quad
R_g \triangleq \frac{\sqrt{1-\beta_1}}{(1-\beta_1/\beta_2)^{3/2}}\sum_{i=1}^d\!\big[4G\sqrt{1-\beta_2}+H_i\eta\big]\,\E\!\sum_{u=1}^T\tfrac{g_{u,i}^{2}}{v_{u,i}+\epsilon}.
\]
For $(B)$, by Cauchy--Schwarz applied coordinatewise,
$|\sum_i (B)_{t,i}| \le A_t\,B_t$ where
\[
A_t \triangleq \sqrt{\textstyle\sum_i (\bar g_{t-\tau,i}-\bar g_{t,i})^{2}},
\qquad
B_t \triangleq \sqrt{\textstyle\sum_i m_{t-\tau,i}^{2}/(v_{t-\tau,i}+\epsilon)}.
\]
By smoothness,
$(\bar g_{t,i}-\bar g_{t-\tau,i})^{2} \le H_i^{2}\|x_{t-1}-x_{t-\tau-1}\|_\infty^{2} \le H_i^{2}\|x_{t-1}-x_{t-\tau-1}\|_2^{2}$,
and since the algorithm step is now
$x_{t}-x_{t-1} = -\eta\,m_{t-\tau}/\sqrt{v_{t-\tau}+\epsilon}$,
\[
\|x_{t-1}-x_{t-\tau-1}\|_2^{2}
\;=\; \eta^{2}\Big\|\sum_{k=1}^{\tau}\tfrac{m_{t-k-\tau}}{\sqrt{v_{t-k-\tau}+\epsilon}}\Big\|_2^{2}
\;\le\; \tau\eta^{2}\sum_{k=1}^{\tau}\Big\|\tfrac{m_{t-k-\tau}}{\sqrt{v_{t-k-\tau}+\epsilon}}\Big\|_2^{2}.
\]
Using $\sum_{t,k}\|m_{t-k-\tau}/\sqrt{v_{t-k-\tau}+\epsilon}\|_2^{2}\le\tau\sum_w\|m_w/\sqrt{v_w+\epsilon}\|_2^{2}$
yields $\sum_t A_t^{2}\le \tau^{2}\eta^{2}\sum_i H_i^{2}\sum_w\|m_w/\sqrt{v_w+\epsilon}\|_2^{2}$,
while $\sum_t B_t^{2}=\sum_w\|m_w/\sqrt{v_w+\epsilon}\|_2^{2}$ after re-indexing. Cauchy--Schwarz over $t$:
\[
\eta\,\Big|\E\!\sum_{t,i}(B)_{t,i}\Big|
\;\le\; \eta\,\E\sqrt{\textstyle\sum_t A_t^{2}}\sqrt{\textstyle\sum_t B_t^{2}}
\;\le\; \tau\eta^{2}\sqrt{\textstyle\sum_i H_i^{2}}\,\E\!\sum_w\!\Big\|\tfrac{m_w}{\sqrt{v_w+\epsilon}}\Big\|_2^{2}.
\]
Applying \cref{lemma:mom-variance} to convert
$\|m_w/\sqrt{v_w+\epsilon}\|_2^{2}\to\|g_w/\sqrt{v_w+\epsilon}\|_2^{2}$
at the cost of $(1-\beta_1)/(1-\beta_1/\beta_2)$:
\begin{equation}\label{eq:B_bound_mom}
\eta\,\Big|\E\!\sum_{t,i} (B)_{t,i}\Big|
\;\le\; \frac{(1-\beta_1)\,\tau\eta^{2}\sqrt{\sum_i H_i^{2}}}{1-\beta_1/\beta_2}\,
\E\!\sum_{t,i}\tfrac{g_{t,i}^{2}}{v_{t,i}+\epsilon}.
\end{equation}
For $(C)$, re-indexing $w = t-\tau$ and applying
\cref{lemma:mom-variance,lemma:accumulated-squared-update} per coordinate:
\begin{equation}\label{eq:C_repeat}
\tfrac{\eta^{2}}{2}\,\E\!\sum_{t,i} H_i\,(C)_{t,i}
\le \frac{(1-\beta_1)\eta^{2}}{2(1-\beta_1/\beta_2)}\sum_i H_i
\Big(T+\tfrac{\beta_2}{1-\beta_2}\,\E\ln\tfrac{v_{T,i}+\epsilon}{v_{0,i}+\epsilon}\Big).
\end{equation}

\par\smallskip\noindent\textbf{(2)}\quad
By \cref{lemma:accumulated-squared-update}, $\sum_w g_{w,i}^{2}/(v_{w,i}+\epsilon)\le T+\beta_2 F_i/(1-\beta_2)$ for each $i$ with $F_i \triangleq \ln((v_{T,i}+\epsilon)/(v_{0,i}+\epsilon))$, and \cref{lemma:c1} gives $\E\sum_i F_i\le dF$ with $F=\mathcal O(\log T)$. Setting $K_T \triangleq 1+\beta_2 F/(T(1-\beta_2)) = \mathcal O(1)$,
\begin{equation}\label{eq:c1_unweighted}
\E\!\sum_{w,i}\tfrac{g_{w,i}^{2}}{v_{w,i}+\epsilon}\;\le\; dT K_T,
\qquad
\E\!\sum_i H_i\sum_w\tfrac{g_{w,i}^{2}}{v_{w,i}+\epsilon}\;\le\; T K_T \sum_i H_i.
\end{equation}
Applying \eqref{eq:c1_unweighted} to the right-hand sides of \cref{eq:A_lower,eq:B_bound_mom,eq:C_repeat}:
\begin{align}
\eta R_H \;&\le\; \frac{\beta_1 d\eta^{2}TK_T\sum_i H_i}{(1-\beta_1)(1-\beta_1/\beta_2)}, \label{eq:R_H_summary}\\
\eta R_g \;&\le\; \frac{4Gd\eta T K_T\sqrt{(1-\beta_1)(1-\beta_2)}}{(1-\beta_1/\beta_2)^{3/2}}+\frac{\eta^{2}\sqrt{1-\beta_1}\,T K_T\sum_i H_i}{(1-\beta_1/\beta_2)^{3/2}},\label{eq:R_g_summary}\\
\eta\Big|\E\!\sum_{t,i}(B)_{t,i}\Big| \;&\le\; \frac{(1-\beta_1)d\tau\eta^{2}TK_T\sum_i H_i}{1-\beta_1/\beta_2}, \label{eq:B_final}\\
\tfrac{\eta^{2}}{2}\E\!\sum_{t,i} H_i(C)_{t,i} \;&\le\; \frac{(1-\beta_1)\eta^{2}TK_T\sum_i H_i}{2(1-\beta_1/\beta_2)}. \label{eq:C_final}
\end{align}
Substituting \cref{eq:R_H_summary,eq:R_g_summary,eq:B_final,eq:C_final} into \eqref{eq:taylor_rearranged}:
\begin{equation}\label{eq:E_beta1_explicit}
\frac{(1-\beta_1)\mathcal S}{T}
\;\le\;
\mathcal E_{\beta_1}
\;\triangleq\;
\tfrac{4\Delta_0}{\eta T} + K_T\!\Big[
\tfrac{c_3(1-\beta_1)\eta(1+d\tau)\sum_i H_i}{1-\beta_1/\beta_2} +\tfrac{c_4\,\beta_1 d\eta\sum_i H_i}{(1-\beta_1)(1-\beta_1/\beta_2)}
+\tfrac{c_5 Gd\sqrt{(1-\beta_1)(1-\beta_2)}}{(1-\beta_1/\beta_2)^{3/2}}
\Big],
\end{equation}
where $c_3, c_4, c_5 > 0$ are absolute constants.

\par\smallskip\noindent\textbf{(3)}\quad
Let $\mathcal E' \triangleq \tfrac{2}{T}\,\E\!\sum_{s>T/2}\!\sum_{i,k}\beta_1^{k}\sqrt{\tilde v_{s,k+1,i}+\epsilon}$
and $\bar w \triangleq \sum_{j=T/2+1}^T(1-\beta_1^{T-j+1})/(1-\beta_1)$. Re-indexing
$j=s-k$ gives $\sum_{s>T/2,k}\beta_1^k|\bar g_{s-k,i}|\ge \sum_{j>T/2}w_j|\bar g_{j,i}|$
with $w_j=(1-\beta_1^{T-j+1})/(1-\beta_1)$. By min-vs-weighted-average and
Cauchy--Schwarz over $(i,s,k)$ ($\beta_1^k=\beta_1^{k/2}\!\cdot\!\beta_1^{k/2}$):
\begin{align}
\min_{T/2<t\le T}\E\|\nabla f(x_t)\|_1
&\le \tfrac{1}{\bar w}\,\E\!\sum_i\!\sum_{s>T/2}\!\sum_k\beta_1^{k}|\bar g_{s-k,i}|
\le \tfrac{1}{\bar w}\sqrt{\mathcal S\cdot T\mathcal E'/2}\nonumber\\
&\le 2\sqrt 2\,\sqrt{(1-\beta_1)\mathcal E_{\beta_1}\,\mathcal E'}, \label{eq:min_clean}
\end{align}
where the last step uses
$\bar w = \tfrac{T}{2(1-\beta_1)}-\tfrac{\beta_1(1-\beta_1^{T/2})}{(1-\beta_1)^{2}}\ge \tfrac{T}{4(1-\beta_1)}$
(holds for $T\ge T_0=4/(1-\beta_1)$) and $(1-\beta_1)\mathcal S/T\le\mathcal E_{\beta_1}$
from \eqref{eq:E_beta1_explicit}. By \cref{lemma:mom-first-order-approx},
\begin{equation}\label{eq:E_prime_explicit}
\mathcal E' \;\le\; \mathcal O\!\big(\tfrac{\sum_i\sigma_i}{1-\beta_1} + \tfrac{\mathcal E_{\beta_1}}{(1-\beta_1)^{2}} + \tfrac{1}{T(1-\beta_1)}\big).
\end{equation}
Substituting \eqref{eq:E_prime_explicit} into \eqref{eq:min_clean} and using $\sqrt{a+b+c}\le\sqrt a+\sqrt b+\sqrt c$:
\begin{equation}\label{eq:rate_template}
\min_{T/2<t\le T}\E\|\nabla f(x_t)\|_1
\;\lesssim\;
\sqrt{\mathcal E_{\beta_1}\textstyle\sum_i\sigma_i}+\tfrac{\mathcal E_{\beta_1}}{\sqrt{1-\beta_1}}+\sqrt{\mathcal E_{\beta_1}/T}.
\end{equation}
Let $D_\beta\triangleq (1-\beta_1)(1+d\tau)+\beta_1 d/(1-\beta_1)=\Theta(d(1+\tau))$. With
$\eta=\Theta(\sqrt{\Delta_0/(TD_\beta\sum_i H_i)})$ and $1-\beta_2=\Theta(\log T/T)$,
\eqref{eq:E_beta1_explicit} balances to
$\mathcal E_{\beta_1}=\mathcal O(\sqrt{d(1+\tau)R/T}+Gd\sqrt{\log T/T})$.
Substituting into \eqref{eq:rate_template} yields the theorem rate.
\end{proof}

%% file: tex/904_proof_approximation_comparison.tex
\section{Proof of Basis Misalignment Analysis}
\label{app:approximation-comparison-proof}
In this section, we provide the proof of the Hessian approximation comparison in \cref{subsec:solution:Hessian_approximation}. Before we begin the proof, we present useful lemmas regarding the Kronecker product.

\begin{lemma}\label[lemma]{lemma:abvecG_vecBGAt}
(\citet{Thevec-permutationmatrix})
Let $A,B,C$ be matrices of appropriate dimensions. Then the following holds:
\begin{align*}
    \Vector{ABC} = (C^\top \otimes A)\Vector{B}
\end{align*}
\end{lemma}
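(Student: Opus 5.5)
We prove the identity by linearity in $B$. This reduces it to rank-one matrices, where it follows from the mixed-product property of the Kronecker product. Throughout we use the column-stacking convention: for $X \in \R^{p \times q}$ with columns $x_1,\dots,x_q$, $\Vector{X}$ is the vector $(x_1^\top,\dots,x_q^\top)^\top$. The identity holds in this form only under this convention; with row-stacking the two Kronecker factors swap. Fixing the convention carefully is therefore the one real point of care in the argument.

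The first step is the rank-one case. For vectors $x \in \R^p$ and $y \in \R^q$, the $j$-th column of $xy^\top$ is $y_j x$. Stacking these columns gives
\begin{align*}
\Vector{x y^\top} = y \otimes x .
\end{align*}
For standard basis vectors this reads $\Vector{e_k e_l^\top} = e_l \otimes e_k$.

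Next, let $B \in \R^{n \times r}$ and expand it as $B = \sum_{k,l} B_{kl}\, e_k e_l^\top$. Then
\begin{align*}
ABC = \sum_{k,l} B_{kl}\, (A e_k)(C^\top e_l)^\top .
\end{align*}
Since $\Vector{\cdot}$ is linear, the rank-one case gives
\begin{align*}
\Vector{ABC} = \sum_{k,l} B_{kl}\, (C^\top e_l) \otimes (A e_k).
\end{align*}
By the mixed-product rule $(P\otimes Q)(u\otimes v) = (Pu)\otimes(Qv)$, each summand equals $(C^\top \otimes A)(e_l \otimes e_k)$. Pulling the constant matrix $C^\top \otimes A$ out of the sum leaves
\begin{align*}
\Vector{ABC} = (C^\top \otimes A)\sum_{k,l} B_{kl}\,(e_l \otimes e_k) = (C^\top \otimes A)\,\Vector{B},
\end{align*}
where the last equality is the basis-vector case applied termwise. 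This proves the identity.

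Alternatively, one can compare blocks directly. The $j$-th column of $ABC$ is $AB c_j = \sum_l C_{lj} A b_l$, where $c_j$ and $b_l$ denote the $j$-th column of $C$ and the $l$-th column of $B$. This is exactly the $j$-th block row of $C^\top \otimes A$ applied to $\Vector{B}$.
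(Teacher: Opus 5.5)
Your proof is correct. The rank-one identity $\Vector{xy^\top} = y\otimes x$, linearity of $\Vector{\cdot}$, and the mixed-product rule $(P\otimes Q)(u\otimes v) = (Pu)\otimes(Qv)$ combine exactly as you describe; the mixed-product rule is the vector special case of the paper's Kronecker-product lemma. Your closing block-row check, $ABc_j = \sum_l C_{lj} A b_l$, is an equally valid and shorter direct verification.

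The paper gives no proof of this lemma at all; it imports the identity from the vec-permutation-matrix literature. Your argument is therefore a self-contained substitute for the citation rather than a competing route.

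Your emphasis on the column-stacking convention is well placed, because the paper relies on it without stating it. Two later steps depend on it. It writes the unilateral rotation $\widetilde W = U^\top W$ as $\tilde w = (I\otimes U^\top)w$. It also reads $\E[gg^\top] = A\otimes B$ entrywise as $\E[G_{ij}G_{kl}] = A_{jl}B_{ik}$. Under row-stacking, both of these would have their Kronecker factors reversed.
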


\begin{lemma}\label[lemma]{lemma:kronecker_basic}
(\citet{Horn_Johnson_1991})
Let $A,B,A',B'$ be matrices of appropriate dimensions. Then the followings hold:
\begin{enumerate}
    \item $(A \otimes B)(A' \otimes B')=(AA')\otimes(BB')$
    \item $(A\otimes B)^\top = (A^\top\otimes B^\top)$
\end{enumerate}
\end{lemma}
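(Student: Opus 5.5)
The plan is to verify both identities directly from the block definition of the Kronecker product: if $A \in \R^{p\times q}$, then $A\otimes B$ is the $p\times q$ block matrix whose $(i,k)$ block is $a_{ik}B$. Neither identity needs anything beyond this definition and ordinary block-matrix multiplication and transposition, and \cref{lemma:abvecG_vecBGAt} is not required. Throughout, ``appropriate dimensions'' means $A\in\R^{p\times q}$, $A'\in\R^{q\times r}$, $B\in\R^{m\times n}$ and $B'\in\R^{n\times l}$, so that $AA'$ and $BB'$ are defined.

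\textbf{Item 1 (mixed product).} I would first note the block partitions.
\begin{itemize}
\item $A\otimes B$ is a $p\times q$ grid of $m\times n$ blocks.
\item $A'\otimes B'$ is a $q\times r$ grid of $n\times l$ blocks.
\end{itemize}
These partitions are conformable, so the product can be computed blockwise. The $(i,j)$ block of $(A\otimes B)(A'\otimes B')$ is
\begin{align*}
\sum_{k=1}^{q} (a_{ik}B)(a'_{kj}B') = \Big(\sum_{k=1}^{q} a_{ik}a'_{kj}\Big) BB' = (AA')_{ij}\, BB'.
\end{align*}
The right-hand side is exactly the $(i,j)$ block of $(AA')\otimes(BB')$. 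Since all blocks agree, the two matrices are equal.

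\textbf{Item 2 (transpose).} Transposing a block matrix transposes the grid and transposes each block. Hence the $(i,j)$ block of $(A\otimes B)^\top$ is $(a_{ji}B)^\top = a_{ji}B^\top$. Because $a_{ji} = (A^\top)_{ij}$, this is precisely the $(i,j)$ block of $A^\top\otimes B^\top$.

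There is no real obstacle here. The only step requiring care is checking that the block partitions are conformable, so that blockwise multiplication is valid and each scalar $a_{ik}$ can be factored out of the block product. Since the paper cites these identities from \citet{Horn_Johnson_1991}, it would also be acceptable simply to invoke that reference.
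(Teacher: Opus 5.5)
Your proof is correct: the block partitions are conformable as you state, and both blockwise computations check out. The paper gives no proof of this lemma and only cites Horn and Johnson, so your direct verification from the block definition is a self-contained version of the standard textbook argument rather than a different route.
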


\begin{lemma}\label[lemma]{lemma:kronecker_11norm}
\begin{align*}
    ||A \otimes B||_{(1,1)} = ||A||_{(1,1)}||B||_{(1,1)}
\end{align*}
\end{lemma}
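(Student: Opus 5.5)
The statement to prove is \cref{lemma:kronecker_11norm}, $\|A\otimes B\|_{(1,1)}=\|A\|_{(1,1)}\|B\|_{(1,1)}$, where $\|M\|_{(1,1)}=\sum_{i,j}|M_{ij}|$. I will prove it directly from the block definition of the Kronecker product. No earlier lemma is needed beyond the definition itself.

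Let $A\in\R^{m\times n}$ and $B\in\R^{p\times q}$. By definition, $A\otimes B$ is the $mp\times nq$ matrix whose $(i,j)$-th block of size $p\times q$ is $a_{ij}B$. Equivalently, its entry in row $(i-1)p+k$ and column $(j-1)q+l$ is $a_{ij}b_{kl}$, for $i\in[m]$, $j\in[n]$, $k\in[p]$, $l\in[q]$. This indexing is a bijection between the entries of $A\otimes B$ and the quadruples $(i,j,k,l)$. The $(1,1)$-norm sums absolute values of all entries, so it can be computed by summing over these quadruples:
\begin{align*}
\|A\otimes B\|_{(1,1)} = \sum_{i,j}\sum_{k,l} |a_{ij} b_{kl}| = \sum_{i,j}\sum_{k,l}|a_{ij}|\,|b_{kl}| .
\end{align*}

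The summand factorizes because the absolute value is multiplicative. The double sum therefore splits into a product:
\begin{align*}
\Big(\sum_{i,j}|a_{ij}|\Big)\Big(\sum_{k,l}|b_{kl}|\Big) = \|A\|_{(1,1)}\|B\|_{(1,1)} .
\end{align*}
This completes the argument.

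There is no real obstacle here. The only point needing care is that the index map $(i,j,k,l)\mapsto\big((i-1)p+k,\,(j-1)q+l\big)$ is a bijection onto the entries of $A\otimes B$, so every entry is counted exactly once. This follows because $k\in[p]$ and $l\in[q]$ fill each block exactly.
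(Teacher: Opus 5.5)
Your proof is correct and follows the same argument as the paper's. Both write every entry of $A\otimes B$ as a product $a_{ij}b_{kl}$, use multiplicativity of the absolute value, and factor the quadruple sum into $\|A\|_{(1,1)}\|B\|_{(1,1)}$; your explicit check that the index map is a bijection onto the entries is a detail the paper leaves implicit.
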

\begin{proof}
$||A \otimes B||_{(1,1)} = \sum_{i,j,k,l} |A_{ik} B_{jl}| = \left(\sum_{ik} |A_{ik}|\right) \left(\sum_{jl} |B_{jl}|\right) = ||A||_{(1,1)}||B||_{(1,1)}.$
\end{proof}

\subsection{\texorpdfstring{Proof of \cref{thm:covariance_Hessian_approximation}}{Proof of Theorem 3.1}}

Let us restate the theorem here for the sake of readability.

\covarianceHessianapproximation*

In this proof, we rely on the key property of the Hessian with exact Kronecker product form.

\begin{lemma}\label{lemma:exact_covariance_Hessian}
    if $\mathbb{E}[gg^\top] = A \otimes B$ for some $A\in \R^{n\times n}$ and $B\in \R^{m\times m}$, then 
    \[
    \mathbb{E}[gg^\top] = c \cdot \mathbb{E}[G^\top G] \otimes \mathbb{E}[GG^\top] 
    \]
    for some scalar $c$.
\end{lemma}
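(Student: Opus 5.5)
The plan is to compute $\mathbb{E}[GG^\top]$ and $\mathbb{E}[G^\top G]$ directly from the Kronecker structure of $\mathbb{E}[gg^\top]$, where $g = \Vector{G}$ and $G \in \R^{m\times n}$. With column-stacking vectorization, entry $(i,j)$ of $\mathbb{E}[gg^\top]$ indexed by pairs, $((k,p),(l,q))$ with $p,q$ row indices in $[m]$ and $k,l$ column indices in $[n]$, equals $\mathbb{E}[G_{pk}G_{ql}]$. The hypothesis $\mathbb{E}[gg^\top] = A\otimes B$ (consistent with \cref{lemma:abvecG_vecBGAt}, where the column factor sits on the left) then reads $\mathbb{E}[G_{pk}G_{ql}] = A_{kl}B_{pq}$.

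Next I compute the two second moments by contraction:
\begin{align*}
(\mathbb{E}[GG^\top])_{pq} = \sum_{k}\mathbb{E}[G_{pk}G_{qk}] = \Tr(A)\, B_{pq}, \qquad (\mathbb{E}[G^\top G])_{kl} = \sum_{p}\mathbb{E}[G_{pk}G_{pl}] = \Tr(B)\, A_{kl}.
\end{align*}
Hence $\mathbb{E}[GG^\top]=\Tr(A)B$ and $\mathbb{E}[G^\top G]=\Tr(B)A$. Substituting back gives
\begin{align*}
\mathbb{E}[G^\top G]\otimes \mathbb{E}[GG^\top] = \Tr(A)\Tr(B)\,(A\otimes B) = \Tr(A\otimes B)\,\mathbb{E}[gg^\top].
\end{align*}
So the lemma holds with $c = 1/\Tr(\mathbb{E}[gg^\top]) = 1/\mathbb{E}\|g\|_2^2$.

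The main obstacle is the degenerate case $\Tr(A)\Tr(B)=0$. I would dispose of it as follows. Because $\mathbb{E}[gg^\top]$ is positive semidefinite, its trace $\Tr(A)\Tr(B)$ is nonnegative. It is zero only when $g=0$ almost surely, in which case both sides vanish and any $c$ works. Otherwise $\Tr(A)\Tr(B)>0$ and the formula for $c$ is valid. Note that $A$ and $B$ are determined only up to reciprocal scalars $A\mapsto sA$, $B\mapsto B/s$. The argument is insensitive to this choice, since only the product $\Tr(A)\Tr(B)$ and $A\otimes B$ enter.

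Finally, I would double-check the ordering convention (column-factor $\otimes$ row-factor) against \cref{lemma:abvecG_vecBGAt}. This is needed so that the lemma's statement $\mathbb{E}[G^\top G]\otimes\mathbb{E}[GG^\top]$ matches dimensions $n\times n$ and $m\times m$ respectively.
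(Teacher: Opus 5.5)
Your proposal is correct and matches the paper's proof: both read off $\mathbb{E}[G_{pk}G_{ql}] = A_{kl}B_{pq}$ from the Kronecker structure and contract indices to get $\mathbb{E}[GG^\top] = \Tr(A)\,B$ and $\mathbb{E}[G^\top G] = \Tr(B)\,A$. You go slightly further than the paper by making the constant explicit as $c = 1/\Tr(\mathbb{E}[gg^\top])$ and by handling the degenerate case $g = 0$ almost surely, both of which the paper leaves implicit.
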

\begin{proof}
By the definition of Kronecker product, $\forall i,j,k,l:\mathbb{E}[G_{ij} G_{kl}] = A_{jl} B_{ik}$ holds. Then each entry of the matrices $\mathbb{E}[G G^\top]$ and  $\mathbb{E}[ G^\top G]$ satisfies
\begin{align*}
    (\mathbb{E}[G G^\top])_{ik} = \sum_{j} \mathbb{E}[G_{ij} G_{kj}] = \sum_{j} A_{jj} B_{ik} = \Tr(A)\cdot B_{ik}\\
    (\mathbb{E}[G^\top G])_{jl} = \sum_{i} \mathbb{E}[G_{ij} G_{il}] = \sum_{i} A_{jl} B_{ii} = A_{jl} \cdot \Tr(B).
\end{align*}
Thus, $\mathbb{E}[G G^\top]$ and $\mathbb{E}[G^\top G]$ are scalar multiplications of $B$ and $A$, respectively.
\end{proof}

Now, we are ready to prove \cref{thm:covariance_Hessian_approximation}.

\begin{proof}
Since Hessian admits a Kroneker factorized empirical Fisher, $H=\mathbb{E}[g g^\top] = A \otimes B$ holds with some $A \in \mathbb{R}^{n\times n}$ and $B \in \mathbb{R}^{m\times m}$.
From \cref{lemma:exact_covariance_Hessian}, and the definitions of $U$ and $V$ in the theorem, we can write the eigendecompositions as:
\[
A = V \Lambda_A V^\top, \quad B = U \Lambda_B U^\top,
\]
where $\Lambda_A$ and $\Lambda_B$ are diagonal matrices.

By \cref{lemma:abvecG_vecBGAt}, rotated parameterizations corresponds to $\tilde{w} = (I \otimes U^\top ) w$ and $\tilde{w} = (V^\top \otimes U^\top) w$.

Then Hessian in the rotated space is
\begin{align}\label{eq:H1_covariance_kronecker}
    H_{U} = (I \otimes U^\top) (A \otimes B) (I \otimes U) = A \otimes (U^\top B U) = A \otimes \Lambda_B,
\end{align}
\begin{align}\label{eq:H2_covariance_kronecker}
    H_{U,V} = (V^\top \otimes U^\top) (A \otimes B) (V \otimes U) = (V^\top A V) \otimes (U^\top B U) =  \Lambda_A \otimes \Lambda_B,
\end{align}

where in the second equalities, we use \cref{lemma:kronecker_basic} and in the last equalities, we used \cref{lemma:exact_covariance_Hessian} in both (\ref{eq:H2_covariance_kronecker}) and (\ref{eq:H1_covariance_kronecker}).
For any symmetric matrix, the $(1,1)$-norm is minimized when the matrix is diagonalized by its eigenbasis. Therefore:
\[
    ||\Lambda_A||_{(1,1)} \le ||A||_{(1,1)} \quad \text{and} \quad ||\Lambda_B||_{(1,1)} \le ||B||_{(1,1)}.
\]
Thus, \cref{lemma:kronecker_11norm} concludes the theorem.
\end{proof}


\subsection{\texorpdfstring{Basis misalignment Analysis for $\gS=$ \mean{}}{Basis misalignment Analysis for S=1st}}
Here, we present a result for the $\gS=$ \mean{} strategy analogous to \cref{thm:covariance_Hessian_approximation}, but under a different assumption regarding the Hessian's structure.
Under the assumption $H=\E[g]\E[g]^\top$, we can show the following.
\begin{theorem}\label{thm:gradient_Hessian_approximation}
Let orthogonal matrices $U, V$ be $\mathbb{E}[G] = U \Sigma V^\top$. If Hessian has an exact Kronecker product form, the following inequalities hold:
\[
||H_{U,V}||_{(1,1)} \le ||H_{U}||_{(1,1)} \le ||H||_{(1,1)}.
\]
Moreover, $\|H_{U,V}\|_{(1,1)}$ attains the global minimum over all orthogonal rotations.
\end{theorem}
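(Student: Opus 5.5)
Under the assumption $H=\E[g]\E[g]^\top$ with $g=\Vector{G}$, write $\bar G \triangleq \E[G] = U\Sigma V^\top$ (full SVD, with $U\in\R^{m\times m}$, $V\in\R^{n\times n}$ orthogonal), so $\E[g]=\Vector{\bar G}$. The plan is to transport the rank-one Hessian through the two rotations using \cref{lemma:abvecG_vecBGAt,lemma:kronecker_basic}, exactly as in the proof of \cref{thm:covariance_Hessian_approximation}.

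\textbf{Rotated Hessians.} For any orthogonal $P$, the rotated Hessian is $P^\top H P = (P^\top\E[g])(P^\top\E[g])^\top$, and the $(1,1)$-norm of a rank-one matrix $aa^\top$ is $\|a\|_1^2$. By \cref{lemma:abvecG_vecBGAt}, the unilateral rotation $(I\otimes U^\top)$ maps $\Vector{\bar G}$ to $\Vector{U^\top\bar G}=\Vector{\Sigma V^\top}$. The bilateral rotation $(V^\top\otimes U^\top)$ maps it to $\Vector{U^\top\bar G V}=\Vector{\Sigma}$. Hence
\[
\|H_{U,V}\|_{(1,1)}=\|\Sigma\|_{(1,1)}^2=\Big(\textstyle\sum_k\sigma_k\Big)^2,\quad \|H_U\|_{(1,1)}=\|\Sigma V^\top\|_{(1,1)}^2,\quad \|H\|_{(1,1)}=\|\bar G\|_{(1,1)}^2.
\]
The claimed chain is therefore equivalent to the entrywise-$\ell_1$ inequalities $\|\Sigma\|_{(1,1)}\le\|\Sigma V^\top\|_{(1,1)}\le\|U\Sigma V^\top\|_{(1,1)}$.

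\textbf{First inequality.} I would lower bound the entrywise $\ell_1$ norm of any matrix $M$ by its trace norm: $\|M\|_{(1,1)}\ge\|M\|_*$. This follows because the entrywise $\ell_1$ norm dominates the sum of the columns' $\ell_2$ norms, and that sum dominates the nuclear norm by the triangle inequality applied to $M=\sum_j M e_j e_j^\top$, where each term has nuclear norm $\|Me_j\|_2$. The nuclear norm is invariant under orthogonal multiplication, so $\|\Sigma V^\top\|_*=\|\bar G\|_*=\sum_k\sigma_k=\|\Sigma\|_{(1,1)}$. This gives $\|H_{U,V}\|_{(1,1)}\le\|H_U\|_{(1,1)}$. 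The same argument gives global optimality: any Kronecker rotation $\tilde U^\top\bar G\tilde V$ has entrywise $\ell_1$ norm at least $\|\bar G\|_*$, which the bilateral rotation attains. For a general orthogonal $P$ on $\R^{mn}$, I would instead interpret ``over all rotations'' as over rotations of the factored form, as in \cref{thm:covariance_Hessian_approximation}.

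\textbf{Second inequality (the main obstacle).} I need $\|\Sigma V^\top\|_{(1,1)}\le\|U\Sigma V^\top\|_{(1,1)}$. Work row-by-row in $\Sigma V^\top$, whose rows $\sigma_k v_k^\top$ are mutually orthogonal, and compare column-by-column with $U(\Sigma V^\top)$. Column $j$ of $\bar G$ is $U c_j$, where $c_j$ is column $j$ of $\Sigma V^\top$. The needed inequality $\|Uc_j\|_1\ge\|c_j\|_1$ is false for a general vector, so a pure per-column argument fails.

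I would therefore try the summed form, $\sum_j\|Uc_j\|_1\ge\sum_j\|c_j\|_1$. The attempted route is to write $\|\Sigma V^\top\|_{(1,1)}=\sum_k\sigma_k\|v_k\|_1$ and bound it by $\langle \bar G, S\rangle$ for a suitable sign matrix $S$. Concretely, take $S=U D\,\mathrm{sign}(V^\top)$, with $D$ the rows' signs, and then use $\langle\bar G,S\rangle\le\|\bar G\|_{(1,1)}\|S\|_{\max}$. This requires $\|S\|_{\max}\le1$, which generally fails.

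If that route fails, I expect the honest resolution is that the middle inequality needs an extra structural hypothesis. This is analogous to how the covariance case uses the Kronecker assumption to make the unilateral rotation exactly diagonalize one factor. The natural candidate is that $\bar G$ is rank one (or has an exact Kronecker form in the sense of the statement), for which $\Sigma V^\top$ has a single nonzero row. Then $\|\Sigma V^\top\|_{(1,1)}=\sigma_1\|v_1\|_1\le\sigma_1\|u_1\|_1\|v_1\|_1=\|\bar G\|_{(1,1)}$, since $\|u_1\|_1\ge\|u_1\|_2=1$. I would pin down exactly which form of the ``exact Kronecker product'' hypothesis delivers this reduction and complete the chain there.
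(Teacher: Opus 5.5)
Your final argument is the paper's: reduce via the vec identity to $\|\Sigma\|_{(1,1)}\le\|\Sigma V^\top\|_{(1,1)}\le\|U\Sigma V^\top\|_{(1,1)}$, note that $\mathbb{E}[G]=\sigma_1u_1v_1^\top$ is rank one, and conclude from $\|u_1\|_1,\|v_1\|_1\ge 1$.

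Your nuclear-norm bound $\|M\|_{(1,1)}\ge\|M\|_*$ is a genuine addition. It gives the first inequality, and optimality of the bilateral rotation among Kronecker-factored rotations $\tilde U^\top\mathbb{E}[G]\tilde V$, for $\mathbb{E}[G]$ of any rank; the paper's rank-one computation does not give this.

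Your suspicion about the middle inequality is also correct: it is false beyond rank one, so the hypothesis is essential and no sign-matrix argument can succeed. For example, $\mathbb{E}[G]=\frac{1}{\sqrt{10}}\bigl(\begin{smallmatrix}5&3\\0&4\end{smallmatrix}\bigr)$ has singular values $2,1$ with right singular vectors $(1,1)/\sqrt2$ and $(-1,1)/\sqrt2$. Then the rows of $\Sigma V^\top$ are $(\sqrt2,\sqrt2)$ and $(-1,1)/\sqrt2$, giving $\|\Sigma V^\top\|_{(1,1)}=3\sqrt2>12/\sqrt{10}=\|\mathbb{E}[G]\|_{(1,1)}$.

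Two loose ends remain.

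First, you leave open how the hypothesis yields rank one, but this follows in one line from the Kronecker form. The matrix $H=\mathbb{E}[g]\mathbb{E}[g]^\top=A\otimes B\neq 0$ has rank one, and $\mathrm{rank}(A\otimes B)=\mathrm{rank}(A)\,\mathrm{rank}(B)$. Hence $A=a_1a_2^\top$ and $B=b_1b_2^\top$, so the column space of $H$ is spanned by $a_1\otimes b_1=\operatorname{vec}(b_1a_1^\top)$. Therefore $\mathbb{E}[G]$ is a multiple of $b_1a_1^\top$. The paper only asserts this step.

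Second, you should not weaken the optimality claim to factored rotations. Once $\mathbb{E}[G]$ is rank one, every orthogonal $P$ on $\mathbb{R}^{mn}$ satisfies $\|P^\top HP\|_{(1,1)}=\|P^\top\mathbb{E}[g]\|_1^2\ge\|\mathbb{E}[g]\|_2^2=\sigma_1^2=\|H_{U,V}\|_{(1,1)}$. So the literal ``global minimum over all orthogonal rotations'' holds; the paper states it without justification. Your factored reading is only needed in the higher-rank regime, where the unrestricted claim fails because $(\sum_k\sigma_k)^2>\sum_k\sigma_k^2$.
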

However, in practice, $\mathbb{E}[g]\mathbb{E}[g]^\top$ is not guaranteed to be a faithful approximation of the true Hessian. This discrepancy potentially limits the approximation fidelity and, consequently, the optimizer's robustness to gradient delay in empirical settings.

\begin{proof}
    Recall that $H=\mathbb{E}[g]\mathbb{E}[g]^\top$. The rotated Hessians are defined as:
    \begin{align}
        H_{U} &= (I \otimes U^\top) \mathbb{E}[g] \left((I \otimes U^\top) \mathbb{E}[g]\right)^\top, \label{eq:H3_covariance_kronecker} \\
        H_{U,V} &= (V^\top \otimes U^\top) \mathbb{E}[g] \left((V^\top \otimes U^\top) \mathbb{E}[g]\right)^\top. \label{eq:H4_covariance_kronecker}
    \end{align}
    For any rank-1 matrix $M = zz^\top$, $(1,1)$-norm satisfies $\|M\|_{(1,1)} = \|zz^\top\|_{(1,1)} = \|z\|_1^2$.
    By \cref{lemma:abvecG_vecBGAt}, $(I \otimes U^\top) \mathbb{E}[g] = \Vector{\Sigma V^\top}$, $(V \otimes U^\top) \mathbb{E}[g] = \Vector{\Sigma}$ holds.
    Thus, proving the theorem is equivalent to showing $\|\Sigma\|_{(1,1)} \le \|\Sigma V^\top\|_{(1,1)} \le \|U \Sigma V^\top\|_{(1,1)}$.

    Since $H$ is rank-1 and we assume it admits an exact Kronecker product structure, $\mathbb{E}[G]$ must also be rank-1. Let $\mathbb{E}[G] = \sigma_1 u_1 v_1^\top$, where $u_1$ and $v_1$ are the first columns of $U$ and $V$, respectively, and $\sigma_1$ is the singular value.
    
    Since $U$ and $V$ are orthogonal, $\|u_1\|_2 = \|v_1\|_2 = 1$. By the norm inequality $\|x\|_1 \ge \|x\|_2$, we have $\|u_1\|_1 \ge 1$ and $\|v_1\|_1 \ge 1$.
    Therefore, the following inequalities hold:
    \begin{align*}
        \|\Sigma\|_{(1,1)} &= \sigma_1 \\
        &\le \sigma_1 \|v_1\|_1 = \|\Sigma V^\top\|_{(1,1)} \\
        &\le \sigma_1 \|v_1\|_1 \|u_1\|_1 = \|u_1 \sigma_1 v_1^\top\|_{(1,1)} = \|U \Sigma V^\top\|_{(1,1)}.
    \end{align*}
    This implies $\|H_{U,V}\|_{(1,1)} \le \|H_U\|_{(1,1)} \le \|H\|_{(1,1)}$. Furthermore, the diagonal form $\|H_{U,V}\|_{(1,1)}$ attains the global minimum.
\end{proof}

\newpage

%% file: tex/906_connection_recent_optimizers.tex
\section{Connection to Recent Optimizers}
\label{app:connection_recent_optimizers}

In this section, we establish a connection between the design axes of \estimation{} and existing modern optimizers to contextualize our framework within the current literature.
While our methodology shares conceptual roots with several recent optimizers, we introduce \estimation{} as a unified abstraction designed to isolate the effects of Hessian geometry from other implementation variables, such as momentum accumulation space or the use of power iteration instead of exact Singular Value Decomposition (SVD).

Our \covariance{} strategy is closely related to SOAP \citep{vyas2024soap} and EShampoo \citep{eschenhagen2025purifying}.
However, a key technical distinction lies in the accumulation space of the optimizer states and the timing of the basis update. 
Unlike the official implementation of SOAP, which accumulates the first momentum in a rotated space and updates the eigenbasis after the parameter update step, our \covariance{} approach accumulates the first momentum in the original space and refreshes the basis before the optimization step.
While accumulation in the rotated space requires projecting the momentum back and forth between bases during updates, accumulation in the original space does not require this additional computation.
Furthermore, while EShampoo relies on SVD to compute eigenvectors, our implementation uses a single step of power iteration and QR decomposition to approximate them.

Similarly, our \mean{} strategy shares commonalities with GaLore \citep{zhao2024galore}, LDAdam \citep{robert2024ldadam} (for \onesided{}), and the SVD-rotated AdamW variant discussed in \citet{zhang2024understanding} (for \twosided{}).
The primary departure from these methods is our use of the momentum matrix $M_t$ as the source for basis estimation, whereas GaLore and AdamW-SVD typically derive their rotation matrices from the instantaneous, noisy gradient $G_t$.
Furthermore, unlike GaLore and AdamW-SVD—which accumulate momentum in a rotated space and rely on exact SVD—our approach maintains accumulation in the original space and utilizes power iteration.
Finally, because \estimation{} remains strictly full-rank, it avoids the complex error-buffer mechanisms found in LDAdam.

By standardizing the optimizer's internal mechanics, we can systematically analyze how different basis estimation strategies interact with the loss landscape and gradient delay without being confounded by the implementation details of individual optimizers.

%% file: tex/912_memory_overhead.tex

\section{Memory Overhead Analysis}
\label{app:memory_overhead}

In this section, we analyze the memory overhead of basis rotation in a
realistic training setup. As described in
Section~\ref{subsec:solution:Hessian_approximation}, our \texttt{eigenbasis-estimation}
framework spans two design axes, the approximation source
$\mathcal{S} \in \{1\text{st}, 2\text{nd}\}$ and the rotation geometry
$\mathcal{G} \in \{\text{Unilateral}, \text{Bilateral}\}$, yielding four
concrete strategies. We quantify the memory cost of each strategy both in
terms of theoretical complexity and concrete GB usage on a realistic LLM
training setup.

For a single weight matrix $W \in \mathbb{R}^{m \times n}$, basis rotation
introduces two sources of additional memory beyond the standard Adam states:
(i) the rotation matrices used to diagonalize the Hessian approximation, and
(ii) the second-moment statistics required when $\mathcal{S} = 2\text{nd}$.
For the bilateral strategy ($\mathcal{G} = \text{Bi}$), both
$U \in \mathbb{R}^{m \times m}$ and $V \in \mathbb{R}^{n \times n}$ are
stored, incurring $m^2 + n^2$ parameters; for the unilateral strategy
($\mathcal{G} = \text{Uni}$), only the rotation along the smaller dimension
is stored, reducing the cost to $\min(m,n)^2$. When $\mathcal{S} =
2\text{nd}$, the empirical Fisher factors
$L = G G^\top \in \mathbb{R}^{m \times m}$ and
$R = G^\top G \in \mathbb{R}^{n \times n}$ must be maintained as buffers,
adding the same cost as the rotation matrices. In contrast, $\mathcal{S} =
1\text{st}$ reuses the existing momentum buffer $M_t$ to estimate the
eigenbasis, introducing \emph{no} additional moment storage.

Table~\ref{tab:memory_overhead} reports both the theoretical per-matrix
overhead and concrete GB usage on Llama-3-8B, which has hidden dimension
$h = 4096$ and MLP intermediate dimension $4h_{\text{int}} = 14336$.
Attention projection matrices are square ($4096 \times 4096$), while MLP
projection matrices are rectangular ($4096 \times 14336$). All buffers are
stored in FP32 (4 bytes per element), consistent with standard
mixed-precision training where optimizer states are kept in full precision.

\begin{table}[h]
\centering
\small
\caption{Memory overhead of basis rotation strategies on Llama-3-8B
($h=4096$, $4h_{\text{int}}=14336$). Memory is reported in GB per weight
matrix, assuming FP32 storage. Attention matrices are
$4096 \times 4096$; MLP matrices are $4096 \times 14336$.}
\label{tab:memory_overhead}
\begin{tabular}{cccccc}
\toprule
$\mathcal{S}$ & $\mathcal{G}$ & Rotation & Moments &
Mem (Attn) & Mem (MLP) \\
\midrule
$2$nd & Bi  & $m^2 + n^2$       & $m^2 + n^2$ & 0.25 & 1.66 \\
$2$nd & Uni & $\min(m,n)^2$     & $\min(m,n)^2$ & 0.13 & 0.13 \\
$1$st & Bi  & $m^2 + n^2$       & --          & 0.13 & 0.83 \\
$1$st & Uni & $\min(m,n)^2$     & --          & 0.06 & 0.06 \\
\bottomrule
\end{tabular}
\end{table}

A few observations are in order. First, the bilateral variants
($\mathcal{G} = \text{Bi}$) incur substantially more memory than their
unilateral counterparts on rectangular MLP matrices, because they store
both $U \in \mathbb{R}^{m \times m}$ and $V \in \mathbb{R}^{n \times n}$
rather than a single rotation along the smaller dimension. Second,
switching from $\mathcal{S} = 2\text{nd}$ to $\mathcal{S} = 1\text{st}$
halves the overhead by eliminating the $L$ and $R$ buffers and reusing the
existing momentum matrix $M_t$ for eigenbasis estimation. Third, the most
memory-efficient strategy ($\mathcal{S} = 1\text{st},\, \mathcal{G} =
\text{Uni}$) requires only $\min(m,n)^2$ parameters per matrix---for an
MLP layer with $m = 4n$, this is approximately $7.5\%$ relative to Adam's
$4mn$ optimizer states ($m$- and $n$-shaped first and second moments
stored in FP32)---while still delivering more than a $40\%$ speedup over
the best-performing baseline (see
Figure~\ref{fig:result-rotation}).

%% file: tex/907_more_experiments.tex
\section{More Experimental Results}

\label{app:additional-exp-results}

\begin{figure*}[t]
    \centering
    \begin{subfigure}{0.2\linewidth}
        \includegraphics[width=\linewidth]{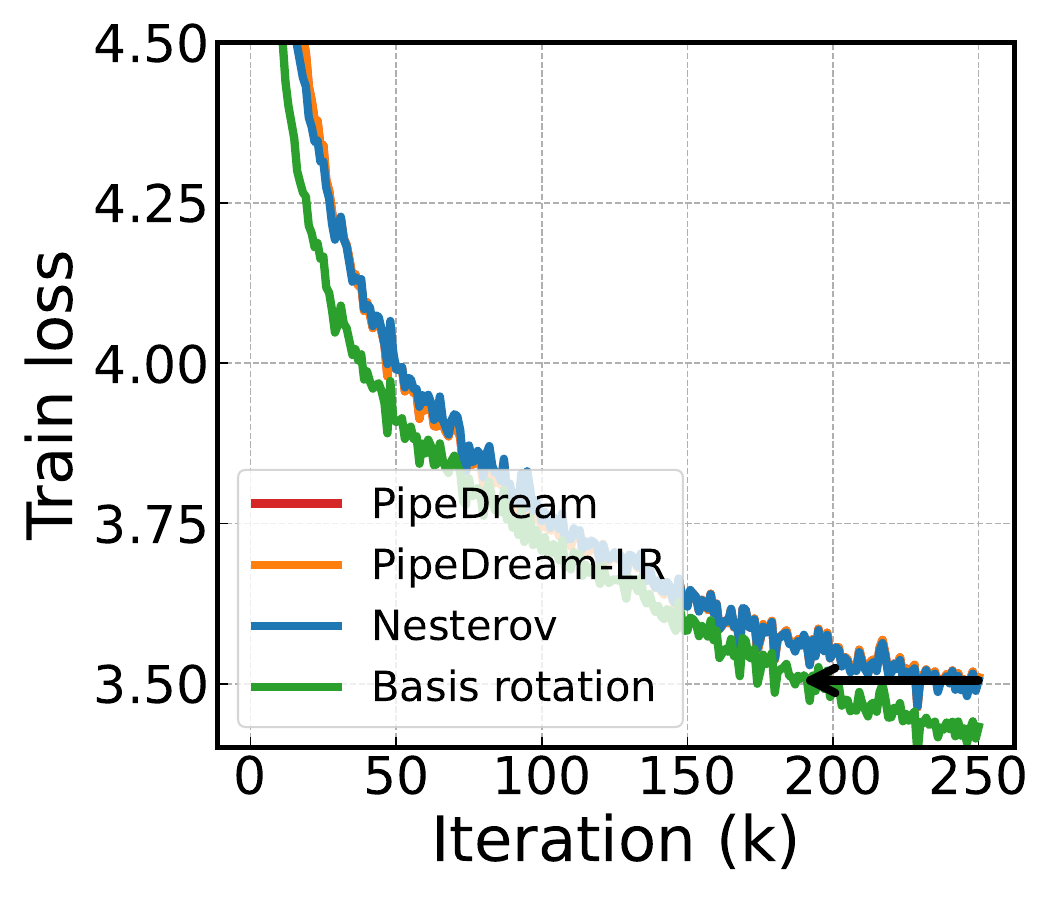}
        \caption{$P=1$}
        \label{fig:result-appendix-optmain-stage1}
    \end{subfigure}
    \hspace{2em}
    \begin{subfigure}{0.2\linewidth}
        \includegraphics[width=\linewidth]{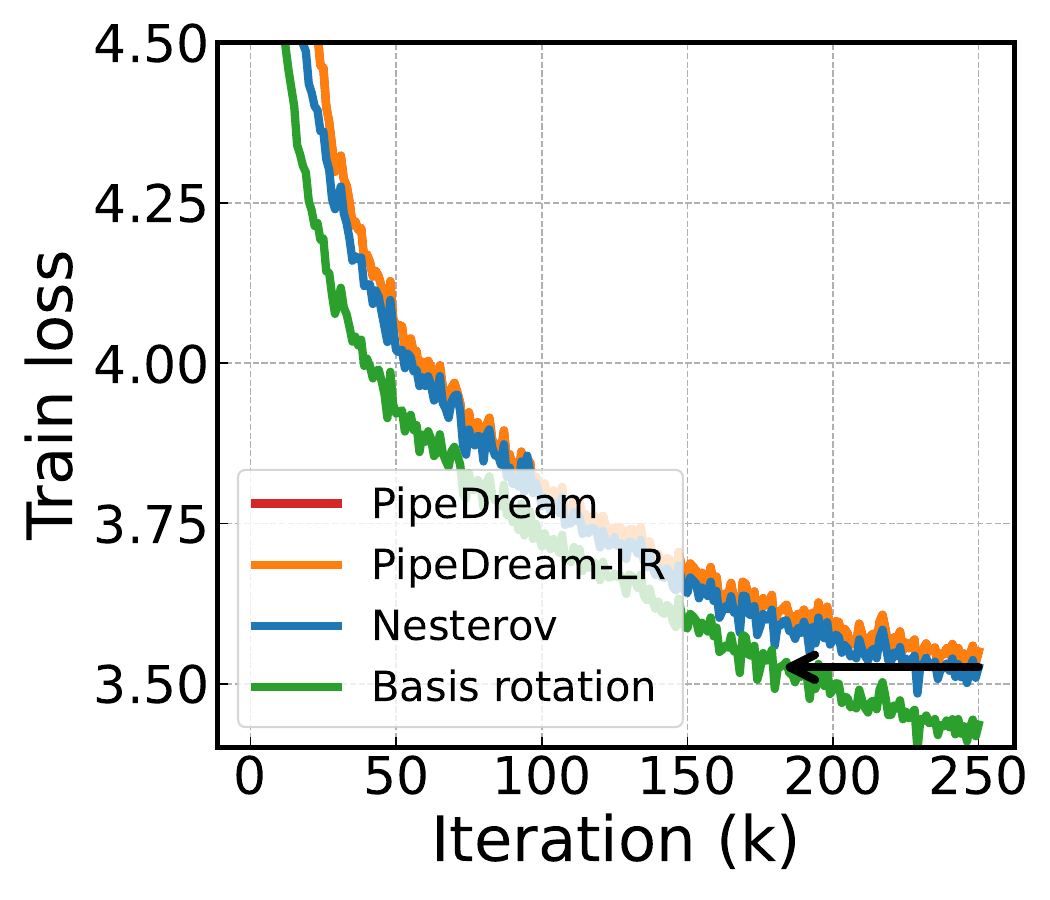}
        \caption{$P=2$}
        \label{fig:result-appendix-optmain-stage2}
    \end{subfigure}
    \hspace{2em}
    \begin{subfigure}{0.2\linewidth}
        \includegraphics[width=\linewidth]{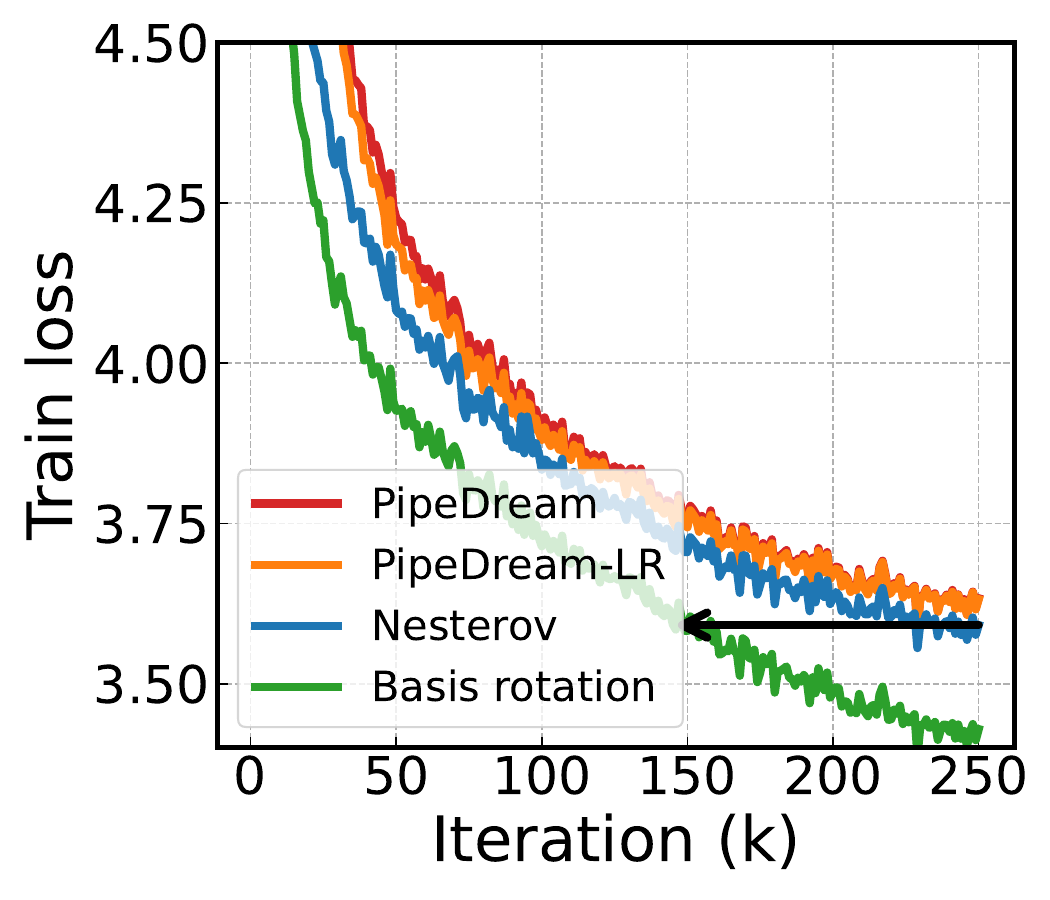}
        \caption{$P=4$}
        \label{fig:result-appendix-optmain-stage4}
    \end{subfigure} \\
    \begin{subfigure}{0.2\linewidth}
        \includegraphics[width=\linewidth]{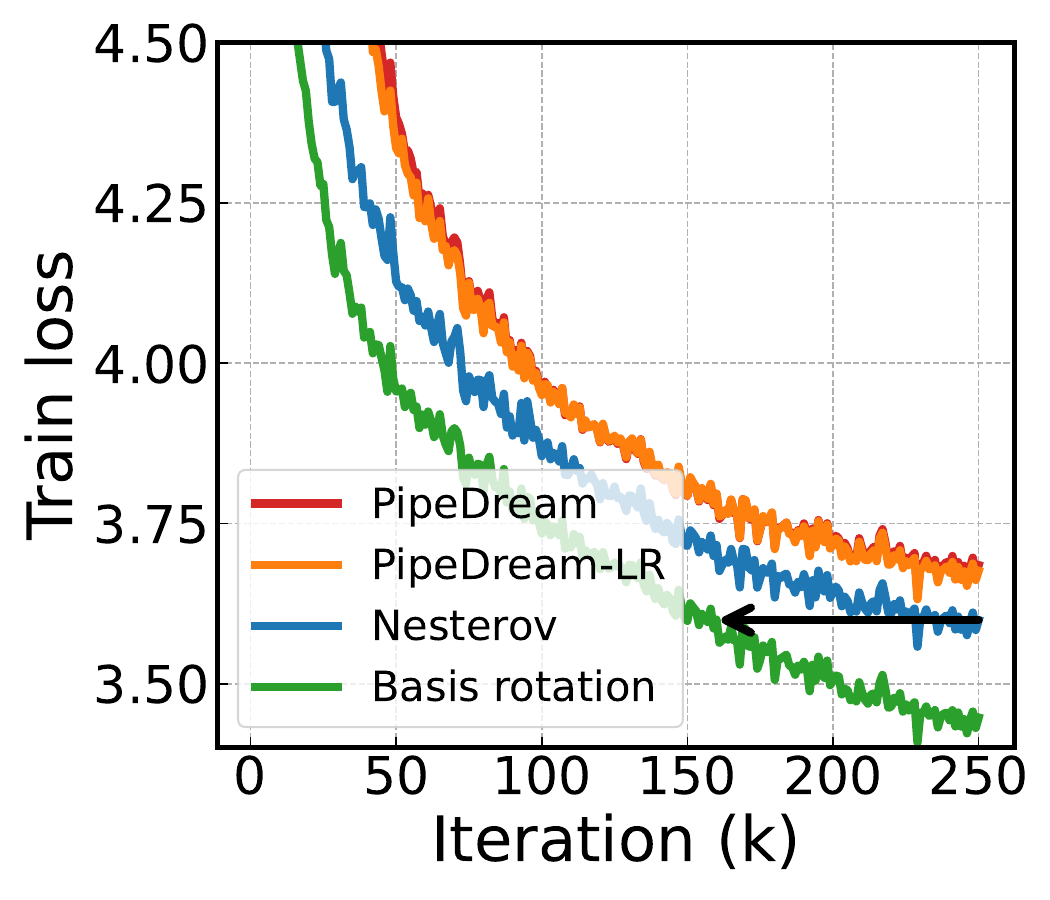}
        \caption{$P=8$}
        \label{fig:result-appendix-optmain-stage8}
    \end{subfigure}    
    \hspace{2em}
    \begin{subfigure}{0.2\linewidth}
        \includegraphics[width=\linewidth]{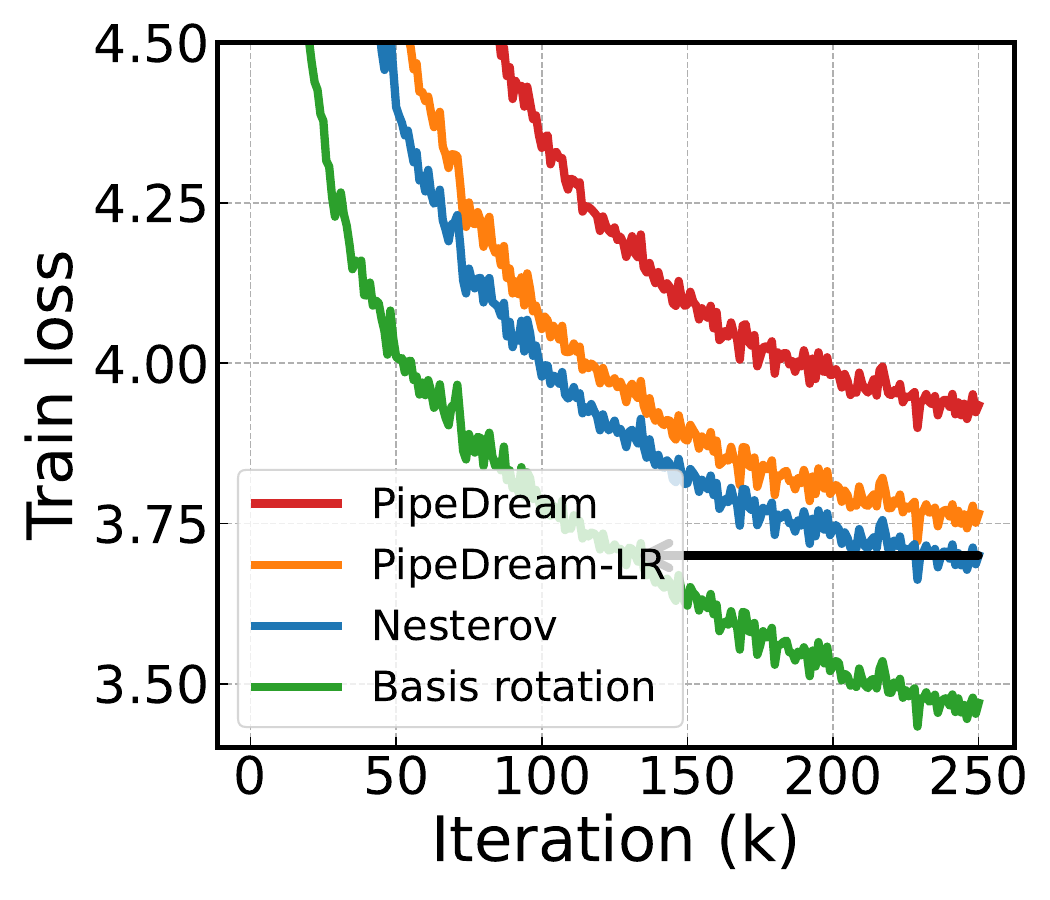}
        \caption{$P=16$}
        \label{fig:result-appendix-optmain-stage16}
    \end{subfigure}    
    \hspace{2em}
    \begin{subfigure}{0.2\linewidth}
        \includegraphics[width=\linewidth]{figure/main-exp/large-scale/optimizer/stage=32_main.pdf}
        \caption{$P=32$}
        \label{fig:result-appendix-optmain-stage32}
    \end{subfigure}    
    \caption{
    Comparison of each method for different number of stages $P$.
    The gap between \ourmethod{} and baselines increases with larger $P$.
    }
    \label{fig:result-appendix-optmain}
\end{figure*}

\begin{figure*}[t]
    \centering
    \begin{subfigure}{0.2\linewidth}
        \includegraphics[width=\linewidth]{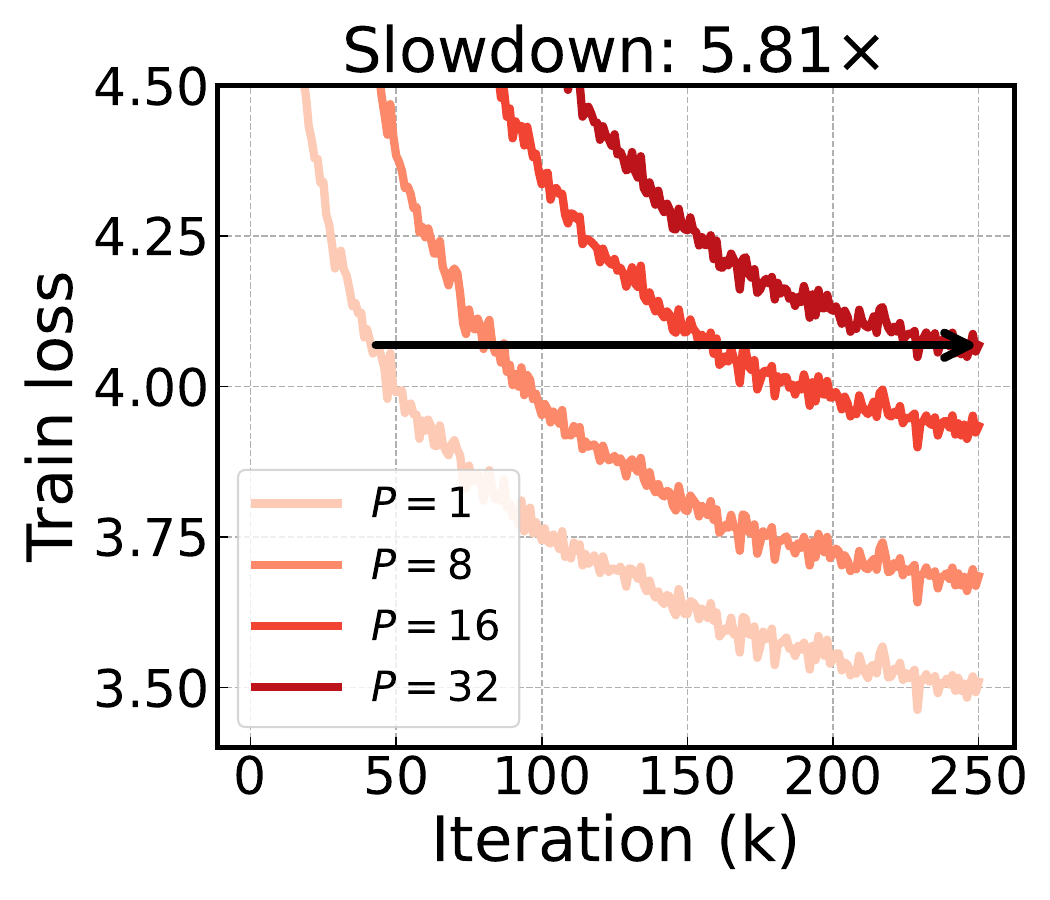}
        \caption{PipeDream}
        \label{fig:result-appendix-slowdown-adamw}
    \end{subfigure}
    \hspace{1em}
    \begin{subfigure}{0.2\linewidth}
        \includegraphics[width=\linewidth]{figure/main-exp/large-scale/slowdown/lradamw.pdf}
        \caption{PipeDream-LR}
        \label{fig:result-appendix-slowdown-lrdamw}
    \end{subfigure}
    \hspace{1em}
    \begin{subfigure}{0.2\linewidth}
        \includegraphics[width=\linewidth]{figure/main-exp/large-scale/slowdown/nadamw.pdf}
        \caption{Nesterov}
        \label{fig:rresult-appendix-slowdown-nadamw}
    \end{subfigure} \\
    \begin{subfigure}{0.2\linewidth}
        \includegraphics[width=\linewidth]{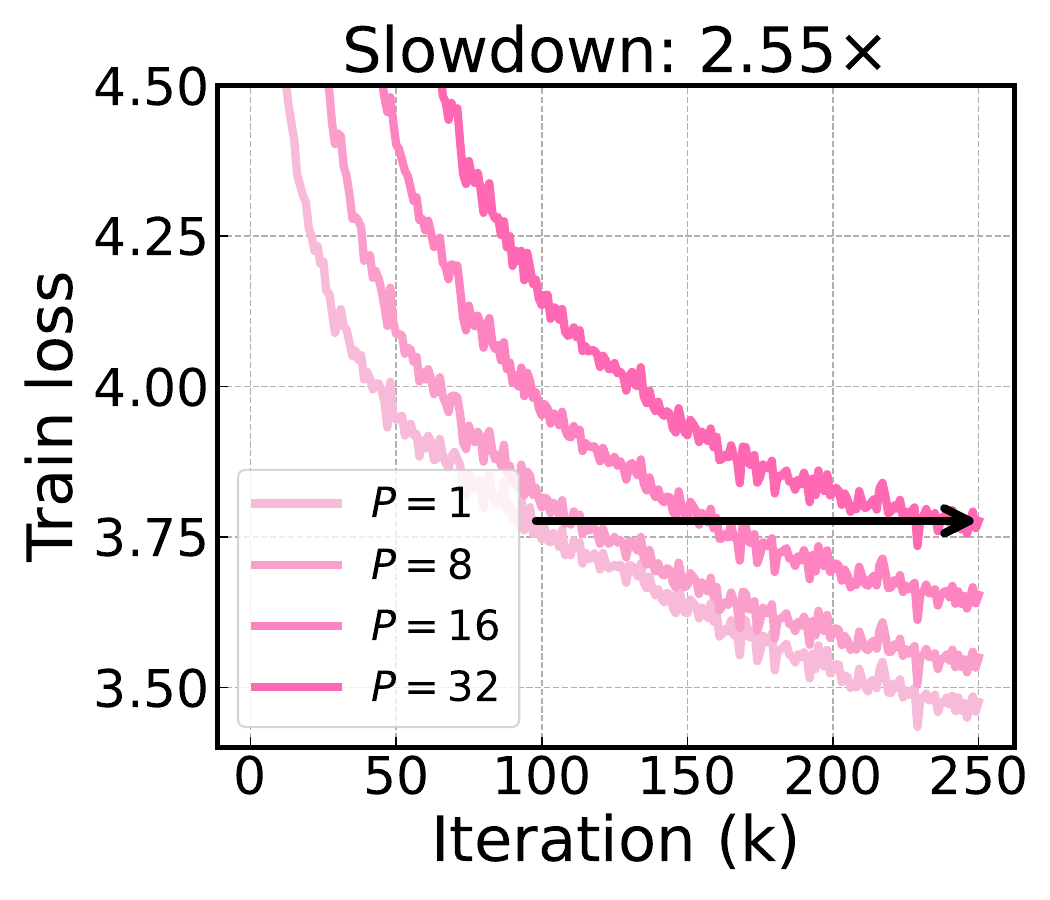}
        \caption{\mean{} / \onesided{}}
        \label{fig:result-appendix-slowdown-rotatedadam-onesided-mean}
    \end{subfigure}    
    \hspace{1em}
    \begin{subfigure}{0.2\linewidth}
        \includegraphics[width=\linewidth]{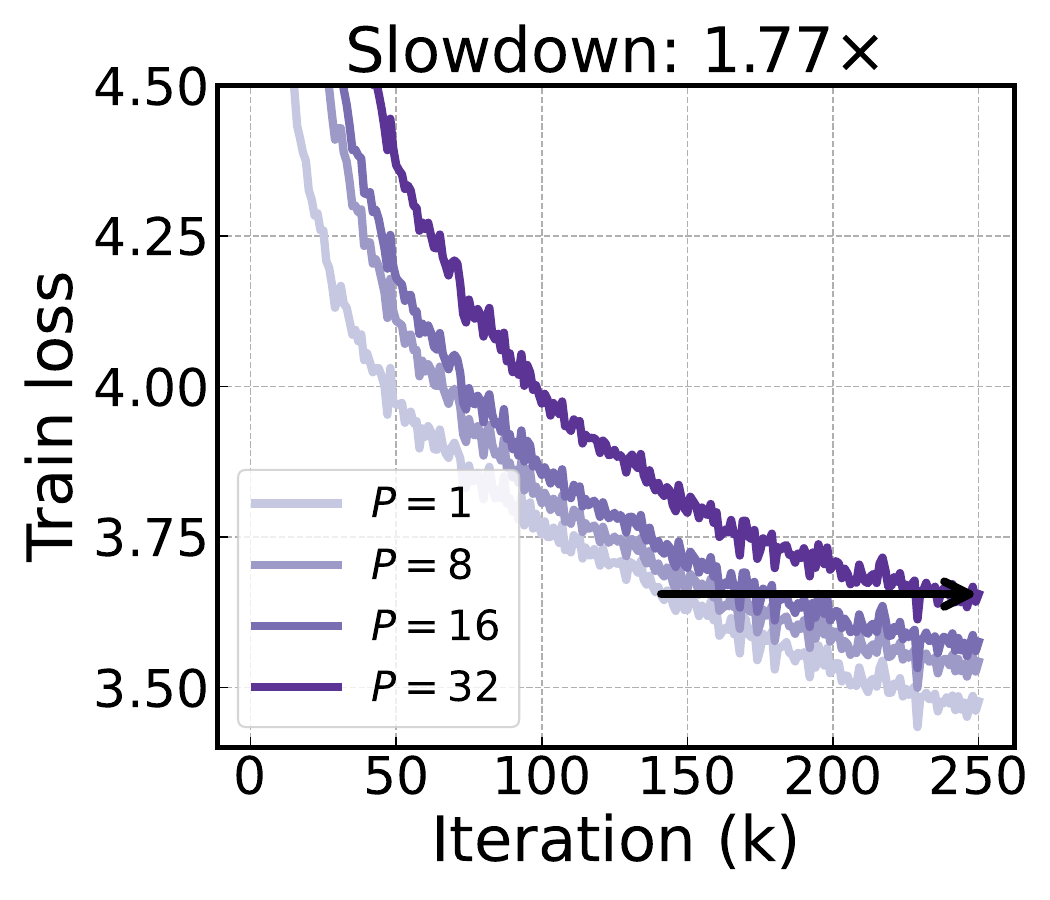}
        \caption{\mean{} / \twosided{}}
        \label{fig:result-appendix-slowdown-rotatedadam-twosided-mean}
    \end{subfigure}    
    \hspace{1em}
    \begin{subfigure}{0.2\linewidth}
        \includegraphics[width=\linewidth]{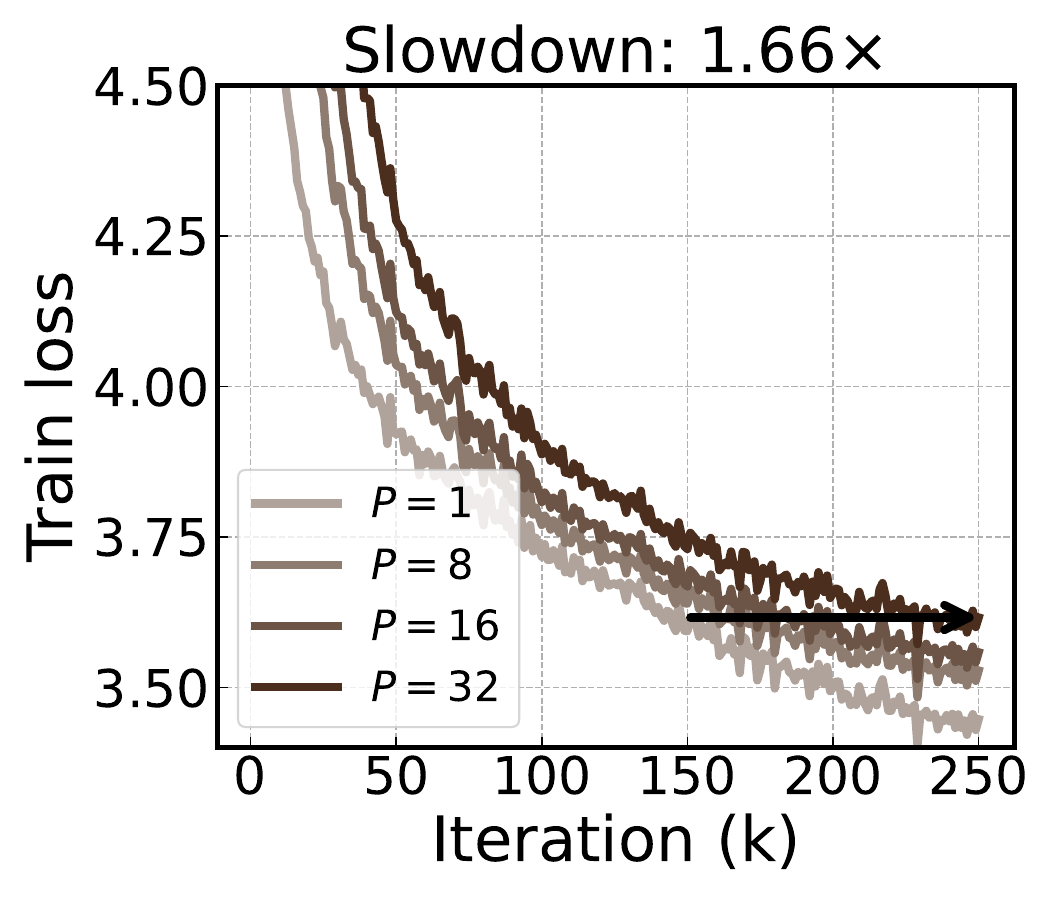}
        \caption{\covariance{} / \onesided{}}
        \label{fig:result-appendix-slowdown-rotatedadam-onesided-cov}
    \end{subfigure}  
    \hspace{1em}
    \begin{subfigure}{0.2\linewidth}
        \includegraphics[width=\linewidth]{figure/main-exp/large-scale/slowdown/rotatedadam_two_sided_covariance.pdf}
        \caption{\covariance{} / \twosided{}}
        \label{fig:result-appendix-slowdown-rotatedadam-twosided-cov}
    \end{subfigure}    
    \caption{
    Slowdown for different methods when increasing the number of stages $P$ for baselines (a-c) and \ourmethod{} (d-g).
    Here, slowdown is defined as the iteration ratio required to reach target loss for $P=32$ relative to $P=1$.
    }
    \label{fig:result-appendix-slowdown}
\end{figure*}

\begin{figure*}[t]
    \centering
    \begin{subfigure}{0.2\linewidth}
        \includegraphics[width=\linewidth]{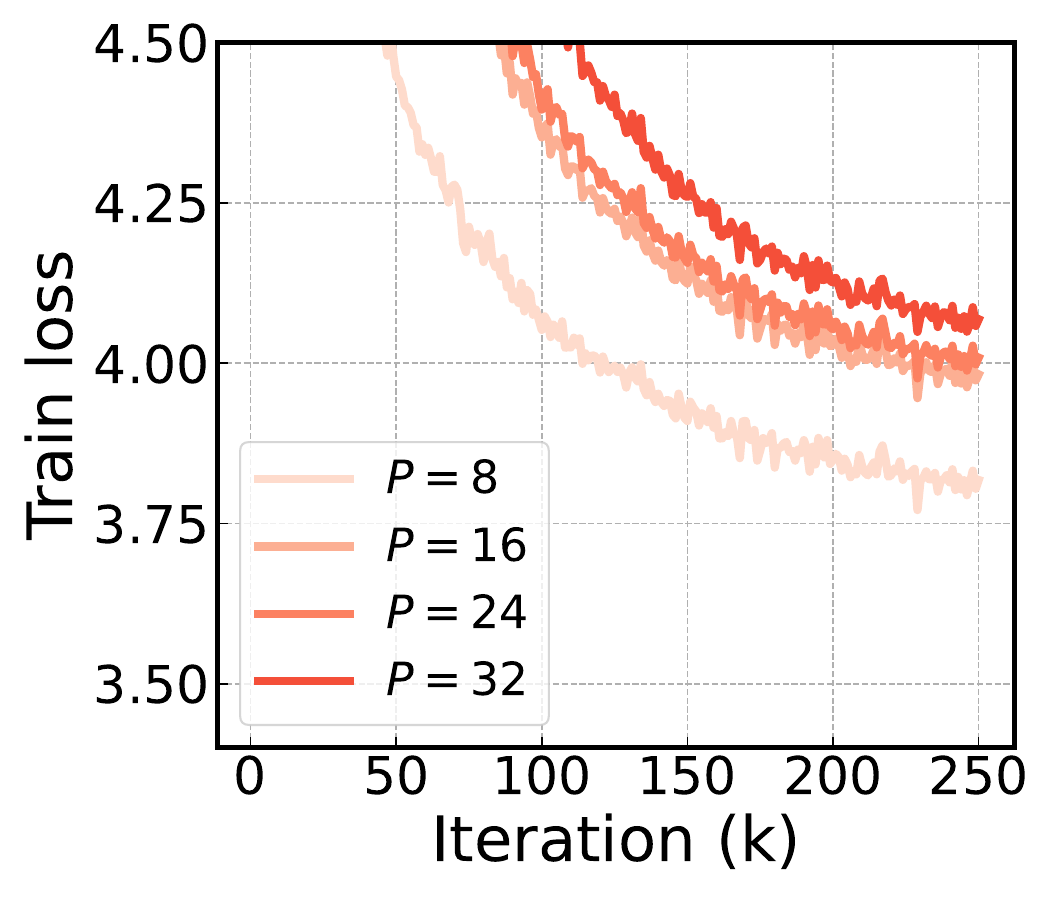}
        \caption{PipeDream}
        \label{fig:result-appendix-blockscaling-adamw}
    \end{subfigure}
    \hspace{1em}
    \begin{subfigure}{0.2\linewidth}
        \includegraphics[width=\linewidth]{figure/main-exp/large-scale/blockscaling/lradamw.pdf}
        \caption{PipeDream-LR}
        \label{fig:result-appendix-blockscaling-lrdamw}
    \end{subfigure}
    \hspace{1em}
    \begin{subfigure}{0.2\linewidth}
        \includegraphics[width=\linewidth]{figure/main-exp/large-scale/blockscaling/nadamw.pdf}
        \caption{Nesterov}
        \label{fig:rresult-appendix-blockscaling-nadamw}
    \end{subfigure} \\
    \begin{subfigure}{0.2\linewidth}
        \includegraphics[width=\linewidth]{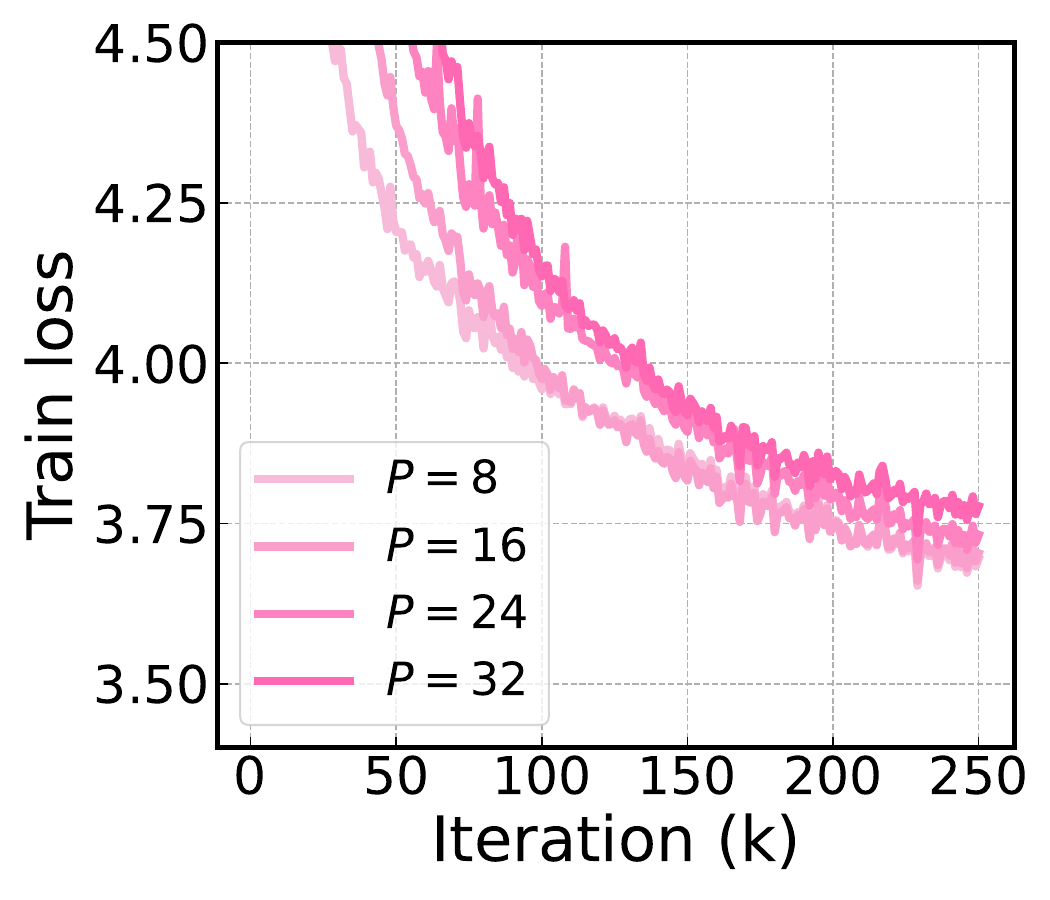}
        \caption{\mean{} / \onesided{}}
        \label{fig:result-appendix-blockscaling-rotatedadam-onesided-mean}
    \end{subfigure}    
    \hspace{1em}
    \begin{subfigure}{0.2\linewidth}
        \includegraphics[width=\linewidth]{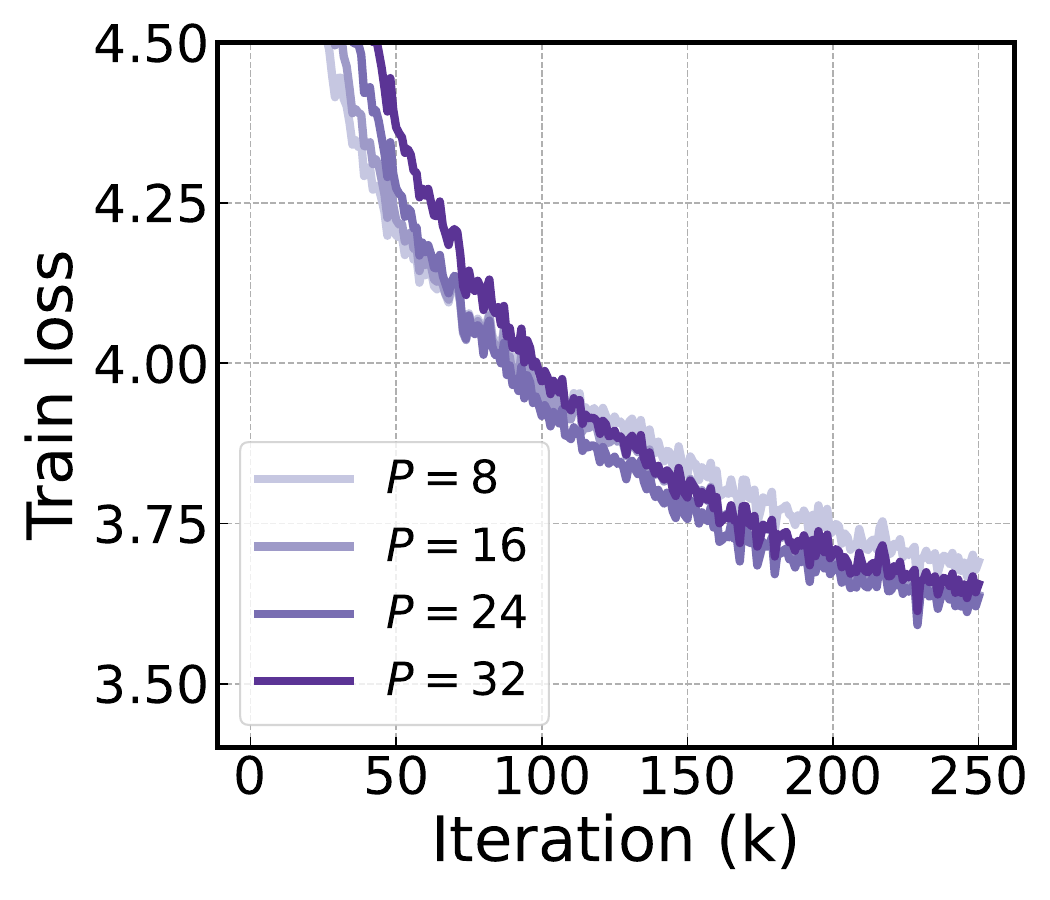}
        \caption{\mean{} / \twosided{}}
        \label{fig:result-appendix-blockscaling-rotatedadam-twosided-mean}
    \end{subfigure}
    \hspace{1em}
    \begin{subfigure}{0.2\linewidth}
        \includegraphics[width=\linewidth]{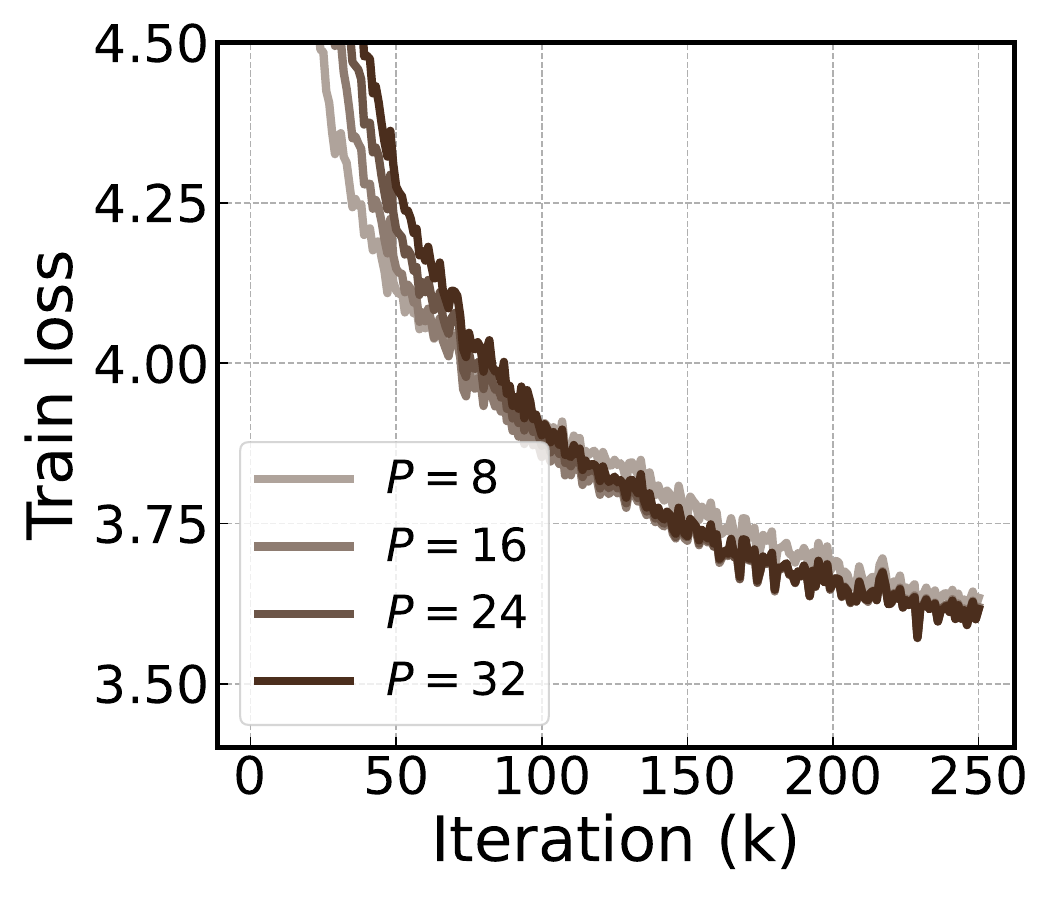}
        \caption{\covariance{} / \onesided{}}
        \label{fig:result-appendix-blockscaling-rotatedadam-onesided-cov}
    \end{subfigure}
    \hspace{1em}
    \begin{subfigure}{0.2\linewidth}
        \includegraphics[width=\linewidth]{figure/main-exp/large-scale/blockscaling/rotatedadam_two_sided_covariance.pdf}
        \caption{\covariance{} / \twosided{}}
        \label{fig:result-appendix-blockscaling-rotatedadam-twosided-cov}
    \end{subfigure}    
    \caption{
    Performance of each method when increasing the number of stages by scaling the number of blocks for baselines (a-c) and \ourmethod{} (d-g).
    While scaling the model rather increases the loss under asynchronous pipeline parallelism for baselines (a-c), scaling works well for \ourmethod{} especially for \covariance{}/\twosided{} strategy (f).
    }
    \label{fig:result-appendix-blockscaling}
\end{figure*}

\begin{figure*}[t]
    \centering
    \includegraphics[width=0.2\linewidth]{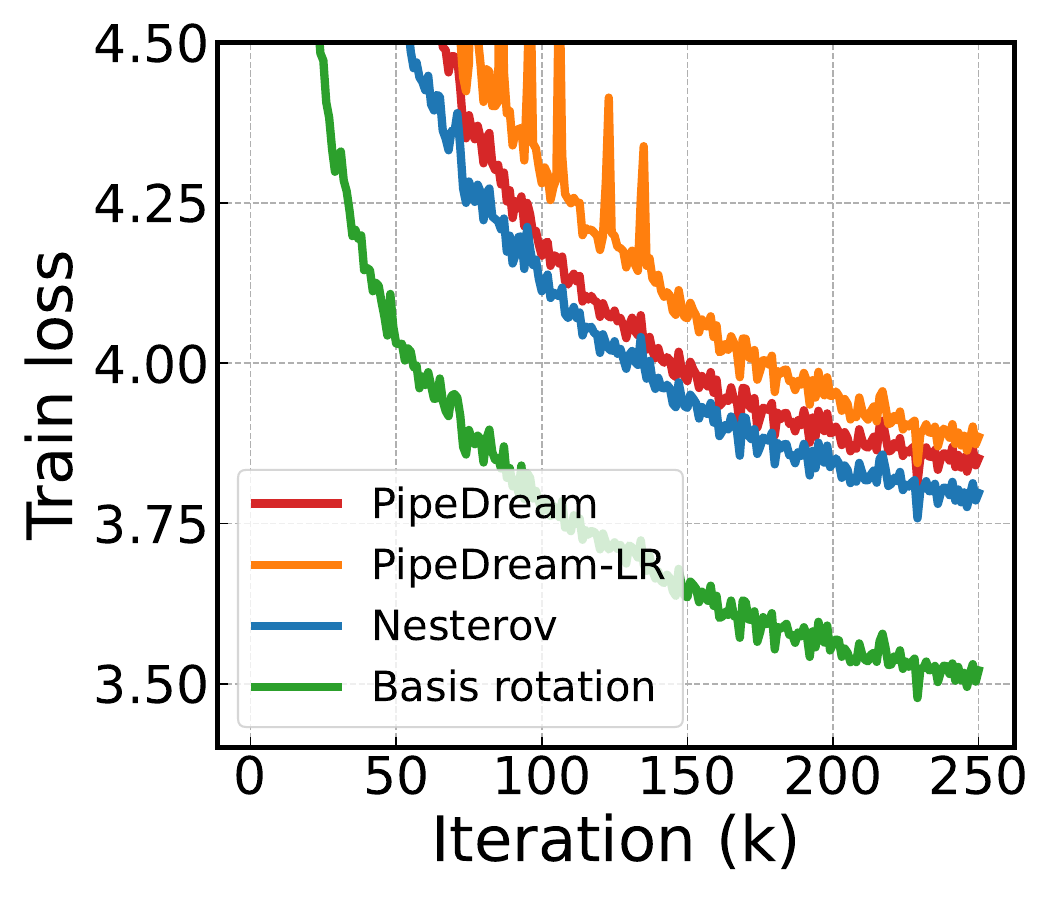}
    \caption{
    Performance of different methods when using PipeMare-style weight prediction \citep{yang2021pipemare} instead of weight stashing for $P=32$.
    Basis rotation outperforms baselines with a large margin also in this setting.}
    \label{fig:result-appendix-pipemare}
\end{figure*}

\begin{figure*}[t]
    \centering
    \begin{subfigure}{0.2\linewidth}
        \includegraphics[width=\linewidth]{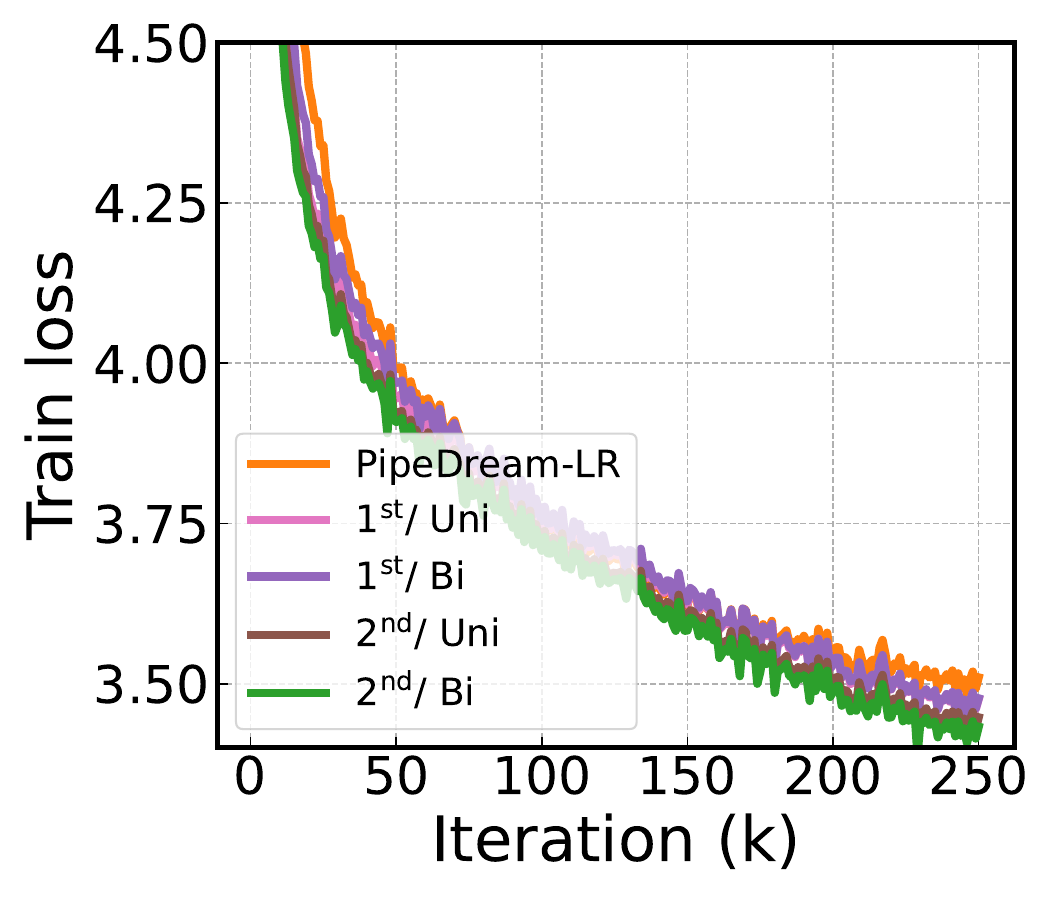}
        \caption{$P=1$}
        \label{fig:result-appendix-optrotated-stage1}
    \end{subfigure}
    \hspace{2em}
    \begin{subfigure}{0.2\linewidth}
        \includegraphics[width=\linewidth]{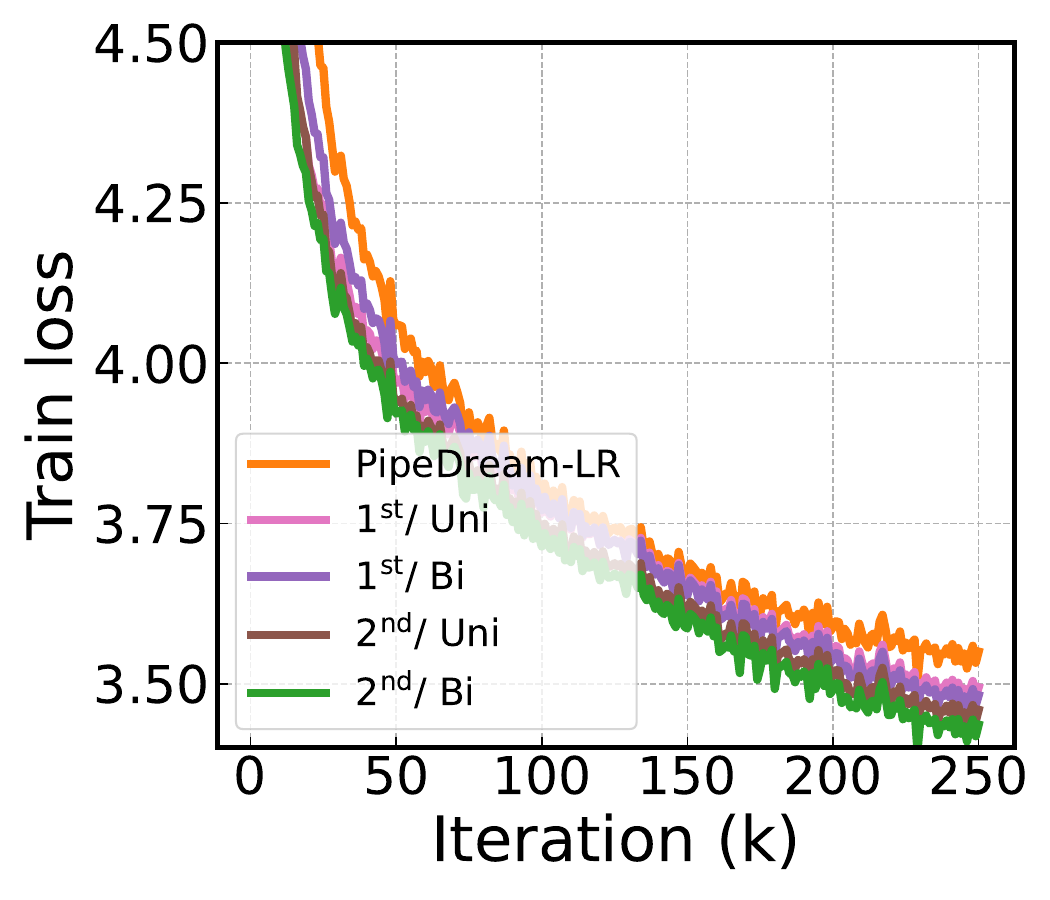}
        \caption{$P=2$}
        \label{fig:result-appendix-optrotated-stage2}
    \end{subfigure}
    \hspace{2em}
    \begin{subfigure}{0.2\linewidth}
        \includegraphics[width=\linewidth]{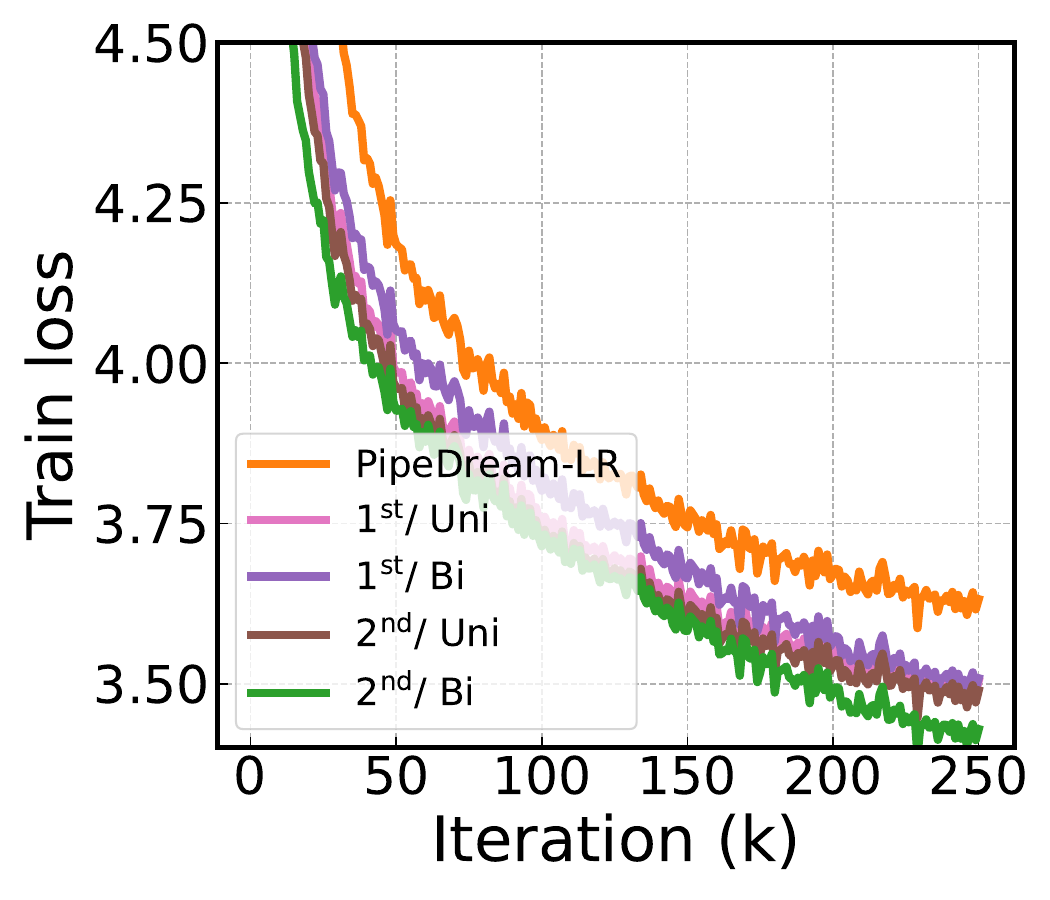}
        \caption{$P=4$}
        \label{fig:result-appendix-optrotated-stage4}
    \end{subfigure} \\
    \begin{subfigure}{0.2\linewidth}
        \includegraphics[width=\linewidth]{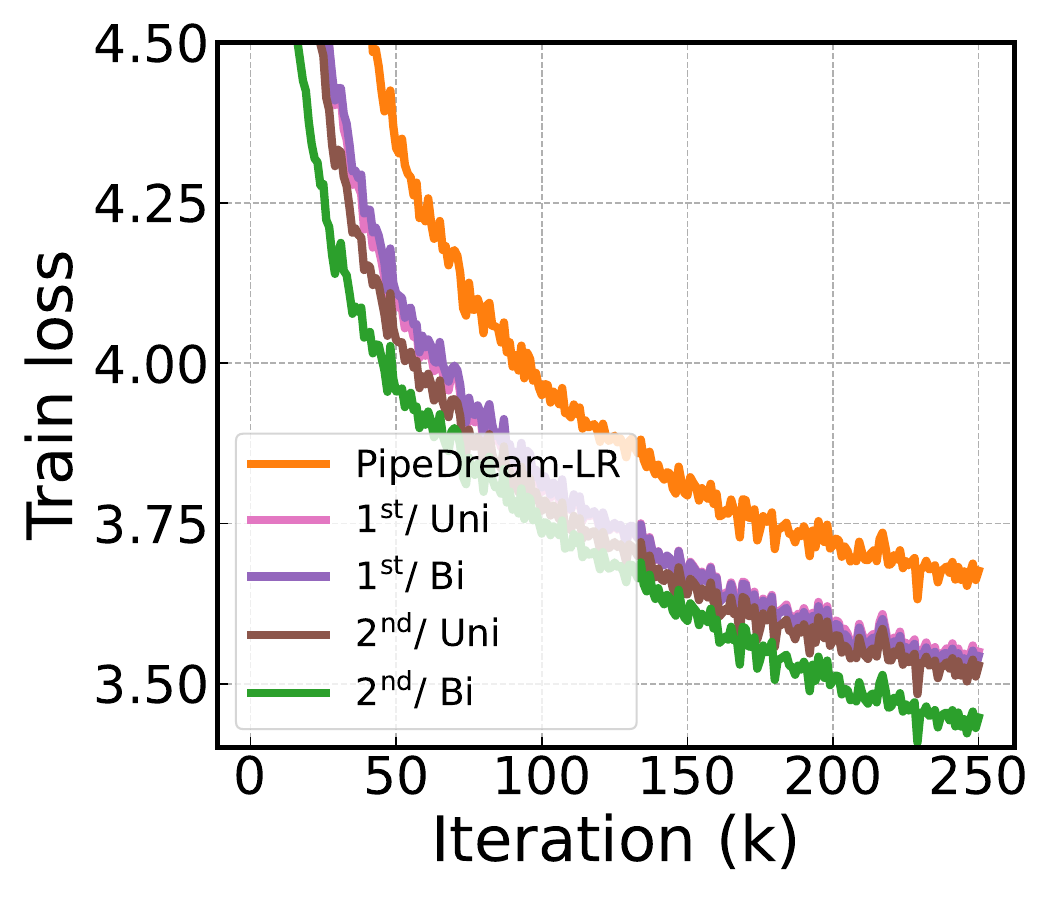}
        \caption{$P=8$}
        \label{fig:result-appendix-optrotated-stage8}
    \end{subfigure}    
    \hspace{2em}
    \begin{subfigure}{0.2\linewidth}
        \includegraphics[width=\linewidth]{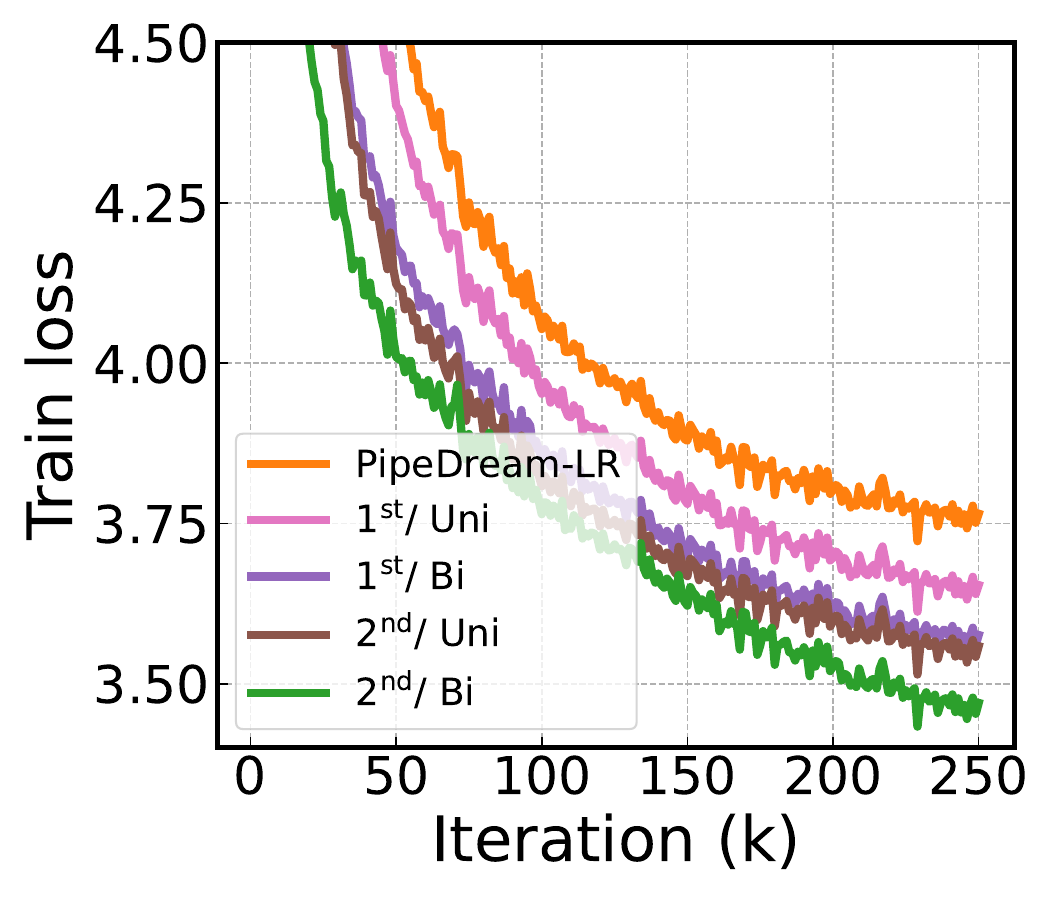}
        \caption{$P=16$}
        \label{fig:result-appendix-optrotated-stage16}
    \end{subfigure}    
    \hspace{2em}
    \begin{subfigure}{0.2\linewidth}
        \includegraphics[width=\linewidth]{figure/main-exp/large-scale/optimizer/stage=32_rotated.pdf}
        \caption{$P=32$}
        \label{fig:result-appendix-optrotated-stage32}
    \end{subfigure}    
    \caption{
    Comparison of different \estimation{} strategies for different number of stages $P$.
    }
    \label{fig:result-appendix-optrotated}
\end{figure*}

\begin{figure*}[t]
    \centering
    \includegraphics[width=0.2\linewidth]{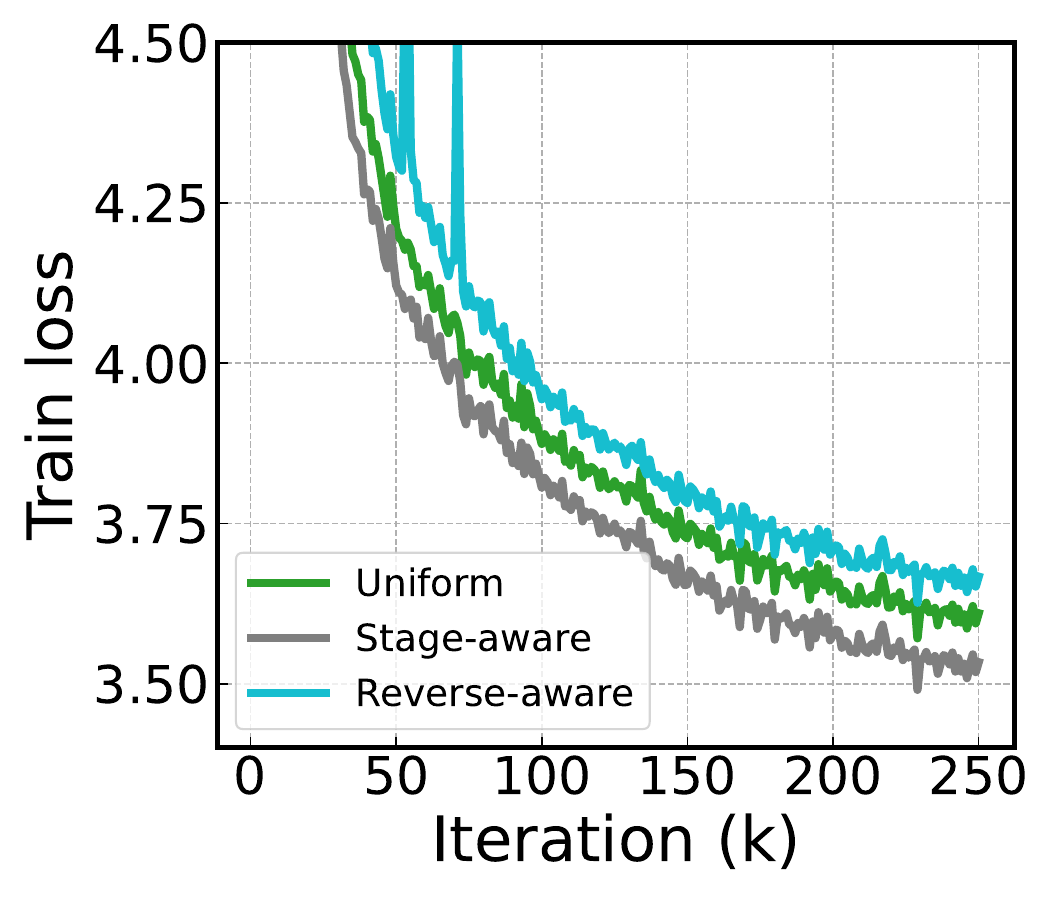}
    \caption{
    Stage-aware basis rotation and inverse stage-aware basis rotation. While stage-aware basis rotation outperforms uniform update frequency, inverse stage-aware basis rotation degrades the performance.}
    \label{fig:reverse-stage-wise-frequency}
\end{figure*}

\begin{figure*}[t]
    \centering
    \begin{subfigure}{0.2\linewidth}
        \includegraphics[width=\linewidth]{figure/main-exp/large-scale/optimizer/stage=32_main.pdf}
        \caption{Train loss}
        \label{fig:result-appendix-train-stage32}
    \end{subfigure}
    \begin{subfigure}{0.2\linewidth}
        \includegraphics[width=\linewidth]{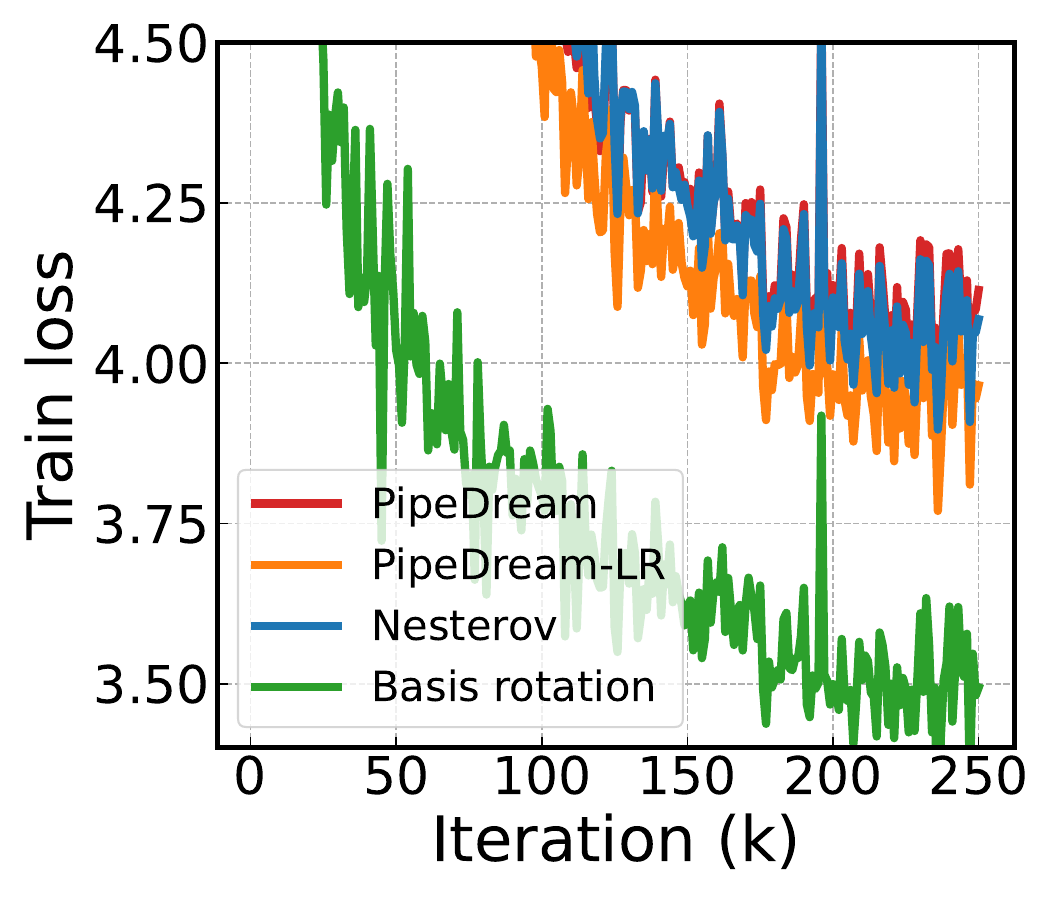}
        \caption{Validation loss}
        \label{fig:result-appendix-val-stage32}
    \end{subfigure}    
    \caption{
    Train loss (a) and validation loss (b) for different methods at $P=32$.
    The trend in validation loss closely follows the trend in train loss.
    }
    \label{fig:result-appendix-val}
\end{figure*}

\begin{figure*}[t]
    \centering
    \begin{subfigure}{0.2\linewidth}
        \includegraphics[width=\linewidth]{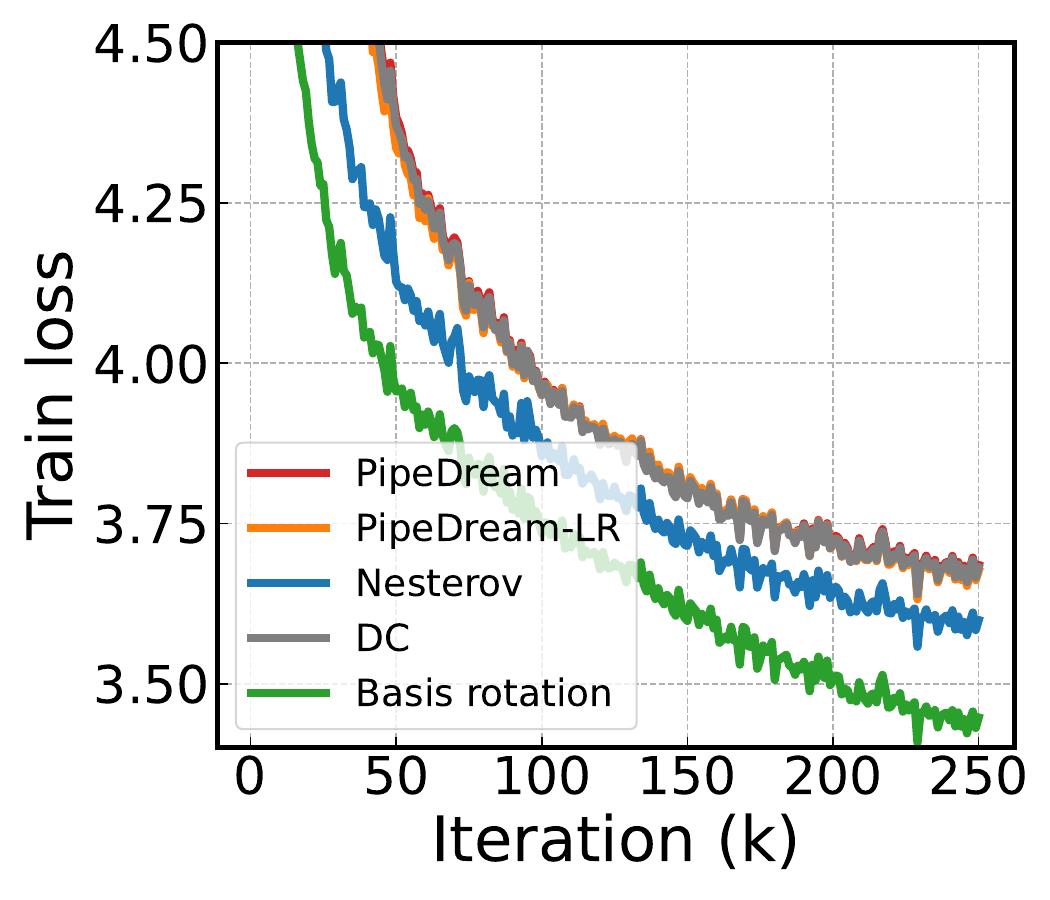}
        \caption{$P=8$}
        \label{fig:result-appendix-dc-stage8}
    \end{subfigure}    
    \hspace{2em}
    \begin{subfigure}{0.2\linewidth}
        \includegraphics[width=\linewidth]{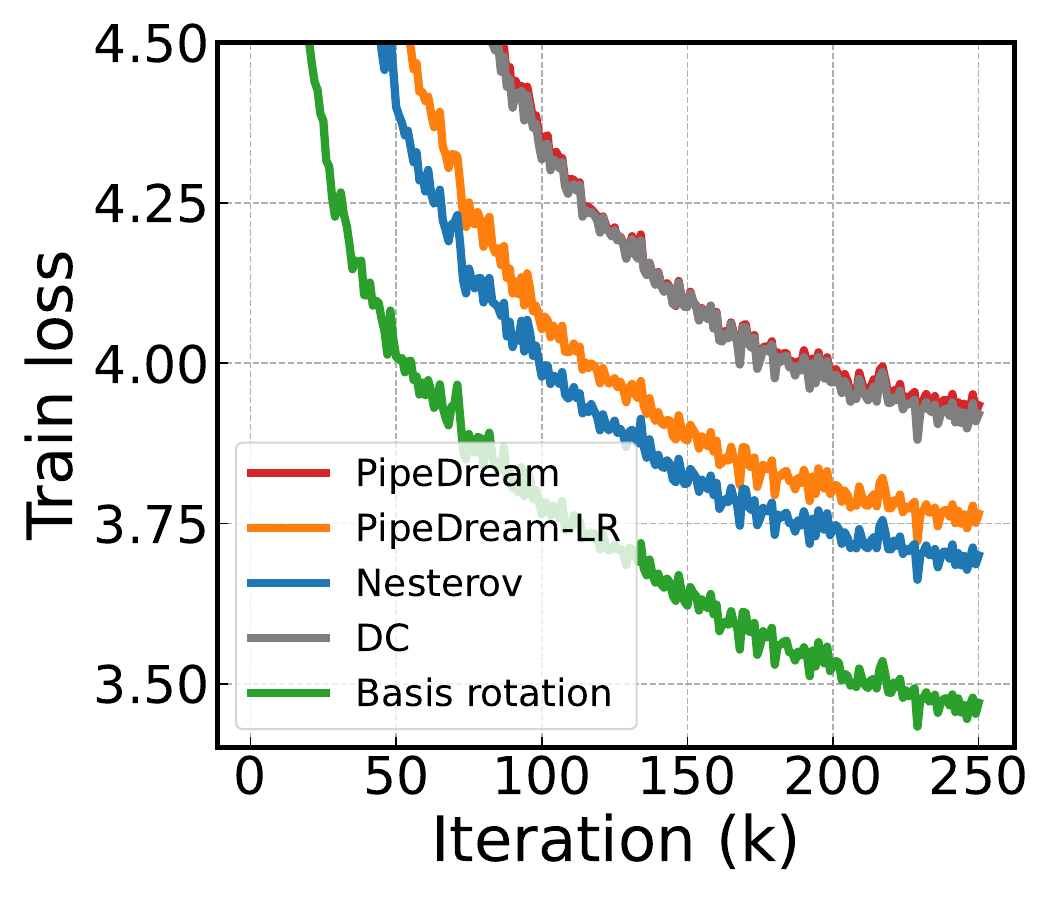}
        \caption{$P=16$}
        \label{fig:result-appendix-dc-stage16}
    \end{subfigure}
    \hspace{2em}
    \begin{subfigure}{0.2\linewidth}
        \includegraphics[width=\linewidth]{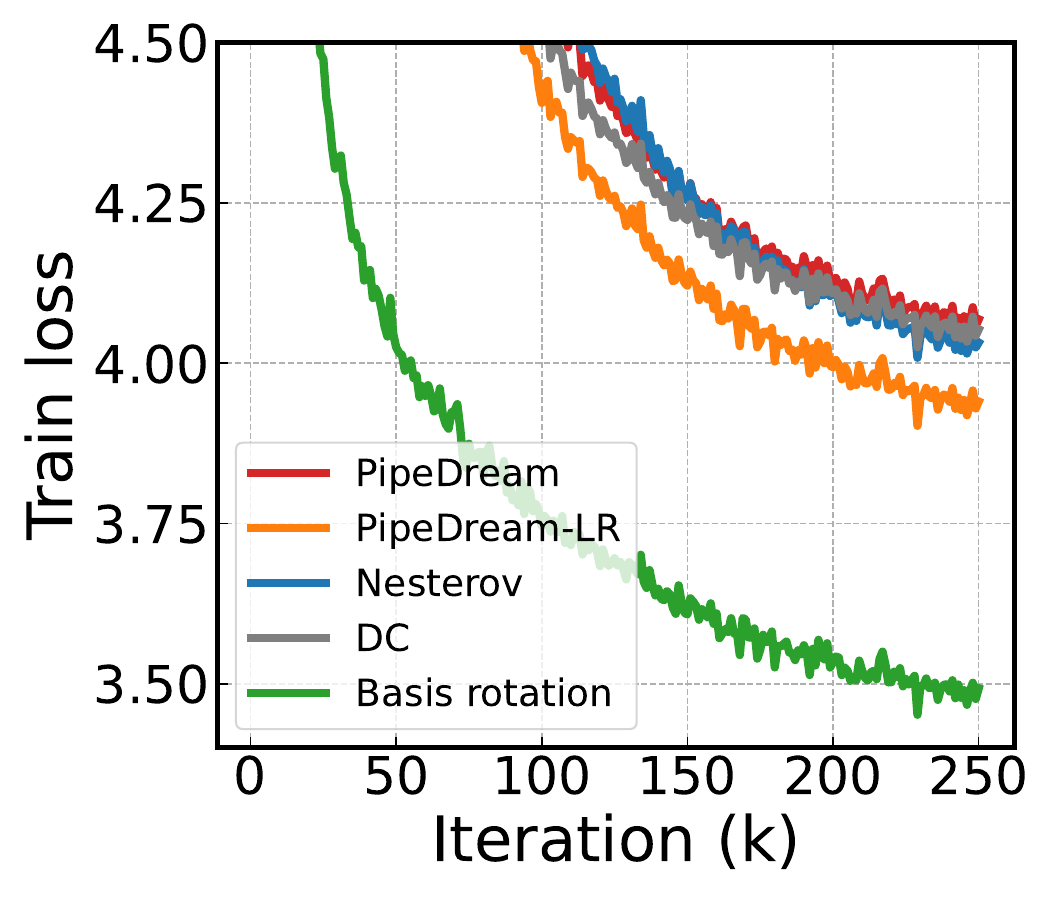}
        \caption{$P=32$}
        \label{fig:result-appendix-dc-stage32}
    \end{subfigure}    
    \caption{
    Comparison of each method for different number of stages $P$ including Delay Compensation (DC) algorithm \citep{zheng2017asynchronous} as baselines.
    DC shows similar performance to PipeDream.
    }
    \label{fig:result-appendix-dc}
\end{figure*}

\paragraph{Full Results of \cref{sec:exp-main}}

We plot the full results of \cref{fig:result-main} in \cref{fig:result-appendix-optmain,fig:result-appendix-slowdown,fig:result-appendix-blockscaling}.
As seen in \cref{fig:result-appendix-optmain}, the gap between \ourmethod{} and baselines becomes larger with increasing number of stages $P$.
We also find that the slowdown, defined as the iteration ratio required to reach target loss for $P=32$ relative to $P=1$, becomes significantly smaller for \ourmethod{} (see \cref{fig:result-appendix-slowdown}).
We note that this slowdown becomes smaller with more accurate \estimation{} strategies.
Finally, we plot the full results of \cref{fig:result-main-blockscaling} in \cref{fig:result-appendix-blockscaling} where we increase the number of stages by increasing the number of Transformer blocks.
While baseline methods invalidate the standard scaling law with increasing loss for larger models, \ourmethod{}, especially with high-fidelity \estimation{} strategies, recovers the scaling law by gradually decreasing the loss for larger models.

\paragraph{Full Results of \cref{sec:exp-more}}

We plot the full results of \cref{fig:result-rotation} in \cref{fig:result-appendix-optrotated}.
We note that the high-fidelity estimation strategy consistently outperforms its memory-efficient counterpart across all number of stages, with a widening gap for larger stages.
We also note that even the least accurate estimation strategy (\mean{}/\onesided{}) consistently outperforms the best-performing baseline with a widening gap for larger stages.

\paragraph{Results with weight prediction}

We also evaluate the performance of \ourmethod{} when employing PipeMare-style weight prediction \citep{yang2021pipemare} as an alternative strategy for reducing memory overhead. 
The results, shown in \cref{fig:result-appendix-pipemare}, demonstrate that \ourmethod{} maintains its performance advantage and remains significantly more robust than baseline methods even when using approximated weight versions. 
This consistent trend further underscores the resilience of our approach to the gradient inaccuracies typical of asynchronous pipeline parallelism.

\paragraph{Stage-aware basis rotation implementation and more results}

Here, we provide implementation details and additional experimental results for the stage-aware basis rotation strategy. 
Under a fixed computational budget, the subspace update frequency at each pipeline stage is allocated proportionally to the corresponding gradient delay, such that the overall computational cost remains constant across configurations.
Specifically, let $\tau$ denote the gradient delay at the stage. 
We define the update frequency $f$ at each stage as a function of $\tau$ as the following scheduling rule:
\begin{align*}
mid &= \left\lfloor \frac{P}{2} \right\rfloor - 1 \\
n &= \begin{cases} mid - \tau, & \text{if } \tau > mid \\ mid + 1 - \tau, & \text{if } \tau \le mid \end{cases} \\
f &= \left\lfloor \frac{f_0}{1 - \frac{n}{mid}} \right\rfloor
\end{align*}
To further validate the role of basis rotation, we additionally evaluate a reversed stage-aware allocation strategy, in which the update frequency is inversely assigned with respect to gradient delay. 
As shown in \cref{fig:reverse-stage-wise-frequency}, this reversed strategy leads to inferior convergence behavior, confirming the effect of basis rotation for mitigating staleness. 

\paragraph{Results for Validation Loss}

We also plot the validation loss in \cref{fig:result-appendix-val}.
Here, we measure validation loss for $200$ examples every $1000$ iteration. 
We observe that the trend is very similar to the results for train loss.

\paragraph{Experiments for Other Baselines}

We also compare our approach against Delay Compensation (DC) algorithm \citep{zheng2017asynchronous}.
DC was originally proposed for asynchronous data parallelism and later adapted to pipeline parallelism for small-scale vision models \citep{jang2023pipelined}.
Specifically, it uses a first-order Taylor expansion to approximate the fresh gradient $\nabla f(w_t)$ from the delayed gradient $\nabla f(w_{t-\tau})$:
\begin{equation*}
    \nabla f(w_t) \approx \nabla f(w_{t-\tau}) + \lambda \nabla f(w_{t-\tau}) \odot \nabla f(w_{t-\tau}) (w_t - w_{t-\tau}),
\end{equation*}
where $\lambda \in [0, 1]$ is a hyperparameter to control the scale the compensation term and the diagonal empirical Fisher $\nabla f(w_{t-\tau}) \odot \nabla f(w_{t-\tau})$ serves as a diagonal approximation of the Hessian.
We test $\lambda \in \{0.04, 0.1, 0.5, 1.0\}$.

The results are plotted in \cref{fig:result-appendix-dc}.
We observe that DC does not effectively address delayed gradients for large delays and shows similar performance to PipeDream.

\paragraph{Comparison with Preconditioned Methods}

\begin{table}[H]
    \centering
    \caption{
    Slowdown of preconditioned methods at $P=32$ relative to $P=1$.
    Slowdown is defined as the ratio of iterations required to reach a fixed loss threshold at $P=32$ versus $P=1$.
    Methods with explicit basis alignment (SOAP and \ourmethod{}) achieve substantially lower slowdown than those without (Muon and Scion), confirming that basis alignment is the primary factor for mitigating staleness under large delays.
    }
     \label{tab:result-preconditioned-slowdown}
    \resizebox{0.2\textwidth}{!}{
    \begin{tabular}{lc}
        \toprule
        Method & Slowdown \\
        \midrule
        PipeDream-LR   & $5.43\times$ \\
        Nesterov       & $4.24\times$ \\
        \midrule
        Muon           & $1.53\times$ \\
        Scion          & $2.10\times$ \\
        \midrule
        SOAP           & $1.34\times$ \\
        \ourmethod{}   & $1.27\times$ \\
        \bottomrule
    \end{tabular}
    }
\end{table}

We compare \ourmethod{} against preconditioned optimizers that have recently demonstrated strong performance in standard (non-delayed) training settings: SOAP \citep{vyas2024soap}, Muon \citep{jordan2024muon}, and Scion \citep{pethick2025training}.
While these methods were not originally designed for asynchronous pipeline parallelism, evaluating them in this setting provides insight into which algorithmic properties are most critical for delay robustness.
We measure the slowdown of each method, defined as the ratio of iterations required to reach a fixed loss threshold at $P=32$ relative to $P=1$.
The results are summarized in \cref{tab:result-preconditioned-slowdown}.

Muon and Scion, which do not perform explicit alignment with the Hessian eigenbasis, outperform standard baselines but still exhibit considerably higher slowdown than \ourmethod{}.
In contrast, SOAP achieves delay robustness close to \ourmethod{}, as both methods share the principle of operating in a rotation aligned with the Hessian eigenbasis.
These results support the central claim of our work: it is the basis alignment, rather than preconditioning per se, that is essential for robustness to gradient delay in asynchronous pipeline parallelism.
 
We note that the minor implementation differences between \ourmethod{} and SOAP---as described in \cref{subsec:solution:Hessian_approximation} and \cref{app:connection_recent_optimizers}---were not intended to improve upon SOAP's performance.
Rather, they were introduced to provide a controlled experimental environment for systematically analyzing different Hessian approximation strategies, free from optimizer-specific confounds.

\paragraph{Results on 3B Model}

\begin{figure*}[t]
    \centering
    \includegraphics[width=0.2\linewidth]{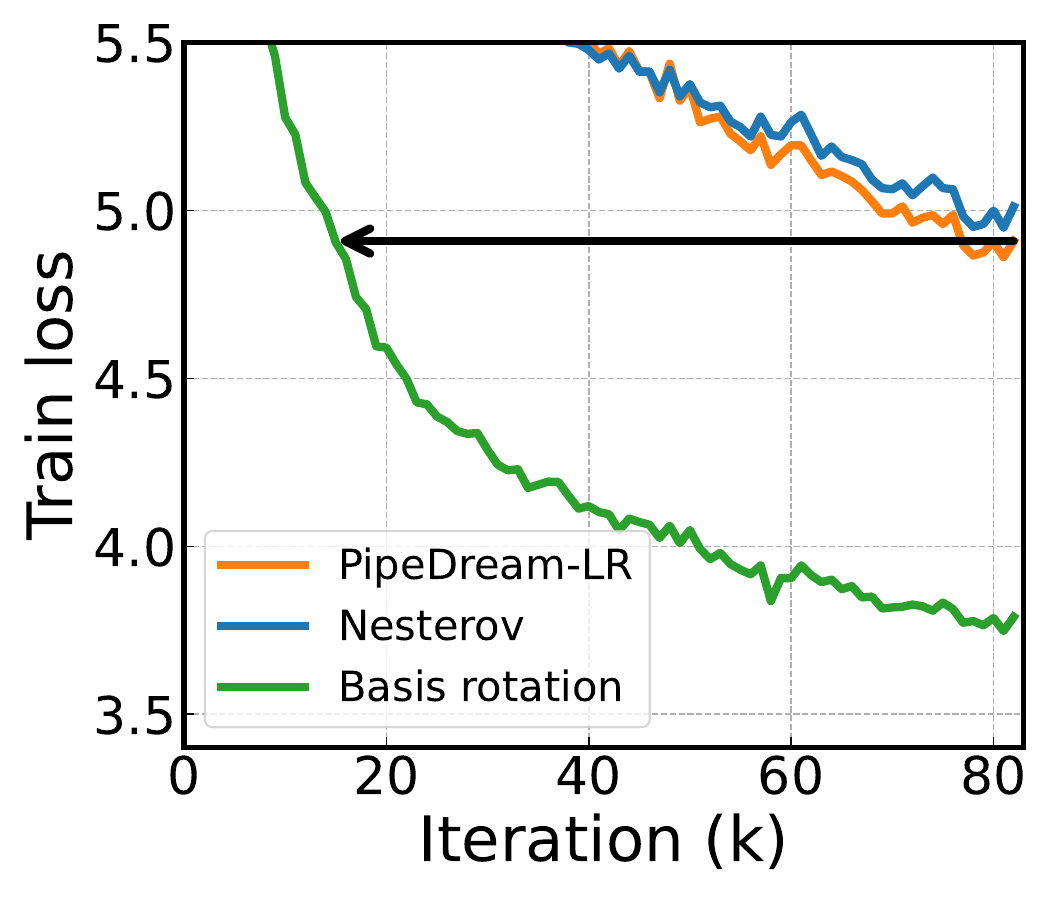}
    \caption{
    Performance of different methods at $P=32$ for a ${\approx}3$B parameter model (2688 embedding dimensions, 32 Transformer blocks).
    Results are shown for partial training up to 82K iterations out of a 250K-step schedule.
    \Ourmethod{} achieves the same training loss with $81.7\%$ fewer iterations than the best-performing baseline.
    }
    \label{fig:result-appendix-3b}
\end{figure*}

We evaluate \ourmethod{} on an approximately 3B parameter model ($3{,}047$M parameters; 2688 embedding dimensions, 32 Transformer blocks) with $P=32$ pipeline stages.
We search over learning rates in $\{10^{-4}, 3\times10^{-5}\}$ with a 250K-step cosine schedule, and report partial training curves up to 82K iterations.
As shown in \cref{fig:result-appendix-3b}, \ourmethod{} achieves the same training loss with $81.7\%$ fewer iterations than the best-performing baseline, further widening the advantage observed at 1B scale.
 
\paragraph{Generalization to MoE}

\begin{figure*}[t]
    \centering
    \includegraphics[width=0.2\linewidth]{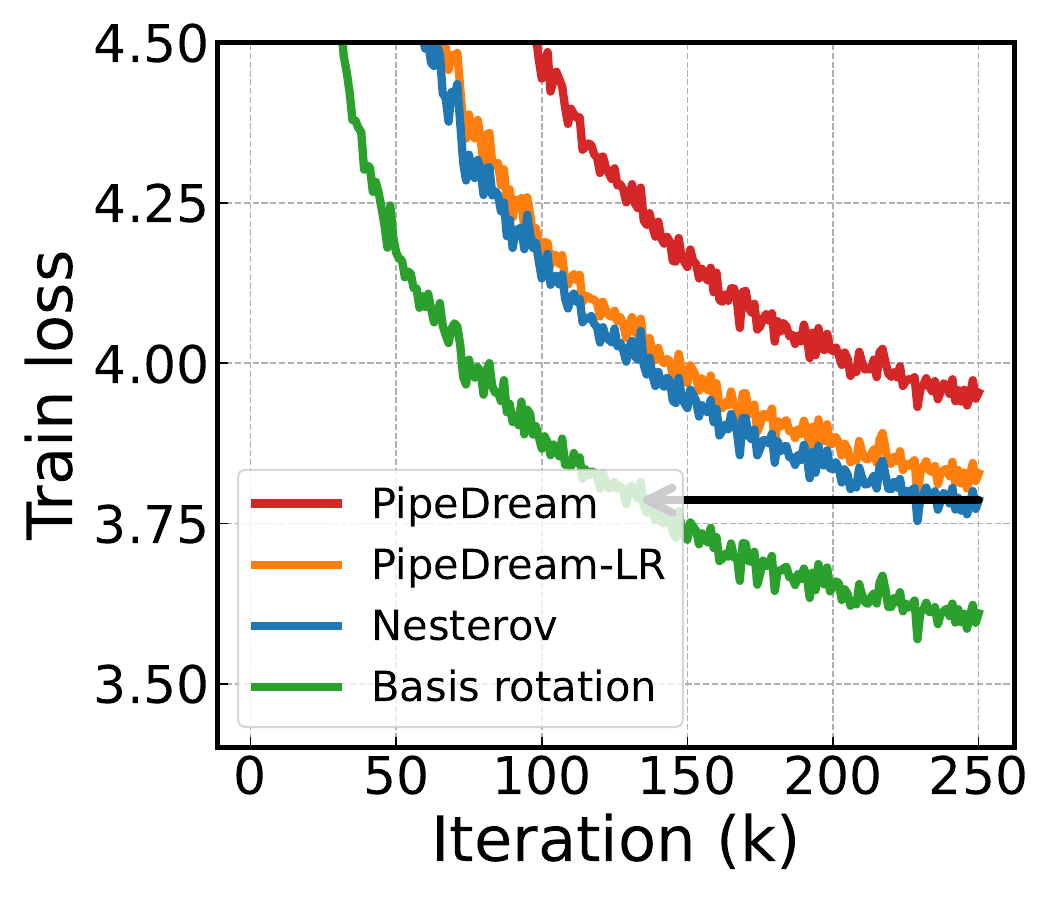}
    \caption{
    Performance of different methods on a 100M nanoMoE model (8 experts, top-2 activation).
    \Ourmethod{} achieves the lowest final training loss and reduces the number of iterations
    required to reach the same loss as the best-performing baseline by $46.8\%$.}
    \label{fig:result-appendix-moe}
\end{figure*}

We evaluate whether \ourmethod{} generalizes beyond standard Transformer FFN layers to Mixture-of-Experts (MoE) architectures.
Since MoE routing and expert computations are performed within a single GPU without altering the pipeline schedule, basis rotation can be applied to each expert's weight matrices independently without any modification to the pipeline scheduling logic.
We train a 100M nanoMoE model \footnote{\url{https://github.com/wolfecameron/nanoMoE}} with eight experts and top-2 activation on the same language modeling setup described in \cref{app:sub:LLM-exp-details}, and compare against the same baselines used in the main experiments.

The results in \cref{fig:result-appendix-moe} show that \ourmethod{} achieves the lowest final training loss among all methods, reducing the iterations required to reach the same loss as the best-performing baseline by $46.8\%$.
The consistent gains over baselines demonstrate that basis rotation effectively addresses gradient delay in MoE models, confirming that the benefits of our approach are not limited to dense Transformer architectures.